\documentclass{article} 
\usepackage{iclr2027_conference,times}

\usepackage{amsmath,amsfonts,bm}

\def\eqref#1{Eq.~(\ref{#1})}

\def\1{\bm{1}}

\DeclareMathAlphabet{\mathsfit}{\encodingdefault}{\sfdefault}{m}{sl}
\SetMathAlphabet{\mathsfit}{bold}{\encodingdefault}{\sfdefault}{bx}{n}

\newcommand{\E}{\mathbb{E}}

\newcommand{\R}{\mathbb{R}}

\usepackage[utf8]{inputenc}
\usepackage[T1]{fontenc}
\usepackage[hidelinks]{hyperref}
\hypersetup{
    colorlinks,
    linkcolor={red!80!black},
    citecolor={blue!60!black},
    urlcolor={blue!60!black}
}
\usepackage{url}
\usepackage{booktabs}
\usepackage{multirow}
\usepackage{amsfonts}
\usepackage{nicefrac}
\usepackage{microtype}
\usepackage{xcolor}
\usepackage{amsmath, amssymb, amsthm}
\usepackage{graphicx}
\usepackage{algorithm}
\usepackage{algpseudocode}

\usepackage{sidecap}

\def\f{\mathbf{f}}
\def\w{\mathbf{w}}
\def\x{\mathbf{x}}

\def\z{\mathbf{z}}
\def\y{\mathbf{y}}
\def\E{\mathcal{E}}
\newcommand{\bmu}{\boldsymbol{\mu}}
\newcommand{\bsig}{\boldsymbol{\sigma}}
\newcommand{\Prob}{\mathbb{P}}
\newcommand{\Expec}{\mathbb{E}}

\newtheorem{theorem}{Theorem}
\newtheorem{corollary}{Corollary}
\newtheorem{lemma}{Lemma}
\newtheorem{proposition}{Proposition}
\newtheorem{conjecture}{Conjecture}
\theoremstyle{definition}
\newtheorem{assumption}{Assumption}
\theoremstyle{remark}
\newtheorem{remark}{Remark}

\title{Gradient-Free Sampling from Generative Models via Stochastic Bounded Extremum Seeking}

\author{Alexander Scheinker \thanks{Webpage: \url{https://alexscheinker.github.io/}} \\
Instrumentation and Controls Group \\
Los Alamos National Laboratory \\ 
Los Alamos, NM 87544, USA \\
\texttt{ascheink@lanl.gov}
}

\iclrfinalcopy 
\begin{document}

\maketitle           

\begin{abstract}
We introduce a sampling approach for energy- and score-based generative models that requires no gradient evaluations of the model. Replacing the drift term that would normally contain the score $\nabla_\mathbf{x} \log p_\theta(\mathbf{x})$ with a high-frequency dithered cosine of the model's \textit{value}, $\sqrt{\alpha\omega}\,\cos(\omega t + k \log p_\theta(\mathbf{x}))$, produces, in the high-frequency averaging limit, Langevin Markov chain Monte Carlo for energy-based models and the reverse-time SDE of score-based diffusion. We prove that trajectories of the dithered It\^{o} SDE converge to those of the target SDE, driven by the same Brownian motion, uniformly on compact time intervals in probability, by an averaging argument that extends bounded extremum seeking (ES) to It\^{o} processes, with an explicit $O(\omega^{-1/2})$ mean-square rate under global bounds. The approach is not confined to smooth targets: it extends to $C^{1,1}$ energies with discontinuous curvature (without ellipticity requirement) and to Sobolev energies whose Hessians exist only off measure zero sets; for Lipschitz energies with gradient kinks the averaged limit remains well posed; the Krylov-R\"ockner integrability class is the boundary of provability. The approach provides a hard \textit{a priori} bound on the per-step update rate and applies to explicitly time-varying targets on finite horizons. Gradient-free pixel-space sampling is not competitive with well-tuned backpropagation-based samplers at practical evaluation budgets; the regime where the approach offers an advantage is latent-space sampling when the model is a black box and the target drifts in time. We demonstrate latent-space tracking for time-varying images on CelebA-HQ ($256{\times}256$) from limited 1D projection measurements and latent-space EBM sampling on CIFAR-10.
\end{abstract}

\section{Introduction}

Modern generative models are trained, sampled, and conditioned primarily through gradients of large neural networks. Score-based
diffusion~\citep{song2021scorebased,ho2020ddpm,songermon2019smld}, latent diffusion~\citep{rombach2022ldm}, flow matching~\citep{lipman2023flowmatching,liu2023rectifiedflow,albergo2023interpolants}, and energy-based models (EBMs)~\citep{du2019ebm,grathwohl2020jem,duvincentmordatch2021}
all rely on gradients of a trained network: guidance or likelihood signals for diffusion and flow models, and the energy gradient itself for EBMs. When the model is available inside an automatic-differentiation framework this is efficient: a backward pass costs a small constant multiple of a forward pass. Our interest is in settings where even this is unavailable or insufficient.

Three such regimes recur in practice. \emph{(i)}: The quantity of interest is computed by a simulator~\citep{cranmer2020sbi}, a black-box scoring function such as the round-trip self-consistency of a bidirectional diffusion rollout~\citep{scheinker2026round}, a quantized or compiled inference engine, or a model served behind an API, so that only function \emph{values} can be obtained~\citep{nesterov2017random}. \emph{(ii)}: The target distribution drifts continuously, which is the operating condition of feedback-based generative reconstruction in scientific instruments, such as evolving microstructures in dynamic coherent diffraction imaging~\citep{scheinker2024dynamic}, time-varying particle accelerator beam diagnostics~\citep{scheinker2026phaseflow4d,scheinker2024cdvae}, and more broadly of streaming inverse problems in which each new measurement redefines the posterior. Restarting an MCMC chain at every measurement is wasteful, and running a single gradient-based chain against a drifting target has no tracking guarantee. \emph{(iii)}: Sampling in the latent space of a pretrained encoder-decoder whose internals are frozen or otherwise outside the user's autodiff graph, as in
generative-prior inverse problems~\citep{bora2017csgm} and latent-space
molecular design~\citep{gomezbombarelli2018automatic}.

\begin{figure}[t]
\centering
\includegraphics[width=1.0\textwidth]{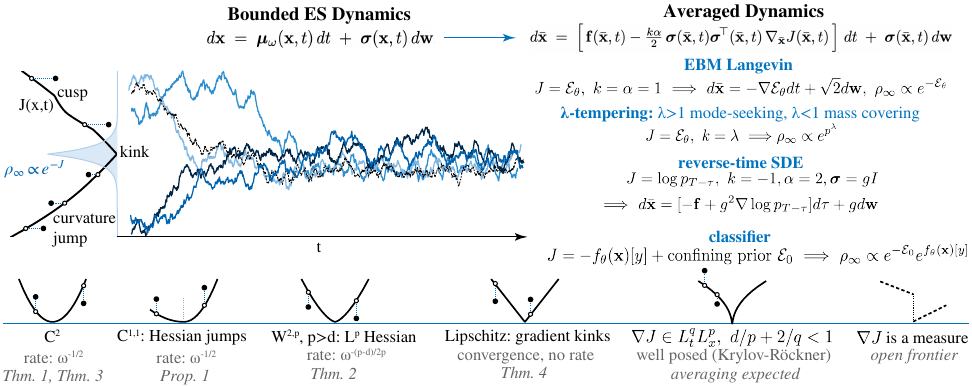}
\caption{Bounded ES dynamics results in various samplers depending on the choice of $(\mathbf{f},\boldsymbol{\sigma},k,\alpha,J)$, where $\bsig(\x,t) = \operatorname{diag}\!\big(g_1(\x,t),\dots,g_d(\x,t)\big)$, $\mu_{\omega,i}(\x,t) = f_i(\x,t) \;+\; g_i(\x,t)\,\sqrt{\alpha\omega_i}\,\cos\!\big(\omega_i t + k\,J(\x,t)\big)$. Averaged dynamics not changed by rough energies, but the provable results are. Above $C^2$ means the a.e.-equivalence class of $J$ contains a $C^2$ function. $J$ can have measure zero jumps.}
\label{fig:overview}
\end{figure}

This paper develops a single sampling procedure for all three regimes by replacing model gradients with a \emph{high-frequency dithered cosine} of the model's value. Given a function $J(\x,t)$ that can be evaluated but not differentiated by the user, consider the stochastic feedback law
\begin{equation}
    dx_i \;=\; \sqrt{2}\,\sqrt{\alpha\omega_i}\,\cos\!\big(\omega_i t + k\, J(\x,t)\big)\,dt \;+\; \sqrt{2}\,dw_i, \qquad i = 1,\ldots,d,
    \label{eq:es-intro}
\end{equation}
with distinct dither frequencies $\omega_i=\omega r_i$. As $\omega \to \infty$ the trajectories of \eqref{eq:es-intro} converge (in the precise sense of Theorem~\ref{thm:main}) to those of $d\bar\x = -k\alpha\,\nabla_{\bar\x} J(\bar\x, t)\,dt + \sqrt{2}\,d\w$. With $J = \E_\theta$ and $k\alpha=1$ this is Langevin MCMC for the EBM $p\propto e^{-\E_\theta}$; embedded in the appropriate drift with $J=\log p_\theta$ it reproduces the reverse-time SDE of score-based diffusion~\citep{anderson1982reverse,song2021scorebased}. The gradient $\nabla_\x J$ is never computed, only evaluated once per time step, shared across all $d$ coordinates. Flow-matching velocity networks are score networks up to an affine map~\citep{lipman2023flowmatching,albergo2023interpolants}, so they are covered exactly as score-only diffusion models are in Sec.~\ref{sec:inst}, (II): through the line-integral surrogate, not directly.  The dithered feedback law is \emph{bounded extremum seeking}, which originated in adaptive control of analytically unknown time-varying systems~\citep{scheinker2013BoundedES} and was generalized to discontinuous dithers in~\citep{scheinker2016bounded}. For generative-model sampling we extend the averaging analysis to stochastic differential equations, which is the subject of Theorem~\ref{thm:main} and its extensions, as summarized in Figure \ref{fig:overview}. The main contributions of this paper are:

\textbf{Theory:} We prove that multivariate stochastic bounded ES SDE converges to Langevin / reverse-time diffusion SDE, driven by the same Brownian motion, uniformly on compact time intervals in probability as the dither frequencies tend to infinity and prove convergence in distribution on path space. This is, to our knowledge, the first averaging analysis of bounded ES in the It\^{o} setting. Under global bounds the same proof yields the quantitative mean-square rate $\Expec\sup_{t\le T}\|\x_\omega-\bar\x\|^2\le C/\omega$, and we show that the same sampler converges on $C^{1,1}$ energies with discontinuous curvature and on Sobolev energies whose Hessian exists only off a null set; for Lipschitz energies with gradient kinks the averaged limit remains well posed. The boundary of provability is the Krylov-R\"ockner class.

\textbf{Algorithm 1:} The finite-step sampler is provided and guidance for choosing $\omega$, $\{r_i\}$, $\Delta t$, $k$, and $\alpha$.

\textbf{Instantiations and tempering, as corollaries:} Choices of $(\f,\bsig,k,\alpha,J)$ recover gradient-free Langevin for EBMs, gradient-free reverse-SDE sampling for models with an explicit log-density, gradient-free sampling from a classifier via the EBM--classifier correspondence~\citep{grathwohl2020jem}, and a single-knob $\lambda$-tempering whose stationary law is $p^\lambda$.

\textbf{Experiments mapping the operating regime:} We evaluate against gradient-based (SGLD, MALA) and zeroth-order \citep{roy2022zeroth} baselines at matched evaluation budgets, on a synthetic testbed, on latent-space EBM sampling (CIFAR-10), and on a time-varying latent tracking problem (CelebA-HQ).

\textbf{Scope of claims.} We do not claim that gradient-free ES sampling outperforms well-tuned gradient-based Langevin samplers in pixel space on standard benchmarks; the need to resolve the dither makes the per-step / per-quality tradeoff unfavorable there (Sec.~\ref{sec:practice}). The claim is a \emph{regime expansion}: the same algorithm, with guarantees, operates in settings where backpropagation is unavailable.

\section{Background}
\label{sec:background}

\subsection{Langevin MCMC and score-based diffusion}

Sampling from a target $p(\x) \propto \exp(-\E(\x))$ on $\R^d$ via overdamped Langevin dynamics~\citep{roberts1996langevin,welling2011sgld} simulates
\begin{equation}
    d\bar\x \;=\; -\nabla_{\bar\x}\E(\bar\x)\,dt \;+\; \sqrt{2}\,d\w \;=\; \nabla_{\bar\x}\log p(\bar\x)\,dt + \sqrt{2}\,d\w,
    \label{eq:langevin}
\end{equation}
whose unique stationary law is $p$. (The $\sqrt{2}$ matters: with unit diffusion the stationary law of $d\x=\nabla\log p\,dt+d\w$ is proportional to $p^2$, not $p$.) For modern EBMs $\E_\theta$ trained via contrastive-divergence-style objectives~\citep{du2019ebm,duvincentmordatch2021,grathwohl2020jem,nijkamp2020anatomy}, $\nabla_\x \E_\theta(\x)$ is obtained by automatic differentiation. Score-based generative models~\citep{songermon2019smld,song2021scorebased,ho2020ddpm} extend this to a continuum of noise levels; sampling integrates the reverse-time SDE~\citep{anderson1982reverse}
\begin{equation}
    d\bar\x \;=\; \big[\,\f(\bar\x,t) - g^2(t)\,\nabla_{\bar\x}\log p(\bar\x,t)\,\big]\,dt + g(t)\,d\w,
    \label{eq:reverse-sde}
\end{equation}
where the score $\nabla_\x \log p(\x,t)$ is approximated by a network $\mathbf{s}_\theta(\x,t)$ trained by denoising score matching~\citep{vincent2011dsm,songermon2019smld}. In \eqref{eq:reverse-sde} time runs backward from $T$ to $0$ ($dt<0$, $\w$ a reverse-time Brownian motion); in the forward computational time $\tau=T-t$ used by every integrator, including Algorithm~\ref{alg:es} below, the same dynamics read $d\bar\x=[-\f(\bar\x,T-\tau)+g^2(T-\tau)\nabla_{\bar\x}\log p(\bar\x,T-\tau)]\,d\tau+g(T-\tau)\,d\w$, with the score drift \emph{contracting} mass onto the data modes. Both \eqref{eq:langevin} and \eqref{eq:reverse-sde} are ordinarily implemented by differentiating through $\theta$.

\subsection{Bounded extremum seeking}

Bounded ES~\citep{scheinker2013BoundedES,scheinker2016bounded} is a model-independent feedback law that steers a parameter vector toward the (possibly time-varying) minimizer of an unknown cost $J(\x,t)$ using only evaluations of $J$. Its defining feature is the dithered cosine update of \eqref{eq:es-intro}, with two properties that distinguish it from generic finite-difference, simultaneous-perturbation, or smoothing-based gradient estimates \citep{spall1992spsa,nesterov2017random,roy2022zeroth}: (i)~the update rate obeys the hard \emph{a priori} bound $|dx_i/dt - \text{noise}| \le \sqrt{\alpha\omega_i}\,|g_i|$ \emph{regardless of the magnitude of $J$}, and (ii)~a single scalar evaluation of $J$ per time instant drives all $d$ coordinates simultaneously, rather than $d{+}1$ or $2$ evaluations per estimated gradient. Existing averaging analyses of bounded ES concern deterministic ODEs; we extend the analysis to It\^{o} SDEs.

\section{Gradient-Free Sampling via Stochastic ES}
\label{sec:method}

\subsection{Setup and main result}
\label{sec:main}

Throughout, $\w$ is a standard $d$-dimensional Brownian motion on a filtered probability space, and we write the dithered system in standard It\^{o} drift--diffusion form. For $\x\in\R^d$ consider
\begin{equation}
    d\x \;=\; \bmu_\omega(\x,t)\,dt \;+\; \bsig(\x,t)\,d\w,
    \qquad
    \bsig(\x,t) \;=\; \operatorname{diag}\!\big(g_1(\x,t),\dots,g_d(\x,t)\big),
    \label{eq:es-sde}
\end{equation}
\begin{equation}
    \mu_{\omega,i}(\x,t) \;=\; f_i(\x,t) \;+\; g_i(\x,t)\,\sqrt{\alpha\omega_i}\,\cos\!\big(\omega_i t + k\,J(\x,t)\big),
    \qquad \omega_i = \omega\, r_i,
    \label{eq:es-drift}
\end{equation}
with $\f=(f_1,\dots,f_d)$ the drift, $\bsig$ a diagonal diffusion, $k$,$\alpha$ gains, and $J$ a black-box target function.

\begin{assumption}[Regularity]\label{ass:reg}
$\f(\cdot,t)$ is continuous and locally Lipschitz in $\x$, uniformly for $t$ in compacts; each $g_i$ and $J$ is $C^2$ in $\x$ and $C^1$ in $t$, with continuous mixed derivatives $\partial_t\nabla_\x g_i$ and $\partial_t\nabla_\x J$, and with $\f$, $g_i$, $J$, all first- and second-order $\x$-derivatives, the first-order $t$-derivatives, and the mixed derivatives of $g_i,J$ bounded on $K\times[0,T]$ for every compact $K\subset\R^d$ and $T>0$.
\end{assumption}

\begin{assumption}[Dither frequencies]\label{ass:freq}
$\omega_i=\omega r_i$ with $r_i\neq r_j \ \forall \ i\neq j$ $r_i>0$, $i=1,\dots,d$.
\end{assumption}

\begin{assumption}[Well-posedness]\label{ass:growth}
$\f$, $\bsig$, and $\bsig\bsig^{\!\top}\nabla_\x J$ satisfy a linear-growth bound $\|\cdot\|\le C(1+\|\x\|)$ uniformly for $t\in[0,T]$, so that \eqref{eq:es-sde} and \eqref{eq:averaged-sde} below admit unique global strong solutions.
\end{assumption}

\begin{theorem}[Stochastic ES averaging]
\label{thm:main}
Under Assumptions~\ref{ass:reg}--\ref{ass:growth}, let $\x_\omega$ solve \eqref{eq:es-sde} and let $\bar\x$ solve
\begin{equation}
    d\bar\x \;=\; \underbrace{\Big[\,\f(\bar\x,t) - \tfrac{k\alpha}{2}\,\bsig(\bar\x,t)\bsig^{\!\top}(\bar\x,t)\,\nabla_{\bar\x} J(\bar\x,t)\,\Big]}_{\textstyle \bar\bmu(\bar\x,t)}\,dt \;+\; \bsig(\bar\x,t)\,d\w,
    \label{eq:averaged-sde}
\end{equation}
driven by the \emph{same} Brownian motion and same initial condition $\x_\omega(0)=\bar\x(0)=\x_0$. For any $T>0$,
\begin{equation}
    \sup_{t\in[0,T]}\big\|\x_\omega(t)-\bar\x(t)\big\|
    \;\longrightarrow\; 0 \quad \text{in probability as } \omega\to\infty;
\end{equation}
for every $\delta>0$, $\varepsilon>0$ $\exists \ \omega^*=\omega^*(\delta,\varepsilon,T)$ such that
$\Prob\big(\sup_{t\le T}\|\x_\omega(t)-\bar\x(t)\|>\delta\big)<\varepsilon$, $\forall \omega\ge\omega^*$.
\end{theorem}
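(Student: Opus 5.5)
The plan is to localize, remove the oscillatory part of the drift by an explicit near-identity change of variables (a stochastic version of the bounded-ES averaging transformation), and then close with Gronwall on the stopped processes.

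\emph{Localization.} Fix $R>0$ with $\|\x_0\|<R$. Let $\tau_R$ be the first exit time of $\bar\x$ or $\x_\omega$ from the ball $B_R$. By Assumption~\ref{ass:growth}, $\Prob(\sup_{t\le T}\|\bar\x(t)\|\ge R-1)\to 0$ as $R\to\infty$, uniformly in nothing else. Hence it suffices to prove that $\sup_{t\le T\wedge\tau_R}\|\x_\omega-\bar\x\|\to 0$ in probability for each fixed $R$. Indeed, on the event that the difference is below $\delta<1$ up to $\tau_R$, the process $\x_\omega$ cannot exit $B_R$ before $\bar\x$ leaves $B_{R-1}$. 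On $\overline{B_R}\times[0,T]$ all quantities in Assumption~\ref{ass:reg} are bounded, and we may also assume they are globally bounded by modifying the coefficients outside $B_R$.

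\emph{Averaging transformation.} Write $\phi_i=\omega_i t+kJ$. We have $\sqrt{\alpha\omega_i}\,g_i\cos\phi_i=\frac{d}{dt}\big[\sqrt{\alpha/\omega_i}\,g_i\sin\phi_i\big]$ plus correction terms, so we set
\[
\y_i=x_i-\sqrt{\alpha/\omega_i}\,g_i(\x,t)\sin\phi_i(\x,t).
\]
We then apply It\^o's formula to $h_i(\x,t)=\sqrt{\alpha/\omega_i}\,g_i\sin\phi_i$, and the terms organize by order in $\omega$.

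The $\partial_t$ part gives $\sqrt{\alpha\omega_i}\,g_i\cos\phi_i$, which cancels the oscillatory drift exactly, up to $O(\omega^{-1/2})$ terms from $\partial_t g_i$ and $\partial_t J$.

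The first-order part $\nabla_\x h_i\cdot\bmu_\omega$ contains $\sqrt{\alpha/\omega_i}\,g_i\cos\phi_i\,k\,\partial_jJ\cdot g_j\sqrt{\alpha\omega_j}\cos\phi_j$. Its $j=i$ term equals $\frac{k\alpha}{2}g_i^2\partial_iJ(1+\cos 2\phi_i)$, which produces exactly the averaged drift $-\frac{k\alpha}{2}\bsig\bsig^\top\nabla J$ (with the sign convention matching the subtraction in $\y$). The remaining terms oscillate at $2\omega_i$ or $\omega_i\pm\omega_j\neq0$, using Assumption~\ref{ass:freq}.

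The It\^o second-order terms are $O(\omega^{-1/2}\cdot\omega^0)$, except for $\partial^2_{x_j}\sin\phi_i$, which is of size $k^2(\partial_jJ)^2\sqrt{1/\omega}$. That is still $O(\omega^{-1/2})$, so it is fine.

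The martingale term $\nabla h_i\,\bsig\,d\w$ is $O(\omega^{-1/2})$ in integrand, hence $O(\omega^{-1/2})$ in $L^2$ by BDG.

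\emph{Residual oscillatory terms.} The zero-mean terms such as $g_ig_j\partial_jJ\cos(\phi_i\pm\phi_j)$ have $O(1)$ amplitude, so they cannot be dropped directly. I will apply a second integration by parts in time, with corrector $\frac{\sin(\phi_i\pm\phi_j)}{\omega(r_i\pm r_j)}\times$(coefficient). Differentiating this corrector again through It\^o's formula yields at worst $\omega^{-1}\cdot\sqrt{\omega}=\omega^{-1/2}$ drift terms and $O(\omega^{-1})$ martingale terms. The mixed derivatives $\partial_t\nabla_\x J$ and $\partial_t\nabla_\x g_i$ in Assumption~\ref{ass:reg} are needed exactly here.

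\emph{Conclusion.} After these steps, $\y$ satisfies $d\y=\bar\bmu(\x,t)\,dt+\bsig(\x,t)\,d\w+d\mathbf{e}_\omega$, with $\E\sup_{t\le T\wedge\tau_R}\|\mathbf{e}_\omega\|^2\le C_R/\omega$. Since $\|\x-\y\|\le C_R\,\omega^{-1/2}$ and $\bar\bmu,\bsig$ are Lipschitz on $B_R$, we compare with $\bar\x$ and obtain $\E\sup_{s\le t\wedge\tau_R}\|\x_\omega-\bar\x\|^2\le C_R/\omega+C_R\int_0^t\E\sup_{u\le s\wedge\tau_R}\|\x_\omega-\bar\x\|^2\,ds$ via BDG. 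Gronwall then gives an $O(\omega^{-1})$ mean-square bound. Markov's inequality together with the localization step yields $\omega^*(\delta,\varepsilon,T)$.

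The main obstacle is the bookkeeping of the second corrector. Its It\^o derivative reintroduces $\sqrt{\omega}$-sized drift multiplied by $\omega^{-1}$, and one must check that no new $O(1)$ non-averaging term survives, which requires the frequency differences $r_i-r_j\neq0$ to stay bounded away from zero.
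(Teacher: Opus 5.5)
Your architecture is the paper's, but one step fails under the stated hypotheses. The variable $\y=\x-\mathbf h$ with $h_i=\sqrt{\alpha/\omega_i}\,g_i\sin\phi_i$ is exactly the boundary term $u^cv_c-u^sv_s=g_n\sqrt{\alpha/\omega_n}\sin(\omega_ns+kJ)$ of the It\^o integration by parts in Step~1 of Appendix~\ref{supp:proof-main}. Your resonant extraction, the Gr\"onwall comparison and the de-localization also match the paper.

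The gap is the second integration by parts on the non-resonant terms. Your corrector is $F=\Phi(\x,t)\sin(\phi_i\pm\phi_j)/(\omega(r_i\pm r_j))$, and its slow coefficient $\Phi$ is built from $g_ig_j\,\partial_jJ$ and $g_j\,\partial_jg_i$. It\^o's formula for $F(\x_\omega(s),s)$ needs $\nabla_\x^2F$, hence $\nabla_\x^2\partial_jJ$ and $\nabla_\x^2\partial_jg_i$, i.e.\ third spatial derivatives. Assumption~\ref{ass:reg} gives only $C^2$ in $\x$. No generalized It\^o formula rescues this: $\Phi$ is merely $C^1$, and $\bsig$ may degenerate. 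In general $\Phi(\x_\omega(s),s)$ need not even be a semimartingale, so $\int\Phi\,dA=\Phi A-\int A\,d\Phi$ is not available as written. The mixed derivatives you cite handle $\partial_tF$, not the spatial Hessian. The paper reserves this route for $J,g_n\in C^3$ (Remark~\ref{rem:rate-supp}), so your bound $\Expec\sup\|\mathbf e_\omega\|^2\le C_R/\omega$ is not established under Assumptions~\ref{ass:reg}--\ref{ass:growth}.

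The paper avoids the extra derivative with a freezing argument (Lemma~\ref{lem:osc-main-ref}). It needs only joint Lipschitz continuity of $\Phi$ on $\bar B_R\times[0,T]$, which is what $C^2$ plus the mixed derivatives supply.
\begin{itemize}
\item Expand the phase so that $\cos kJ,\sin kJ$ sit in the slow factor and the fast factor $e^{\mathrm i c\omega s}$ is deterministic.
\item Cut $[0,T]$ into blocks of length $\Delta$ and freeze $\Phi$ at each left endpoint. The frozen integrals total at most $2C_\Phi(T/\Delta+1)/(|c|\omega)$.
\item The freezing error is at most $L_\Phi(\|\x_s-\x_{t_j}\|+\Delta)$, with $\|\x_s-\x_{t_j}\|\le C_R\sqrt\omega\,\Delta+\|\mathbf N_j(s)\|$ and $\Expec\|\mathbf N_j\|^2\le C_R\Delta$.
\item Choosing $\Delta=\omega^{-3/4}$ gives mean-square supremum $C_R\,\omega^{-1/2}$. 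Gr\"onwall, Chebyshev and your localization then yield the theorem, with localized rate $O_\Prob(\omega^{-1/4})$.
\end{itemize}

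A second repair keeps your own route. Mollify in $\x$, at scale $\delta=\omega^{-1/2}$, a cut-off $C^1$ extension of $\Phi$, and only then integrate by parts. Replacing $\Phi$ by $\Phi_\delta$ costs $O(\delta)$ pathwise. The It\^o correction is then bounded by $\frac{C}{\omega}\|\nabla^2\Phi_\delta\|_\infty\le\frac{C}{\omega\delta}=O(\omega^{-1/2})$. The drift term stays $O(\omega^{-1}\sqrt\omega)$, so this repair would even preserve your $C_R/\omega$. Either fix suffices, since Theorem~\ref{thm:main} asserts only convergence in probability.
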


The proof is in Appendix~\ref{supp:proof} with main steps: (i) localization by exit times from a large ball; (ii) It\^{o} integration by parts; (iii) an oscillatory-integral lemma (Lemma~\ref{lem:osc-main-ref}) showing that integrals of the form $\int_0^t \Phi(\x_\omega(s),s)\,e^{\mathrm i c\omega s}\,ds$ with $c\neq0$ vanish in probability; (iv) extraction of the resonant terms, which produce the drift correction $-\tfrac{k\alpha}{2}\bsig\bsig^{\!\top}\nabla J$; and (v) an $L^2$ Gr\"onwall comparison with $\bar\x$. Intuitively, the state has a small ripple of amplitude $O(\omega^{-1/2})$ so that $x_i(t) \approx \bar x_i(t) + g_i\sqrt{\alpha/\omega_i}\,\sin(\omega_i t + kJ)$, where $g_i$ is the coefficient of the dither ($g_i=\sqrt2$ in \eqref{eq:es-intro}). Plugging this ripple into the cosine, the dither ``feels'' the change in $J$ that its own motion induces; the product of the dither and its induced phase shift has a nonvanishing time average, and the average is $-\tfrac{k\alpha}{2}\,g_i^2\,\partial J/\partial x_i$.

\subsection{The finite-step algorithm and practical guidance}
\label{sec:practice}

Algorithm~\ref{alg:es} is the procedure used in all experiments. The methods for choosing ES gain $k$, amplitude $\alpha$, and frequencies $\omega_i$ are studied in more detail in Appendix \ref{supp:ES_params} and \ref{supp:numerics}. Any pairwise-distinct positive constants satisfy Assumption~\ref{ass:freq}; we use $r_i$ evenly spaced (or i.i.d.\ uniform) in $[1,N]$ with modest $N\in[1.75,2]$. Euler-Maruyama must resolve the fastest dither: we use $\omega r_{\max}\,\Delta t \le 2\pi/10$ so that at least $10$ steps are required per fastest dither period. The product $k\alpha$ sets the strength of the emergent drift (Eq.~\ref{eq:averaged-sde}) and the temperature of the averaged law ($\rho_\infty\propto e^{-k\alpha\E_\theta}$ in (I$_\lambda$)). Furthermore, if $J$ is changing slowly, we see from Eq. \ref{eq:es-drift} that components will oscillate with amplitude proportional to $\sqrt{\alpha/\omega_i}$, therefore in practice once $\omega_i$ is fixed, $\alpha$ is slowly increased until the dither is big enough to produce a good signal in the cost function and then $k$ is slowly increased until a reasonable rate of convergence is achieved. Excessive gain $k\alpha$ at any finite $\omega$ can destabilize the ES dynamics.

\subsection{Instantiations}
\label{sec:inst}

\begin{table}[t]
\caption{Parameter dictionary: choices of $(\f,\bsig,k,\alpha,J)$ in \eqref{eq:es-sde}--\eqref{eq:es-drift} and the resulting averaged dynamics \eqref{eq:averaged-sde}. $\E_\theta$ denotes an energy, $p$ a density, $\lambda>0$ a tempering exponent.}
\label{tab:dict}
\begin{center}
\small\setlength{\tabcolsep}{4pt}
\begin{tabular}{@{}llllll@{}}
\toprule
\textbf{Instantiation} & $\f$ & $\bsig$ & $k$ & $\alpha$ & \textbf{averaged SDE / stationary law} \\
\midrule
(I) EBM Langevin & $\mathbf{0}$ & $\sqrt2\,I$ & $1$ & $1$ & $d\bar\x=-\nabla\E_\theta\,dt+\sqrt2\,d\w$; \ $\rho_\infty\propto e^{-\E_\theta}$\\
(I$_\lambda$) tempered & $\mathbf{0}$ & $\sqrt2\,I$ & $\lambda$ & $1$ & $d\bar\x=-\lambda\nabla\E_\theta\,dt+\sqrt2\,d\w$; \ $\rho_\infty\propto e^{-\lambda\E_\theta}$\\
(II) reverse SDE ($\tau{=}T{-}t$) & $-\f(\x,T{-}\tau)$ & $g(T{-}\tau)I$ & $-1$ & $2$ & $d\bar\x=[-\f+g^2\nabla\log p_{T-\tau}]d\tau+g\,d\w$ \\
(III) classifier & $\mathbf{0}$ & $\sqrt2\,I$ & $1$ & $1$ & (I) with $\E_\theta(\x\,|\,y)=-f_\theta(\x)[y]$ \\
\bottomrule
\end{tabular}
\end{center}
\end{table}

\begin{algorithm}[t]
\caption{ES sampling (Euler--Maruyama form). One evaluation of $J$ per step, for any $d$.}
\label{alg:es}
\begin{algorithmic}[1]
\Require evaluator $J(\cdot,\cdot)$; base drift $\f$, diffusion $\bsig=\operatorname{diag}(g_i)$; gains $k$, $\alpha$; base frequency $\omega$; distinct ratios $\{r_i\}_{i=1}^d\subset[1,r_{\max}]$; step $\Delta t$; steps $M$; initial $\x^{(0)}$
\For{$m=0,\dots,M-1$}
  \State $t_m \gets m\,\Delta t$; \quad $J_m \gets J(\x^{(m)}, t_m)$ \Comment{single forward evaluation}
  \State draw $\boldsymbol{\xi}\sim\mathcal N(0, I_d)$
  \For{$i=1,\dots,d$} \Comment{vectorized in practice}
    \State $x^{(m+1)}_i \gets x^{(m)}_i + \big[f_i(\x^{(m)},t_m) + g_i(\x^{(m)},t_m)\sqrt{\alpha\omega r_i}\,\cos(\omega r_i t_m + k J_m)\big]\Delta t + g_i(\x^{(m)},t_m)\sqrt{\Delta t}\,\xi_i$
  \EndFor
\EndFor
\State \Return $\{\x^{(m)}\}$ (samples after burn-in, or the full trajectory for tracking)
\end{algorithmic}
\end{algorithm} 

\paragraph{(I) Gradient-free Langevin MCMC for EBMs.}
With $\f\equiv\mathbf 0$, $g_i\equiv\sqrt2$, $k,\alpha=1$, \& $J=\E_\theta$,
\begin{equation}
    dx_i \;=\; \sqrt{2\omega_i}\,\cos\!\big(\omega_i t + \E_\theta(\x,t)\big)\,dt \;+\; \sqrt{2}\,dw_i,
    \label{eq:es-ebm}
\end{equation}
with average Langevin diffusion dynamics \eqref{eq:langevin} with potential $\E_\theta$ and stationary law $\propto e^{-\E_\theta}$.

\paragraph{(I$_\lambda$) Single-knob $\lambda$-tempering.}
Replacing the unit gain in \eqref{eq:es-ebm} by $k=\lambda>0$ yields averaged dynamics $d\bar\x=-\lambda\nabla\E_\theta\,dt+\sqrt2\,d\w$ with stationary law
\begin{equation}
    \rho_\infty(\x)\;\propto\;e^{-\lambda \E_\theta(\x)}\;\propto\;p^{\lambda}(\x)
    \label{eq:tempering}
\end{equation}
(proof via the stationary Fokker--Planck equation, Appendix~\ref{supp:corollaries}). $\lambda=1$ recovers the target; $\lambda>1$ is mode-seeking; $\lambda<1$ is mass-covering. Unlike parallel tempering~\citep{earl2005parallel} or annealed Langevin~\citep{songermon2019smld}, no schedule, replica exchange, or noise-conditional training is required---the tempering exponent \emph{is} the ES gain. Figure~\ref{fig:alpha} illustrates the effect.

\paragraph{(II) Gradient-free reverse SDE for models with explicit log-density.}
Write the reverse-time SDE \eqref{eq:reverse-sde} in the forward computational time $\tau=T-t$ (Sec.~\ref{sec:background}), so that it is integrated with positive increments. With base drift $-\f(\x,T-\tau)$, $\bsig=g(T-\tau)I$ from the forward diffusion, $k=-1$, $\alpha=2$, and $J(\x,\tau)=\log p(\x,T-\tau)$, \eqref{eq:es-sde} becomes
\begin{equation}
    dx_i \;=\; \big[\,-f_i(\x,T-\tau) + g(T-\tau)\sqrt{2\omega_i}\,\cos\!\big(\omega_i \tau - \log p(\x,T-\tau)\big)\,\big]\,d\tau + g(T-\tau)\,dw_i,
    \label{eq:es-diffusion}
\end{equation}
whose averaged dynamics, $d\bar\x=[-\f+g^2\nabla\log p_{T-\tau}]\,d\tau+g\,d\w$, are exactly \eqref{eq:reverse-sde} run in forward time: the sign $k=-1$ (equivalently, $J=-\log p$ with $k=1$) is what makes the emergent drift $+g^2\nabla\log p$ contract onto the data modes when Algorithm~\ref{alg:es} steps forward. This instantiation requires a \emph{scalar} log-density evaluator: normalizing flows~\citep{rezende2015nf}, energy-parameterized diffusion models~\citep{salimans2021ebm}, or any model exposing $\log p_t$ up to an additive constant (constants shift the cosine phase globally and drop out of the averaged drift). Score-only networks of the DDPM/EDM type do not expose $\log p_t$ and fall outside this instantiation, unless one forms the line-integral surrogate $\hat J(\x)=\int_0^1 \langle \mathbf s_\theta(\x_0+u(\x-\x_0),t),\,\x-\x_0\rangle\,du$, exact when $\mathbf s_\theta$ is conservative, at the cost of extra evaluations per step and a bias when it is not.

\paragraph{(III) Gradient-free sampling from a classifier.}
By the EBM-classifier correspondence~\citep{grathwohl2020jem}, a $K$-class classifier $f_\theta:\R^d\to\R^K$ defines class-conditional energies $\E_\theta(\x\,|\,y)=-f_\theta(\x)[y]$. Substituting into \eqref{eq:es-ebm} samples from any pretrained classifier using only its forward pass, which need not sit in an autodiff graph. Since a logit $f_\theta(\x)[y]$ trained by cross-entropy is in general unbounded above and $e^{f_\theta(\x)[y]}$ need not be integrable on $\R^d$, this instantiation presupposes a confining prior: either an intrinsically dissipative classifier \citep{grathwohl2020jem} or a non-zero base drift $\f=-\nabla\E_0$ implementing a prior density $\propto e^{-\E_0}$ (e.g., a Gaussian in latent space), so that the averaged dynamics target the well-posed density $\propto e^{-\E_0}\,e^{f_\theta(\x)[y]}$.

\subsection{Latent-space sampling}
\label{sec:latent}

Applied in the latent space of a pretrained encoder--decoder $(E,D)$, ES sampling becomes practical. Given an energy or log-density on latent codes $\z\in\R^{d_z}$, $d_z\ll d$, run \eqref{eq:es-ebm} or \eqref{eq:es-diffusion} on $\z$ and decode $\hat\x=D(\z)$. Three points: (i)~$d_z$ is small ($3$--$512$ in our experiments), so the frequency-gap constraint of Sec.~\ref{sec:practice} is mild and moderate $\omega$ suffices; (ii)~the decoder is used strictly as a black box---relevant when $D$ is frozen, exported, or quantized; (iii)~latent diffusion~\citep{rombach2022ldm} and latent EBMs are the dominant deployment pattern, so this is the natural setting. The geometry of the latent energy landscape governs whether ES (or any local sampler) succeeds there.

\subsection{CelebA-HQ tracking statistics and derivative-free benchmark}
\label{sec:tracking}

\begin{figure}[t]
\centering
\includegraphics[width=1.0\textwidth]{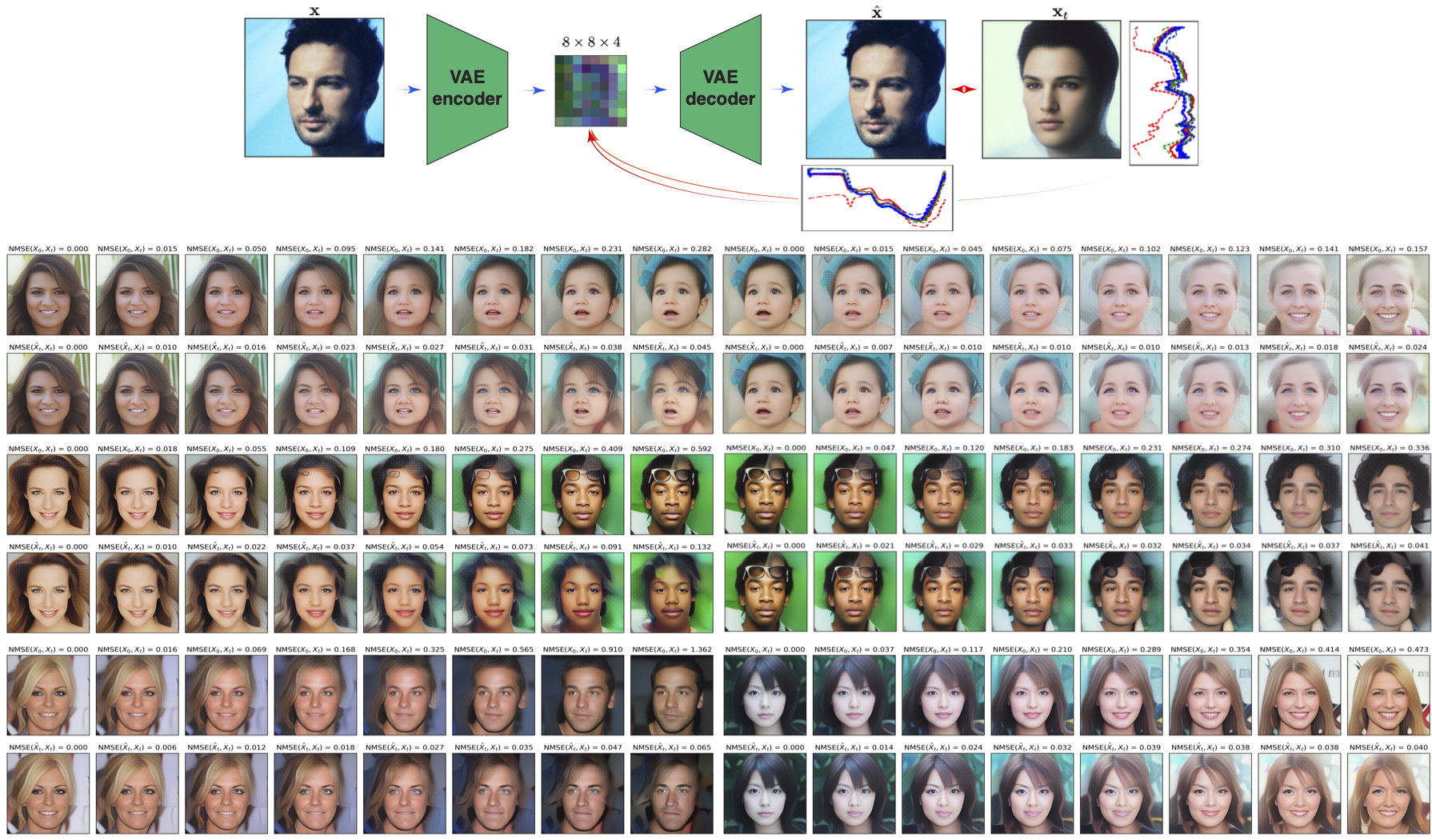}
\caption{Latent-space tracking from 1D RGB projections of time-varying images.}
\label{fig:CELEB_VAE}
\end{figure}

For tracking time-varying CELEB-HQ images, the setup is shown in Figure \ref{fig:CELEB_VAE}. Time-varying targets are generated by
spherical interpolation in latent space between pairs of CelebA-HQ images over $8192$ steps, for $500$ images each from the training and the held-out data, with $3$ seeds per image. The tuned latent has shape
$8{\times}8{\times}4$ ($d=256$ parameters) and is decoded to an RGB image $\hat I[n]=D(z[n])$ of size $256{\times}256{\times}3$. The bounded-ES dynamics used in this experiment are the quadrature-pair variant
\begin{equation}
dx_i=\cos\!\big(\omega_i t+k\,J(\x,t)\big)\sqrt{\alpha\,\omega_i}\,dt+\sigma\,dW_i
\quad(\text{even }i;\ \sin(\cdot)\ \text{for odd }i),
\label{eq:supp-track-sde}
\end{equation}
with $x=\mathrm{vec}(z)$, integrated by Euler--Maruyama with step $h$:
\begin{equation}
x_i[n{+}1]=x_i[n]+\cos\!\big(\omega_i nh+kJ[n]\big)\sqrt{\alpha\,\omega_i}\,h
+\sigma\sqrt h\,\xi_i[n],\qquad
\xi_i[n]\sim\mathcal N(0,1)\ \text{i.i.d.},
\label{eq:supp-track-em}
\end{equation}
($\sin$ for odd $i$), with $k=1$, $\alpha=0.03$, $\sigma=10^{-3}$, the $\omega_i$
linearly spaced over $[1.0,3.5]$ with $8{\times}8{\times}2=128$ distinct
values (each $\cos/\sin$ pair shares one frequency), and $h=2\pi/17.5$.
With $\sigma=10^{-3}$ this benchmark runs in the near-deterministic
mode-seeking regime of Appendix~\ref{supp:num-optim}: it tests the tracking behavior of bounded ES as a derivative-free optimizer of a kinked objective, and the non-degenerate-noise theory of the Lipschitz rung (Proposition~\ref{prop:lip}, Theorem~\ref{thm:occ}), whose ellipticity constant would here be $\lambda_0=10^{-6}$, does not formally cover it. Hyperparameters were selected by a sweep over repeated optimization of a training image ($3$ seeds, first $2048$ steps).

\paragraph{Feedback cost.} $J[n]$ has four components; images are in
$[0,255]$ for the projection terms and in the decoder's native $[-1,1]$
range (with standard VGG16 preprocessing) for the perceptual term.
(i)--(ii)~Projection terms: for an image $I$ with pixels $I_{r,c,k}$
($r,c$ spatial, $k$ channel), the projections are
\begin{equation}
P_{x/y}I(c,k)=\sum_{r/c=1}^{256} I_{r,c,k}\ \in\ \R^{256\times3},
\
c_{x/y}[n]=\tfrac1{255}\,
\mathrm{MAE}\big(P_{x/y}(\hat I[n]),\,P_{x/y}(I^{\mathrm{tgt}}[n])\big),
\end{equation}
with $\mathrm{MAE}$ the mean absolute difference over all entries.
(iii)~A perceptual term from an ImageNet-pretrained frozen VGG16: with
$\phi_\ell$ the feature maps of layers
\texttt{block1\_conv2}, \texttt{block2\_conv2}, \texttt{block3\_conv3},
\texttt{block4\_conv3}, \texttt{block5\_conv3},
$c_p[n]=\sum_{\ell=1}^5\mathrm{mean}\big[(\phi_\ell(\hat I[n])
-\phi_\ell(I^{\mathrm{tgt}}[n]))^2\big]$.
(iv)~A latent-norm regularizer
$c_z[n]=\big(\lVert z[0]\rVert_2-\lVert z[n]\rVert_2\big)^2$. The total is
\begin{equation}
J[n]=c_x[n]+c_y[n]+c_p[n]/10^{8}+c_z[n],
\label{eq:supp-track-J}
\end{equation}
the $10^{-8}$ weight compensating the scale of the raw summed VGG feature MSE. $J$ drives ES; the performance metric is the NMSE computed directly between full images, as summarized in Table \ref{tab:supp-celeba-track500}.

\begin{table}[h]
\centering\small
\caption{Tracking study: $500$ held-out pairs $\times$ $3$
seeds, mean$\pm$SEM over the $1{,}500$ runs.}
\label{tab:supp-celeba-track500}
\scriptsize\setlength{\tabcolsep}{2.2pt}
\begin{tabular}{@{}lccccccc@{}}
\toprule
NMSE & $t{=}1170$ & $2340$ & $3510$ & $4680$ & $5850$ & $7020$ & $8191$\\
\midrule
Val, drift (vs.\ $t{=}0$) & $.025{\pm}.0006$ & $.078{\pm}.0022$ & $.146{\pm}.0040$ & $.221{\pm}.0061$ & $.305{\pm}.0086$ & $.400{\pm}.0119$ & $.510{\pm}.0167$\\
Val, bounded ES & $.010{\pm}.0002$ & $.019{\pm}.0004$ & $.029{\pm}.0005$ & $.040{\pm}.0007$ & $.052{\pm}.0009$ & $.067{\pm}.0013$ & $.083{\pm}.0017$\\
\bottomrule
\end{tabular}
\end{table}

\begin{table}[h]
\centering\small
\caption{Tracked vs.\ target image quality (left), and its evolution over the tracking horizon (right).}
\label{tab:supp-celeba-quality}
\begin{tabular}{@{}lc@{}}
\toprule
Metric & Validation ($500$ pairs)\\
\midrule
PSNR (dB) $\uparrow$ & $21.08\pm3.30$\\
SSIM $\uparrow$ & $0.855\pm0.095$\\
FID $\downarrow$ & $8.68\pm0.11$\\
KID $\times10^{3}$ $\downarrow$ & $5.23\pm0.47$\\
\bottomrule
\end{tabular}
\qquad
\scriptsize\setlength{\tabcolsep}{2.2pt}
\begin{tabular}{@{}lcccccccc@{}}
\toprule
& $t{=}0$ & $1170$ & $2340$ & $3510$ & $4680$ & $5850$ & $7020$ & $8191$\\
\midrule
SSIM $\uparrow$ & $1.000$ & $0.949$ & $0.907$ & $0.868$ & $0.832$ & $0.797$ & $0.761$ & $0.729$\\
PSNR (dB) $\uparrow$ & $\infty$ & $26.50$ & $23.62$ & $21.84$ & $20.48$ & $19.36$ & $18.33$ & $17.41$\\
\bottomrule
\end{tabular}
\end{table}

FID and KID are computed per seed against the corresponding $4{,}000$ target images ($500$ pairs $\times$ $8$ checkpoints along the trajectory), mean$\pm$std across $3$ seeds; PSNR/SSIM are paired and pooled over pairs$\times$seeds$\times$checkpoints (PSNR over the seven checkpoints $t>0$, where it is finite; SSIM over all eight, where $t{=}0$ contributes a trivial value of $1$). Summary in Table \ref{tab:supp-celeba-quality}. We also compared bounded ES against, all at an identical budget of $8192$ forward measurements per run: SPSA~\citep{spall1992spsa}, Spall's two-measurement simultaneous perturbation with constant gains for tracking; GS-ES, the Gaussian-smoothing evolution-strategies estimator with antithetic sampling~\citep{nesterov2017random,salimans2017es};
central coordinate-wise finite differences (FD); and an
automatic-differentiation oracle (exact gradients backpropagated through the trained decoder, with Adam). The benchmark covers $219$ image pairs $\times$ $3$ seeds per method ($657$ runs each). SPSA, GS-ES, and FD each received \emph{two rounds} of grid search over their gains and estimator hyperparameters (a coarse round followed by a refined round around the best cell), and are reported at their best-tuned settings; bounded ES runs at the fixed setting of the tracking protocol above. Results summarized in Table \ref{tab:supp-celeba-df}.

\begin{table}[h]
\centering\small
\caption{Final image-NMSE at
$t{=}8191$. $\pm$ is the std over all $219\times3$ runs;
median (IQR) across the $219$ pairs after seed averaging (the per-pair
distributions are right-skewed). ``Updates'' is the number of parameter
updates each method uses, $m$ is the tuned number of GS-ES antithetic pairs per update.}
\label{tab:supp-celeba-df}
\setlength{\tabcolsep}{4pt}
\begin{tabular}{@{}lcccc@{}}
\toprule
Method & Meas./update & Updates & Final NMSE (mean$\pm$std) & median (IQR)\\
\midrule
Bounded ES (ours) & $1$ & $8192$ & $\mathbf{0.080\pm0.035}$ & $\mathbf{0.072}$ ($0.055$--$0.101$)\\
SPSA              & $2$ & $4096$ & $0.142\pm0.059$ & $0.131$ ($0.104$--$0.181$)\\
FD (central)      & $512$ & $16$ & $0.276\pm0.312$ & $0.219$ ($0.169$--$0.307$)\\
GS-ES (antithetic)& $2m$ & $8192/2m$ & $1.168\pm0.627$ & $1.028$ ($0.773$--$1.348$)\\
AutoDiff oracle   & --- & --- & $0.048\pm0.032$ & $0.041$ ($0.024$--$0.063$)\\
\bottomrule
\end{tabular}
\end{table}

\subsection{Latent representation and diffusion energy-based prior}

Images $x\in[-1,1]^{32\times32\times3}$ from CIFAR-10~\citep{krizhevsky2009cifar} are represented in the latent space of a frozen
convolutional variational autoencoder with self-attention~\citep{kingma2014vae,higgins2017betavae}, whose encoder
$q_\phi(z\,|\,x)=\mathcal N\!\big(\mu_\phi(x),\operatorname{diag}\sigma^2_\phi(x)\big)$ and decoder $D_\theta$ act on a latent
$z\in\mathbb R^{8\times8\times4}$ ($d=256$, a $12\times$ compression of the pixel space) and are trained with
\begin{equation}
  \mathcal L_{\mathrm{VAE}} = \mathbb E_{q_\phi(z|x)}\!\left[\lVert x - D_\theta(z)\rVert_2^2\right]
  + \beta\,\mathrm{KL}\!\left(q_\phi(z\,|\,x)\,\|\,\mathcal N(0,I)\right), \qquad \beta = 10^{-4}. 
  \label{eq:vae}
\end{equation}
On the standardized latents (see Appendix \ref{supp:CIFAR10}), we train a diffusion energy-based model, a latent diffusion model~\citep{rombach2022ldm,ho2020ddpm} whose score is the gradient of an explicit energy~\cite{salimans2021ebm,gao2021drl,du2023reduce}. With forward process $z_t=\sqrt{\bar\alpha_t}\,z_0+\sigma_t\,\epsilon$, $\sigma_t=\sqrt{1-\bar\alpha_t}$, $\epsilon\sim\mathcal N(0,I)$, energy is parameterized through a denoising U-Net $f_\theta$
\begin{equation}
  E_\theta(z,t) = \lVert z - f_\theta(z,t)\rVert_2^2 / 2\,\sigma_t, \qquad
  \hat z_0(z,t) = f_\theta(z,t)/\sqrt{\bar\alpha_t},
  \label{eq:energy}
\end{equation}
so $-\nabla_z E_\theta(z,t)\approx\sigma_t\,\nabla_z\log p_t(z)$ is a conservative score estimate and $\hat z_0$ the corresponding denoised estimate. The network is trained across noise levels with a denoising objective and its EMA weights are used at test time. Decoding $50{,}000$ samples of the latent EBM through $D_\theta$ gives a Fr\'echet Inception distance~\cite{heusel2017fid} of $14.3$ against the CIFAR-10 training set and $16.4$ against the test set under the legacy TensorFlow protocol of the original FID implementation. For tracking, the prior is evaluated at a single low noise level $t_p=0.05$ and standardised by its mean $\mu_p$ and standard deviation $s_p$ over $2{,}000$ held-out real latents,
\begin{equation}
  J_p(z) = \sigma_{t_p}\,E_\theta(z,t_p) = \tfrac12\,\lVert z - f_\theta(z,t_p)\rVert_2^2.
  \label{eq:prior}
\end{equation}

\begin{figure}[t]
\centering
\includegraphics[width=0.49\textwidth]{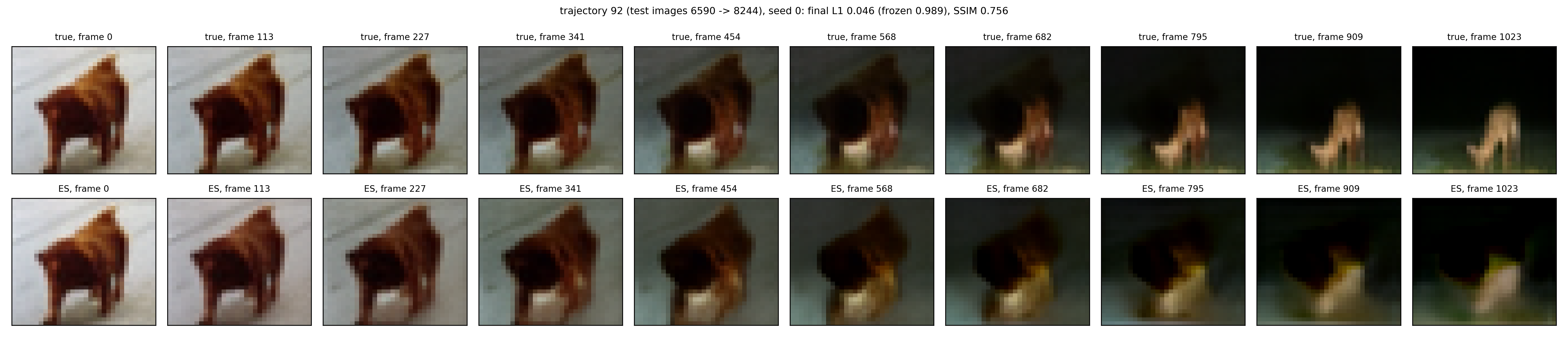}
\includegraphics[width=0.49\textwidth]{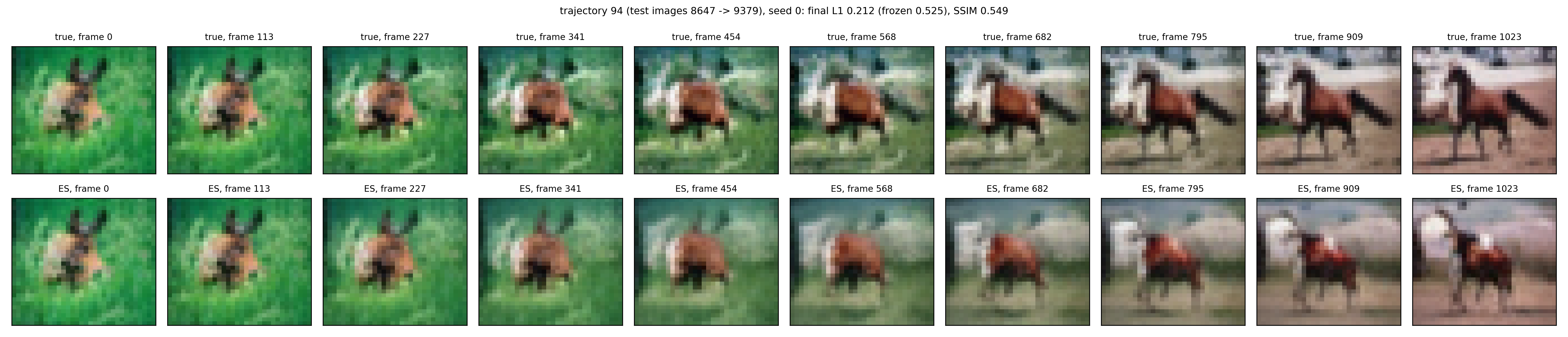}
\includegraphics[width=0.49\textwidth]{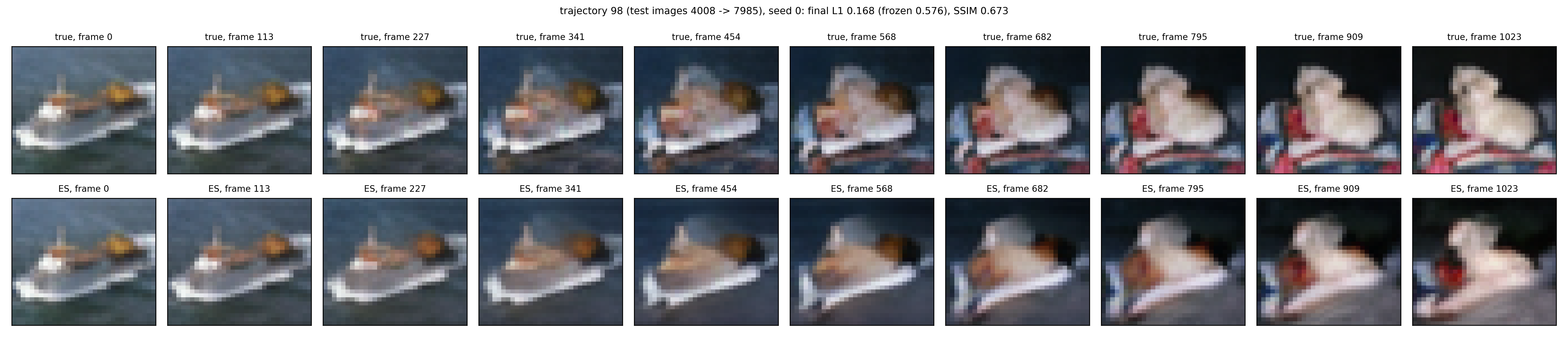}
\includegraphics[width=0.49\textwidth]{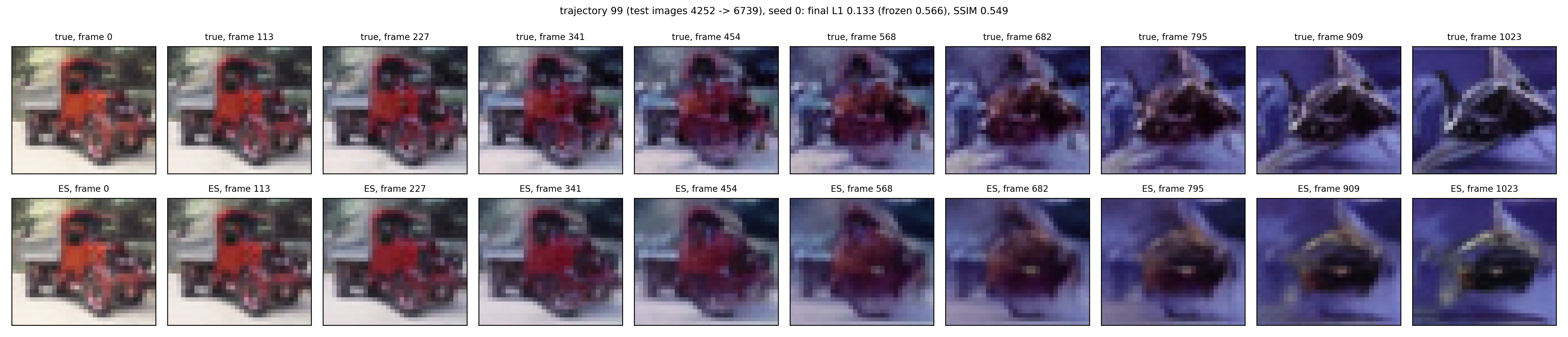}
\caption{ES dynamics results for tracking time-varying CIFAR-10 images based on 1D projections.}
\label{fig:CIFAR10}
\end{figure}
 
The unknown target is a moving image $x^*_n=D_\theta(z^*_n)$, $n=0,\dots,1023$, whose latent follows a spherical interpolation between
the latents of two random test images, a constant-speed path with $\lVert z^*_{n+1}-z^*_n\rVert_2\approx0.0125$ per frame. The tracker
never sees $x^*_n$; it observes only four one-dimensional projections, $p_n=A\,x^*_n$, the sums along rows, columns, diagonals and
anti-diagonals ($32+32+63+63=190$ bins per colour channel, $570$ numbers per frame), the measurement model of few-view tomography.
The data-fit cost of a candidate latent is the $\ell_1$ projection mismatch, scaled by $C_{\mathrm{scale}}$, the median mismatch
between projections of unrelated real images. Tracking cost is
\begin{equation}
  C(z;p_n)=\lVert A\,D_\theta(z)-p_n\rVert_1, \qquad
  J_n(z)=\frac{J_p(z)-\mu_p}{s_p}+\lambda\,\frac{C(z;p_n)}{C_{\mathrm{scale}}}, \qquad \lambda=100 .
  \label{eq:cost}
\end{equation}
Bounded ES minimizes $J_n$ with every latent coordinate $j=1,\dots,256$ dithered at its own frequency $\omega_j\in[1,\,1.75]$, $\Delta t=2\pi / 10\omega_{\max}$, $\alpha=4\times10^{-3}$, and $k=0.03$. Between frames a filtered velocity feed-forward carries the estimate along, $v\leftarrow v+(\Delta\bar z-v)/30$ and $z\leftarrow z+g\,v$ with $g=1$ after the first $30$ frames. Latents are clipped to
$[-4,4]$, the tracker is warm-started at $z^*_0$, and spends $N_s=100$ cost evaluations on each frame. The results are summarized in Table~\ref{tab:tracking} for tracking $5{,}000$ trajectories of $1{,}024$ frames. Errors are between the reported estimate $D_\theta(\bar z)$ and the true image
of the same frame: mean absolute error (L1) on the $[-1,1]$ pixel scale, RMSE and SSIM~\cite{wang2004ssim} on $[0,1]$ images, and the
Fr\'echet Inception distance (FID)~\cite{heusel2017fid} between the set of final tracked images and the set of true final images.  FID uses Inception-v3 pool features~\cite{szegedy2016inception} with Keras ImageNet weights on bilinearly upsampled $299\times299$ images. Some tracking results are shown in Fig. \ref{fig:CIFAR10} and many more in Fig. \ref{fig:CIFAR10_more1}.
 
\begin{table}[t]
\centering
\footnotesize
\setlength{\tabcolsep}{4pt}
\newcommand{\pmcell}[2]{\begin{tabular}[t]{@{}c@{}}#1\\[-2pt]{\scriptsize$\pm$#2}\end{tabular}}   
\caption{Final errors $(\mu \pm \sigma)$ between tracked and true images, and FID between final tracked and final true images at the sample size $N$ of the run, with the same-$N$ floor (FID between two independent samples of the true distribution). L1 on $[-1,1]$ pixel scale, RMSE/SSIM on $[0,1]$ images.}
\label{tab:tracking}
\begin{tabular}{l c c c c c c}
\toprule
Tracker & $N_s$ & $N$ & L1 final $\downarrow$ &  RMSE $\downarrow$ & SSIM $\uparrow$ & FID (floor) $\downarrow$ \\
\midrule
Frozen        & --  & 5000            & 0.568 $\pm$ 0.151 & 0.342 $\pm$ 0.082 & 0.046 $\pm$ 0.068 & 9.95 (9.95) \\ \addlinespace[2pt] 
Bounded ES    & 200 & 5000            & 0.160 $\pm$ 0.070            & 0.110 $\pm$ 0.044 & 0.556 $\pm$ 0.130 & 34.5 (9.95) \\
\bottomrule
\end{tabular}
\end{table}

\subsection{Beyond smooth energies: a regularity ladder}
\label{sec:rough}

The $C^2$ hypothesis of Theorem~\ref{thm:main} is a convenience of the proof, not a property the method needs. Algorithm~\ref{alg:es} only ever evaluates $\cos(\omega_i t+kJ)$, which is well defined for any measurable $J$; what the \emph{guarantee} needs is that the averaged drift's gradient $\nabla J$ be an admissible drift for the sampler's own noise. Here the noise is not an obstacle but the enabling mechanism: when the diffusion is uniformly non-degenerate, the law of the process is absolutely continuous (Lemma~\ref{lem:krylov}), so the trajectory spends zero time on any Lebesgue-null set, and derivatives defined only \emph{almost everywhere} are as good as continuous ones. Here the unmodified $J$ is dithered directly, no proximal oracle or convexity is assumed. Throughout this subsection we adopt the global form of the regularity hypotheses on $\f$ and $\bsig$ (Assumption~\ref{ass:glob}, Appendix~\ref{supp:nonsmooth}), under which the localization in the proof of Theorem~\ref{thm:main} is unnecessary and the argument sharpens to an explicit mean-square rate, $\Expec\sup_{t\le T}\|\x_\omega-\bar\x\|^2\le C/\omega$ (Theorem~\ref{thm:quant}), without the $C^3$ smoothness of Remark~\ref{rem:rate-supp}; the hypothesis on $J$ is then relaxed rung by rung, with all statements and proofs in Appendix~\ref{supp:nonsmooth} and the full ladder summarized in Table~\ref{tab:ladder}.

\begin{proposition}[$C^{1,1}$ energies]
\label{prop:c11}
In Assumption~\ref{ass:glob}, replace the hypothesis on $J$ by: $J(\cdot,t)\in C^{1,1}(\R^d)$ uniformly in $t$, i.e., $\nabla J$ bounded and globally Lipschitz (so $\nabla^2J$ exists a.e.\ and is bounded, but may be discontinuous; $J$ itself may be unbounded). Then, with no non-degeneracy assumption on $\bsig$, the conclusion of Theorem~\ref{thm:quant} holds unchanged, including the rate: $\Expec\sup_{t\le T}\|\x_\omega-\bar\x\|_2^2\le C/\omega$.
\end{proposition}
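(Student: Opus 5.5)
The plan is to rerun the proof of Theorem~\ref{thm:quant} and locate exactly where second derivatives of $J$ enter. The aim is to show that they appear only through Lipschitz bounds on $\nabla J$, never through continuity of $\nabla^2 J$ or through It\^o's formula applied to $\nabla J$ itself. Under Assumption~\ref{ass:glob} there is no localization, so every constant must be global; the relevant constants are $\|\nabla J\|_\infty$ and $L=\mathrm{Lip}(\nabla J)$, both uniform in $t$. The only objects It\^o's formula is applied to are the phase $\theta_i(s)=\omega_i s+kJ(\x_\omega(s),s)$ and functions of it. Since $J(\cdot,t)$ is $C^{1,1}$ but not $C^2$, the classical formula is not directly available. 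I would first mollify: set $J^\epsilon=J*\rho_\epsilon$. Then $J^\epsilon$ is $C^\infty$, with $\|\nabla J^\epsilon\|_\infty\le\|\nabla J\|_\infty$, $\|\nabla^2J^\epsilon\|_\infty\le L$, and $\nabla J^\epsilon\to\nabla J$ uniformly with error at most $L\epsilon$. Finally, because $\cos$ and $\sin$ are $1$-Lipschitz, $|\cos(\omega_i t+kJ^\epsilon)-\cos(\omega_i t+kJ)|\le |k|\,\|J-J^\epsilon\|_\infty$, and $\|J-J^\epsilon\|_\infty\le C\epsilon^2$.

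\textbf{Step 1 (smooth comparison).} Run the quantitative proof of Theorem~\ref{thm:quant} for $J^\epsilon$, tracking constants. In that proof, It\^o integration by parts on $\sqrt{\alpha\omega_i}\cos\theta_i\,ds$ writes it as $d\big(\sqrt{\alpha/\omega_i}\,g_i\sin\theta_i\big)$ minus correction terms. The corrections involve $\nabla J$ (the resonant term producing $-\tfrac{k\alpha}{2}g_i^2\partial_iJ$), the It\^o second-order term $\tfrac12\mathrm{tr}(\bsig\bsig^\top\nabla^2J^\epsilon)$, and the time derivative. The $\nabla^2J^\epsilon$ term carries a prefactor $\omega^{-1/2}$ and is bounded by $L$. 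The oscillatory lemma then yields $O(\omega^{-1/2})$ in $L^2$ using only sup-norms of the integrand and its derivatives, and the Gr\"onwall step uses only $\mathrm{Lip}(\bsig\bsig^\top\nabla J^\epsilon)$, which is at most $C(1+L)$. Hence $\Expec\sup_t\|\x^\epsilon_\omega-\bar\x^\epsilon\|^2\le C/\omega$ with $C$ independent of $\epsilon$.

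One caution concerns the hypotheses on $\partial_t J$ and $\partial_t\nabla J$ in Assumption~\ref{ass:glob}. These are kept, and they pass to $J^\epsilon$ with the same bounds, because mollification is only in $\x$.

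\textbf{Step 2 (removing $\epsilon$).} The ES drifts for $J$ and $J^\epsilon$ differ by at most $\sqrt{\alpha\omega}\,|g|\,|k|C\epsilon^2$. A Gr\"onwall estimate then gives $\Expec\sup\|\x_\omega-\x^\epsilon_\omega\|^2\le C\omega\epsilon^4$; note the ES drift is not uniformly Lipschitz in $\omega$, so the Gr\"onwall step here needs care. The averaged drifts differ by $C L\epsilon$ and are globally Lipschitz, so $\Expec\sup\|\bar\x-\bar\x^\epsilon\|^2\le C\epsilon^2$. Choosing $\epsilon=\omega^{-1}$ makes both errors $O(\omega^{-1})$, which gives the claimed rate.

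\textbf{Main obstacle.} The hard step is the first Gr\"onwall in Step 2. The ES drift has Lipschitz constant of order $\sqrt{\omega}\,|k|\,\|\nabla J\|_\infty$, which blows up with $\omega$, so a naive comparison gives a factor $e^{C\sqrt\omega T}$. To avoid this, I would not compare $\x_\omega$ with $\x_\omega^\epsilon$ directly. Instead, I would redo Step 1 for $J$ itself, using a generalized It\^o formula valid for $C^{1,1}$ functions of a semimartingale. The It\^o--Tanaka/Krylov-type version works for $W^{2,\infty}_{loc}$ functions when the process has a density, but no ellipticity is assumed here. So I would rely instead on the fact that the $\epsilon\to0$ limit of It\^o's formula for $J^\epsilon(\x_\omega)$ holds term by term: $\nabla J^\epsilon\to\nabla J$ uniformly, and the second-order term $\int\mathrm{tr}(\bsig\bsig^\top\nabla^2J^\epsilon)\,ds$ is bounded by $LT$ uniformly in $\epsilon$. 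It therefore suffices to keep it as a bounded adapted process (a weak-$*$ limit) without identifying it. Because the proof only ever needs an $L^\infty$ bound on that term, never its value, the estimate survives the limit and the bound $C/\omega$ follows with $C$ depending on $L$ and $\|\nabla J\|_\infty$ only.
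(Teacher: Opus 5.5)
There is a genuine gap, and it is not the two-trajectory Gr\"onwall you flag. Your accounting of where derivatives of $J$ enter is wrong. You assert that It\^o's formula is applied only to the phase and functions of it, never to $\nabla J$. But in the proof of Theorem~\ref{thm:quant}, the non-resonant dither products of Step~2 (Appendix~\ref{supp:proof-main}) are estimated by Lemma~\ref{lem:osc-moll}. That lemma integrates by parts against the fast antiderivative and expands $d\Phi(\x_\omega(s),s)$ by It\^o's formula, where the slow factors $\Phi=\partial_iu^{c/s}\,g_i\cos kJ$ (or $\sin kJ$) are functions of $\nabla J$. Its constant depends on $\mathrm{Lip}(\nabla_\x\Phi)$, i.e.\ on a Lipschitz bound for $\nabla^2J$.

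For $J^\epsilon$ that bound is $\|\nabla^3J^\epsilon\|_\infty\le CM/\epsilon$, so the lemma's It\^o-correction term is $O(c_r/(\epsilon\omega))$. Your Step~1 is therefore not uniform in $\epsilon$, and at your choice $\epsilon=\omega^{-1}$ this term is $O(1)$.

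The device in your last paragraph does not repair this. Keeping the second-order term as a bounded adapted process works for the correctors $u^{c/s}$, whose Hessian involves only $\nabla^2J$; that limiting argument is fine. Inside the oscillatory lemma, however, the second-order term is $\operatorname{tr}(\bsig\bsig^{\top}\nabla^2\Phi)$, which contains the distributional $\nabla^3J$. It has no $\epsilon$-uniform bound and does not survive $\epsilon\to0$.

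Conversely, the obstacle you single out would be harmless if Step~1 were genuinely $\epsilon$-uniform. Since $\|J-J^\epsilon\|_\infty\le CM\epsilon^2$ and the averaged comparison costs only $C\epsilon^2$, an exponential-in-$\omega$ Gr\"onwall factor is beaten by taking $\epsilon$ exponentially small in $\omega$.

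The missing idea is to tie the smoothing scale to the dither scale. The paper stays on the single trajectory $\x_\omega$, sets $\varepsilon=\omega^{-1/2}$, and splits each dither integral into two streams. The stream with phase $kJ_\varepsilon$ is run through Steps~1--2 with smooth correctors; the third-derivative cost $CM/\varepsilon$ is then exactly absorbed by the $1/\omega$ of the fast antiderivative. The mismatch stream $\int g_n\sqrt{\alpha\omega_n}\,[\cos(\omega_ns+kJ)-\cos(\omega_ns+kJ_\varepsilon)]\,ds$ is integrated by parts as a zero-mean oscillation. The mixed-phase resonant means give $-\tfrac{k\alpha}{2}g_n^2\partial_nJ+O(\varepsilon)$, and Gr\"onwall closes because $\nabla J$ is Lipschitz.

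Alternatively, you can repair your direct route by proving Lemma~\ref{lem:osc-moll} for merely Lipschitz slow factors. Mollify $\Phi$ at $\delta=\omega^{-1/2}$ and use $\|\nabla^2\Phi_\delta\|_\infty\le C\|\nabla\Phi\|_{L^\infty}/\delta$. The It\^o-correction term is then $O(\omega^{-1/2})$ pathwise, and the rest of your plan, including your generalized It\^o formula for the correctors, goes through.

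Your bound $\|J-J^\epsilon\|_\infty\le CM\epsilon^2$ (for an even mollifier) is worth keeping. At $\varepsilon=\omega^{-1/2}$ it bounds the paper's mismatch stream crudely by $C\sqrt\omega\,\varepsilon^2=O(\omega^{-1/2})$, so that stream would not need its integration by parts at all.
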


\begin{theorem}[Sobolev energies]
\label{thm:sobolev}
Assume the diffusion is uniformly non-degenerate, $\bsig\bsig^{\!\top}\succeq\lambda_0 I$ for some $\lambda_0>0$ (i.e., $\min_i\inf g_i^2\ge\lambda_0$), with $\f,\bsig$ bounded and Lipschitz, and that $J(\cdot,t)\in W^{2,p}_{\mathrm{uloc}}(\R^d)$ uniformly in $t$ for some $p>\max(d,2)$ (so $\nabla J$ is continuous and $\nabla^2J\in L^p_{\mathrm{uloc}}$), with $\nabla J$ and $\partial_tJ$ bounded on $\R^d\times[0,T]$. Then $\x_\omega\to\bar\x$ in probability, uniformly on $[0,T]$, with
\begin{equation*}
    \Expec\Big[\sup_{t\le T}\big\|\x_\omega(t)-\bar\x(t)\big\|_2^2\Big]\;\le\;C\,\omega^{-(p-d)/p},
\end{equation*}
where $\bar\x$ is the unique strong solution of the averaged SDE \eqref{eq:averaged-sde}, whose drift is continuous because $\nabla J$ is (Morrey's embedding, $p>d$); the a.e.-defined Hessian $\nabla^2J$ enters only the proof, along the trajectories, where the non-degenerate noise makes it unambiguous (Appendix~\ref{supp:rough-noise}).
\end{theorem}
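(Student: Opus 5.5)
We follow the proof of Theorem~\ref{thm:quant} (the global, localization-free version of Theorem~\ref{thm:main}) step by step, and change only the step that uses $\nabla^2 J$. That step is where the Itô expansion of the phase $\theta_i(s)=\omega_i s+kJ(\x_\omega(s),s)$ is needed. Write the dither drift as $g_i\sqrt{\alpha\omega_i}\cos\theta_i$ and integrate by parts against $\sin\theta_i/\omega_i$. By Itô's formula applied to $\sin\theta_i$, this leaves four pieces:
\begin{itemize}
\item a boundary term of order $\omega^{-1/2}$;
\item the resonant terms, from $d\theta_i\ni k\nabla J\cdot d\x_\omega$ against the $O(\sqrt{\omega})$ dither, which produce $-\tfrac{k\alpha}{2}g_i^2\partial_iJ$;
\item non-resonant oscillatory terms at frequencies $(\omega_i\pm\omega_j)$, controlled by Lemma~\ref{lem:osc-main-ref} in its quantitative form;
\item the Itô correction $\tfrac{k}{2}\mathrm{tr}(\bsig\bsig^\top\nabla^2J)$ times $\cos\theta_i\,g_i\sqrt{\alpha/\omega_i}$, together with $\partial_t J$ and $\nabla g_i$ terms.
\end{itemize}
Since $\nabla J$ and $\partial_tJ$ are bounded and $\f,\bsig$ are Lipschitz, every piece except the Hessian piece is handled exactly as in Theorem~\ref{thm:quant}. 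Those pieces give $O(\omega^{-1})$ in mean square, and the final $L^2$ Grönwall comparison with $\bar\x$ also goes through unchanged, because $\bar\bmu$ is Lipschitz in $\x$ up to the modulus of $\nabla J$. Here Morrey gives $\nabla J\in C^{0,1-d/p}$, so we use Hölder continuity plus the Krylov–Zvonkin/Veretennikov strong uniqueness for the comparison.

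To justify Itô's formula for $J\in W^{2,p}_{\mathrm{uloc}}$, we first mollify, $J_\epsilon=J*\rho_\epsilon$, and then pass $\epsilon\to0$ using Krylov's estimate (Lemma~\ref{lem:krylov}). Under $\bsig\bsig^\top\succeq\lambda_0 I$ and a drift of size at most $C\sqrt{\omega}$, this estimate gives
\[
\Expec\int_0^T|h(\x_\omega(s),s)|\,ds\le C_\omega\|h\|_{L^{p}_{\mathrm{uloc}}}, \qquad p>d .
\]
The constant $C_\omega$ grows with the drift bound, and this growth is the central difficulty. The fix is to introduce the mollification scale as a free parameter and optimize it against $\omega$.

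We do not apply Itô to $J$ itself. Instead we run the whole argument with the smooth phase $\omega_i s+kJ_\epsilon$ used only as an auxiliary corrector, while the actual process still dithers $J$. The mismatch $\|J-J_\epsilon\|_\infty\lesssim\epsilon^{2-d/p}$ (and $\|\nabla J-\nabla J_\epsilon\|_\infty\lesssim\epsilon^{1-d/p}$) then enters the drift. The mollified Hessian satisfies the pointwise bound $\|\nabla^2J_\epsilon\|_\infty\lesssim\epsilon^{-d/p}$, by Young's inequality with the $L^p$ norm on unit balls. The Hessian term therefore contributes
\[
\omega^{-1/2}\epsilon^{-d/p},
\]
or better after one more oscillatory integration by parts, where $\|\nabla^3J_\epsilon\|\lesssim\epsilon^{-1-d/p}$ appears at a further factor $\omega^{-1/2}$. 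The mismatch contributes $\sqrt{\omega}\,\epsilon^{2-d/p}$ through the phase, or $\epsilon^{1-d/p}$ through the averaged drift.

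Balancing these terms should produce the exponent $(p-d)/p$. As a target, choose $\epsilon=\omega^{-1/2}$. Then the gradient mismatch is $\omega^{-(p-d)/(2p)}$, whose square is $\omega^{-(p-d)/p}$, matching the claim. The phase mismatch is $\sqrt\omega\,\omega^{-1+d/(2p)}$, which is of the same order, and the Hessian term $\omega^{-1/2}\omega^{d/(2p)}$ is again of the same order. Feeding these bounds into the Grönwall step of Theorem~\ref{thm:quant} gives $\Expec\sup_{t\le T}\|\x_\omega-\bar\x\|^2\le C\omega^{-(p-d)/p}$, and convergence in probability follows by Chebyshev.

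\textbf{Main obstacle.} The delicate step is ensuring that no $\omega$-dependent Krylov constant enters. If Krylov's estimate is needed along $\x_\omega$, whose drift is of size $\sqrt\omega$, its constant blows up. I would try to avoid this entirely by using only sup-norm bounds on the mollified derivatives, as above. Where a.e. Hessian values along trajectories are unavoidable, for instance to identify the limit, I would apply Krylov's estimate to $\bar\x$, whose drift is bounded, or use a Girsanov change of measure on the bounded-drift part. The role of non-degenerate noise is precisely to make these estimates along $\bar\x$ available.
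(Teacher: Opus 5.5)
\paragraph{The gap: the final comparison.} The step that fails is the last one. You say the $L^2$ Gr\"onwall comparison of Theorem~\ref{thm:quant} ``goes through unchanged'', and at the end you feed your remainder into it. Under these hypotheses, however, $\bar\bmu=\f-\tfrac{k\alpha}{2}\bsig\bsig^{\!\top}\nabla J$ is only $(1-d/p)$-H\"older in $\x$, since $\nabla^2J\in L^p_{\mathrm{uloc}}$ is in general unbounded. The drift difference is therefore controlled only by $C\|\x_\omega-\bar\x\|^{1-d/p}$. With $\varphi(t)=\Expec\sup_{s\le t}\|\x_\omega(s)-\bar\x(s)\|^2$, you get a Bihari inequality $\varphi(t)\le C\int_0^t\varphi(s)^{1-d/p}\,ds+C\,\Expec\sup_{s\le T}\|\mathbf E(s)\|^2$ rather than a Gr\"onwall one. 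This does not force $\varphi\to0$ as the remainder vanishes (compare $\dot x=|x|^{\beta}$), so it gives neither the rate $\omega^{-(p-d)/p}$ nor convergence.

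You do name Zvonkin--Veretennikov, but only as strong uniqueness. That is qualitative: combined with compactness it could at best give convergence in probability, with no rate. What is missing is a \emph{quantitative} Zvonkin comparison, which is how the paper closes the proof in step~(iv):
\begin{itemize}
\item Solve the backward parabolic problem $\partial_t\mathbf u+\bar{\mathcal L}_t\mathbf u-\varrho\mathbf u=-\bar\bmu$ with $\mathbf u(\cdot,T)=\mathbf 0$. It must be parabolic: an elliptic resolvent solved at each $t$ leaves an uncompensated $\partial_t\mathbf u\,dt$, and $\nabla\partial_tJ$ is not assumed bounded.
\item Parabolic Schauder theory then gives $\mathbf u\in C^{2+\beta,1+\beta/2}$. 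This needs $\bar\bmu$ H\"older in $t$ as well. That is where boundedness of $\partial_tJ$ enters, interpolated against the Morrey modulus: $|\nabla J(\x,t)-\nabla J(\x,s)|\le C\varepsilon^{1-d/p}+C|t-s|/\varepsilon$.
\item For $\varrho$ large, $\mathrm{id}+\mathbf u(\cdot,t)$ is a bi-Lipschitz diffeomorphism with bounded second derivatives. It conjugates the averaged SDE to one with Lipschitz coefficients.
\item Apply It\^o's formula to both processes in the new coordinates, using $\x_\omega=\x_0+\int\bar\bmu(\x_\omega)\,ds+\int\bsig(\x_\omega)\,d\w+\mathbf E$. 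Control the terms where the transformation acts on $\mathbf E$, including cross-variations of its martingale part.
\item Close with Gr\"onwall--BDG and transfer back through the bi-Lipschitz map.
\end{itemize}

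\paragraph{What is right, and leaner than the paper.} Up to that point your argument is sound, and it is genuinely simpler than the paper's. The paper uses only $\|J-J_\varepsilon\|_\infty\le\|\nabla J\|_\infty\varepsilon$. At $\varepsilon=\omega^{-1/2}$ the dither mismatch $\sqrt\omega\,\|J-J_\varepsilon\|_\infty$ is then $O(1)$, so the paper must integrate the mismatch stream by parts. That puts the unmollified $\nabla^2J$ into the corrector and forces both the It\^o--Krylov formula and the occupation estimate of Lemma~\ref{lem:micro}.

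Your second-order bound $\|J-J_\epsilon\|_\infty\lesssim M_p\,\epsilon^{2-d/p}$ avoids all of this. It is valid provided the mollifier has vanishing first moment (e.g.\ is even), so that the linear Taylor term integrates to zero and the remainder is controlled by the Morrey modulus of $\nabla J$; you should state this assumption. With it, the crude pathwise bound $\sqrt\omega\,\epsilon^{2-d/p}=\omega^{-(p-d)/(2p)}$ suffices. No occupation estimate along $\x_\omega$ is ever needed, and non-degeneracy enters only through well-posedness of the limit and the Zvonkin step.

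One bookkeeping correction: the non-resonant terms are not $O(\omega^{-1})$ in mean square with $\epsilon$-free constants. Through Lemma~\ref{lem:osc-moll} their slow factors carry $\|\nabla^2J_\epsilon\|_\infty\lesssim\epsilon^{-d/p}$ and $\mathrm{Lip}(\nabla^2J_\epsilon)\lesssim\epsilon^{-1-d/p}$. They therefore contribute $O(\omega^{-1/2}\epsilon^{-d/p}+\omega^{-1}\epsilon^{-1-d/p})$. At $\epsilon=\omega^{-1/2}$ this is still of the target order, so your balance is unaffected.
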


The proof is given in Appendix~\ref{supp:rough-sobolev}, it utilizes the machinery of Theorem~\ref{thm:quant} on a mollified corrector. Theorem~\ref{thm:sobolev} covers energies that are $C^2$ off a null set as long as the gradient stays continuous: piecewise-quadratic energies glued in $C^1$ and, more generally, any energy with a merely $L^p$ Hessian. When the gradient kinks, $\|\cdot\|_1$ or ReLU-type energies, or the absolute-value errors that drive tracking energy of Eq. (\ref{eq:supp-track-J}), $J$ leaves $W^{2,p}$: $\nabla^2J$ acquires a singular surface part, and neither result above applies. The averaged limit remains well posed by Veretennikov's theorem for bounded measurable drifts of non-degenerate diffusions (Proposition~\ref{prop:lip}). Appendix~\ref{supp:rough-lip} gives the two-scale scheme for this, and Appendix~\ref{supp:rough-occ} proves that bound (Theorem~\ref{thm:occ}). Convergence holds without a rate whenever $J$ is Lipschitz in time as well as in space; no rate is claimed, and none is expected from the argument, since $\|\nabla J-\nabla J_\varepsilon\|$ carries no rate for merely Lipschitz $J$.

\begin{table}[t]
\caption{Provability degrades gracefully as $J$ roughens. Statements and proofs in Appendix~\ref{supp:nonsmooth}.}
\label{tab:ladder}
\begin{center}
\small
\setlength{\tabcolsep}{4.5pt}
\begin{tabular}{@{}llll@{}}
\toprule
\textbf{Regularity of $J(\cdot,t)$} & \textbf{Permits} & \textbf{Drift $\nabla J$} & \textbf{Status} \\
\midrule
$C^2$ (Assumption~\ref{ass:glob}) & classical smoothness & continuous & Thm~\ref{thm:main}; rate $\omega^{-1/2}$ (Thm~\ref{thm:quant}) \\
$C^{1,1}=W^{2,\infty}$ & \begin{tabular}[t]{@{}l@{}}bounded, non-$C^1$ Hessian\end{tabular} & Lipschitz & \begin{tabular}[t]{@{}l@{}}Prop.~\ref{prop:c11} (proved) \end{tabular} \\
\begin{tabular}[t]{@{}l@{}}$W^{2,p}_{\mathrm{uloc}}$, $p>d$, $\nabla J$ bounded\end{tabular} & unbounded $L^p$ Hessian & continuous & \begin{tabular}[t]{@{}l@{}}Thm~\ref{thm:sobolev} (proved) \end{tabular} \\
Lipschitz, $W^{1,\infty}$ & gradient kinks & a.e., $\in L^\infty$ & \begin{tabular}[t]{@{}l@{}}Prop.~\ref{prop:lip}; convergence, no rate \end{tabular} \\
$\nabla J\in L^q_tL^p_\x$, $\frac dp{+}\frac2q<1$ & \begin{tabular}[t]{@{}l@{}}unbounded gradient;\\ non-Lipschitz cusps\end{tabular} & $\in L^p$ & \begin{tabular}[t]{@{}l@{}}limit well posed (Krylov--\\R\"ockner); averaging expected\end{tabular} \\
$\nabla J$ distributional, $C^{-\beta}$ & jumps in $J$ & surface measure & \begin{tabular}[t]{@{}l@{}}open frontier\end{tabular} \\
\bottomrule
\end{tabular}
\end{center}
\end{table}

\textbf{How rough can $J$ be?} By Krylov-R\"ockner~\citep{krylov2005strong}, the averaged SDE \eqref{eq:averaged-sde} is strongly well posed when the weak gradient satisfies the Ladyzhenskaya-Prodi-Serrin-type condition
\begin{equation}
    \nabla J\ \in\ L^q\big([0,T];\,L^p(\R^d)\big),
    \qquad
    d/p+2/q\;<\;1,
    \label{eq:LPS}
\end{equation}
which permits $\nabla J$ to be unbounded, defined only almost everywhere, and merely $p$-integrable in space with $q$-integrability in time; this is the most general regularity under which the limit is provably well posed by classical means, and we expect the averaging to extend throughout it (Remark~\ref{rem:boundary}). Requiring $J(\cdot,t)$ to be merely \emph{locally Lipschitz}, the weakest classical condition under which $\nabla J$ exists a.e.\ at all, by Rademacher's theorem~\citep{evans1992measure}, still forces $\nabla J$ to be locally bounded, hence locally in $L^p$ for every $p\le\infty$, so in local regularity \eqref{eq:LPS} is strictly more general than any Lipschitz-type hypothesis: it additionally admits gradients with genuine local blow-ups of merely $L^p$ type (cusps $|\nabla J|\asymp|\x-\x_0|^{-\gamma}$, $0<\gamma<d/p$), which no Lipschitz condition tolerates.

\clearpage

\subsection*{AI use statement}
In this work, generative AI tools were used to proof-read the paper, to find typos or mathematical errors, for polishing the writing, and for discussion, suggestions, and collaboration on some of the mathematical results. We have reviewed all AI-assisted work. We checked LLM-suggested references through a manual literature survey. We take responsibility for the final content of this work, including text, claims or artifacts produced with the aid of generative AI.

\subsection*{Reproducibility statement}
For all of the theoretical results, clear explanations of any assumptions and complete proofs of the claims are included in the appendix. For the proposed algorithm a clear explanation of exactly how it is applied has been provided. A complete description of data processing steps has also been provided.


\bibliography{iclr2027_conference_v5}
\bibliographystyle{iclr2027_conference}

\clearpage

\appendix

The appendix includes all detailed proofs and a large amount of numerical experiments testing various aspects of this approach such as dithering frequency spread versus state dimension, as well as additional details about the experiments which could not fit in the main text.

\section{Proof of Theorem~\ref{thm:main} and related results}
\label{supp:proof}

\subsection{Notation and standing assumptions}
\label{supp:notation}

For $\x\in\R^{N}$ (we write $N$ for the state dimension; $N=d$ in the notation of the main text) consider the coupled nonlinear system of It\^{o} stochastic differential equations
\begin{equation}
    d\x \;=\; \bmu_\omega(\x,t)\,dt \;+\; \bsig(\x,t)\,d\w,
    \qquad
    \bsig(\x,t)\;=\;\operatorname{diag}\big(g_1(\x,t),\dots,g_N(\x,t)\big),
    \label{eq:supp-sde}
\end{equation}
on a filtered probability space $(\Omega,\mathcal F,(\mathcal F_t)_{t\ge0},\Prob)$ carrying a standard $N$-dimensional Brownian motion $\w$, with drift components
\begin{equation}
    \mu_{\omega,n}(\x,t)
    \;=\;
    f_n(\x,t)\;+\;g_n(\x,t)\,\sqrt{\alpha\omega_n}\,
    \cos\big(\omega_n t + k\,J(\x,t)\big),
    \qquad
    \omega_n=\omega r_n ,
    \label{eq:supp-drift}
\end{equation}
and the associated averaged system
\begin{equation}
    d\bar\x \;=\; \bar\bmu(\bar\x,t)\,dt + \bsig(\bar\x,t)\,d\w,
    \qquad
    \bar\bmu \;=\; \f-\tfrac{k\alpha}{2}\,\bsig\bsig^{\!\top}\nabla_{\x}J ,
    \label{eq:supp-averaged}
\end{equation}
driven by the \emph{same} $\w$ and started from the same $\mathcal F_0$-measurable initial condition $\x_\omega(0)=\bar\x(0)=\x_0$ with $\Expec\|\x_0\|^2<\infty$, under Assumptions~\ref{ass:reg}--\ref{ass:growth} of the main text. Under Assumption~\ref{ass:growth} both \eqref{eq:supp-sde} (for each fixed $\omega$) and \eqref{eq:supp-averaged} have unique global strong solutions with $\Expec\sup_{t\le T}\|\cdot\|^2<\infty$. Because the dither term is bounded by $\sqrt{\alpha\omega r_{\max}}\,|g_n|$, $\bmu_\omega$ satisfies a linear-growth bound with an $\omega$-dependent constant, which guarantees global existence at each fixed $\omega$ \citep[Ch.~5]{karatzas1991brownian}.

Throughout the analysis that follows we fix $T>0$, $C$ will be used to denote a constant that may change from line to line, and $C_R$ denotes a constant depending only on $(R,T,N,\alpha,k,r_1,\dots,r_N)$ and on the bounds from Assumption~\ref{ass:reg} of $\f$, $g_n$, $J$ and their derivatives on $\bar B_R\times[0,T]$, where $\bar B_R=\{\x:\|\x\|\le R\}$.

\subsection{Warm-up: the deterministic one-dimensional mechanism}
\label{supp:warmup}

The following is meant to give intuition about how this algorithm is able to
extract gradients, without yet presenting any stochastic technicalities. We
consider $N=1$, $g\equiv1$, $\f\equiv0$, and the dynamics
$\dot x=\sqrt{\alpha\omega}\cos\theta$ with phase
$\theta(t)=\omega t+kJ(x(t),t)$, where $J\in C^{2}$ has bounded derivatives.
Consider the near-identity variable
\begin{equation}
    y = x - \sqrt{\alpha/\omega} \sin\theta ,
    \label{eq:supp-warmup-ydef}
\end{equation}
which differs from $x$ by at most $\sqrt{\alpha/\omega}$. Integrating
$\dot x=\sqrt{\alpha\omega}\cos\theta$ at leading order ($\theta\approx\omega t$)
gives $x\approx\mathrm{const}+\sqrt{\alpha/\omega}\sin\theta$, so the
subtraction in \eqref{eq:supp-warmup-ydef} removes the fast
oscillation. Differentiating \eqref{eq:supp-warmup-ydef} and plugging in
$\dot\theta=\omega+k(\partial_xJ\,\dot x+\partial_tJ)$, we get
\begin{align}
    \dot y
    &= \dot x-\sqrt{\alpha/\omega}\,\cos\theta\,\dot\theta
    \nonumber\\
    &= \dot x-\sqrt{\alpha/\omega}\cos\theta\,\big[\omega+k(\partial_xJ\,\dot x+\partial_tJ)\big]
    \nonumber\\
    &= \underbrace{\dot x-\sqrt{\alpha\omega}\cos\theta}_{=\,0}
      \;-\;\sqrt{\alpha/\omega}\,k\,\partial_xJ\,\big(\sqrt{\alpha\omega}\cos\theta\big)\cos\theta
      \;-\;\sqrt{\alpha/\omega}\,k\,\partial_tJ\cos\theta
    \nonumber\\
    &= -\underbrace{\sqrt{\alpha/\omega}\,\sqrt{\alpha\omega}}_{=\,\alpha}\,k\,\partial_xJ\cos^2\theta
      \;-\;\sqrt{\alpha/\omega}\,k\,\partial_tJ\cos\theta
    \nonumber\\
    &= -\tfrac{k\alpha}{2}\,\partial_xJ\,\big(1+\cos2\theta\big)
      \;-\;\sqrt{\alpha/\omega}\,k\,\partial_tJ\cos\theta
    \nonumber\\
    &= -\tfrac{k\alpha}{2}\,\partial_xJ(x,t)
       \;-\;\tfrac{k\alpha}{2}\,\partial_xJ(x,t)\cos2\theta
       \;-\;\sqrt{\alpha/\omega}\,k\,\partial_tJ(x,t)\cos\theta
    \nonumber\\
    &= -\tfrac{k\alpha}{2}\,\partial_xJ(x,t)
       \;-\;\tfrac{k\alpha}{2}\,\partial_xJ(x,t)\cos2\theta
       \;+\;O(\omega^{-1/2}).
    \label{eq:supp-warmup-y}
\end{align}
The final line follows because, for fixed $k$ and $\alpha$,
\begin{equation}
    \big|\sqrt{\alpha/\omega}\,k\,\partial_tJ(x,t)\cos\theta\big|
    \;\le\; k\sqrt{\alpha}\,\sup_{x,t}\lvert\partial_tJ\rvert\;\omega^{-1/2}
    \;=\;O(\omega^{-1/2})
    \qquad\text{as }\omega\to\infty.
    \label{eq:supp-warmup-bound}
\end{equation}
The first term of \eqref{eq:supp-warmup-y} is the desired gradient flow. The $\cos2\theta$ term oscillates at frequency $2\omega$ with slowly varying coefficient; one further integration by parts in time shows its running integral is $O(\omega^{-1/2})$ on $[0,T]$ when $\partial_xJ$ is Lipschitz along the trajectory (the boundary term is $O(\omega^{-1})$, but the coefficient $\partial_xJ(x(t),t)$ varies at the dither speed $\dot x=O(\sqrt\omega)$, which costs $\sqrt\omega/\omega$). Hence $y$, and therefore $x=y+O(\omega^{-1/2})$, converges uniformly on $[0,T]$ to the solution of
$\dot{\bar x}=-\tfrac{k\alpha}{2}\partial_xJ(\bar x,t)$ as
$\omega\to\infty$. The mechanism is as described in the main text: the
state carries an $O(\omega^{-1/2})$ ripple phase-locked to the dither;
re-inserting the ripple into the cosine produces a product whose time
average is exactly $-\tfrac{k\alpha}{2}\,\partial_x J$. The remainder of
this appendix carries this argument out for the coupled It\^{o} system
\eqref{eq:supp-sde}, where three complications arise: the ripple
interacts with the Brownian forcing, the coordinates are coupled through
the shared phase $kJ$, and the oscillatory remainders must be controlled
\emph{along the $\omega$-dependent random trajectory}.

\subsection{An oscillatory-integral lemma along ES trajectories}
\label{supp:lemmas}

We start with a helper Lemma that will be used in the proof of Theorem 1.

For $R>0$ let
\begin{equation}
\begin{aligned}
    \tilde\tau_R &\;=\; \inf\{t\ge0:\ \|\x_\omega(t)\|\ge R \text{ or } \|\bar\x(t)\|\ge R\}\in[0,\infty],\\
    \tau_R &\;=\; \tilde\tau_R\wedge T, \quad \mathrm{where} \quad a \wedge b:= \min \{a,b\}.
\end{aligned}
    \label{eq:supp-tau}
\end{equation}
Since the paths are continuous, $\|\x_\omega(t)\|\le R$ and
$\|\bar\x(t)\|\le R$ for all $t\le\tau_R$, and $\|\x_\omega(\tilde\tau_R)\|=R$ or $\|\bar\x(\tilde\tau_R)\|=R$ whenever $\tilde\tau_R<\infty$.

\begin{lemma}[Oscillatory integrals along the dithered trajectory]
\label{lem:osc-main-ref}
Let Assumptions~\ref{ass:reg}--\ref{ass:growth} hold, fix $R,T>0$, and
let $\Phi:\R^N\times[0,T]\to\R$ be bounded by $C_\Phi$ and jointly
Lipschitz with constant $L_\Phi$ on $\bar B_R\times[0,T]$. Then for every
real $c\neq0$ and all $\omega\ge1$,
\begin{equation}
    \Expec\left[\sup_{t\le T}\left|\int_0^{t\wedge\tau_R}
    \Phi\big(\x_\omega(s),s\big)\,e^{\mathrm i c\omega s}\,ds\right|^2\right]
    \;\le\; C(R,T,N,\Phi,|c|,\alpha,r_{\max})\ \omega^{-1/2} .
    \label{eq:supp-osc}
\end{equation}
The same bound holds with $e^{\mathrm i c\omega s}$ replaced by
$\cos(c\omega s)$ or $\sin(c\omega s)$.
\end{lemma}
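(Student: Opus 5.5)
We will prove the bound by a block (Riemann-sum) decomposition rather than a direct integration by parts. A direct integration by parts fails here: $\Phi$ is only Lipschitz, and along the dithered trajectory $\x_\omega$ moves at speed $O(\sqrt{\omega})$ because of the dither drift, so $\frac{d}{ds}\Phi(\x_\omega(s),s)$ is not available and would in any case be too large.

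\textbf{Partition and split.} Fix a block length $h=h(\omega)$, to be chosen at the end, and partition $[0,T]$ into intervals $I_j=[jh,(j+1)h]$, $j=0,\dots,\lceil T/h\rceil-1$. On each block we freeze the coefficient at the left endpoint and write
\begin{equation*}
\Phi(\x_\omega(s),s)=\Phi(\x_\omega(jh),jh)+\big[\Phi(\x_\omega(s),s)-\Phi(\x_\omega(jh),jh)\big].
\end{equation*}
The stopped integral up to $t\wedge\tau_R$ then splits into three pieces: a sum of frozen-coefficient oscillatory integrals over full blocks, a sum of Lipschitz remainders, and one partial boundary block.

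\textbf{Frozen terms.} For each full block, $\big|\int_{I_j}e^{\mathrm i c\omega s}\,ds\big|\le 2/(|c|\omega)$. The frozen coefficient is bounded by $C_\Phi$, and the stopping time only truncates which blocks are included. Summing over the $O(T/h)$ blocks therefore gives a deterministic bound $C\,C_\Phi T/(|c|\omega h)$, uniformly in $t$. The partial boundary block contributes at most $C_\Phi h$.

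\textbf{Remainder terms.} On $s\le\tau_R$ both endpoints lie in $\bar B_R$, so the Lipschitz hypothesis gives
\begin{equation*}
|\Phi(\x_\omega(s),s)-\Phi(\x_\omega(jh),jh)|\le L_\Phi\big(\|\x_\omega(s)-\x_\omega(jh)\|+h\big).
\end{equation*}
Use the SDE increment on $[jh,s]$, stopped at $\tau_R$. The drift $\bmu_\omega$ is bounded on $\bar B_R$ by $C_R(1+\sqrt{\alpha\omega r_{\max}})$, so it contributes at most $C_R\sqrt{\omega}\,h$. The stochastic integral has second moment at most $C_R h$ by Itô's isometry, and the same bound holds for its supremum over the block by Doob's inequality. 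Bound the supremum over $t$ of the remainder sum by $\int_0^{T\wedge\tau_R}|\cdot|\,ds$ and apply Cauchy--Schwarz. Its second moment is then at most $C\big(\omega h^2+h\big)$, coming from $(\sqrt{\omega}h)^2$ from the drift and $h$ from the noise.

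\textbf{Balancing, and the main obstacle.} Altogether,
\begin{equation*}
\Expec\sup_{t\le T}|\cdot|^2\le C\big[(\omega h)^{-2}+h^2+\omega h^2+h\big].
\end{equation*}
The main obstacle is balancing the $O(\sqrt\omega)$ dither velocity, which drives $\omega h^2$, against the $(\omega h)^{-2}$ cancellation. Taking $h=\omega^{-3/4}$ gives $(\omega h)^{-2}=\omega^{-1/2}$, $\omega h^2=\omega^{-1/2}$ and $h=\omega^{-3/4}$, so the total is $\le C\omega^{-1/2}$ for $\omega\ge1$, which is the claimed rate. The constant depends only on $R,T,N,\Phi,|c|,\alpha,r_{\max}$ through $C_R$, $C_\Phi$, $L_\Phi$ and $|c|^{-1}$.

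\textbf{Real and imaginary parts.} The $\cos(c\omega s)$ and $\sin(c\omega s)$ cases follow by taking real and imaginary parts, since the modulus of each part is dominated by the modulus of the complex integral.
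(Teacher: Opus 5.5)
Your proposal is correct and follows the paper's proof essentially step for step. Both arguments use the same block partition with the coefficient frozen at the left endpoint, the $2/(|c|\omega)$ cancellation per block, a Lipschitz remainder controlled by the $O(\sqrt{\omega}\,h)$ dither drift plus $O(h)$ It\^{o}-isometry noise after Cauchy--Schwarz in $s$, and the choice $h=\omega^{-3/4}$. The only difference is cosmetic: the paper bounds the truncated last block by the same $2/(|c|\omega)$ rather than by $C_\Phi h$, which does not affect the rate.
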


\begin{proof}
Write $\x=\x_\omega$ and abbreviate $t\wedge\tau_R$ by $t^\tau$. Partition
$[0,T]$ into $M=\lceil T/\Delta\rceil$ blocks $[t_j,t_{j+1}]$ of length
$\le\Delta\in(0,1]$, $t_j=j\Delta$, and write
\begin{align}
    \int_0^{t^\tau}\Phi(\x_s,s)e^{\mathrm i c\omega s}ds
    \;=\;&
    \underbrace{\sum_j \Phi(\x_{t_j^\tau},t_j)\int_{t_j\wedge t^\tau}^{t_{j+1}\wedge t^\tau}
    e^{\mathrm i c\omega s}ds}_{A(t)}
    \nonumber\\
    &+\;
    \underbrace{\sum_j\int_{t_j\wedge t^\tau}^{t_{j+1}\wedge t^\tau}
    \big[\Phi(\x_s,s)-\Phi(\x_{t_j^\tau},t_j)\big]e^{\mathrm i c\omega s}ds}_{B(t)} .
\end{align}
In $A(t)$, each trigonometric integral over a (possibly
truncated) block is at most $2/(|c|\omega)$ in modulus, and there are at
most $M\le T/\Delta+1$ blocks, so path-wise
$\sup_{t\le T}|A(t)|\le 2C_\Phi(T/\Delta+1)/(|c|\omega)$.

For $B(t)$, for $s$ in block $j$ with $s\le\tau_R$,
\begin{equation}
    \big|\Phi(\x_s,s)-\Phi(\x_{t_j},t_j)\big|
    \;\le\; L_\Phi\big(\|\x_s-\x_{t_j}\|+\Delta\big),
\end{equation}
and from \eqref{eq:supp-sde},
$\x_s-\x_{t_j}
 =\int_{t_j}^{s}\bmu_\omega\,du+\mathbf N_j(s)$,
$\mathbf N_j(s):=\int_{t_j}^{s}\bsig\,d\w$, so that on
$\{s\le\tau_R\}$,
\begin{equation}
    \|\x_s-\x_{t_j}\|
    \;\le\;
    C_R\big(1+\sqrt{\alpha\omega r_{\max}}\big)\Delta
    \;+\;\|\mathbf N_j(s)\|,
    \qquad
    \Expec\|\mathbf N_j(s\wedge\tau_R)\|^2\le C_R^2 N\Delta .
\end{equation}
Because the integrand of $B$ is dominated by a nonnegative process,
$\sup_{t\le T}|B(t)|$ is at most
$L_\Phi\int_0^{T\wedge\tau_R}(\|\x_s-\x_{t_{j(s)}}\|+\Delta)\,ds$,
and by Cauchy--Schwarz in $s$,
\begin{equation}
    \Expec\sup_{t\le T}|B|^2
    \;\le\;
    3L_\Phi^2T^2\Big[C_R^2\big(1+\alpha\omega r_{\max}\big)\Delta^2
    +\Delta^2\Big]
    \;+\;3L_\Phi^2\,T\int_0^{T}\Expec\|\mathbf N_{j(s)}(s\wedge\tau_R)\|^2ds ,
\end{equation}
the last term being at most $3L_\Phi^2T^2C_R^2N\Delta$. Combining,
\begin{equation}
    \Expec\sup_{t\le T}\Big|\int_0^{t^\tau}\Phi e^{\mathrm ic\omega s}ds\Big|^2
    \;\le\;
    C\Big[\frac{1}{\Delta^2\omega^2}
    +\omega\Delta^2+\Delta^2+\Delta\Big].
\end{equation}
Choosing $\Delta=\omega^{-3/4}$ makes the first two brackets
$O(\omega^{-1/2})$ and the last two $O(\omega^{-3/4})$, which proves
\eqref{eq:supp-osc}; real and imaginary parts give the trigonometric
versions.
\end{proof}

\begin{remark}[Sharper rate under one more derivative]
\label{rem:rate-supp}
If $g_n,J\in C^3$ in $\x$ (so that $\Phi$ below is $C^1$ with Lipschitz
derivative on $\bar B_R$), the partition argument can be replaced by a
genuine It\^{o} integration by parts against
$e^{\mathrm ic\omega s}/(\mathrm ic\omega)$: the boundary terms are
$O(\omega^{-1})$, the drift part of $d\Phi(\x_s,s)$ is
$O(\sqrt\omega)$ on the localized set, contributing
$O(\omega^{-1}\cdot\sqrt\omega)=O(\omega^{-1/2})$, and the martingale
part contributes $O_{L^2}(\omega^{-1})$ by Doob's inequality. This
sharpens \eqref{eq:supp-osc} to $C\omega^{-1}$, i.e.,
$O_{L^2}(\omega^{-1/2})$ for the integral itself, and correspondingly
sharpens the overall rate in Theorem~\ref{thm:main} from
$O_{\Prob}(\omega^{-1/4})$ to $O_{\Prob}(\omega^{-1/2})$.
\end{remark}

\subsection{Proof of Theorem~\ref{thm:main}}
\label{supp:proof-main}

\paragraph{Step 0: localization.}
Fix $R\ge2$ and let $\tau_R$ be as in \eqref{eq:supp-tau}. All estimates
in Steps~1--3 are on $[0,\tau_R]$, where both processes lie in
$\bar B_R$ and all bounds $C_R$ of Assumption~\ref{ass:reg} apply.

\paragraph{Step 1: It\^{o} integration by parts on the dither term.}
Fix a coordinate $n$ and write the mild form of \eqref{eq:supp-sde}:
\begin{equation}
    x_{\omega,n}(t)=x_{0,n}
    +\int_0^t f_n\,ds
    +I_{2,n}(t)
    +\int_0^t g_n\,dw_n,
    \qquad
    I_{2,n}(t)=\int_0^t g_n\sqrt{\alpha\omega_n}\,
    \cos\big(\omega_ns+kJ\big)\,ds,
\end{equation}
all integrands evaluated along $(\x_\omega(s),s)$. Expand the cosine and
define the $C^{2,1}$ compositions
\begin{equation}
    u^c \;=\; g_n\cos(kJ),
    \qquad
    u^s \;=\; g_n\sin(kJ),
\end{equation}
so that $I_{2,n}=\int u^c\,dv_c-\int u^s\,dv_s$ with the deterministic,
finite-variation dither antiderivatives
\begin{equation}
    v_c(s)=\sqrt{\alpha/\omega_n}\,\sin(\omega_ns),
    \qquad
    v_s(s)=-\sqrt{\alpha/\omega_n}\,\cos(\omega_ns),
    \qquad |v_c|,|v_s|\le\sqrt{\alpha/\omega_n}.
\end{equation}
Since $v_{c},v_{s}$ are $C^1$ and deterministic, the It\^{o} product
rule has vanishing covariation, and
\begin{equation}
    I_{2,n}(t)
    \;=\;
    \big[u^cv_c-u^sv_s\big]_0^t
    \;-\;\int_0^t v_c\,du^c
    \;+\;\int_0^t v_s\,du^s .
    \label{eq:supp-ibp}
\end{equation}
By It\^{o}'s formula applied to $u\in\{u^c,u^s\}$ along
\eqref{eq:supp-sde},
\begin{equation}
    du \;=\;
    \Big[\partial_t u+(\nabla u)^{\!\top}\f
    +\tfrac12\operatorname{tr}\big(\bsig\bsig^{\!\top}\nabla^2u\big)\Big]ds
    \;+\;(\nabla u)^{\!\top}\bsig\,d\w
    \;+\;(\nabla u)^{\!\top}\mathbf D\,ds,
\end{equation}
where $\mathbf D$ is the dither drift,
$D_i=g_i\sqrt{\alpha\omega_i}\cos(\omega_is+kJ)$. Insert this into
\eqref{eq:supp-ibp} and collect all terms whose prefactor carries
$\sqrt{\alpha/\omega_n}$ \emph{without} a compensating
$\sqrt{\omega_i}$: the boundary terms (bounded by
$4C_R\sqrt{\alpha/\omega_n}$, noting $v_s(0)\neq0$), the
$\partial_tu$, $(\nabla u)^{\!\top}\f$, and Hessian terms (each bounded by
$C_R\sqrt{\alpha/\omega_n}\,T$), and the stochastic integrals
$-\int v_c(\nabla u^c)^{\!\top}\bsig\,d\w
 +\int v_s(\nabla u^s)^{\!\top}\bsig\,d\w$, whose supremum has second
moment at most $16(\alpha/\omega_n)C_R^2T$ by Doob's maximal inequality
and the It\^{o} isometry (a factor $4$ from Doob and a factor $2$ from
the sum of the two martingales). Denoting their sum by $E^{(1)}_n(t)$,
\begin{equation}
    \Expec\Big[\sup_{t\le T}\big|E^{(1)}_n(t\wedge\tau_R)\big|^2\Big]
    \;\le\; C_R\,\omega^{-1} .
    \label{eq:supp-E1}
\end{equation}

\paragraph{Step 2: resonant and non-resonant dither products.}
The remaining terms of \eqref{eq:supp-ibp} are those where $du$
contributes the dither $\mathbf D$:
\begin{align}
    -\sum_{i=1}^N \alpha\sqrt{\tfrac{r_i}{r_n}}
    \int_0^t \sin(\omega_ns)\,\partial_iu^c\,g_i
    \big[\cos(\omega_is)\cos kJ-\sin(\omega_is)\sin kJ\big]ds&
    \nonumber\\
    \;-\;\sum_{i=1}^N \alpha\sqrt{\tfrac{r_i}{r_n}}
    \int_0^t \cos(\omega_ns)\,\partial_iu^s\,g_i
    \big[\cos(\omega_is)\cos kJ-\sin(\omega_is)\sin kJ\big]ds&,
    \label{eq:supp-cross}
\end{align}
where the second group, which carries the same overall sign as the first, arises from the term $+\int v_s(\nabla u^s)^{\!\top}\mathbf D\,ds$ with $v_s=-\sqrt{\alpha/\omega_n}\cos(\omega_ns)$. Every product of two
trigonometric factors in \eqref{eq:supp-cross} reduces by the product
formulas to $\tfrac12[\text{trig}((r_n{-}r_i)\omega s)\pm\text{trig}((r_n{+}r_i)\omega s)]$;
for $i\neq n$ all resulting frequencies $(r_n\pm r_i)\omega$ are nonzero
by Assumption~\ref{ass:freq}, and for $i=n$ the frequencies are $0$ and
$2r_n\omega$. Each nonzero-frequency term is an oscillatory integral of
the form covered by Lemma~\ref{lem:osc-main-ref} with
$\Phi\in\{\partial_iu^{c/s}\,g_i\cos kJ,\ \partial_iu^{c/s}\,g_i\sin kJ\}$,
which are bounded and jointly Lipschitz on $\bar B_R\times[0,T]$ because
$g_n,J\in C^{2,1}$ with continuous mixed derivatives
$\partial_t\nabla_\x g_n,\partial_t\nabla_\x J$ (Assumption~\ref{ass:reg}),
so that $\partial_iu^{c/s}\in C^{1}$ jointly in $(\x,t)$, hence Lipschitz
on the compact set. Their total, $E^{(2)}_n(t)$, therefore satisfies
\begin{equation}
    \Expec\Big[\sup_{t\le T}\big|E^{(2)}_n(t\wedge\tau_R)\big|^2\Big]
    \;\le\; C_R\,\omega^{-1/2} .
    \label{eq:supp-E2}
\end{equation}
The zero-frequency (resonant) halves survive averaging. From the first
group of \eqref{eq:supp-cross} with $i=n$, using
$\sin(\omega_ns)\sin(\omega_ns)=\tfrac12-\tfrac12\cos(2\omega_ns)$, the
surviving term is
$+\tfrac{\alpha}{2}\int_0^t g_n\,\partial_nu^c\,\sin(kJ)\,ds$; from the
second group, using $\cos^2(\omega_ns)=\tfrac12+\tfrac12\cos(2\omega_ns)$,
it is $-\tfrac{\alpha}{2}\int_0^t g_n\,\partial_nu^s\,\cos(kJ)\,ds$.
Substituting
$\partial_nu^c=\partial_ng_n\cos kJ-kg_n\,\partial_nJ\,\sin kJ$ and
$\partial_nu^s=\partial_ng_n\sin kJ+kg_n\,\partial_nJ\,\cos kJ$,
\begin{equation}
    g_n\big[\partial_nu^c\sin kJ-\partial_nu^s\cos kJ\big]
    \;=\;
    -k\,g_n^2\,\partial_nJ ,
\end{equation}
the $\partial_ng_n$ terms cancelling identically. The resonant
contribution to $I_{2,n}$ is therefore
\begin{equation}
    -\frac{k\alpha}{2}\int_0^t g_n^2\big(\x_\omega(s),s\big)\,
    \partial_nJ\big(\x_\omega(s),s\big)\,ds .
    \label{eq:supp-resonant}
\end{equation}
Collecting \eqref{eq:supp-E1}--\eqref{eq:supp-resonant} and writing
$\mathbf E=(E^{(1)}_n+E^{(2)}_n)_{n=1}^N$, we obtain on $[0,\tau_R]$ the
exact representation
\begin{align}
    \x_\omega(t)
    \;=\;
    \x_0+\int_0^t\bar\bmu\big(\x_\omega(s),s\big)ds
    +\int_0^t\bsig\big(\x_\omega(s),s\big)d\w
    +\mathbf E(t),
    \nonumber\\
    \Expec\sup_{t\le T}\big\|\mathbf E(t\wedge\tau_R)\big\|^2
    \;\le\; C_R\,\omega^{-1/2} .\hspace{4.2em}
    \label{eq:supp-mild}
\end{align}
Thus, up to the explicitly controlled remainder $\mathbf E$, the dithered
process satisfies the \emph{averaged} SDE along its own trajectory.

\paragraph{Step 3: Gr\"onwall comparison under the shared Brownian motion.}
Let $\z=\x_\omega-\bar\x$. Subtracting the mild form of
\eqref{eq:supp-averaged} from \eqref{eq:supp-mild},
\begin{equation}
    \z(t)=\int_0^t\big[\bar\bmu(\x_\omega)-\bar\bmu(\bar\x)\big]ds
    +\int_0^t\big[\bsig(\x_\omega)-\bsig(\bar\x)\big]d\w+\mathbf E(t).
\end{equation}
On $\bar B_R\times[0,T]$, $\bar\bmu$ and $\bsig$ are Lipschitz with a
constant $L_R$: for $\bsig$ this is Assumption~\ref{ass:reg} directly,
and for $\bar\bmu=\f-\tfrac{k\alpha}{2}\bsig\bsig^{\!\top}\nabla J$ it
holds because products of $C^1$ functions of $(g_n,\partial_nJ)$ are
Lipschitz on compacts. Set
$\varphi(t)=\Expec\big[\sup_{s\le t}\|\z(s\wedge\tau_R)\|^2\big]$. By
Cauchy--Schwarz on the drift difference and Doob plus the It\^{o}
isometry on the (diagonal) diffusion difference,
\begin{equation}
    \varphi(t)\;\le\;3(T+4)L_R^2\int_0^t\varphi(s)\,ds
    \;+\;3\,\Expec\sup_{s\le T}\|\mathbf E(s\wedge\tau_R)\|^2 ,
\end{equation}
and Gr\"onwall's inequality with \eqref{eq:supp-mild} gives
\begin{equation}
    \Expec\Big[\sup_{t\le T}\|\z(t\wedge\tau_R)\|^2\Big]
    \;\le\;
    C'_R\,\omega^{-1/2},
    \qquad
    C'_R=3\,C_R\,e^{3(T+4)L_R^2T}.
    \label{eq:supp-gronwall}
\end{equation}

\paragraph{Step 4: removing the localization.}
Let $\delta\in(0,1]$ and $\varepsilon\in(0,1)$ be given. By the standard
moment bound for \eqref{eq:supp-averaged} under
Assumption~\ref{ass:growth}, $\Expec\sup_{t\le T}\|\bar\x\|^2<\infty$,
so we may fix $R$ with
$\Prob\big(\sup_{t\le T}\|\bar\x(t)\|\ge R-2\big)\le\varepsilon/2$. By
\eqref{eq:supp-gronwall} and Chebyshev's inequality, choose
$\omega^*$ so that for all $\omega\ge\omega^*$,
$\Prob\big(\sup_{t\le T}\|\z(t\wedge\tau_R)\|>\delta\big)
 \le C'_R\,\omega^{-1/2}\delta^{-2}\le\varepsilon/2$.
On the event
$A=\{\sup_{t\le T}\|\bar\x\|\le R-2\}\cap
   \{\sup_{t\le T}\|\z(t\wedge\tau_R)\|\le\delta\}$,
suppose that the uncapped exit time satisfies $\tilde\tau_R\le T$; then
$\tau_R=\tilde\tau_R$ and by continuity one of the processes has norm
exactly $R$ at $\tau_R$, yet
$\|\bar\x(\tau_R)\|\le R-2$ and
$\|\x_\omega(\tau_R)\|\le\|\bar\x(\tau_R)\|+\delta\le R-1<R$, a
contradiction. Hence $\tilde\tau_R>T$ on $A$, so $\tau_R=T$ there, the
stopped and unstopped suprema coincide, and
\begin{equation}
    \Prob\Big(\sup_{t\le T}\|\x_\omega(t)-\bar\x(t)\|>\delta\Big)
    \;\le\;\Prob(A^c)\;\le\;\varepsilon
    \qquad\text{for all }\omega\ge\omega^* .
\end{equation}
This proves Theorem~\ref{thm:main}; \eqref{eq:supp-gronwall} also yields
the localized rate $O_{\Prob}(\omega^{-1/4})$, sharpened to $O_{\Prob}(\omega^{-1/2})$ under the extra smoothness of Remark~\ref{rem:rate-supp}. \qed

\subsection{Corollaries}
\label{supp:corollaries}

\begin{corollary}[Convergence in distribution on path space]
\label{cor:weak}
Under the assumptions of Theorem~\ref{thm:main},
$\mathrm{Law}(\x_\omega)\Rightarrow\mathrm{Law}(\bar\x)$ on
$C([0,T];\R^N)$ as $\omega\to\infty$.
\end{corollary}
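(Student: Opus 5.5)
The plan is to deduce the corollary from Theorem~\ref{thm:main} by the standard fact that convergence in probability of random elements of a metric space implies convergence in distribution. Both $\x_\omega$ and $\bar\x$ are defined on the same filtered probability space, driven by the same $\w$ and started from the same $\x_0$. Hence they are random elements of the Polish space $C([0,T];\R^N)$ with the sup-norm metric $\rho(\x,\y)=\sup_{t\le T}\|\x(t)-\y(t)\|$. Measurability is routine: the Borel $\sigma$-algebra of $C([0,T];\R^N)$ is generated by the coordinate maps, and each $\x_\omega(t)$, $\bar\x(t)$ is a random variable. Theorem~\ref{thm:main} states exactly that $\rho(\x_\omega,\bar\x)\to0$ in probability as $\omega\to\infty$.

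To conclude $\mathrm{Law}(\x_\omega)\Rightarrow\mathrm{Law}(\bar\x)$, I would verify convergence of $\Expec F(\x_\omega)$ to $\Expec F(\bar\x)$. By the Portmanteau theorem it suffices to test against bounded Lipschitz functionals $F:C([0,T];\R^N)\to\R$ with $|F|\le 1$ and Lipschitz constant $L$. For such $F$ and any $\delta>0$,
\begin{equation*}
\big|\Expec F(\x_\omega)-\Expec F(\bar\x)\big|\;\le\;\Expec\big[\min\big(L\rho(\x_\omega,\bar\x),\,2\big)\big]\;\le\;L\delta+2\,\Prob\big(\rho(\x_\omega,\bar\x)>\delta\big).
\end{equation*}
Given $\varepsilon>0$, I take $\delta=\varepsilon/(2L)$ and then $\omega\ge\omega^*(\delta,\varepsilon/4,T)$ from Theorem~\ref{thm:main}. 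This makes the right side at most $\varepsilon$, which proves the claim.

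There is essentially no obstacle, since this is Slutsky's lemma on path space. The one point to check is that the bounded-Lipschitz class is convergence-determining, which holds in any metric space, so the Portmanteau theorem applies. One could also note that the limit law does not depend on the coupling, and hence is the law of the unique strong (thus weak) solution of \eqref{eq:supp-averaged}.
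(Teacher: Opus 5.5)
Your proposal is correct and is the same argument as the paper's. The paper cites the fact that convergence in probability in the Polish space $C([0,T];\R^N)$, which Theorem~\ref{thm:main} supplies, implies convergence in distribution; your bounded-Lipschitz estimate $L\delta+2\,\Prob(\rho(\x_\omega,\bar\x)>\delta)$ is the standard proof of that fact written out.
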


\begin{proof}
Theorem~\ref{thm:main} states convergence in probability of
$\x_\omega\to\bar\x$ in the Polish space $C([0,T];\R^N)$ with the
uniform norm; convergence in probability implies convergence in
distribution \citep{billingsley1999convergence}.
\end{proof}

\begin{corollary}[Sampling in the iterated limit]
\label{cor:sampling}
Take instantiation (I$_\lambda$): $\f\equiv\mathbf 0$,
$\bsig=\sqrt2\,I$, $k\alpha=\lambda>0$, $J=\E\in C^2$ independent of
$t$ with $\nabla\E$ of linear growth (Assumption~\ref{ass:growth}; e.g.,
$\nabla\E$ globally Lipschitz), and assume additionally the dissipativity
condition $\langle\nabla\E(\x),\x\rangle\ge c_1\|\x\|^2-c_2$ for constants
$c_1>0,c_2\ge0$ and all $\|\x\|$ large. (Superlinear potentials such as
quartic double wells violate the linear-growth condition and would require
a Lyapunov-based localization of Theorem~\ref{thm:main}.) Then the averaged process is the
Langevin diffusion with unique stationary law
$\pi_\lambda\propto e^{-\lambda\E}$, it is exponentially ergodic
\citep{roberts1996langevin}, and for every bounded Lipschitz
$h:\R^N\to\R$,
\begin{equation}
    \lim_{T\to\infty}\ \lim_{\omega\to\infty}\
    \Expec\big[h\big(\x_\omega(T)\big)\big]
    \;=\;\int h\,d\pi_\lambda .
\end{equation}
The order of the limits cannot be interchanged on the basis of
Theorem~\ref{thm:main}, whose threshold $\omega^*$ depends on $T$.
\end{corollary}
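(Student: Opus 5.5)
The plan is to split the iterated limit into an inner limit in $\omega$ at fixed $T$ and an outer ergodic limit in $T$.

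\emph{Inner limit.} Fix $T$. Theorem~\ref{thm:main} applies: in (I$_\lambda$), $\bsig=\sqrt2 I$ is constant, $\f\equiv\mathbf 0$, and $\bsig\bsig^{\!\top}\nabla\E=2\nabla\E$ has linear growth, so Assumptions~\ref{ass:reg}--\ref{ass:growth} hold. It gives $\sup_{t\le T}\|\x_\omega(t)-\bar\x(t)\|\to0$ in probability, in particular $\x_\omega(T)\to\bar\x(T)$ in probability. For bounded Lipschitz $h$ with constant $L_h$ and any $\delta>0$,
\[
|\Expec h(\x_\omega(T))-\Expec h(\bar\x(T))|\le L_h\delta+2\|h\|_\infty\,\Prob(\|\x_\omega(T)-\bar\x(T)\|>\delta).
\]
Letting $\omega\to\infty$ and then $\delta\to0$ shows $\lim_{\omega\to\infty}\Expec h(\x_\omega(T))=\Expec h(\bar\x(T))$. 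Corollary~\ref{cor:weak} gives this directly as well. The averaged SDE here is $d\bar\x=-\tfrac{k\alpha}{2}\cdot2\nabla\E\,dt+\sqrt2\,d\w=-\lambda\nabla\E\,dt+\sqrt2\,d\w$.

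\emph{Stationary law.} The generator is $\mathcal L=\Delta-\lambda\nabla\E\cdot\nabla$, with adjoint $\mathcal L^*\rho=\nabla\cdot(\nabla\rho+\lambda\rho\nabla\E)$. Taking $\rho=e^{-\lambda\E}$ makes the flux $\nabla\rho+\lambda\rho\nabla\E$ vanish, so $\mathcal L^*\rho=0$. Integrability follows from dissipativity: along rays, $\frac{d}{dr}\E(r\hat x)\ge c_1 r-c_2/r$, so $\E(\x)\ge \tfrac{c_1}{2}\|\x\|^2-C\log\|\x\|-C$, and $e^{-\lambda\E}\in L^1$. Uniqueness of the invariant probability holds because the noise is non-degenerate: the diffusion is strong Feller and irreducible, since its transition densities are positive.

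\emph{Outer limit.} This is the step I expect to be the main obstacle. I would take $V(\x)=1+\|\x\|^2$ as a Foster--Lyapunov function. Then $\mathcal LV=2N-2\lambda\langle\nabla\E,\x\rangle\le -2\lambda c_1\|\x\|^2+C\le -cV+b$. Every compact set is petite, by non-degenerate noise and continuous coefficients. Harris' theorem, i.e.\ Meyn--Tweedie as invoked via \citep{roberts1996langevin}, then gives $V$-uniform exponential ergodicity: $|\Expec h(\bar\x(T))-\int h\,d\pi_\lambda|\le C\|h\|_\infty\,(1+\Expec V(\x_0))e^{-\gamma T}\to0$, and $\Expec V(\x_0)<\infty$ by the standing assumption $\Expec\|\x_0\|^2<\infty$.

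The remaining delicacy is that dissipativity is only assumed for large $\|\x\|$. This is harmless, since $\langle\nabla\E,\x\rangle$ is bounded on compacts by continuity and can be absorbed into $b$. Combining the inner and outer limits proves the iterated limit. The order cannot be swapped because the convergence in $\omega$ in Theorem~\ref{thm:main} is not uniform in $T$: the threshold $\omega^*$ and the Grönwall constant $e^{CT}$ both depend on $T$.
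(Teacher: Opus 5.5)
Your proposal is correct and follows the paper's proof. Both fix $T$, split $|\Expec h(\x_\omega(T))-\int h\,d\pi_\lambda|$ by the triangle inequality, bound the first term by $\mathrm{Lip}(h)\,\delta+2\|h\|_\infty\Prob(\|\x_\omega(T)-\bar\x(T)\|>\delta)$ via Theorem~\ref{thm:main}, and handle the second by exponential ergodicity of the Langevin diffusion. The difference is that where the paper simply cites \citep{roberts1996langevin} for the ergodicity step, you carry it out yourself: Fokker--Planck for invariance, integrability of $e^{-\lambda\E}$ from dissipativity, and a Foster--Lyapunov bound with $V=1+\|\x\|^2$ fed into Harris' theorem, which also makes the dependence on $\Expec\|\x_0\|^2$ explicit.
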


\begin{proof}
For fixed $T$, split
$|\Expec h(\x_\omega(T))-\pi_\lambda(h)|
 \le|\Expec h(\x_\omega(T))-\Expec h(\bar\x(T))|
 +|\Expec h(\bar\x(T))-\pi_\lambda(h)|$.
For any $\delta>0$ the first term is at most
$\mathrm{Lip}(h)\,\delta+2\|h\|_\infty
 \Prob(\|\x_\omega(T)-\bar\x(T)\|>\delta)\to\mathrm{Lip}(h)\,\delta$ as
$\omega\to\infty$ by Theorem~\ref{thm:main}, and $\delta$ is arbitrary.
The second term decays as $C_he^{-\rho T}$ by exponential ergodicity of
the Langevin diffusion under dissipativity
\citep{roberts1996langevin}. Let $\omega\to\infty$, then $T\to\infty$.
\end{proof}

Under the additional strong log-concavity hypothesis $\nabla^2\E\succeq mI$, Appendix~\ref{supp:uniform} upgrades this iterated limit to $\lim_{\omega\to\infty}\limsup_{T\to\infty}\mathcal W_2(\mathrm{Law}(\x_\omega(T)),\pi_\lambda)=0$, i.e., the frequency limit may be taken after (or jointly with) the time limit---conditionally on the start-uniform window estimate of Assumption~\ref{ass:window} (unconditionally at the level of the transfer principle, Proposition~\ref{prop:uniform}).

\begin{corollary}[$\lambda$-tempering]
\label{cor:tempering}
In the setting of Corollary~\ref{cor:sampling}, the stationary density of
the averaged dynamics $d\bar\x=-\lambda\nabla\E\,dt+\sqrt2\,d\w$ is
$\rho_\infty\propto e^{-\lambda\E}\propto p^\lambda$.
\end{corollary}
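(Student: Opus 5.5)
We verify directly that $\rho_\infty(\x)=Z_\lambda^{-1}e^{-\lambda\E(\x)}$ solves the stationary Fokker--Planck equation of the averaged dynamics. We then invoke the uniqueness of the invariant law already recorded in Corollary~\ref{cor:sampling}.

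\textbf{Step 1: the generator.} For $d\bar\x=-\lambda\nabla\E\,dt+\sqrt2\,d\w$ (instantiation (I$_\lambda$), $\bsig\bsig^{\!\top}=2I$, $k\alpha=\lambda$), the generator is $\mathcal L\varphi=\Delta\varphi-\lambda\nabla\E\cdot\nabla\varphi$. Its formal adjoint is
\begin{equation*}
\mathcal L^*\rho=\nabla\cdot\big(\nabla\rho+\lambda\rho\nabla\E\big).
\end{equation*}
For $\rho=e^{-\lambda\E}$ we have $\nabla\rho=-\lambda\rho\nabla\E$. The probability flux $\nabla\rho+\lambda\rho\nabla\E$ therefore vanishes identically, which is a detailed-balance (zero-current) solution, so $\mathcal L^*\rho_\infty=0$ pointwise. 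This uses only $\E\in C^2$.

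\textbf{Step 2: normalizability.} The dissipativity condition $\langle\nabla\E(\x),\x\rangle\ge c_1\|\x\|^2-c_2$ for large $\|\x\|$ gives quadratic growth. Integrating along rays, $\E(r\mathbf e)\ge \E(r_0\mathbf e)+\tfrac{c_1}{2}(r^2-r_0^2)-c_2\log(r/r_0)$. Hence $e^{-\lambda\E}\le Ce^{-\lambda c_1\|\x\|^2/4}$ for large $\|\x\|$, and $Z_\lambda=\int e^{-\lambda\E}<\infty$. Thus $\rho_\infty$ is a genuine probability density.

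\textbf{Step 3: invariance and uniqueness.} To pass from $\mathcal L^*\rho_\infty=0$ to invariance, we check $\int\mathcal L\varphi\,\rho_\infty\,d\x=0$ for $\varphi\in C_c^\infty$. This follows by integrating by parts twice, with no boundary terms because $\varphi$ has compact support. Since the coefficients are locally Lipschitz and the process is nonexplosive (linear growth), the identity on $C_c^\infty$ together with the Echeverría--Weiss criterion gives invariance of $\rho_\infty$ under the semigroup. Alternatively, one can cite the standard Langevin result \citep{roberts1996langevin}. Uniqueness follows because the noise is nondegenerate and the dynamics is irreducible, which is the exponential ergodicity already invoked in Corollary~\ref{cor:sampling}. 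The final identity $e^{-\lambda\E}=(e^{-\E})^\lambda\propto p^\lambda$ is immediate from $p\propto e^{-\E}$, since normalizing constants are absorbed into the proportionality.

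The only nonroutine point is Step 3, passing from a formal Fokker--Planck solution to the invariant measure. I would discharge it by citing the Langevin literature rather than reproving the Echeverría--Weiss argument.
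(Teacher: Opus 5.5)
Your proposal is correct and follows the paper's proof. The paper also observes that $\rho\propto e^{-\lambda\E}$ makes the flux $\lambda\rho\nabla\E+\nabla\rho$ in the stationary Fokker--Planck equation vanish, and it defers uniqueness to the classical Langevin theory under dissipativity \citep{roberts1996langevin}; your Steps~2--3 (normalizability from dissipativity, and the passage from $\mathcal L^*\rho_\infty=0$ to genuine invariance) make explicit what the paper leaves to that citation.
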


\begin{proof}
The stationary Fokker--Planck equation is
$0=\nabla\!\cdot\!\big(\lambda\nabla\E\,\rho+\nabla\rho\big)$, solved by
any $\rho$ with $\nabla\rho=-\lambda\rho\nabla\E$, i.e.,
$\rho\propto e^{-\lambda\E}$; uniqueness under dissipativity is
classical \citep{roberts1996langevin}.
\end{proof}

\subsection{Uniform-in-time accuracy under strong log-concavity}
\label{supp:uniform}

Throughout this subsection $\f,\bsig,J$ are independent of $t$, and Assumption~\ref{ass:growth} is in force, so that for each fixed $\omega$ both \eqref{eq:supp-sde} and \eqref{eq:supp-averaged} have unique global strong solutions with finite second moments at every time. We write $\mathcal P_2(\R^N)$ for the probability measures with finite second moment, $\mathcal W_2$ for the Wasserstein-2 distance, under which $\mathcal P_2(\R^N)$ is a complete metric space \citep{villani2009optimal}, and $(\bar P_t)_{t\ge0}$ for the Markov semigroup of the averaged SDE acting on laws, $\mu\bar P_t:=\mathrm{Law}(\bar\x(t))$ for $\bar\x(0)\sim\mu$; under Assumption~\ref{ass:growth} with locally Lipschitz coefficients, $\bar P_t$ maps $\mathcal P_2$ into itself \citep[Ch.~5]{karatzas1991brownian}.

\begin{lemma}[Start invariance of the window estimate]\label{lem:shift}
Let Assumption~\ref{ass:glob} hold with $\f,\bsig,J$ independent of $t$, fix $T_0>0$ and $t_0\ge0$, and let $\x^*$ be $\mathcal F_{t_0}$-measurable with $\Expec\|\x^*\|^2<\infty$. Let $\x_\omega$ solve the ES SDE \eqref{eq:supp-sde} and $\bar\x$ the averaged SDE \eqref{eq:supp-averaged} on $[t_0,t_0+T_0]$, driven by the same Brownian motion, from the common initial condition $\x_\omega(t_0)=\bar\x(t_0)=\x^*$. Then
\begin{equation*}
    \Expec\Big[\sup_{t_0\le t\le t_0+T_0}\big\|\x_\omega(t)-\bar\x(t)\big\|_2^2\Big]\;\le\;\frac{C(T_0)}{\omega},
\end{equation*}
with $C(T_0)=C_1e^{C_2T_0}$ the constant of Theorem~\ref{thm:quant}, independent of $t_0$ and of the law of $\x^*$.
\end{lemma}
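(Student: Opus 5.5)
The plan is to reduce the statement to Theorem~\ref{thm:quant} by a time shift, using the autonomy of $\f,\bsig,J$. The only non-autonomous ingredient in \eqref{eq:supp-sde} is the dither phase $\omega_n t$. Set $s=t-t_0$ and define the shifted processes $\tilde\x_\omega(s)=\x_\omega(t_0+s)$ and $\tilde{\bar\x}(s)=\bar\x(t_0+s)$. Define the shifted Brownian motion $\tilde\w(s)=\w(t_0+s)-\w(t_0)$, which is a standard Brownian motion with respect to the filtration $\tilde{\mathcal F}_s=\mathcal F_{t_0+s}$. The initial condition $\x^*$ is $\tilde{\mathcal F}_0$-measurable, and the shifted processes start there.

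The averaged SDE \eqref{eq:supp-averaged} is autonomous, so $\tilde{\bar\x}$ solves it in $s$, driven by $\tilde\w$. The ES SDE becomes \eqref{eq:supp-sde} with phases $\omega_n s+\phi_n$, where $\phi_n=\omega_n t_0$ is a deterministic constant.

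The key step is to observe that the proof of Theorem~\ref{thm:quant} is insensitive to constant phase offsets $\phi_n$. I would check this against the mechanism of Steps~1--3, in their global form.

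\begin{itemize}
\item In Step~1, the antiderivatives become $v_c(s)=\sqrt{\alpha/\omega_n}\sin(\omega_n s+\phi_n)$ and the analogous $v_s$. These keep the same uniform bound $\sqrt{\alpha/\omega_n}$, so the boundary, drift and martingale remainders obey the same estimates.
\item In Step~2, products of trigonometric factors produce frequencies $(r_n\pm r_i)\omega$ with constant phases $\phi_n\pm\phi_i$.
\item The resonant $i=n$ term has phase difference $\phi_n-\phi_n=0$. It therefore yields exactly $-\tfrac{k\alpha}{2}g_n^2\partial_nJ$.
\item The non-resonant terms are oscillatory integrals with $e^{\mathrm i(c\omega s+\phi)}$. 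Factoring out $e^{\mathrm i\phi}$, which has modulus one, shows the oscillatory-integral bound used in Theorem~\ref{thm:quant} applies verbatim with the same constant. The same holds for its $C^{1,1}$-free integration-by-parts version, whose boundary terms are bounded by $1/(|c|\omega)$ regardless of the phase.
\item The Grönwall step does not involve phases at all.
\end{itemize}

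Hence the inequality holds on $[0,T_0]$ with $C(T_0)=C_1e^{C_2T_0}$ independent of $\phi$, and hence independent of $t_0$.

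For independence from the law of $\x^*$, I would check that under the global Assumption~\ref{ass:glob} no localization is used. All constants in Theorem~\ref{thm:quant} depend only on the global bounds of $\f,\bsig,J$ and their derivatives, and not on moments of the initial condition. The remainder $\mathbf E$ is bounded uniformly in the starting point, and the Grönwall comparison starts from $\z(0)=0$. Any place where Theorem~\ref{thm:quant} uses $\Expec\|\x_0\|^2<\infty$ serves only well-posedness, which the hypothesis $\Expec\|\x^*\|^2<\infty$ supplies.

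Undoing the shift gives the supremum over $[t_0,t_0+T_0]$. The main obstacle is the phase-robustness audit. If Theorem~\ref{thm:quant} is stated only for phase-zero dithers, I will restate its oscillatory lemma with arbitrary deterministic phases and verify that no constant depends on them. An alternative would be to absorb the phase into $J$ by writing $kJ+\phi_n$, but that fails because the phases differ across coordinates while $J$ is shared.
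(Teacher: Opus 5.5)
Your proposal is correct and follows the paper's proof essentially step for step: shift time by $t_0$ using the autonomy of $\f,\bsig,J$ and the restarted Brownian motion $\w_{t_0+s}-\w_{t_0}$, absorb the shift into constant dither phases $\omega_n t_0$, and check that the antiderivative bounds, the non-resonant oscillatory estimates, the resonant extraction and the Gr\"onwall step are unaffected. As in the paper, the constants depend only on the global bounds, and $\Expec\|\x^*\|^2<\infty$ is used only for moment finiteness. One simplification: Lemma~\ref{lem:antider} is already stated for arbitrary fixed phases, so your factoring out of $e^{\mathrm i\phi}$ is not needed.
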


\begin{proof}
In the shifted time $u=t-t_0$ the pair solves the same two equations, driven by the Brownian motion $\tilde\w_u:=\w_{t_0+u}-\w_{t_0}$ from the (time-$0$-measurable, for the shifted filtration) start $\x^*$, except that the dither phases carry the constant offsets $\theta_n:=\omega_nt_0$: the dither drift reads $g_n\sqrt{\alpha\omega_n}\cos(\omega_nu+\theta_n+kJ)$. Every step of the proof of Theorem~\ref{thm:quant} is invariant under such offsets: the dither antiderivatives become $v_c(u)=\sqrt{\alpha/\omega_n}\,\sin(\omega_nu+\theta_n)$ and $v_s(u)=-\sqrt{\alpha/\omega_n}\,\cos(\omega_nu+\theta_n)$ and obey the same bounds; the product-to-sum identities of Step~2 produce zero-mean trigonometric factors at the same frequencies $\omega|r_n\pm r_i|$ and $2\omega r_n$, now with constant phases, to which Lemma~\ref{lem:antider} applies verbatim; and the resonant extraction $\sin^2(\omega_nu+\theta_n)=\tfrac12-\tfrac12\cos(2\omega_nu+2\theta_n)$ is unchanged. No constant in that proof depends on the initial condition (beyond $\Expec\|\x^*\|^2<\infty$, used only for a priori moment finiteness) or on $t_0$.
\end{proof}

\begin{assumption}[(W): start-uniform window averaging]\label{ass:window}
There exist $T_0>0$ and $\varepsilon:[1,\infty)\to(0,\infty)$ with $\varepsilon(\omega)\to0$ as $\omega\to\infty$ such that the coupled pair of Lemma~\ref{lem:shift} satisfies, for every $t_0\ge0$ and every $\mathcal F_{t_0}$-measurable $\x^*$ with $\Expec\|\x^*\|^2<\infty$,
\begin{equation}
    \Expec\Big[\sup_{t_0\le t\le t_0+T_0}\big\|\x_\omega(t)-\bar\x(t)\big\|_2^2\Big]\;\le\;\varepsilon(\omega)^2\,\big(1+\Expec\|\x^*\|^2\big).
    \label{eq:supp-window}
\end{equation}
\end{assumption}

By Lemma~\ref{lem:shift}, (W) holds with $\varepsilon(\omega)=(C(T_0)/\omega)^{1/2}$ (for any fixed $T_0$, say $T_0=1$), and even without the factor $(1+\Expec\|\x^*\|^2)$, whenever Assumption~\ref{ass:glob} does---in particular whenever $\nabla J$ is bounded. The factor is what makes (W) a meaningful hypothesis for coercive $J$: there the averaged drift grows linearly while the dither drift of \eqref{eq:supp-sde} is bounded by $C\sqrt\omega$, so from a start with $\|\x^*\|\gg\sqrt\omega$ the two processes separate at a rate proportional to $\|\x^*\|$, and no bound uniform in $\x^*$ can hold. For coercive $J$, (W) is precisely the quantitative, start-uniform content that the localized Theorem~\ref{thm:main} does not provide; see Remark~\ref{rem:T2}.

\begin{proposition}[Uniform-in-time sampling accuracy]\label{prop:uniform}
Let $\f,\bsig,J$ be independent of $t$, let Assumption~\ref{ass:growth} hold with locally Lipschitz coefficients, and assume: (i)~Assumption~\ref{ass:window}; and (ii)~there is $\rho>0$ such that $\mathcal W_2(\mu\bar P_t,\nu\bar P_t)\le e^{-\rho t}\,\mathcal W_2(\mu,\nu)$ for all $\mu,\nu\in\mathcal P_2(\R^N)$ and all $t\ge0$. Then $\bar P$ has a unique invariant law $\bar\pi\in\mathcal P_2(\R^N)$, with second moment $\bar m:=(\int\|\x\|^2\,d\bar\pi)^{1/2}$, and, writing $\eta:=1-e^{-\rho T_0}\in(0,1)$ and $\rho':=\eta/(2T_0)$, for every $\omega$ with $\varepsilon(\omega)\le\eta/2$ and every $t\ge0$,
\begin{equation}
    \mathcal W_2\big(\mathrm{Law}(\x_\omega(t)),\,\bar\pi\big)
    \;\le\;
    2e^{\eta/2}\,e^{-\rho' t}\,\mathcal W_2\big(\mathrm{Law}(\x_0),\,\bar\pi\big)
    \;+\;\Big(1+\frac{4}{\eta}\Big)(1+\bar m)\,\varepsilon(\omega).
    \label{eq:supp-uniform-bound}
\end{equation}
If (W) holds without the factor $(1+\Expec\|\x^*\|^2)$, as it does under Assumption~\ref{ass:glob}, the same argument gives the sharper bound $e^{-\rho t}\mathcal W_2(\mathrm{Law}(\x_0),\bar\pi)+(1+1/\eta)\,\varepsilon(\omega)$ for all $\omega\ge1$. In either case $\lim_{\omega\to\infty}\limsup_{t\to\infty}\mathcal W_2(\mathrm{Law}(\x_\omega(t)),\bar\pi)=0$, and after a burn-in of order $\rho'^{-1}\log\big(1/\varepsilon(\omega)\big)$ the law of the ES iterate remains within $C\varepsilon(\omega)$ of $\bar\pi$ in $\mathcal W_2$ for \emph{all} subsequent times simultaneously; the time limit is a $\limsup$ because at fixed $\omega$ the law of the dithered process is asymptotically periodic in $t$, with oscillations of amplitude $O(\varepsilon(\omega))$, rather than convergent. Moreover, hypothesis (ii) holds whenever $\bsig\equiv\bsig_0$ is constant and the averaged drift is $\rho$-strongly monotone, $\langle\bar\bmu(\x)-\bar\bmu(\y),\,\x-\y\rangle\le-\rho\|\x-\y\|^2$ for all $\x,\y$.
\end{proposition}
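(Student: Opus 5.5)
The plan is a window-by-window restart argument in $\mathcal W_2$: on each window of length $T_0$ I compare the ES process with an averaged process restarted from the current ES state, using (W), and then let the contraction (ii) pull the law toward $\bar\pi$.

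\emph{Invariant law.} First I establish $\bar\pi$. By (ii), $\bar P_{T_0}$ is a strict contraction of the complete space $(\mathcal P_2,\mathcal W_2)$ with factor $e^{-\rho T_0}<1$. Banach's fixed point theorem gives a unique fixed point $\bar\pi$. Since $\bar P_s$ commutes with $\bar P_{T_0}$, the law $\bar\pi\bar P_s$ is also a fixed point of $\bar P_{T_0}$, so $\bar\pi\bar P_s=\bar\pi$ for all $s$. Uniqueness among invariant laws follows from (ii) directly.

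\emph{One-window recursion.} Write $\mu_t=\mathrm{Law}(\x_\omega(t))$, $t_j=jT_0$ and $a_j=\mathcal W_2(\mu_{t_j},\bar\pi)$.
\begin{itemize}
\item For $t\in[t_j,t_j+T_0]$, let $\bar\x^{(j)}$ solve the averaged SDE on that window from $\bar\x^{(j)}(t_j)=\x_\omega(t_j)$, driven by the same Brownian motion. This start is $\mathcal F_{t_j}$-measurable with finite second moment.
\item By the Markov property, $\mathrm{Law}(\bar\x^{(j)}(t))=\mu_{t_j}\bar P_{t-t_j}$.
\item The synchronous coupling bounds $\mathcal W_2(\mu_t,\mu_{t_j}\bar P_{t-t_j})$ by the square root of the left side of \eqref{eq:supp-window}.
\item Since the second moment of $\mu_{t_j}$ is at most $(\bar m+a_j)^2$ by the triangle inequality in $\mathcal W_2$ against $\delta_0$, (W) gives
\[\mathcal W_2(\mu_t,\mu_{t_j}\bar P_{t-t_j})\le\varepsilon(\omega)\,(1+\bar m+a_j).\]
\end{itemize}
Combining this with (ii) and the invariance of $\bar\pi$ yields
\[\mathcal W_2(\mu_t,\bar\pi)\le e^{-\rho(t-t_j)}a_j+\varepsilon(1+\bar m+a_j).\]
Taking $t=t_{j+1}$ gives $a_{j+1}\le(1-\eta+\varepsilon)a_j+\varepsilon(1+\bar m)$.

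\emph{Iteration.} This is the key step, and the only delicate one: (W) is not start-uniform, so the error term carries $\varepsilon a_j$, which must be absorbed into the contraction. This is exactly why $\varepsilon(\omega)\le\eta/2$ is required. Under that condition $a_{j+1}\le(1-\eta/2)a_j+\varepsilon(1+\bar m)$, hence
\[a_j\le(1-\eta/2)^j a_0+\frac{2\varepsilon(1+\bar m)}{\eta}.\]
For an intermediate time $t\in[t_j,t_{j+1})$, bound $e^{-\rho(t-t_j)}\le1$, so that $\mathcal W_2(\mu_t,\bar\pi)\le(1+\eta/2)a_j+\varepsilon(1+\bar m)$. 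Then use $(1-\eta/2)^j\le e^{-\eta j/2}\le e^{\eta/2}e^{-\eta t/(2T_0)}=e^{\eta/2}e^{-\rho' t}$ and $1+\eta/2\le2$. The additive constant is $(1+\eta/2)\cdot2/\eta+1\le1+4/\eta$, which gives \eqref{eq:supp-uniform-bound}.

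\emph{Sharper case without the moment factor.} Here the error term is just $\varepsilon$ with no $a_j$. The same computation, using the exact contraction $e^{-\rho t}$ across whole windows and the geometric sum $\varepsilon/(1-e^{-\rho T_0})=\varepsilon/\eta$, gives the sharper bound for every $\omega\ge1$ with no smallness condition.

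\emph{Consequences.} The $\limsup$ statement and the burn-in time $\rho'^{-1}\log(1/\varepsilon)$ are read off directly from \eqref{eq:supp-uniform-bound}.

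\emph{Sufficient condition for (ii).} Let $\bsig\equiv\bsig_0$ be constant. Couple two averaged solutions by the same Brownian motion, with initial conditions drawn from an optimal $\mathcal W_2$ coupling of $\mu$ and $\nu$. The difference $\z$ then has no martingale part, so $d\|\z\|^2=2\langle\bar\bmu(\x)-\bar\bmu(\y),\z\rangle dt\le-2\rho\|\z\|^2dt$. Hence $\|\z(t)\|\le e^{-\rho t}\|\z(0)\|$ pathwise, and taking $L^2$ norms gives (ii).
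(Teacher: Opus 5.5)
Your proposal is correct and follows the paper's proof essentially step for step. The steps match: the Banach fixed point for $\bar P_{T_0}$ with the commuting-semigroup argument, the restarted averaged process on each window with the second moment of $\mu_{t_j}$ controlled by $\bar m+a_j$, the absorption of $\varepsilon(\omega)a_j$ into the contraction via $\varepsilon(\omega)\le\eta/2$, and the synchronous coupling for the strong-monotonicity criterion. The only deviation is your intermediate-time bookkeeping with $(1+\eta/2)a_j$ instead of $2a_j$, which gives the slightly smaller constant $2+2/\eta\le 1+4/\eta$.
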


\begin{proof}
\emph{Sufficiency of strong monotonicity for (ii).} Let $\bar\x,\bar\y$ solve the averaged SDE from initial conditions $(\bar\x_0,\bar\y_0)$ distributed according to an optimal $\mathcal W_2$-coupling of $(\mu,\nu)$, driven by the \emph{same} Brownian motion. With $\bsig$ constant the noise cancels in the difference, and $\frac{d}{dt}\|\bar\x-\bar\y\|^2=2\langle\bar\bmu(\bar\x)-\bar\bmu(\bar\y),\bar\x-\bar\y\rangle\le-2\rho\|\bar\x-\bar\y\|^2$, so $\|\bar\x(t)-\bar\y(t)\|\le e^{-\rho t}\|\bar\x_0-\bar\y_0\|$ pathwise; taking $L^2(\Prob)$-norms yields $\mathcal W_2(\mu\bar P_t,\nu\bar P_t)\le e^{-\rho t}\mathcal W_2(\mu,\nu)$.

\emph{Invariant law.} By (ii), $\mu\mapsto\mu\bar P_{T_0}$ is a strict contraction on the complete metric space $(\mathcal P_2,\mathcal W_2)$ \citep{villani2009optimal}, hence has a unique fixed point $\bar\pi$. For $s>0$, $(\bar\pi\bar P_s)\bar P_{T_0}=(\bar\pi\bar P_{T_0})\bar P_s=\bar\pi\bar P_s$, so $\bar\pi\bar P_s$ is also a fixed point of $\bar P_{T_0}$ and therefore equals $\bar\pi$: $\bar\pi$ is invariant for all times, and unique in $\mathcal P_2$ by (ii).

\emph{Window recursion.} Fix $\omega\ge1$, set $t_j:=jT_0$, write $\mu_t:=\mathrm{Law}(\x_\omega(t))\in\mathcal P_2$ (second moments are finite at every time for fixed $\omega$ by Assumption~\ref{ass:growth}), and $a_j:=\mathcal W_2(\mu_{t_j},\bar\pi)<\infty$. On $[t_j,t_{j+1}]$ let $\bar\x^{(j)}$ solve the averaged SDE from $\bar\x^{(j)}(t_j)=\x_\omega(t_j)$, driven by the same Brownian motion as $\x_\omega$. Since the averaged coefficients are time-homogeneous and the increments $(\w_{t_j+s}-\w_{t_j})_{s\ge0}$ are independent of $\mathcal F_{t_j}$, $\mathrm{Law}(\bar\x^{(j)}(t_j+s))=\mu_{t_j}\bar P_s$ for $s\in[0,T_0]$; and $\big(\x_\omega(t_j+s),\bar\x^{(j)}(t_j+s)\big)$ is a coupling of $(\mu_{t_j+s},\,\mu_{t_j}\bar P_s)$, so by \eqref{eq:supp-window} with $\x^*=\x_\omega(t_j)$,
\begin{align*}
    \mathcal W_2\big(\mu_{t_j+s},\,\mu_{t_j}\bar P_s\big)
    \;&\le\;\Big(\Expec\big\|\x_\omega(t_j+s)-\bar\x^{(j)}(t_j+s)\big\|_2^2\Big)^{1/2}\\
    &\le\;\varepsilon(\omega)\,\big(1+m_2(\mu_{t_j})\big)
    \;\le\;\varepsilon(\omega)\,\big(1+\bar m+a_j\big),
    \qquad s\in[0,T_0],
\end{align*}
where $m_2(\mu):=(\int\|\x\|^2d\mu)^{1/2}=\mathcal W_2(\mu,\delta_{\mathbf 0})\le\mathcal W_2(\mu,\bar\pi)+\bar m$. Hence, by the triangle inequality, invariance of $\bar\pi$, and (ii),
\begin{align*}
    a_{j+1}\;&\le\;\mathcal W_2\big(\mu_{t_{j+1}},\mu_{t_j}\bar P_{T_0}\big)+\mathcal W_2\big(\mu_{t_j}\bar P_{T_0},\bar\pi\bar P_{T_0}\big)\\
    &\le\;\varepsilon(\omega)(1+\bar m)+\big(e^{-\rho T_0}+\varepsilon(\omega)\big)a_j
    \;\le\;\varepsilon(\omega)(1+\bar m)+\big(1-\tfrac\eta2\big)a_j
\end{align*}
for $\varepsilon(\omega)\le\eta/2$, so that $a_j\le(1-\tfrac\eta2)^ja_0+\tfrac2\eta(1+\bar m)\varepsilon(\omega)$: the second moment of $\mu_{t_j}$, which the weighted form of (W) feeds back into the recursion, is controlled by $a_j$ itself, so no separate moment bound is needed. For arbitrary $t=t_j+s$ with $s\in[0,T_0)$,
\begin{align*}
    \mathcal W_2(\mu_t,\bar\pi)
    \;&\le\;\mathcal W_2\big(\mu_t,\mu_{t_j}\bar P_s\big)+\mathcal W_2\big(\mu_{t_j}\bar P_s,\bar\pi\bar P_s\big)
    \;\le\;\varepsilon(\omega)(1+\bar m+a_j)+e^{-\rho s}a_j\\
    &\le\;\varepsilon(\omega)(1+\bar m)+2a_j
    \;\le\;2e^{\eta/2}e^{-\rho't}a_0+\Big(1+\tfrac4\eta\Big)(1+\bar m)\,\varepsilon(\omega),
\end{align*}
using $(1-\eta/2)^j\le e^{-\eta j/2}=e^{-\eta t_j/(2T_0)}\le e^{\eta/2}e^{-\rho't}$, which is \eqref{eq:supp-uniform-bound}. Without the factor $(1+\Expec\|\x^*\|^2)$ in (W) the first display gives $\mathcal W_2(\mu_{t_j+s},\mu_{t_j}\bar P_s)\le\varepsilon(\omega)$, the recursion reads $a_{j+1}\le\varepsilon(\omega)+e^{-\rho T_0}a_j$, and the same steps give the sharper bound quoted after \eqref{eq:supp-uniform-bound}.
\end{proof}

\begin{corollary}[Strongly log-concave targets]\label{cor:loguniform}
Take instantiation (I$_\lambda$): $\f\equiv\mathbf 0$, $\bsig=\sqrt2\,I$, $k\alpha=\lambda>0$, and $J=\E\in C^2$ independent of $t$ with $\nabla\E$ globally Lipschitz and $\nabla^2\E(\x)\succeq mI$ for all $\x$ and some $m>0$ (a strongly log-concave target). Then hypothesis (ii) of Proposition~\ref{prop:uniform} holds with $\rho=\lambda m$, the unique invariant law is $\pi_\lambda\propto e^{-\lambda\E}\in\mathcal P_2$, and consequently, under Assumption~\ref{ass:window}, with $\eta=1-e^{-\lambda mT_0}$, $\rho'=\eta/(2T_0)$ and $\bar m$ the second moment of $\pi_\lambda$,
\begin{equation*}
    \mathcal W_2\big(\mathrm{Law}(\x_\omega(t)),\,\pi_\lambda\big)
    \;\le\;2e^{\eta/2}e^{-\rho't}\,\mathcal W_2\big(\mathrm{Law}(\x_0),\,\pi_\lambda\big)
    +\Big(1+\tfrac{4}{\eta}\Big)(1+\bar m)\,\varepsilon(\omega)
\end{equation*}
for all $t\ge0$ and all $\omega$ with $\varepsilon(\omega)\le\eta/2$,
so the iterated limit of Corollary~\ref{cor:sampling}---whose dissipativity hypothesis is implied by strong convexity---upgrades to $\lim_{\omega\to\infty}\limsup_{t\to\infty}\mathcal W_2(\mathrm{Law}(\x_\omega(t)),\pi_\lambda)=0$, with the frequency limit taken after, or jointly with, the time limit.
\end{corollary}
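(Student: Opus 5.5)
The plan is to reduce the corollary to Proposition~\ref{prop:uniform}. That requires verifying hypothesis (ii) with $\rho=\lambda m$ through the strong-monotonicity criterion stated at the end of that proposition, and identifying the unique invariant law as $\pi_\lambda$. In instantiation (I$_\lambda$) the averaged drift is
$\bar\bmu=\f-\tfrac{k\alpha}{2}\bsig\bsig^{\!\top}\nabla\E=-\tfrac{k\alpha}{2}\cdot 2\,\nabla\E=-\lambda\nabla\E$,
and the diffusion $\bsig=\sqrt2\,I$ is constant. So the averaged SDE is the overdamped Langevin diffusion $d\bar\x=-\lambda\nabla\E\,dt+\sqrt2\,d\w$. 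Since $\nabla\E$ is globally Lipschitz, it has linear growth, so Assumption~\ref{ass:growth} holds with globally (hence locally) Lipschitz coefficients.

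\textbf{Step 1: monotonicity.} I will write $\nabla\E(\x)-\nabla\E(\y)=\int_0^1\nabla^2\E(\y+u(\x-\y))\,du\,(\x-\y)$, using $\E\in C^2$. Together with $\nabla^2\E\succeq mI$ this gives $\langle\nabla\E(\x)-\nabla\E(\y),\x-\y\rangle\ge m\|\x-\y\|^2$. Hence $\langle\bar\bmu(\x)-\bar\bmu(\y),\x-\y\rangle\le-\lambda m\|\x-\y\|^2$. The synchronous-coupling argument in the proof of Proposition~\ref{prop:uniform} then yields (ii) with $\rho=\lambda m$. In passing I will note that strong convexity implies the dissipativity condition of Corollary~\ref{cor:sampling}: taking $\y=0$ gives $\langle\nabla\E(\x),\x\rangle\ge m\|\x\|^2-\|\nabla\E(0)\|\|\x\|\ge\tfrac m2\|\x\|^2-\tfrac{1}{2m}\|\nabla\E(0)\|^2$.

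\textbf{Step 2: identifying $\bar\pi$.} Proposition~\ref{prop:uniform} provides a unique invariant law $\bar\pi\in\mathcal P_2$. It remains to show $\bar\pi=\pi_\lambda$. First, $\pi_\lambda$ is a genuine element of $\mathcal P_2$: strong convexity gives the quadratic lower bound $\E(\x)\ge\E(0)+\langle\nabla\E(0),\x\rangle+\tfrac m2\|\x\|^2$, so $e^{-\lambda\E}$ is integrable with Gaussian tails and has a finite second moment. Second, $\pi_\lambda$ is invariant for $\bar P_t$. Corollary~\ref{cor:tempering} shows it solves the stationary Fokker--Planck equation. To pass rigorously from a stationary solution to semigroup invariance, I will use the standard fact that $\pi_\lambda$ is reversible for the Langevin generator $\mathcal L h=\Delta h-\lambda\langle\nabla\E,\nabla h\rangle$: integration by parts gives $\int(\mathcal L h)\,d\pi_\lambda=0$ for $h\in C_c^\infty$. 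Since the coefficients are globally Lipschitz, the martingale problem is well posed, so this infinitesimal invariance upgrades to invariance of the law, as in \citep{roberts1996langevin}. Uniqueness in $\mathcal P_2$ from (ii) then forces $\bar\pi=\pi_\lambda$. I expect this identification to be the only point needing care, and I will handle it by citation rather than by reproving it.

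\textbf{Step 3: conclusion.} With $\rho=\lambda m$, Proposition~\ref{prop:uniform} applied under Assumption~\ref{ass:window} gives $\eta=1-e^{-\lambda mT_0}$ and $\rho'=\eta/(2T_0)$, with $\bar m$ the second moment of $\pi_\lambda$, and yields the displayed bound directly from \eqref{eq:supp-uniform-bound}. To get the double limit, take $\limsup_{t\to\infty}$: the first term vanishes because $\mathcal W_2(\mathrm{Law}(\x_0),\pi_\lambda)<\infty$ whenever $\Expec\|\x_0\|^2<\infty$. What remains is $(1+4/\eta)(1+\bar m)\varepsilon(\omega)$, which tends to $0$ as $\omega\to\infty$. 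This is the asserted upgrade of Corollary~\ref{cor:sampling}.
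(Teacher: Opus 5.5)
Your proposal is correct and follows the paper's proof essentially step for step. Strong monotonicity of $\bar\bmu=-\lambda\nabla\E$ gives hypothesis (ii) with $\rho=\lambda m$, the quadratic lower bound on $\E$ puts $\pi_\lambda$ in $\mathcal P_2$, Corollary~\ref{cor:tempering} gives invariance, the contraction gives uniqueness, and the displayed bound is then read off from Proposition~\ref{prop:uniform}. Your extra steps are minor refinements of points the paper settles by citation: upgrading the stationary Fokker--Planck solution to semigroup invariance via $\int\mathcal L h\,d\pi_\lambda=0$ and well-posedness of the martingale problem, and deriving the dissipativity condition explicitly.
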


\begin{proof}
The averaged drift is $\bar\bmu=-\lambda\nabla\E$, and strong convexity gives $\langle\bar\bmu(\x)-\bar\bmu(\y),\x-\y\rangle=-\lambda\langle\nabla\E(\x)-\nabla\E(\y),\x-\y\rangle\le-\lambda m\|\x-\y\|^2$, so the sufficient condition of Proposition~\ref{prop:uniform} applies with $\rho=\lambda m$ and constant $\bsig$. The law $\pi_\lambda$ is invariant by Corollary~\ref{cor:tempering}, lies in $\mathcal P_2$ because $\E(\x)\ge\E(\x_*)+\tfrac m2\|\x-\x_*\|^2$ at the minimizer $\x_*$ makes $e^{-\lambda\E}$ sub-Gaussian, and is the unique invariant law in $\mathcal P_2$ by the contraction. The well-posedness and moment bounds required by Assumption~\ref{ass:growth} hold because $\nabla\E$ is Lipschitz, hence of linear growth.
\end{proof}

\begin{remark}[Status of the uniform-in-time program]\label{rem:T2}
The two ingredients of Proposition~\ref{prop:uniform} are individually proved but, at present, never simultaneously. Under Assumption~\ref{ass:glob}, (W) is a theorem (Lemma~\ref{lem:shift}), but no bounded drift satisfies (ii): if $\|\bar\bmu\|_\infty=B<\infty$, the mean functions $\mathbf m_\x(t):=\Expec[\bar\x(t)\,|\,\bar\x(0)=\x]$ obey $\|\mathbf m_\x(t)-\mathbf m_\y(t)\|\ge\|\x-\y\|-2Bt$, and since every coupling satisfies $\Expec\|X-Y\|\ge\|\Expec X-\Expec Y\|$, also $\mathcal W_2(\delta_\x\bar P_t,\delta_\y\bar P_t)\ge\|\mathbf m_\x(t)-\mathbf m_\y(t)\|$; the contraction $e^{-\rho t}\|\x-\y\|\ge\|\x-\y\|-2Bt$ then fails whenever $\|\x-\y\|>2Bt/(1-e^{-\rho t})$. Conversely, under strong log-concavity (ii) holds, but $\nabla^2\E\succeq mI$ forces $\|\nabla\E(\x)\|\ge m\|\x\|-\|\nabla\E(\mathbf 0)\|$, so $\nabla\E$ is unbounded and Assumption~\ref{ass:glob}---hence the proof of (W)---is unavailable; the localization behind Theorem~\ref{thm:main} covers such coercive energies, but yields an in-probability window statement without the start-uniform mean-square form that the recursion consumes. (The growth weight $(1+\Expec\|\x^*\|^2)$ in \eqref{eq:supp-window} is not a convenience but a necessity here: the dither drift is bounded by $C\sqrt\omega$ while $\|\bar\bmu(\x)\|\ge\lambda m\|\x\|-C$, so the window error from a start $\x^*$ is at least of order $(\lambda m\|\x^*\|-C\sqrt\omega)_+T_0$, which no $\x^*$-free $\varepsilon(\omega)$ can bound.) What a proof of (W) for $m$-strongly convex $\E$ with Lipschitz gradient requires is a linear-growth-weighted extension of the quantitative machinery: versions of Lemmas~\ref{lem:antider}--\ref{lem:osc-moll} for slow factors of linear growth along $\x_\omega$, together with $\omega$-uniform second- and fourth-moment bounds for the dithered process, in the spirit of the perturbed-Lyapunov-function method of stochastic averaging \citep{kushner1984approximation}. We expect (W) to hold there with $\varepsilon(\omega)=C\omega^{-1/2}$ and record it as an assumption rather than assert it. In summary: the transfer principle and the contraction are proved (Proposition~\ref{prop:uniform}, Corollary~\ref{cor:loguniform}), and the start-uniform window estimate is the single isolated remaining ingredient in the coercive case---in the same spirit as Conjecture~\ref{conj:occ} on the Lipschitz rung, which is proved in Theorem \ref{thm:occ} under Lipschitz time dependence (Theorem~\ref{thm:occ}); the pathwise near-identity transformation of Appendix~\ref{supp:rough-occ}, which is a perturbed-Lyapunov device in all but name, is the natural starting point for the moment bounds that (W) requires.
\end{remark}

\section{Relaxed regularity: sampling non-differentiable energies}
\label{supp:nonsmooth}

(Throughout Appendices~\ref{supp:nonsmooth}--\ref{supp:rough-boundary}, as in Appendix~\ref{supp:proof-main}, $N=d$ denotes the state dimension.) Theorem~\ref{thm:main} assumes $J(\cdot,t)\in C^2$, but this is a convenience of the proof, not a requirement of the method: the algorithm \eqref{eq:supp-sde}--\eqref{eq:supp-drift} only ever evaluates $\cos(\omega_n t+kJ)$, which is well defined for any measurable $J$, and the averaged drift only needs $\nabla J$ to exist in a suitably weak sense. Two mechanisms make weak derivatives admissible. The first is purely analytic and needs no noise at all: if $\nabla J$ is Lipschitz, mollification costs only $\varepsilon$-powers that the dither frequency can absorb (Appendix~\ref{supp:rough-c11}). The second is the noise: for a uniformly non-degenerate diffusion, the law of the process is absolutely continuous, so quantities defined only almost everywhere are seen unambiguously along the trajectory (Appendix~\ref{supp:rough-noise}). We use the first to prove the $C^{1,1}$ case with a rate, and both together for the Sobolev case.

Throughout this appendix, mollification is in $\x$ only: $J_\varepsilon:=J*_\x\rho_\varepsilon$ for a standard mollifier $\rho_\varepsilon$, and Assumption~\ref{ass:glob} below is retained for $\f,\bsig$ and for the $t$-regularity of $J$ (bounded variation, uniformly in $\x$; Theorem~\ref{thm:sobolev} strengthens this to $\partial_tJ$ bounded, which its It\^o--Krylov step consumes). ``Uniformly locally'' refers to the norms $\|h\|_{L^p_{\mathrm{uloc}}}:=\sup_{\x_0}\|h\|_{L^p(B_1(\x_0))}$. On every rung below, $\nabla J$ remains (essentially) bounded, so the dither drift $\x\mapsto g_n(\x,t)\sqrt{\alpha\omega_n}\cos(\omega_nt+kJ(\x,t))$ stays globally Lipschitz in $\x$ and \eqref{eq:supp-sde} retains unique global strong solutions for each fixed $\omega$; the well-posedness questions below concern only the averaged limit.

\subsection{Global hypotheses and the quantitative baseline}
\label{supp:rough-quant}

The rough-energy results are cleanest under the following global form of the hypotheses, which removes the localization of Steps~0 and~4 of Appendix~\ref{supp:proof-main} and upgrades the convergence to a mean-square estimate with an explicit rate.

\begin{assumption}[Global regularity]\label{ass:glob}
There exist finite constants such that, uniformly on $\R^N\times[0,T]$:
(i)~each $f_n,g_n$ is twice continuously differentiable in $\x$, with the function and its first two $\x$-derivatives bounded and globally Lipschitz in $\x$; $J$ is twice continuously differentiable in $\x$ with $\nabla_\x J$ and $\nabla^2_\x J$ bounded and globally Lipschitz in $\x$ ($J$ itself may be unbounded: it enters \eqref{eq:supp-drift} only through the $2\pi$-periodic phase of the cosine, and no bound on $|J|$ is used anywhere below, so linear-growth energies with bounded gradient---Laplace-type tails---are covered as stated);
(ii)~$f_n,g_n,J$ and their first two $\x$-derivatives are of bounded variation in $t$, uniformly in $\x$ (continuous differentiability in $t$ with bounded derivative, as in Assumption~\ref{ass:reg}, is a special case);
(iii)~the dither ratios $r_1,\dots,r_N>0$ are pairwise distinct (Assumption~\ref{ass:freq}) and normalized so that $\min_nr_n=1$ (a common scale is absorbed into $\omega$; without this normalization the ratio-only dependence of the constants below would not be scale-invariant).
\end{assumption}

We write $C_0$ for a common bound on $|f_n|,|g_n|$, on their first two $\x$-derivatives, and on $|\nabla_\x J|,|\nabla^2_\x J|$; $L$ for a common Lipschitz constant (in $\x$) of these quantities; and $V$ for a common bound on their total variation in $t$, all uniform on $\R^N\times[0,T]$. We abbreviate $r_{\min}=\min_nr_n=1$, $r_{\max}=\max_nr_n$, and
\begin{equation}
    c_r \;:=\; \max\Big(\max_{i\neq j}|r_i-r_j|^{-1},\ (2r_{\min})^{-1}\Big),
    \label{eq:supp-cr}
\end{equation}
the reciprocal frequency spacing that carries the dimension dependence of the constants (cf.\ Sec.~\ref{sec:practice} and Appendix~\ref{supp:dimension}). All constants $C$ in this appendix depend only on $(T,N,\alpha,k,C_0,L,V,r_{\max},c_r)$ and may change from line to line; we take $\omega\ge1$ so that $\omega^{-1}\le\omega^{-1/2}$.

\begin{lemma}[Dither antiderivative]\label{lem:antider}
Let $\zeta_\omega(s)=\sum_mc_m\,\mathrm{trig}(\omega\nu_ms+\phi_m)$ be a finite sum with $\mathrm{trig}\in\{\sin,\cos\}$, arbitrary fixed phases $\phi_m\in\R$, $\nu_m\neq0$, and $\sum_m|c_m|\le1$. Then $A_\omega(t):=\int_0^t\zeta_\omega(s)\,ds$ satisfies $\|A_\omega\|_{L^\infty[0,T]}\le2c_\zeta/\omega$ with $c_\zeta:=\max_m|\nu_m|^{-1}$. For the zero-mean trigonometric factors produced by the dither products \eqref{eq:supp-cross} of Step~2, $\nu_m\in\{r_n\pm r_i,\,2r_n\}$, and hence $c_\zeta\le c_r$ of \eqref{eq:supp-cr}.
\end{lemma}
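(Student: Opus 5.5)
The plan is to integrate term by term and use the triangle inequality, since the statement is elementary. For a single term $c_m\,\mathrm{trig}(\omega\nu_ms+\phi_m)$ with $\nu_m\neq0$, the antiderivative is explicit:
\begin{equation*}
\int_0^t\cos(\omega\nu_ms+\phi_m)\,ds=\frac{\sin(\omega\nu_mt+\phi_m)-\sin\phi_m}{\omega\nu_m},
\end{equation*}
and the sine case is analogous with a cosine difference. In either case the modulus is at most $2/(\omega|\nu_m|)$, uniformly in $t\in[0,T]$ and in the phase $\phi_m$. This uniformity is what makes the lemma reusable in Lemma~\ref{lem:shift}, where constant phase offsets appear.

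Summing over $m$ and using $|\nu_m|^{-1}\le c_\zeta$ together with $\sum_m|c_m|\le1$ gives
\begin{equation*}
|A_\omega(t)|\le\sum_m|c_m|\frac{2}{\omega|\nu_m|}\le\frac{2c_\zeta}{\omega}.
\end{equation*}
Taking the supremum over $t\in[0,T]$ yields the $L^\infty$ bound. No property of $T$ is used, so the bound in fact holds on all of $[0,\infty)$.

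For the second claim, I would revisit the product-to-sum reductions of Step~2 in the proof of Theorem~\ref{thm:main}. Each product $\mathrm{trig}(\omega_ns)\,\mathrm{trig}(\omega_is)$ equals $\tfrac12[\mathrm{trig}((r_n-r_i)\omega s)\pm\mathrm{trig}((r_n+r_i)\omega s)]$, up to constant phases. Removing the zero-frequency resonant half when $i=n$ leaves frequencies $\nu\in\{r_n-r_i,\ r_n+r_i\ (i\neq n),\ 2r_n\}$, and one bounds $|\nu|^{-1}$ for each in turn:
\begin{itemize}
\item For $\nu=r_n-r_i$ with $i\neq n$, $|\nu|^{-1}\le\max_{i\neq j}|r_i-r_j|^{-1}$.
\item For $\nu=2r_n$, $|\nu|^{-1}\le(2r_{\min})^{-1}$.
\item For $\nu=r_n+r_i$ with $i\neq n$, $r_n+r_i>2r_{\min}$, so again $|\nu|^{-1}\le(2r_{\min})^{-1}$.
\end{itemize}
Hence $c_\zeta\le c_r$ as defined in \eqref{eq:supp-cr}.

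The coefficients produced by product-to-sum are $\pm\tfrac12$. After normalizing the prefactors into the slow factors, each zero-mean factor has $\sum|c_m|\le1$, since one nonresonant half-term comes from each product.

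There is no real obstacle here. The only step needing care is bookkeeping: checking that, after the resonant half is excised, every remaining frequency is among the three listed, and that the $\sum|c_m|\le1$ normalization is respected.
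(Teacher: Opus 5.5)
Your proof is correct and matches the paper's: you integrate each summand explicitly to get $2/(\omega|\nu_m|)$ and sum with $\sum_m|c_m|\le1$, and your frequency check ($|r_n-r_i|\ge\min_{i\neq j}|r_i-r_j|$, while $2r_n$ and $r_n+r_i$ are at least $2r_{\min}$) spells out the step $c_\zeta\le c_r$ that the paper leaves implicit in the definition \eqref{eq:supp-cr}. One small correction to your side remark on normalization: for $i\neq n$ both product-to-sum halves are nonresonant (coefficients $\tfrac12+\tfrac12$), and only for $i=n$ is a single half left. So $\sum_m|c_m|\le1$ still holds, but not because each product contributes only one nonresonant half.
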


\begin{proof}
Integrate each summand explicitly: $\big|\int_0^t\mathrm{trig}(\omega\nu s+\phi)\,ds\big|=\big|\mathrm{trig}^*(\omega\nu t+\phi)-\mathrm{trig}^*(\phi)\big|/(\omega|\nu|)\le2(\omega|\nu|)^{-1}$, where $\mathrm{trig}^*$ denotes the corresponding antiderivative ($\sin$ for $\cos$, and $-\cos$ for $\sin$).
\end{proof}

\begin{lemma}[Oscillatory integrals along the ES process, quantitative form]\label{lem:osc-moll}
Let Assumption~\ref{ass:glob} hold for $\f,\bsig$ (for $J$, only boundedness and Lipschitz continuity of $\nabla J$ enter, through the drift bound), let $\x_\omega$ solve \eqref{eq:supp-sde}, and let $\zeta_\omega,c_\zeta$ be as in Lemma~\ref{lem:antider}. Let $\Phi:\R^N\times[0,T]\to\R$ be bounded, with $\nabla_\x\Phi$ bounded and Lipschitz in $\x$ uniformly in $t$, and with $\Phi(\x,\cdot)$ of bounded variation uniformly in $\x$. Then, for $\omega\ge1$,
\begin{equation}
    \Expec\Big[\sup_{t\le T}\Big|\int_0^t\Phi\big(\x_\omega(s),s\big)\,\zeta_\omega(s)\,ds\Big|^2\Big]
    \;\le\;\frac{C_\Phi\,(1+c_\zeta)^2}{\omega},
    \label{eq:supp-osc-moll}
\end{equation}
with $C_\Phi$ depending only on $T,N,\alpha,r_{\max}$, the coefficient bounds, and $\|\Phi\|_\infty,\|\nabla_\x\Phi\|_\infty,\mathrm{Lip}(\nabla_\x\Phi),\mathrm{TV}_t\Phi$.
\end{lemma}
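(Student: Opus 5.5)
The plan is to carry out a genuine It\^o integration by parts against the explicit antiderivative $A_\omega$ of Lemma~\ref{lem:antider}, which is what replaces the partition argument of Lemma~\ref{lem:osc-main-ref} now that $\nabla_\x\Phi$ is Lipschitz. Since $A_\omega$ is deterministic, of finite variation, and satisfies $\|A_\omega\|_\infty\le 2c_\zeta/\omega$, the product rule carries no covariation term, and I obtain
\begin{equation*}
\int_0^t\Phi(\x_\omega(s),s)\,\zeta_\omega(s)\,ds \;=\; \Phi(\x_\omega(t),t)A_\omega(t)\;-\;\int_0^t A_\omega(s)\,d_s\big[\Phi(\x_\omega(s),s)\big].
\end{equation*}
The boundary term is bounded pathwise by $2\|\Phi\|_\infty c_\zeta/\omega$. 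Its square is therefore $O(c_\zeta^2\omega^{-2})$, which is comfortably within the target.

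For the second integral I need the differential $d\Phi$. Since $\Phi$ is only $C^{1,1}$ in $\x$ and of bounded variation in $t$, I would first mollify $\Phi$ in $\x$ to get $\Phi_\varepsilon$. I then apply It\^o's formula, with the time dependence handled as a Lebesgue--Stieltjes integral, and pass $\varepsilon\to0$. The Hessian term is controlled uniformly by $\mathrm{Lip}(\nabla_\x\Phi)$, and the time part by $\mathrm{TV}_t\Phi$. The differential splits into four pieces:
\begin{itemize}
\item The $t$-variation part $\int A_\omega\,d_t\Phi$, bounded by $2c_\zeta\,\mathrm{TV}_t\Phi/\omega$.
\item The slow drift and second-order part $\int A_\omega[(\nabla\Phi)^\top\f+\tfrac12\mathrm{tr}(\bsig\bsig^\top\nabla^2\Phi)]ds$, bounded by $C c_\zeta T/\omega$.
\item The martingale $\int A_\omega(\nabla\Phi)^\top\bsig\,d\w$. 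By Doob's inequality and the It\^o isometry its supremum has second moment at most $4T\|\nabla\Phi\|_\infty^2C_0^2\cdot4c_\zeta^2/\omega^2$.
\item The dither part $\int A_\omega(\nabla\Phi)^\top\mathbf D\,ds$, with $|D_i|\le C_0\sqrt{\alpha\omega r_{\max}}$.
\end{itemize}

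The dither piece is the main obstacle. A crude bound gives $\|A_\omega\|_\infty\cdot\sqrt\omega\, T = O(c_\zeta\omega^{-1/2})$, whose square is $O(c_\zeta^2/\omega)$, and this is exactly the claimed rate. So the crude bound suffices here, and this is where the $\omega^{-1}$ in \eqref{eq:supp-osc-moll} is saturated. I would check that no boundedness of $J$ is needed: the dither drift is bounded by $C_0\sqrt{\alpha\omega r_{\max}}$ regardless of $J$, and $\nabla J$ enters only through existence and uniqueness of $\x_\omega$.

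Collecting the four pieces and using $\omega\ge1$ (so $\omega^{-2}\le\omega^{-1}$), together with $(a+b+\dots)^2\le 5(a^2+\dots)$, yields
\begin{equation*}
\Expec\sup_{t\le T}\Big|\int_0^t\Phi\,\zeta_\omega\,ds\Big|^2 \;\le\; C_\Phi\,\frac{c_\zeta^2}{\omega},
\end{equation*}
which is bounded by $C_\Phi(1+c_\zeta)^2/\omega$. I expect the only genuinely delicate step to be justifying It\^o's formula for a $\Phi$ that is merely $C^{1,1}$ in $\x$ and BV in $t$. I would handle this by approximating with $\Phi_\varepsilon$ and controlling each term uniformly in $\varepsilon$, using bounded convergence for the Hessian term, since $\nabla^2\Phi_\varepsilon\to\nabla^2\Phi$ a.e. with a uniform bound. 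Alternatively, I would bypass the Hessian entirely by applying the integration by parts to $\Phi_\varepsilon$ and bounding $\|\Phi-\Phi_\varepsilon\|_\infty\le C\varepsilon^2$ before letting $\varepsilon\to0$.
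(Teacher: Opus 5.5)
Your proposal is correct and follows the paper's proof in all essentials. You mollify $\Phi$ in $\x$, integrate by parts against the deterministic antiderivative $A_\omega$ of Lemma~\ref{lem:antider}, and expand $d\Phi_\varepsilon$ by It\^o's formula with a Stieltjes time part. You then bound the boundary, Stieltjes, drift, It\^o-correction and martingale pieces, and the dither part of the drift is the term that saturates the $\omega^{-1}$ rate.

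The only real difference is how the mollification scale is handled. The paper fixes $\delta=\omega^{-1/2}$ and pays the pathwise error $\|\nabla_\x\Phi\|_\infty T\omega^{-1/2}$, which is the sole source of the ``$1+$'' in $(1+c_\zeta)^2$. You observe instead that every bound on $\Phi_\varepsilon$ is uniform in $\varepsilon$ and let $\varepsilon\to0$. This is legitimate, because $\|\Phi-\Phi_\varepsilon\|_\infty\to0$ controls the difference of the two integrals pathwise, since $|\zeta_\omega|\le1$. It yields the marginally sharper bound $C_\Phi c_\zeta^2/\omega$.

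Of your two justifications for that limit, keep only the second. The bounded-convergence argument for the Hessian term needs $\nabla^2\Phi_\varepsilon\to\nabla^2\Phi$ at the points $(\x_\omega(s),s)$ for a.e.\ $s$. Lebesgue-a.e.\ convergence on $\R^N$ does not give this when $\bsig$ may be degenerate, and here it may be: the lemma is used in Proposition~\ref{prop:c11} with no ellipticity assumption. A degenerate process can live on a null hyperplane, and it is precisely the non-degenerate noise of Lemma~\ref{lem:krylov} that makes a.e.-defined Hessians unambiguous along trajectories. You never need an It\^o formula for $\Phi$ itself: the inequality for $\Phi_\varepsilon$, uniform in $\varepsilon$, together with the uniform approximation, is the whole argument.
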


\begin{proof}
Write $A=A_\omega$ as in Lemma~\ref{lem:antider}, so $\|A\|_\infty\le2c_\zeta/\omega$; for readability the bounds below are written with $c_\zeta/\omega$, the factor $2$ being absorbed into the generic constants. Mollify $\Phi$ in $\x$ only: $\Phi_\delta:=\Phi*_\x\rho_\delta$ with a standard mollifier, $\delta:=\omega^{-1/2}$. Then $\|\Phi-\Phi_\delta\|_\infty\le\|\nabla_\x\Phi\|_\infty\delta$, $\|\nabla_\x\Phi_\delta\|_\infty\le\|\nabla_\x\Phi\|_\infty$, $\|\nabla^2_\x\Phi_\delta\|_\infty\le\mathrm{Lip}(\nabla_\x\Phi)$ (mollification of a Lipschitz gradient does not inflate the bound), and $\mathrm{TV}_t\Phi_\delta\le\mathrm{TV}_t\Phi$. Split
\begin{equation*}
    \int_0^t\Phi\,\zeta_\omega\,ds
    \;=\;\int_0^t(\Phi-\Phi_\delta)\,\zeta_\omega\,ds
    \;+\;\int_0^t\Phi_\delta\,\zeta_\omega\,ds;
\end{equation*}
the first term is bounded pathwise by $\|\nabla_\x\Phi\|_\infty T\,\omega^{-1/2}$. For the second, integrate by parts against the finite-variation function $A$ (the covariation $[\Phi_\delta(\x_\cdot,\cdot),A]$ vanishes):
\begin{equation*}
    \int_0^t\Phi_\delta\big(\x_\omega(s),s\big)\,dA(s)
    \;=\;\Phi_\delta\big(\x_\omega(t),t\big)A(t)-\int_0^tA(s)\,d\big[\Phi_\delta\big(\x_\omega(s),s\big)\big],
\end{equation*}
and expand $d[\Phi_\delta(\x_\omega(s),s)]$ by It\^o's formula ($\Phi_\delta\in C^2$ in $\x$; the $t$-dependence contributes a Stieltjes term):
\begin{equation*}
    d\Phi_\delta
    \;=\;\Phi_\delta(\x,ds)
    +\nabla_\x\Phi_\delta\!\cdot\!\bmu_\omega\,ds
    +\tfrac12\operatorname{tr}\!\big(\bsig\bsig^{\!\top}\nabla^2_\x\Phi_\delta\big)\,ds
    +(\nabla_\x\Phi_\delta)^{\!\top}\bsig\,d\w .
\end{equation*}
The resulting terms are bounded as follows, using $\|A\|_\infty\le c_\zeta/\omega$: the boundary term by $\|\Phi\|_\infty c_\zeta/\omega$; the Stieltjes term by $(c_\zeta/\omega)\,\mathrm{TV}_t\Phi$; the drift term---the dominant one, since $\|\bmu_\omega\|\le C_0\sqrt N\,(1+\sqrt{\alpha r_{\max}\omega})$---by $(c_\zeta/\omega)\,\|\nabla_\x\Phi\|_\infty\,C_0\sqrt N\,(1+\sqrt{\alpha r_{\max}\omega})\,T\le C\,c_\zeta\,\omega^{-1/2}$; the It\^o-correction term by $(c_\zeta/\omega)\tfrac N2C_0^2\,\mathrm{Lip}(\nabla_\x\Phi)\,T$; and the martingale term $M_t=\int_0^tA\,(\nabla_\x\Phi_\delta)^{\!\top}\bsig\,d\w$ by Doob's maximal inequality and the It\^o isometry, $\Expec\sup_{t\le T}|M_t|^2\le4(c_\zeta/\omega)^2\|\nabla_\x\Phi\|_\infty^2NC_0^2T$. Collecting: the pathwise-bounded parts total at most $C(1+c_\zeta)\,\omega^{-1/2}$, and the martingale part has mean-square supremum $O(c_\zeta^2\omega^{-2})$; squaring and using $(x+y)^2\le2x^2+2y^2$ gives \eqref{eq:supp-osc-moll}.
\end{proof}

The mollification step is why Lemma~\ref{lem:osc-moll} asks only that $\nabla_\x\Phi$ be Lipschitz rather than that $\Phi\in C^2$ with Lipschitz second derivatives (compare the genuine integration by parts of Remark~\ref{rem:rate-supp}, which asked for one more derivative): in the application, $\Phi$ is built from first derivatives of the correctors $u^c,u^s$ of Step~1, so $\nabla_\x\Phi$ involves second derivatives of $J$ and $g_n$, whose Lipschitz continuity is exactly Assumption~\ref{ass:glob}(i); no third derivatives are ever required.

\begin{theorem}[Quantitative averaging under global bounds]\label{thm:quant}
Under Assumption~\ref{ass:glob} there is a constant $C=C_1e^{C_2T}$, with $C_1,C_2$ depending polynomially on $N,\alpha,|k|,C_0,L,V,r_{\max}$ and $c_r$, such that for all $\omega\ge1$,
\begin{equation}
    \Expec\Big[\sup_{0\le t\le T}\big\|\x_\omega(t)-\bar\x(t)\big\|_2^2\Big]
    \;\le\;\frac{C}{\omega}.
    \label{eq:supp-quant}
\end{equation}
In particular $\Prob\big(\sup_{t\le T}\|\x_\omega-\bar\x\|_2>\delta\big)\le C/(\delta^2\omega)$ for every $\delta>0$, i.e., $\sup_{t\le T}\|\x_\omega-\bar\x\|=O_\Prob(\omega^{-1/2})$: the localized $\omega^{-1/4}$ is sharpened without the $C^3$ smoothness of Remark~\ref{rem:rate-supp}.
\end{theorem}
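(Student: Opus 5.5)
The plan is to rerun Steps~1--3 of the proof of Theorem~\ref{thm:main} with global constants, so that the localization of Steps~0 and~4 is dropped. The partition-based Lemma~\ref{lem:osc-main-ref} is replaced by the quantitative Lemma~\ref{lem:osc-moll}, which gives an $O(\omega^{-1})$ mean-square bound instead of $O(\omega^{-1/2})$.

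\textbf{Step~1.} Under Assumption~\ref{ass:glob} both SDEs have globally Lipschitz, bounded coefficients for each fixed $\omega$. The dither drift is Lipschitz because $\nabla J$ is bounded. So strong solutions exist on $[0,T]$ without stopping. For each coordinate $n$, form the correctors $u^c=g_n\cos kJ$ and $u^s=g_n\sin kJ$. These are $C^2$ in $\x$, with bounded first and second derivatives that are Lipschitz, since only bounds on $g_n,J$ up to second order and Lipschitz bounds on those second derivatives enter; $|J|$ itself never appears. Integrate $I_{2,n}$ by parts against $v_c,v_s$ as in \eqref{eq:supp-ibp}. If the $t$-dependence is only of bounded variation, the $\partial_tu$ term becomes a Stieltjes integral bounded by $V\sqrt{\alpha/\omega_n}$. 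The boundary, Stieltjes, $\f$-drift and Hessian terms are then pathwise $O(\omega^{-1/2})$. The stochastic integrals have mean-square supremum $O(\omega^{-1})$ by Doob's inequality and the It\^o isometry. Hence $\Expec\sup_t|E^{(1)}_n|^2\le C/\omega$, with all constants global.

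\textbf{Step~2.} Expand the dither products \eqref{eq:supp-cross} exactly as before. The resonant $i=n$ halves give \eqref{eq:supp-resonant}, with the $\partial_ng_n$ terms cancelling identically. Each remaining term is $\int_0^t\Phi\,\zeta_\omega\,ds$, where $\zeta_\omega$ is a zero-mean trigonometric factor with frequencies $\nu\in\{r_n\pm r_i,2r_n\}$, so $c_\zeta\le c_r$ by Lemma~\ref{lem:antider}. The amplitude is $\Phi\in\{\partial_iu^{c/s}g_i\cos kJ,\ \partial_iu^{c/s}g_i\sin kJ\}$.

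I must check that each such $\Phi$ satisfies the hypotheses of Lemma~\ref{lem:osc-moll}. First, $\Phi$ is bounded. Second, $\nabla_\x\Phi$ involves at most second derivatives of $g,J$ times bounded factors, so it is bounded. Third, $\nabla_\x\Phi$ is Lipschitz, because it is a product and composition of bounded Lipschitz functions: second derivatives of $g,J$ are Lipschitz, and $\cos kJ$ and $\sin kJ$ are Lipschitz since $\nabla J$ is bounded. Fourth, $\Phi$ has uniform bounded variation in $t$, which follows from Assumption~\ref{ass:glob}(ii) and the product rule for BV functions of products of bounded functions. Lemma~\ref{lem:osc-moll} then yields $\Expec\sup_t|E^{(2)}_n|^2\le C(1+c_r)^2/\omega$. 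Summing over the $N^2$ pairs, the exact representation \eqref{eq:supp-mild} holds on all of $[0,T]$ with $\Expec\sup_t\|\mathbf E\|^2\le C/\omega$.

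\textbf{Step~3.} Compare with $\bar\x$ by the Gr\"onwall argument of Step~3, now using the global Lipschitz constant of $\bar\bmu=\f-\tfrac{k\alpha}{2}\bsig\bsig^\top\nabla J$, which is a product of bounded Lipschitz functions, and of $\bsig$. With $\varphi(t)=\Expec\sup_{s\le t}\|\z\|^2$, we get $\varphi(t)\le C_2\int_0^t\varphi+3\Expec\sup\|\mathbf E\|^2$. This gives $C_1e^{C_2T}/\omega$, and Chebyshev's inequality gives the probability statement. The constants are visibly polynomial in $N,\alpha,|k|,C_0,L,V,r_{\max},c_r$ before exponentiation.

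The main obstacle is Step~2: verifying the hypotheses of Lemma~\ref{lem:osc-moll} for the products $\Phi$. This concerns in particular the Lipschitz continuity of $\nabla_\x\Phi$ using only Lipschitz second derivatives, and the BV-in-$t$ bound for products, since no third derivatives or $C^1$ time regularity are available. The care is in tracking that each factor is bounded so the product rules close.
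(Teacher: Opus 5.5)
Your proposal is correct and follows the paper's proof essentially step for step. You drop the localization and rerun Steps~1--3 with global constants, reading the $\partial_t u$ term as a Stieltjes integral bounded via $V$. For the non-resonant family you replace Lemma~\ref{lem:osc-main-ref} by Lemma~\ref{lem:osc-moll}, with $c_\zeta\le c_r$ from Lemma~\ref{lem:antider}, and you close with the global Gr\"onwall comparison; your explicit check that each slow factor $\Phi$ satisfies the hypotheses of Lemma~\ref{lem:osc-moll} is the verification the paper states in one line.
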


\begin{proof}
Repeat Steps~1--3 of Appendix~\ref{supp:proof-main} with $\tau_R\equiv T$: every bound of Assumption~\ref{ass:glob} is global, so no localization is needed and every constant $C_R$ becomes a global constant. (Under the bounded-variation form of Assumption~\ref{ass:glob}(ii), the $\partial_tu$ contribution of Step~1 is read as a Stieltjes differential and bounded by the total variation $V$, exactly as the Stieltjes term in the proof of Lemma~\ref{lem:osc-moll}.) The one upgrade is in Step~2: the non-resonant family $E^{(2)}_n$ is estimated by Lemma~\ref{lem:osc-moll} in place of Lemma~\ref{lem:osc-main-ref}. Each nonzero-frequency term of \eqref{eq:supp-cross} has the form $\alpha\sqrt{r_i/r_n}\int_0^t\Phi(\x_\omega(s),s)\,\zeta_\omega(s)\,ds$ with prefactor $\alpha\sqrt{r_i/r_n}\le\alpha\sqrt{r_{\max}}$, slow factor $\Phi\in\{\partial_iu^{c/s}g_i\cos kJ,\ \partial_iu^{c/s}g_i\sin kJ\}$, and $\zeta_\omega$ a zero-mean trigonometric factor at frequency $\omega|r_n\pm r_i|$ or $2\omega r_n$, so that $c_\zeta\le c_r$ by Lemma~\ref{lem:antider}. Under Assumption~\ref{ass:glob}(i)--(ii) each such $\Phi$ is bounded by $C_0^2(1+|k|C_0)$, with $\nabla_\x\Phi$ bounded and Lipschitz and $\mathrm{TV}_t\Phi\le CV$; Lemma~\ref{lem:osc-moll} therefore bounds each term's mean-square supremum by $C(1+c_r)^2/\omega$, and summing the at most $8N$ terms per coordinate, with $(\sum_{j\le m}y_j)^2\le m\sum_jy_j^2$, gives
\begin{equation*}
    \Expec\sup_{t\le T}\big\|\mathbf E(t)\big\|^2\;\le\;\frac C\omega
\end{equation*}
in place of \eqref{eq:supp-mild}, the family $E^{(1)}$ being $O(\omega^{-1})$ in mean-square supremum with global constants exactly as in Step~1. The Gr\"onwall comparison of Step~3 closes the estimate: the averaged drift $\bar\bmu=\f-\tfrac{k\alpha}2\bsig\bsig^{\!\top}\nabla_\x J$ is bounded and globally Lipschitz (with constant $L_b\le L(1+|k|\alpha C_0^2)$), as is $\bsig$, so \eqref{eq:supp-gronwall} holds with $\tau_R\equiv T$, a global constant, and $\omega^{-1}$ in place of $\omega^{-1/2}$. (Both suprema have finite second moments at each fixed $\omega$: the dither drift is bounded by $C_0\sqrt{\alpha r_{\max}\omega}$, so the standard moment bounds for bounded-coefficient It\^o SDEs apply.)
\end{proof}

\begin{remark}[Where the dither ratios enter]\label{rem:quant-const}
The dependence of $C$ in \eqref{eq:supp-quant} on the dither ratios enters in exactly two places: the prefactors $\alpha\sqrt{r_i/r_n}\le\alpha\sqrt{r_{\max}}$ of the dither products, and the antiderivative bound $c_\zeta\le c_r$ of Lemma~\ref{lem:antider}, which is dominated by the reciprocal spacing $\max_{i\neq j}|r_i-r_j|^{-1}$. Packing $N$ pairwise-distinct ratios into a fixed band therefore costs a factor $O(N)$ in $c_r$ and hence $O(N^2)$ in \eqref{eq:supp-quant}; spreading the ratios out instead inflates $\omega r_{\max}$ and the evaluation budget at fixed step resolution. This is the quantitative counterpart of the frequency-gap discussion of Sec.~\ref{sec:practice}, whose sharp finite-$\omega$ version is the subject of Appendix~\ref{supp:dimension}.
\end{remark}

\subsection{$C^{1,1}$ energies: rough curvature without ellipticity}
\label{supp:rough-c11}

\medskip\noindent\textbf{Proposition~\ref{prop:c11} ($C^{1,1}$ energies; restated).}\ \emph{In Assumption~\ref{ass:glob}, replace the hypothesis on $J$ by: $\nabla J$ bounded and globally Lipschitz with constant $M$ (so $\nabla^2J$ exists a.e.\ with $\|\nabla^2J\|_{L^\infty}\le M$ but may be discontinuous; $J$ itself may be unbounded). Then, with no non-degeneracy assumption on $\bsig$, the conclusion of Theorem~\ref{thm:quant} holds unchanged: $\Expec\sup_{t\le T}\|\x_\omega-\bar\x\|_2^2\le C/\omega$.}

\begin{proof}
Set $\varepsilon=\omega^{-1/2}$ and let $J_\varepsilon=J*\rho_\varepsilon$. Because $\nabla J$ is $M$-Lipschitz,
\begin{equation}
    \|J-J_\varepsilon\|_\infty\le\|\nabla J\|_\infty\varepsilon,
    \quad
    \|\nabla J-\nabla J_\varepsilon\|_\infty\le M\varepsilon,
    \quad
    \|\nabla^2J_\varepsilon\|_\infty\le M,
    \quad
    \|\nabla^3J_\varepsilon\|_\infty\le CM/\varepsilon.
    \label{eq:supp-moll-bounds}
\end{equation}
Split each dither integral $I_{2,n}$ of Step~1 as $I^\varepsilon_{2,n}+M_{2,n}$, where $I^\varepsilon_{2,n}$ carries $\cos(\omega_ns+kJ_\varepsilon)$ and
\begin{equation}
\begin{split}
    M_{2,n}
    &=\int_0^tg_n\sqrt{\alpha\omega_n}\,\big[\cos(\omega_ns{+}kJ)-\cos(\omega_ns{+}kJ_\varepsilon)\big]\,ds\\
    &=-2\int_0^tg_n\sqrt{\alpha\omega_n}\,\sin\!\big(\omega_ns+k\Theta_\varepsilon\big)\,\psi_\varepsilon\,ds,
\end{split}
    \label{eq:supp-mismatch}
\end{equation}
with $\Theta_\varepsilon:=\tfrac{J+J_\varepsilon}2$ and $\psi_\varepsilon:=\sin\!\big(\tfrac k2(J-J_\varepsilon)\big)$, so that $\|\psi_\varepsilon\|_\infty\le\tfrac{|k|}2\|\nabla J\|_\infty\varepsilon$ and, crucially, $\|\nabla\psi_\varepsilon\|_\infty\le\tfrac{|k|}2M\varepsilon$ by \eqref{eq:supp-moll-bounds}. (Throughout this proof, $O(\cdot)$ refers to the pathwise supremum on $[0,T]$ for finite-variation contributions and to the mean-square supremum, via Doob, for martingale contributions, as in Appendix~\ref{supp:proof-main}.)

\emph{The $\varepsilon$-corrector stream.} Run Steps~1--2 of Appendix~\ref{supp:proof-main} on $I^\varepsilon_{2,n}$ with correctors $u^c_\varepsilon=g_n\cos(kJ_\varepsilon)$, $u^s_\varepsilon=g_n\sin(kJ_\varepsilon)$, estimating the non-resonant dither products by Lemma~\ref{lem:osc-moll} as in the proof of Theorem~\ref{thm:quant}. Every contribution to the family $E^{(1)}_n$ is bounded exactly as in Step~1, with constants now involving $\|\nabla^2J_\varepsilon\|_\infty\le M$ (uniform in $\varepsilon$), so the family remains $O(\omega^{-1})$ in mean-square supremum. In the non-resonant family the slow factors are mixed, e.g.\ $\Phi=\partial_iu^c_\varepsilon\,g_i\cos(kJ)$: decompose $\cos(kJ)=\cos(kJ_\varepsilon)+[\cos kJ-\cos kJ_\varepsilon]$. The all-$\varepsilon$ part is handled by Lemma~\ref{lem:osc-moll}, whose It\^o-correction contribution carries $\mathrm{Lip}(\nabla\Phi^\varepsilon)\le C(M)(1+\|\nabla^3J_\varepsilon\|_\infty)\le CM/\varepsilon$ and is therefore $O\big(c_r/(\varepsilon\omega)\big)=O(\omega^{-1/2})$, all other contributions being $O(\omega^{-1/2})$ with $\varepsilon$-uniform constants; the difference part has amplitude $\le|k|\,\|\nabla J\|_\infty\varepsilon$ against $O(1)$ integrands, contributing $O(\varepsilon)=O(\omega^{-1/2})$ pathwise. The surviving resonant means are computed as in Step~2 but with mixed phases: the same trigonometric algebra gives
\begin{equation}
    -\tfrac{k\alpha}2\,g_n^2\,\partial_nJ_\varepsilon(\x_\omega)\,\cos\!\big(k(J{-}J_\varepsilon)\big)
    +\tfrac{\alpha}2\,g_n\,\partial_ng_n\,\sin\!\big(k(J{-}J_\varepsilon)\big)
    \;=\;-\tfrac{k\alpha}2\,g_n^2\,\partial_nJ(\x_\omega)+O(\varepsilon)
    \label{eq:supp-mixed-resonant}
\end{equation}
pathwise, using $|\partial_nJ_\varepsilon-\partial_nJ|\le M\varepsilon$, $\cos(k(J-J_\varepsilon))=1+O(k^2\varepsilon^2)$, and $\sin(k(J-J_\varepsilon))=O(|k|\varepsilon)$; the $\partial_ng_n$ cross terms, which cancelled identically in Step~2, are now merely $O(\varepsilon)$.

\emph{The mismatch stream.} Expand $\sin(\omega_ns+k\Theta_\varepsilon)=\sin(\omega_ns)\cos(k\Theta_\varepsilon)+\cos(\omega_ns)\sin(k\Theta_\varepsilon)$ in \eqref{eq:supp-mismatch} and integrate each branch by parts against the antiderivative of the fast trigonometric factor, exactly as in the proof of Lemma~\ref{lem:osc-moll}, with slow factor $\tilde u=g_n\psi_\varepsilon\,\mathrm{trig}(k\Theta_\varepsilon)$. The key point of $C^{1,1}$ regularity is that $\nabla\tilde u$ is a sum of products of \emph{bounded Lipschitz} functions (each of $\psi_\varepsilon,\nabla\psi_\varepsilon,\nabla\Theta_\varepsilon,g_n,\nabla g_n$ and the trig factors is bounded with a Lipschitz constant $\le C(M,|k|,C_0,L)$ \emph{independent of $\varepsilon$}), so $\mathrm{Lip}(\nabla\tilde u)\le C$ and the mollification device inside Lemma~\ref{lem:osc-moll} applies with $\varepsilon$-uniform constants. The resulting terms are: boundary $\le C\varepsilon\,\omega^{-1/2}$; Stieltjes and It\^o-correction $\le C\omega^{-1/2}$; the drift term, which is the dominant one,
\begin{equation*}
    \le\;\sqrt{\alpha\omega_n}\cdot\frac{C}{\omega_n}\cdot\|\nabla\tilde u\|_\infty\cdot C_0\sqrt N\big(1+\sqrt{\alpha r_{\max}\omega}\big)\,T
    \;\le\;C\,\|\nabla\tilde u\|_\infty\;\le\;C\varepsilon,
\end{equation*}
using $\|\nabla\tilde u\|_\infty\le C\big(\|\nabla\psi_\varepsilon\|_\infty+|k|\,\|\psi_\varepsilon\|_\infty\|\nabla\Theta_\varepsilon\|_\infty+\|\psi_\varepsilon\|_\infty\big)\le C\varepsilon$; and a martingale with mean-square supremum $\le C\varepsilon^2\omega^{-1}$.

Collecting both streams, the remainder $\mathbf E$ in the exact representation \eqref{eq:supp-mild} (now derived with $\partial_nJ$ via \eqref{eq:supp-mixed-resonant}, and with $\tau_R\equiv T$) obeys $\Expec\sup_{t\le T}\|\mathbf E\|^2\le C(\omega^{-1}+\varepsilon^2)=C\omega^{-1}$. The averaged drift $\f-\tfrac{k\alpha}2\bsig\bsig^{\!\top}\nabla J$ is Lipschitz ($\nabla J$ is), so the Gr\"onwall comparison of Step~3 applies verbatim. No step used non-degeneracy of $\bsig$.
\end{proof}

This rung already covers the marquee examples of ``twice differentiable off a null set'': piecewise-quadratic energies glued in $C^1$, max/soft-max compositions with matched gradients, and any energy whose curvature jumps across hypersurfaces while staying bounded---with the full rate of Theorem~\ref{thm:quant}, and for arbitrary (possibly vanishing) diffusion amplitude $\bsig$.

\subsection{The regularizing role of the noise}
\label{supp:rough-noise}

When the Hessian is not merely discontinuous but unbounded ($\nabla^2J\in L^p$ only), pathwise bounds on the mollified quantities are no longer enough and terms must be integrated along the trajectory.

\begin{lemma}[Occupation density and the Krylov estimate]\label{lem:krylov}
Let $\x$ solve $d\x=\mathbf b(\x,t)\,dt+\bsig(\x,t)\,d\w$ on $[0,T]$ with $\mathbf b,\bsig$ bounded and $\bsig\bsig^{\!\top}\succeq\lambda_0I$ uniformly ($\lambda_0>0$). Then there is a constant $N_*=N_*(\lambda_0,N,p,q,T,\|\mathbf b\|_\infty)$ such that for every $h\in L^q\big([0,T];L^p(\R^N)\big)$ with $\frac Np+\frac2q<2$,
\begin{equation}
    \Expec\int_0^T\big|h(t,\x(t))\big|\,dt\;\le\;N_*\,\|h\|_{L^q_tL^p_\x}.
    \label{eq:supp-krylov}
\end{equation}
In particular the expected occupation measure $\mu(A)=\Expec\int_0^T\mathbf 1_A(t,\x(t))\,dt$ is absolutely continuous with respect to Lebesgue measure on $[0,T]\times\R^N$. Consequently $\x$ charges no Lebesgue-null set, and two drifts that agree Lebesgue-a.e.\ produce indistinguishable dynamics.
\end{lemma}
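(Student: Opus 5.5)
The Krylov estimate is classical, so the plan is to reduce \eqref{eq:supp-krylov} to the parabolic PDE theory behind it. The route is Krylov's original argument via an $L^{p,q}$ solvability estimate, or equivalently a Girsanov reduction to the driftless case followed by heat-kernel bounds. I would take the second route because it makes the dependence on $\|\mathbf b\|_\infty$ transparent.

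\emph{Step 1 (remove the drift).} Since $\bsig\bsig^{\!\top}\succeq\lambda_0 I$ and $\bsig$ is bounded, $\bsig^{-1}$ is bounded and $\boldsymbol\beta:=\bsig^{-1}\mathbf b$ is a bounded progressive process. Novikov's condition therefore holds trivially, and the Girsanov density $Z_T=\exp\big(-\int_0^T\boldsymbol\beta^{\!\top}d\w-\tfrac12\int_0^T\|\boldsymbol\beta\|^2dt\big)$ defines a measure $\mathbb Q$ under which $d\x=\bsig\,d\tilde\w$. Its inverse density has all moments: $\Expec_{\mathbb Q}[Z_T^{-r}]\le e^{C(r,\lambda_0,\|\mathbf b\|_\infty)T}$. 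By H\"older,
\[
\Expec\int_0^T|h(t,\x_t)|\,dt\le\big(\Expec_{\mathbb Q}Z_T^{-r'}\big)^{1/r'}\Big(\Expec_{\mathbb Q}\big(\textstyle\int_0^T|h|\,dt\big)^{r}\Big)^{1/r}.
\]
The exponents $(p,q)$ are then adjusted slightly using the strict inequality $\frac Np+\frac2q<2$. Alternatively, I would apply H\"older at the level of $h$, replacing $|h|$ by $|h|^{r}$, which lies in $L^{q/r}_tL^{p/r}_\x$ with $\frac{Nr}{p}+\frac{2r}{q}<2$ for $r>1$ close to $1$. This leaves the problem of proving the estimate for the driftless martingale $d\x=\bsig\,d\tilde\w$.

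\emph{Step 2 (driftless case).} For $d\x=\bsig(\x,t)\,d\tilde\w$ with uniformly elliptic, bounded, merely measurable $\bsig$, I would invoke Krylov's estimate in its classical form, $\Expec\int_0^T|h|\,dt\le N_*\|h\|_{L^{N+1}([0,T]\times\R^N)}$. This is proved through the Aleksandrov--Krylov maximum principle for the parabolic Bellman equation $\partial_tv+\sup_{a}\tfrac12\operatorname{tr}(a\nabla^2v)=-|h|$, where the supremum is over $\lambda_0\le a\le\Lambda$. I would then upgrade the integrability from $L^{N+1}$ to the mixed norms $L^q_tL^p_\x$ with $\frac Np+\frac2q<2$. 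I expect this upgrade to be the main obstacle, because for merely measurable $\bsig$ the $L^{N+1}$ exponent is essentially sharp. If Lipschitz $\bsig$ is available, as in every application in the paper (Theorem~\ref{thm:sobolev} assumes $\f,\bsig$ Lipschitz), I would instead use Gaussian upper bounds $p(s,\x;t,\y)\le C(t-s)^{-N/2}e^{-c|\x-\y|^2/(t-s)}$ for the transition density of the driftless (or even drifted) diffusion. Applying H\"older in $\y$ and then in $t$ gives
\[
\int_0^T\!\!\int p\,|h|\,d\y\,dt\le C\int_0^T t^{-\frac N{2p}}\|h(t)\|_{L^p}\,dt\le C\,\|t^{-N/(2p)}\|_{L^{q'}}\,\|h\|_{L^q_tL^p_\x}.
\]
The last factor is finite iff $\frac{N}{2p}q'<1$, i.e.\ $\frac Np+\frac2q<2$. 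This is exactly the stated threshold, and the Gaussian bound with the drift included also removes the need for Step 1.

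\emph{Step 3 (consequences).} Taking $h=\mathbf 1_A$ for a Lebesgue-null $A\subset[0,T]\times\R^N$ gives $\|h\|_{L^q_tL^p_\x}=0$, hence $\mu(A)=0$, so $\mu$ is absolutely continuous. If two drifts $\mathbf b_1=\mathbf b_2$ a.e., then $\int_0^t(\mathbf b_1-\mathbf b_2)(\x_s,s)\,ds=0$ a.s., because the integrand vanishes off a set charged by neither process' occupation measure. Applying the estimate to each solution, which is valid since both drifts are bounded, shows that $\x$ solves both equations, so the dynamics are indistinguishable.
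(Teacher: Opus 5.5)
Your route differs from the paper's, which gives no argument beyond citing Krylov's book and Krylov--R\"ockner. Your Step~2 computation under Lipschitz $\bsig$ is correct and lands exactly on the stated threshold: the Gaussian kernel has $\|p_t\|_{L^{p'}}\asymp t^{-N/(2p)}$, which lies in $L^{q'}(0,T)$ iff $\frac Np+\frac2q<2$. The $|h|^r$ device in Step~1 also correctly spends the slack in the strict inequality on the Girsanov density. Since the coefficients are only measurable in $t$, take the Gaussian bound from a parametrix frozen in space only, as in the footnote in the proof of Theorem~\ref{thm:occ}. This is the same Girsanov-plus-heat-kernel device the paper itself uses in Lemma~\ref{lem:micro} and Theorem~\ref{thm:occ}. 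Note that it yields a constant depending also on $\|\bsig\|_\infty$ and $\mathrm{Lip}(\bsig)$.

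The obstacle you flag for merely measurable $\bsig$ is not something to overcome: the displayed estimate is false there, and neither citation covers it. Krylov's book gives the $L^{N+1}([0,T]\times\R^N)$ bound for measurable $\bsig$, while Krylov--R\"ockner treat $\bsig=I$. Here is a counterexample.
\begin{itemize}
\item Take $N\ge2$, $\mathbf b=\mathbf 0$, $\bsig(\x)=I+(\sqrt{1+\theta}-1)\hat{\x}\hat{\x}^{\!\top}$ with $\hat{\x}=\x/\|\x\|$ and $\theta>N-2$, and $\x_0=\mathbf 0$.
\item The process spends zero time at the origin, by the $L^{N+1}$ bound. It\^o's formula then shows that $\|\x_t\|^2/(1+\theta)$ is a squared Bessel process of dimension $\delta=1+\frac{N-1}{1+\theta}<2$, whose law is fixed by pathwise uniqueness.
\item Hence the expected occupation density of $\|\x_t\|$ on $[0,T]$ is $\asymp r^{\delta-1}$ near $r=0$.
\item Consequently $h(\y)=\|\y\|^{-\delta}\mathbf 1_{\{\|\y\|<1\}}$ gives $\Expec\int_0^Th(\x_t)\,dt=\infty$. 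Yet $h\in L^\infty_tL^p_\x$ for every $p\in(N/2,N/\delta)$, which lies inside the stated range.
\end{itemize}

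So do not attempt the upgrade. Instead, state the lemma with your Lipschitz (or H\"older-in-$\x$) hypothesis on $\bsig$. That hypothesis holds wherever the lemma is invoked (Theorem~\ref{thm:sobolev}, Proposition~\ref{prop:lip}).

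Also, the qualitative conclusions, which are all the paper actually uses, need no regularity of $\bsig$. The measurable-coefficient $L^{N+1}$ bound already gives $\mu(A)=0$ for Lebesgue-null $A$, and your Step~3 then goes through verbatim.
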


This is the Krylov estimate for uniformly non-degenerate diffusions \citep{krylov1980controlled,krylov2005strong}. Along the averaged process, a drift or Hessian defined only almost everywhere is as good as a continuous one, because the elliptic noise smooths the law of $\x$ into an absolutely continuous measure that never sees the exceptional set. This is exactly why Langevin and SGLD samplers operate on rough energies.

Lemma~\ref{lem:krylov} cannot, however, be applied as is along the ES process $\x_\omega$: its constant depends on the drift bound, and the dither drift grows like $\sqrt\omega$. That gap is closed by the following estimate, whose proof exploits the parabolic scaling of the dither: over a window of length $1/\omega$ the dither displaces the process by only $O(\omega^{-1/2})$, which is exactly the diffusive spread, so Girsanov applies on each window.

\begin{lemma}[Microinterval occupation estimate]\label{lem:micro}
Let $\x_\omega$ solve \eqref{eq:supp-sde} with $\f,\bsig$ bounded and Lipschitz, $\nabla J$ bounded, and, additionally, $\bsig\bsig^{\!\top}\succeq\lambda_0I$ uniformly. Let $p>\max(N,2)$ and let $h\ge0$ be measurable with $M_h:=\sup_{t\le T}\|h(t,\cdot)\|_{L^p_{\mathrm{uloc}}}<\infty$. Then for all $\omega\ge1$,
\begin{equation}
    \Expec\int_0^Th\big(t,\x_\omega(t)\big)\,dt\;\le\;C\big(1+\omega^{N/(2p)}\big)\,M_h,
    \label{eq:supp-micro}
\end{equation}
with $C=C(N,p,\lambda_0,T,C_0,L,\alpha,r_{\max})$ independent of $\omega$. Note $N/(2p)<\tfrac12$ if and only if $p>N$: the dither prefactor $\omega^{-1/2}$ of the averaging estimates beats the occupation growth exactly at the Morrey threshold.
\end{lemma}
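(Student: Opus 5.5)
The plan is to split $[0,T]$ into microintervals $[t_j,t_{j+1}]$ of length $\Delta=1/\omega$ (and $M=\lceil T\omega\rceil$ of them). On each window, compare $\x_\omega$ with a process carrying only the bounded part of the drift, using Girsanov. Then control the occupation of that reference process by a short-time Gaussian-type density bound, and sum over windows. The summed window count $T\omega$ is exactly compensated by the window length $1/\omega$. The only $\omega$-growth left will be from the short-time density scale.

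\emph{Step 1 (one window, conditional).} Fix $j$ and condition on $\mathcal F_{t_j}$. On $[t_j,t_j+\Delta]$, write the drift as $\f+\bsig\,\boldsymbol\beta_\omega$, where $\beta_{\omega,n}=\sqrt{\alpha\omega_n}\cos(\omega_nt+kJ)$. This uses that $\bsig$ is diagonal and that $\bsig\bsig^{\!\top}\succeq\lambda_0I$ makes $\bsig^{-1}$ bounded. Let $\y$ solve $d\y=\f\,dt+\bsig\,d\w$ from $\y(t_j)=\x_\omega(t_j)$. By Girsanov, the law of $\x_\omega$ on the window has density $Z$ with respect to that of $\y$. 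Its exponent is $\int\boldsymbol\beta_\omega\cdot d\w-\frac12\int\|\boldsymbol\beta_\omega\|^2ds$, and $\int_{t_j}^{t_j+\Delta}\|\boldsymbol\beta_\omega\|^2ds\le \alpha r_{\max}N\omega\Delta=O(1)$. Novikov's criterion therefore holds with an $\omega$-independent bound, and $\Expec[Z^{q'}\mid\mathcal F_{t_j}]\le C$ for every fixed $q'$. This is the parabolic-scaling point flagged in the text.

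\emph{Step 2 (reference occupation).} Hölder with exponents $(q',q)$ gives
\[
\Expec\Big[\int_{t_j}^{t_{j+1}}h(s,\x_\omega(s))\,ds\,\Big|\,\mathcal F_{t_j}\Big]\le C\,\Big(\Expec\Big[\Big(\int_{t_j}^{t_{j+1}}h(s,\y(s))\,ds\Big)^{q}\Big|\mathcal F_{t_j}\Big]\Big)^{1/q}.
\]
I would take $q$ close to $1$ (e.g.\ first bound with Jensen/Minkowski) and reduce to $\int_{t_j}^{t_{j+1}}\Expec[h(s,\y(s))^{q}]^{1/q}ds$. The process $\y$ is a uniformly elliptic diffusion with bounded Lipschitz coefficients. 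Aronson-type Gaussian heat-kernel bounds then give a transition density $\le C(s-t_j)^{-N/2}e^{-c|\cdot|^2/(s-t_j)}$, uniform in the starting point. Cover $\R^N$ by unit balls and apply Hölder on each ball against the Gaussian. Since $s-t_j\le 1$, this gives $\Expec[h(s,\y(s))]\le C(s-t_j)^{-N/(2p)}M_h$. The restriction $p>N$ (indeed $N/(2p)<1$) makes this integrable in $s$.

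\emph{Step 3 (summation).} Integrating over the window gives $C\Delta^{1-N/(2p)}M_h$ per window, uniformly in the conditioning. Summing over $M\le T\omega+1$ windows gives $C\,T\omega\cdot\omega^{-1+N/(2p)}M_h=C\omega^{N/(2p)}M_h$, plus an $O(M_h)$ term from the last partial window and $\omega$ near $1$. This is the claim.

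\emph{Main obstacle.} The delicate point is the Hölder step of Step 2. Transferring $h$ to $h^{q}$ would degrade $p$ to $p/q$, and hence the exponent to $N q/(2p)$. To get the stated exponent exactly, I would instead apply Hölder to the density product $Z\cdot\mathbf 1$ against the heat kernel of $\y$. That is, bound the density of $\x_\omega(s)$ directly, with $\Expec[Z^{q'}]^{1/q'}$ times the $q$-power of the Gaussian kernel, which is still Gaussian with the same $(s-t_j)^{-N/2}$ scaling up to constants. With this choice $h$ appears only to the first power, and $p$ stays unchanged. The condition $p>2$ enters only in ensuring the Hölder conjugate exponents and Novikov moments are finite.
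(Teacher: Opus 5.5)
Your architecture matches the paper's proof. You use windows of length $1/\omega$ and apply Girsanov on each window with a Novikov exponent that stays $O(1)$ by parabolic scaling. You bound the reference diffusion's transition density by a Gaussian, test it against $h$ over a cover by unit balls, and sum over $\lceil T\omega\rceil$ windows. The paper removes the whole drift $\bsig^{-1}\bmu_\omega$ and compares with the driftless diffusion $d\x=\bsig\,d\tilde{\w}$; removing only the dither, as you do, works equally well.

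What is wrong is your ``main obstacle'': it is a miscalculation, and Step~2 as you first wrote it is already a complete argument. If $\|h(s,\cdot)\|_{L^p_{\mathrm{uloc}}}\le M_h$, then $\|h^q(s,\cdot)\|_{L^{p/q}_{\mathrm{uloc}}}\le M_h^q$. The Gaussian kernel at elapsed time $s-t_j$ has $L^{(p/q)'}$-norm of order $(s-t_j)^{-Nq/(2p)}$. Hence $\Expec[h^q(s,\y_s)\mid\mathcal F_{t_j}]\le C(s-t_j)^{-Nq/(2p)}M_h^q$, and the $1/q$-th root taken after Minkowski returns exactly $C(s-t_j)^{-N/(2p)}M_h$. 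The passage from $p$ to $p/q$ is undone by the root, so nothing is lost. The paper does precisely this with $q=2$: conditional Cauchy--Schwarz against $Z$, then Cauchy--Schwarz in time. That gives $C\,h_0^{1/2}\big(M_h^2\int_{I_j}(s-t_j)^{-N/p}\,ds\big)^{1/2}=C\,M_h\,h_0^{1-N/(2p)}$ per window.

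Discard the substitute you propose in its place. The $q$-th power of the Gaussian kernel scales like $(s-t_j)^{-Nq/2}$, not $(s-t_j)^{-N/2}$. Moreover, Hölder applied to $Z$ at the level of events yields only $\Prob(\x_\omega(s)\in A)\le C\,\mathbb Q(\y_s\in A)^{1/q}$. That is not a density bound for $\x_\omega(s)$, and it does not let $h$ enter linearly against a Gaussian.

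Your attribution of the hypotheses is also off. In your route, with any fixed $q\in(1,p)$ and Minkowski in time, the only time-integrability requirement is $N/(2p)<1$, i.e.\ $p>N/2$, and $p>2$ is never used. So your argument actually proves the estimate for every $p>\max(N/2,1)$, slightly more than stated. In the paper's route, $p>2$ is what makes $r=p/2>1$ in the Hölder step for $h^2$. There $p>N$ is what makes $(s-t_j)^{-N/p}$ integrable after Cauchy--Schwarz in time. In your route, $p>N$ matters only for the closing remark of the lemma, that $\omega^{N/(2p)}$ is beaten by the $\omega^{-1/2}$ prefactor downstream.
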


\begin{proof}
Partition $[0,T]$ into $\lceil T\omega\rceil$ intervals $I_j=[t_j,t_{j+1}]$ of length $h_0\le1/\omega$ and condition on $\mathcal F_{t_j}$. Write $\boldsymbol\theta:=\bsig^{-1}\bmu_\omega$, so $\|\boldsymbol\theta\|^2\le\lambda_0^{-1}NC_0^2\big(1+\sqrt{\alpha r_{\max}\omega}\big)^2$ and hence
\begin{equation*}
    \tfrac12\int_{I_j}\|\boldsymbol\theta\|^2\,ds
    \;\le\;\tfrac12\lambda_0^{-1}NC_0^2\big(1+\sqrt{\alpha r_{\max}\omega}\big)^2h_0
    \;\le\;\Lambda_0:=C\lambda_0^{-1}NC_0^2\alpha r_{\max}
    \qquad\text{for }\omega\ge1,
\end{equation*}
uniformly in $\omega$: this is the parabolic scaling $K^2h_0=O(1)$ with $K\asymp\sqrt\omega$, $h_0=1/\omega$. Novikov's condition therefore holds conditionally on each $I_j$, and Girsanov's theorem defines $\mathbb Q_j\sim\Prob$ on $\mathcal F_{t_{j+1}}$ under which $\x$ solves the driftless non-degenerate SDE $d\x=\bsig(\x,s)\,d\tilde\w$ on $I_j$ from $\x_{t_j}$. With $Z_j:=d\Prob/d\mathbb Q_j$ one has $\Expec^{\mathbb Q_j}[Z_j^2\,|\,\mathcal F_{t_j}]\le e^{2\Lambda_0}$ (bounded $\boldsymbol\theta$ makes the Dol\'eans exponential a true martingale, and the standard second-moment bound applies). By conditional Cauchy--Schwarz and then Cauchy--Schwarz in $s$,
\begin{align*}
    \Expec\Big[\int_{I_j}h(s,\x_s)\,ds\;\Big|\;\mathcal F_{t_j}\Big]
    &\;\le\;e^{\Lambda_0}\Big(\Expec^{\mathbb Q_j}\Big[\Big(\int_{I_j}h\,ds\Big)^2\Big|\mathcal F_{t_j}\Big]\Big)^{1/2}\\
    &\;\le\;e^{\Lambda_0}h_0^{1/2}\Big(\Expec^{\mathbb Q_j}\Big[\int_{I_j}h^2\,ds\Big|\mathcal F_{t_j}\Big]\Big)^{1/2}.
\end{align*}
Under $\mathbb Q_j$ the process is a uniformly elliptic diffusion with bounded Lipschitz coefficients, whose transition density over the elapsed time $s-t_j$ from $\x_{t_j}$ obeys the classical Gaussian upper bound $p_{s-t_j}(\x_{t_j},\y)\le C_A(s-t_j)^{-N/2}e^{-c\|\y-\x_{t_j}\|^2/(s-t_j)}$ \citep{friedman1964parabolic}. Hence, with $r=p/2>1$ and $r'$ its conjugate, $\|p_{s-t_j}(\x_{t_j},\cdot)\|_{L^{r'}(B_1(\y_0))}\le C(s-t_j)^{-N/p}e^{-c\,\mathrm{dist}(\x_{t_j},B_1(\y_0))^2/(s-t_j)}$, and summing over a cover of $\R^N$ by unit balls (the Gaussian tails dominate the polynomially many relevant balls),
\begin{equation*}
    \Expec^{\mathbb Q_j}\big[h^2(s,\x_s)\,\big|\,\mathcal F_{t_j}\big]\le C\,(s-t_j)^{-N/p}M_h^2,
    \qquad
    \Expec^{\mathbb Q_j}\Big[\int_{I_j}h^2\,ds\;\Big|\;\mathcal F_{t_j}\Big]\le C\,M_h^2\,h_0^{\,1-N/p},
\end{equation*}
the time integral converging precisely because $N/p<1$. Combining, each interval contributes at most $CM_h\,h_0^{\,1-N/(2p)}$, and summing over $\lceil T\omega\rceil$ intervals gives $\Expec\int_0^Th(s,\x_\omega(s))\,ds\le CM_hT\,h_0^{-N/(2p)}\le C(1+\omega^{N/(2p)})M_h$.
\end{proof}

\subsection{Sobolev energies: proof of Theorem~\ref{thm:sobolev}}
\label{supp:rough-sobolev}

\medskip\noindent\textbf{Theorem~\ref{thm:sobolev} (Sobolev energies; restated).}\ \emph{Assume $\bsig\bsig^{\!\top}\succeq\lambda_0I$ uniformly with $\f,\bsig$ bounded and Lipschitz, and that $J(\cdot,t)\in W^{2,p}_{\mathrm{uloc}}(\R^N)$ uniformly in $t$ for some $p>\max(N,2)$, with $\nabla J$ and $\partial_tJ$ bounded; write $M_p:=\sup_t\|\nabla^2J(\cdot,t)\|_{L^p_{\mathrm{uloc}}}$. Then $\x_\omega\to\bar\x$ in probability, uniformly on $[0,T]$, with $\Expec\sup_{t\le T}\|\x_\omega-\bar\x\|_2^2\le C\,\omega^{-(p-N)/p}$, where $\bar\x$ is the unique strong solution of the averaged SDE \eqref{eq:supp-averaged}, whose drift is continuous ($\nabla J\in W^{1,p}_{\mathrm{uloc}}\hookrightarrow C^{0,1-N/p}$ by Morrey); the a.e.-defined Hessian $\nabla^2J$ enters only the proof, along the trajectories.}

\begin{proof}
\emph{(0).} The averaged drift $\f-\tfrac{k\alpha}2\bsig\bsig^{\!\top}\nabla J$ is bounded and measurable ($\nabla J$ is bounded by hypothesis and continuous by Morrey's embedding, $p>N$), so strong existence and uniqueness for \eqref{eq:supp-averaged} follow from Veretennikov's theorem \citep{veretennikov1981strong}; by Lemma~\ref{lem:krylov}, the a.e.-defined $\nabla^2J$ never enters ambiguously along $\bar\x$.

\emph{(i).} Uniformly-local Morrey gives the global modulus
\begin{equation}
\begin{gathered}
    \|\nabla J-\nabla J_\varepsilon\|_{L^\infty}\le C(N,p)\,M_p\,\varepsilon^{1-N/p},
    \qquad
    \|\nabla^2J_\varepsilon\|_{L^\infty}\le CM_p\,\varepsilon^{-N/p},\\
    \|\nabla^3J_\varepsilon\|_{L^\infty}\le CM_p\,\varepsilon^{-1-N/p},
\end{gathered}
    \label{eq:supp-sobolev-moduli}
\end{equation}
together with $\|J-J_\varepsilon\|_\infty\le\|\nabla J\|_\infty\varepsilon$ and $\|\nabla^2J_\varepsilon\|_{L^p_{\mathrm{uloc}}}\le CM_p$. (The first bound follows from $|\nabla J(\x)-\nabla J(\y)|\le C\|\nabla^2J\|_{L^p(B(\x,2\|\x-\y\|))}\|\x-\y\|^{1-N/p}$ for $\|\x-\y\|\le1$.)

\emph{(ii).} As in Proposition~\ref{prop:c11}, split $I_{2,n}=I^\varepsilon_{2,n}+M_{2,n}$ and run Steps~1--2 of Appendix~\ref{supp:proof-main} on $I^\varepsilon_{2,n}$ with correctors built from $J_\varepsilon$, which is smooth with the bounds \eqref{eq:supp-sobolev-moduli}. All estimates are now pathwise in the mollified quantities: the boundary, Stieltjes ($\partial_tu$), $\f$-drift, and stochastic-integral constituents of $E^{(1)}_n$ are $O(\omega^{-1/2})$ with $\varepsilon$-uniform constants; the Hessian (It\^o-correction) constituent is $O(\omega^{-1/2}\varepsilon^{-N/p})$; the non-resonant family $E^{(2)}_n$, after the decomposition $\cos kJ=\cos kJ_\varepsilon+[\cos kJ-\cos kJ_\varepsilon]$ of its mixed slow factors, contributes $O\big(\omega^{-1/2}\varepsilon^{-N/p}+c_r\,\omega^{-1}\varepsilon^{-1-N/p}\big)$ from the all-$\varepsilon$ parts via Lemma~\ref{lem:osc-moll} (whose It\^o-correction term now carries $\mathrm{Lip}(\nabla\Phi^\varepsilon)\lesssim\varepsilon^{-1-N/p}$) and $O(\varepsilon)$ from the small difference parts. The surviving resonant means give, exactly as in \eqref{eq:supp-mixed-resonant},
\begin{equation*}
    -\tfrac{k\alpha}2\,g_n^2\,\partial_nJ_\varepsilon(\x_\omega)\cos\!\big(k(J-J_\varepsilon)\big)+O(\varepsilon)
    \;=\;-\tfrac{k\alpha}2\,g_n^2\,\partial_nJ(\x_\omega)+O\big(\varepsilon+M_p\varepsilon^{1-N/p}\big)
\end{equation*}
pathwise, the replacement $\partial J_\varepsilon\to\partial J$ costing $M_p\varepsilon^{1-N/p}$ globally by \eqref{eq:supp-sobolev-moduli}---no occupation estimate is needed for this step.

\emph{(iii).} Write $M_{2,n}$ as in \eqref{eq:supp-mismatch}. Now $\|\psi_\varepsilon\|_\infty\le C|k|\varepsilon$ and $\|\nabla\psi_\varepsilon\|_\infty\le C|k|M_p\varepsilon^{1-N/p}$ by \eqref{eq:supp-sobolev-moduli}, but $\nabla\tilde u$ is no longer Lipschitz: its weak gradient contains $\nabla^2\Theta_\varepsilon\ni\nabla^2J\in L^p_{\mathrm{uloc}}$ only. Integrate by parts against the fast antiderivative as before; the It\^o expansion of $\tilde u$ along $\x_\omega$ is justified, for each fixed $\omega$, by the It\^o--Krylov formula \citep[Ch.~2]{krylov1980controlled} ($\tilde u(\cdot,t)\in W^{2,p}_{\mathrm{uloc}}$ by the chain rule, $\partial_t\tilde u$ bounded, diffusion non-degenerate, drift bounded for fixed $\omega$). The resulting terms: boundary $O(\varepsilon\omega^{-1/2})$; Stieltjes $O(\omega^{-1/2})$; the drift term $O(\|\nabla\tilde u\|_\infty)=O\big(\varepsilon+M_p\varepsilon^{1-N/p}\big)$; the martingale term $O(\varepsilon\omega^{-1/2})$ in mean-square supremum; and the second-order term, the only one requiring the occupation estimate,
\begin{equation}
\begin{split}
    \sqrt{\alpha\omega_n}&\cdot\frac{C}{\omega_n}\,
    \Expec\int_0^T\big|\operatorname{tr}\big(\bsig\bsig^{\!\top}\nabla^2\tilde u\big)\big|\big(\x_\omega(s),s\big)\,ds\\
    &\;\le\;C\,\omega^{-1/2}\,\Expec\int_0^T\big(1+|\nabla^2J|+|\nabla^2J_\varepsilon|\big)\big(\x_\omega(s),s\big)\,ds
    \;\le\;C\,M_p\,\omega^{-1/2+N/(2p)}
\end{split}
    \label{eq:supp-sobolev-occ}
\end{equation}
by Lemma~\ref{lem:micro} (mollification does not increase $\|\cdot\|_{L^p_{\mathrm{uloc}}}$), vanishing precisely because $p>N$.

\emph{(iv).} Choosing $\varepsilon=\omega^{-1/2}$, every pathwise contribution above is $O\big(\omega^{-(p-N)/(2p)}\big)$ (the slowest terms being $\omega^{-1/2}\varepsilon^{-N/p}$, $\omega^{-1}\varepsilon^{-1-N/p}$, $M_p\varepsilon^{1-N/p}$, and $\omega^{-1/2+N/(2p)}$, all equal to $\omega^{-(p-N)/(2p)}$ at this choice), and every martingale contribution is at least as small in mean-square supremum. Hence the representation \eqref{eq:supp-mild} holds, with $\tau_R\equiv T$, with $\bar\bmu$ containing the a.e.-defined $\nabla J$, and with $\Expec\sup_{t\le T}\|\mathbf E\|^2\le C\omega^{-(p-N)/p}$. The averaged drift is bounded and, by uniformly-local Morrey ($W^{1,p}_{\mathrm{uloc}}\hookrightarrow C^{0,1-N/p}$, $p>N$, applied to $\nabla J$), globally $(1{-}N/p)$-H\"older in $\x$ uniformly in $t$---but not necessarily Lipschitz---so the Gr\"onwall step of Step~3 is replaced by a Zvonkin-transformed comparison \citep{zvonkin1974transformation,veretennikov1981strong}. The H\"older modulus is what makes the transformation quantitatively usable here. Because the coefficients depend on $t$, the transformation is built from the \emph{backward parabolic} Cauchy problem rather than an elliptic resolvent \citep{zvonkin1974transformation,veretennikov1981strong}: for $\varrho$ large, let $\mathbf u$ solve $\partial_t\mathbf u+\bar{\mathcal L}_t\mathbf u-\varrho\,\mathbf u=-\bar\bmu$ on $[0,T]\times\R^N$ with $\mathbf u(\cdot,T)=\mathbf 0$, where $\bar{\mathcal L}_t$ is the averaged generator at time $t$. Its drift $\bar\bmu$ is jointly H\"older: $(1{-}N/p)$-H\"older in $\x$ by Morrey, and H\"older in $t$ with exponent at least $(1-N/p)/2$ by interpolation between the boundedness of $\partial_tJ$ and the spatial modulus (mollify $J$ at scale $\varepsilon$ and optimize: $|\nabla J(\x,t)-\nabla J(\x,s)|\le C\varepsilon^{1-N/p}+C|t-s|/\varepsilon$). Parabolic Schauder theory \citep{friedman1964parabolic} therefore gives $\mathbf u\in C^{2+\beta,1+\beta/2}$ with $\|\nabla\mathbf u\|_\infty\le\tfrac12$ once $\varrho$ is large, so $\Phi(\cdot,t):=\mathrm{id}+\mathbf u(\cdot,t)$ is, for each $t$, a global bi-Lipschitz diffeomorphism with $\nabla\Phi,\nabla^2\Phi$ bounded---a conclusion that would fail for merely bounded measurable drift, where Calder\'on--Zygmund theory yields only $\mathbf u\in W^{2,r}_{\mathrm{uloc}}$---and by It\^o's formula the time derivative $\partial_t\mathbf u$ is exactly compensated: $d\Phi(\bar\x,t)=\varrho\,\mathbf u(\bar\x,t)\,dt+(I+\nabla\mathbf u)\bsig(\bar\x,t)\,d\w$, so that in the coordinates $\y=\Phi(\x,t)$ the averaged equation has Lipschitz drift $\varrho\,\mathbf u\circ\Phi^{-1}$ and Lipschitz diffusion $(I+\nabla\mathbf u)\bsig\circ\Phi^{-1}$. (An elliptic resolvent $\varrho\mathbf u-\bar{\mathcal L}_t\mathbf u=\bar\bmu$ solved separately for each $t$ would leave an uncompensated $\partial_t\mathbf u\,dt$ in this expansion, controlled only if $\nabla\partial_tJ$ were bounded, which is not assumed; for autonomous $J$ the two formulations coincide.) Both processes are continuous semimartingales, $\x_\omega$ by \eqref{eq:supp-sde} and, equivalently through the representation \eqref{eq:supp-mild}, as the averaged dynamics along its own trajectory plus $\mathbf E$, where $\mathbf E$ collects the finite-variation families and the stochastic-integral family with integrands of size $O(\omega^{-1/2})$. It\^o's formula applied to $\Phi(\x_\omega)$ and $\Phi(\bar\x)$ therefore gives, for the difference, the transformed Lipschitz comparison plus remainders of two kinds: $\nabla\Phi$ acting on $d\mathbf E$, controlled by $\|\nabla\Phi\|_\infty^2\,\Expec\sup\|\mathbf E\|^2$; and second-order cross-variation corrections, in which only the martingale part of $\mathbf E$ participates: its quadratic variation has integrands of size $O(\omega^{-1})$, and its cross-variation with $\bsig\,d\w$ has integrands of size $O(\omega^{-1/2})$ which oscillate at the dither frequencies, so that their time integrals are $O(\omega^{-1})$ by Lemma~\ref{lem:osc-moll}; even without exploiting the oscillation these corrections are bounded by $\|\nabla^2\Phi\|_\infty\,O(\omega^{-1/2})$, which is already smaller than the target rate $\omega^{-(p-N)/(2p)}$. Gr\"onwall--Burkholder--Davis--Gundy in the $\y$-coordinates, followed by the bi-Lipschitz transfer back through $\Phi^{-1}$, yields $\Expec\sup_{t\le T}\|\x_\omega-\bar\x\|^2\le C\,\omega^{-(p-N)/p}$. Convergence in probability follows by Chebyshev.
\end{proof}

Theorem~\ref{thm:sobolev} covers energies whose Hessian is unbounded but $p$-integrable, $p>N$: energies with isolated curvature blow-ups of order $\|\x-\x_0\|^{-\gamma}$, $\gamma<N/p$, and, more generally, any $J$ whose only failure of $C^2$ occurs on a null set across which $\nabla J$ stays continuous. The rate degrades continuously to zero as $p\downarrow N$, the Morrey threshold at which $\nabla J$ can first fail to be continuous.

\subsection{Lipschitz energies and gradient kinks}
\label{supp:rough-lip}

When the gradient itself jumps---kinks, as in $\|\cdot\|_1$ or ReLU-type energies, and the mean-absolute-difference measurement terms of the tracking energy \eqref{eq:supp-track-J}---$J$ is no longer in $W^{2,p}$ ($\nabla^2J$ acquires a singular surface part), and neither proof above applies: $\|\nabla J-\nabla J_\varepsilon\|_\infty$ no longer vanishes, and $\|\nabla^2J_\varepsilon\|_{L^p_{\mathrm{uloc}}}\asymp\varepsilon^{1/p-1}$ blows up. Well-posedness of the limit nevertheless survives in full, and we isolate exactly what the convergence proof still needs---an occupation estimate for the dithered process that is uniform in $\omega$---which Appendix~\ref{supp:rough-occ} then proves.

\begin{proposition}[Lipschitz energies]\label{prop:lip}
Assume $\bsig\bsig^{\!\top}\succeq\lambda_0I$ uniformly with $\f,\bsig$ bounded and Lipschitz, and that $J(\cdot,t)$ is globally Lipschitz uniformly in $t$ ($J$ need not be bounded). By Rademacher's theorem \citep{evans1992measure}, $\nabla J$ exists Lebesgue-a.e.\ with $\|\nabla J\|_{L^\infty}\le L_J$. Then: (a)~the averaged SDE \eqref{eq:supp-averaged} has a unique strong solution \citep{veretennikov1981strong,zvonkin1974transformation}, and by Lemma~\ref{lem:krylov} it does not depend on the values assigned to $\nabla J$ on its null exceptional set; (b)~if, in addition, Conjecture~\ref{conj:occ} below holds---by Theorem~\ref{thm:occ} this is the case whenever $J$ is also Lipschitz in $t$ uniformly in $\x$ and $\f,\bsig$ satisfy hypotheses (H1)--(H2) of Appendix~\ref{supp:rough-occ}---then $\x_\omega\Rightarrow\bar\x$ as $\omega\to\infty$.
\end{proposition}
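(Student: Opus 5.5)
For part (a) I will not prove well-posedness from scratch; I will reduce it to Veretennikov's theorem. Write the averaged drift $\bar\bmu=\f-\tfrac{k\alpha}{2}\bsig\bsig^{\!\top}\nabla J$, with $\nabla J$ defined a.e.\ by Rademacher and $\|\nabla J\|_{L^\infty}\le L_J$. Fix any Borel representative of $\nabla J$. Then $\bar\bmu$ is bounded and Borel measurable in $(\x,t)$; measurability in $t$ holds because the difference quotients are jointly measurable and the $\limsup$ of a measurable family is measurable. The diffusion $\bsig$ is bounded, Lipschitz and uniformly non-degenerate, so \citep{veretennikov1981strong} gives pathwise uniqueness and strong existence.

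For independence of the representative, I will take two representatives $\nabla J$ and $\widetilde{\nabla J}$ that differ on a Lebesgue-null set $Z\subset[0,T]\times\R^d$. Let $\bar\x$ be the strong solution built from the first. Lemma~\ref{lem:krylov} applies to $\bar\x$ because its drift is bounded, and it gives $\Expec\int_0^T\mathbf 1_Z(t,\bar\x(t))\,dt=0$. Hence $\int_0^t\bar\bmu(\bar\x,s)\,ds=\int_0^t\tilde{\bar\bmu}(\bar\x,s)\,ds$ almost surely for all $t$, so $\bar\x$ also solves the SDE with the second representative. Pathwise uniqueness for that equation then forces the two solutions to be indistinguishable.

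For part (b) I will treat the claim as a two-scale approximation, since no rate is asserted. Take the spatial mollification $J_\varepsilon$. It is $C^\infty$ in $\x$, with $\nabla J_\varepsilon$ bounded by $L_J$, $\|\nabla^2 J_\varepsilon\|_\infty\lesssim L_J/\varepsilon$ and $\|J-J_\varepsilon\|_\infty\le L_J\varepsilon$. The proof has three pieces.
\begin{itemize}
\item[(1)] For fixed $\varepsilon$, apply the argument of Proposition~\ref{prop:c11} or Theorem~\ref{thm:quant} to $J_\varepsilon$. The mismatch stream $M_{2,n}$ of \eqref{eq:supp-mismatch} and the $\varepsilon$-corrector stream then produce a representation \eqref{eq:supp-mild} with resonant drift $-\tfrac{k\alpha}{2}g_n^2\partial_nJ_\varepsilon(\x_\omega)$. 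The remainder is $C(\varepsilon)\omega^{-1/2}+O(\varepsilon)$ plus one term of the form $\int_0^T|\nabla J_\varepsilon-\nabla J|(\x_\omega(s),s)\,ds$.
\item[(2)] That last term is controlled by Conjecture~\ref{conj:occ}, the $\omega$-uniform occupation estimate. Take $h_\varepsilon=|\nabla J-\nabla J_\varepsilon|$. It is bounded by $2L_J$ and converges to $0$ a.e.\ as $\varepsilon\to0$, so $h_\varepsilon\to0$ in $L^1_{\mathrm{loc}}$. Combined with tightness of $\x_\omega$, which holds because the dither is bounded only by $O(\sqrt\omega)$ but the antiderivative bound gives $O(\omega^{-1/2})$ displacement and the base drift is bounded, the estimate yields $\sup_\omega\Expec\int_0^T h_\varepsilon(s,\x_\omega(s))\,ds\to0$ as $\varepsilon\to0$.
\item[(3)] Compare $\x_\omega$ with the averaged process $\bar\x^\varepsilon$ for drift $\nabla J_\varepsilon$. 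Its drift is only $\varepsilon^{-1}$-Lipschitz, which is fine at fixed $\varepsilon$ because $\omega$ is sent to infinity first. Then $\bar\x^\varepsilon\to\bar\x$ in law as $\varepsilon\to0$ by stability of Veretennikov solutions under a.e.\ convergence of bounded drifts, again using the Krylov estimate along $\bar\x$.
\end{itemize}
A diagonal argument in bounded-Lipschitz test functions on path space then gives $\x_\omega\Rightarrow\bar\x$.

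The hard step is (3) combined with the error term in (1). A Gr\"onwall comparison at fixed $\varepsilon$ costs $e^{CT/\varepsilon}$, which is acceptable only because the limits are iterated: $\omega\to\infty$ first, then $\varepsilon\to0$. The argument needs those two limits kept in that order, and the occupation bound must not depend on $\varepsilon$. Lemma~\ref{lem:micro} alone does not suffice here, since for $p=\infty$ it still gives growth $\omega^{N/(2p)}\to1$ only formally. That is exactly why the result is conditioned on Conjecture~\ref{conj:occ} rather than closed unconditionally.
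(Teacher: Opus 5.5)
\textbf{Part (a)} is fine; it is the paper's argument with the details filled in. \textbf{Part (b)} has a genuine gap in step~(3), and the iterated limit does not close it.

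Your Gr\"onwall comparison of $\x_\omega$ with $\bar\x^\varepsilon$ under the shared Brownian motion multiplies the \emph{entire} remainder of step~(1) by a factor $\Gamma_\varepsilon$ that is exponential in $\mathrm{Lip}(\nabla J_\varepsilon)\sim L_J/\varepsilon$. That is $e^{CT/\varepsilon^{2}}$ for the $L^2$ comparison of Step~3 of Appendix~\ref{supp:proof-main}, and at best $e^{CT/\varepsilon}$ as you write. Sending $\omega\to\infty$ at fixed $\varepsilon$ removes only the $C(\varepsilon)\omega^{-1/2}$ part of that remainder. The $O(\varepsilon)$ mixed-phase resonant corrections and the occupation term $\Expec\int_0^T h_\varepsilon(s,\x_\omega(s))\,ds$ survive that limit. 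They must: at fixed $\varepsilon$ the dithered process still feels the true phase $kJ$, so by the very statement you are proving it converges to $\bar\x$, not to $\bar\x^\varepsilon$.

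After $\omega\to\infty$ you are therefore left with a bound of the form $\Gamma_\varepsilon\big(\varepsilon^{2}+\eta(\varepsilon)\big)$, where $\eta(\varepsilon):=\sup_\omega\Expec\int_0^Th_\varepsilon(s,\x_\omega(s))\,ds$. This diverges as $\varepsilon\to0$; already the $\varepsilon^2$ part does. Since $\eta(\varepsilon)$ carries no rate for merely Lipschitz $J$, neither a coupling $\varepsilon=\varepsilon(\omega)$ nor a diagonal argument over bounded-Lipschitz test functions can rescue it. The order of limits is not what makes the Gr\"onwall cost acceptable. On this rung, any comparison that consumes $\mathrm{Lip}(\nabla J_\varepsilon)$ fails.

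The missing idea is to close at the level of laws without ever using a Lipschitz constant of the limit drift. The paper does this in three steps.
\begin{itemize}
\item[(i)] Tightness of $\{\mathrm{Law}(\x_\omega)\}$ comes from the near-identity transformation $\y=\Psi(\x_\omega,t)$, whose drift and diffusion are bounded uniformly in $\omega$ (Corollary~\ref{cor:tight}).
\item[(ii)] Every subsequential limit is identified as a solution of the martingale problem for $\bar{\mathcal L}$. This is done by running the two-stream expansion on $\varphi(\x_\omega)$, $\varphi\in C_c^\infty$, along the single sequence $\varepsilon=\omega^{-1/8}$. There the drift-replacement terms $\int h_\varepsilon(s,\x_\omega(s))\,ds$ enter \emph{additively} and vanish by the $\omega$-uniform occupation estimate.
\item[(iii)] The martingale problem is unique for a uniformly elliptic generator with bounded measurable drift and Lipschitz diffusion.
\end{itemize}

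Two smaller points concern your step~(2).
\begin{itemize}
\item[(i)] Your tightness justification is not right as stated. After integration by parts the dither contributes an $O(1)$ displacement, not $O(\omega^{-1/2})$: it \emph{is} the averaged drift plus oscillatory remainders. Bounding it uniformly in $\omega$ needs the second-order It\^o term of the corrector, which is why $\Psi$ mollifies at the scale $\varepsilon=\omega^{-1/2}$.
\item[(ii)] $h_\varepsilon\to0$ in $L^1_{\mathrm{loc}}$ does not make $\sup_t\|h_\varepsilon(t,\cdot)\mathbf 1_{B_R}\|_{L^p}$ vanish. So the $L^\infty_t$ form of Conjecture~\ref{conj:occ} does not by itself deliver your uniform-in-$\omega$ limit. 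The paper uses the finite-$q$ form of Theorem~\ref{thm:occ}, where dominated convergence in $t$ applies (Remark~\ref{rem:occ-uniform}).
\end{itemize}
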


\begin{conjecture}[$\omega$-uniform occupation estimate]\label{conj:occ}
Under the hypotheses of Proposition~\ref{prop:lip}, the estimate \eqref{eq:supp-micro} holds with exponent zero: $\Expec\int_0^Th(t,\x_\omega(t))\,dt\le C\,M_h$ uniformly in $\omega\ge1$, for $h\in L^\infty_tL^p_{\mathrm{uloc}}$, $p>\max(N,2)$.
\end{conjecture}

Conjecture~\ref{conj:occ} asserts that the occupation measure of the dithered process is no rougher, uniformly in $\omega$, than that of a bounded-drift diffusion. It is what the averaging limit itself predicts (the occupation measure of $\x_\omega$ converges to that of $\bar\x$, which satisfies Lemma~\ref{lem:krylov}), and it holds with the harmless $\omega^{N/(2p)}$ correction by Lemma~\ref{lem:micro}. Removing the correction without presupposing the averaging one is trying to prove requires taking the dither out of the drift \emph{before} Girsanov's theorem is applied, which Appendix~\ref{supp:rough-occ} does by a pathwise near-identity change of variables: Theorem~\ref{thm:occ} proves Conjecture~\ref{conj:occ} whenever $J$ is Lipschitz in $t$ (the time regularity that Theorem~\ref{thm:sobolev} also carries) and $\partial_tg_n$ is bounded, and Corollary~\ref{cor:tight} supplies the $\omega$-uniform tightness that the localization below uses. Only the case of merely bounded-variation time dependence remains open, and only because of the time regularity, not because of the kinks (Remark~\ref{rem:occ-scope}(i)); we keep the statement here, in the form in which the two-scale argument consumes it.

\begin{proof}[Proof sketch of Proposition~\ref{prop:lip}(b)]
Mollify: $J_\varepsilon=J*\rho_\varepsilon$ with $\|\nabla J_\varepsilon\|_\infty\le L_J$, $\nabla J_\varepsilon\to\nabla J$ Lebesgue-a.e., $\|\nabla^2J_\varepsilon\|_\infty\le CL_J/\varepsilon$, and run the two-stream architecture of Proposition~\ref{prop:c11} / Theorem~\ref{thm:sobolev}. At no point is the crude bound $\sqrt\omega\,\|J-J_\varepsilon\|_\infty$ used: the dither mismatch \eqref{eq:supp-mismatch} is itself a zero-mean oscillation and is integrated by parts, so the mismatch enters only through (i)~remainders carrying $\varepsilon$-powers of the mollified norms, all of the form $\omega^{-1/2}\varepsilon^{-\gamma}$ ($\gamma\le2$) and vanishing along, e.g., $\varepsilon=\omega^{-1/8}$; and (ii)~drift-replacement and slow-factor terms proportional to $|\nabla J-\nabla J_\varepsilon|(\x_\omega(s),s)$ integrated in time. For (ii), $h_\varepsilon:=|\nabla J-\nabla J_\varepsilon|$ satisfies $h_\varepsilon\le2L_J$ and $h_\varepsilon\to0$ a.e.\ in $(\x,t)$; by Theorem~\ref{thm:occ}, whose constant does not depend on $\omega$, $\Expec\int_0^Th_\varepsilon(\x_\omega)\,ds\to0$ as $\varepsilon\to0$ uniformly in $\omega$ (localize to a ball around $\x_0$, where dominated convergence applies to the mixed norm $\|h_\varepsilon\mathbf 1_{B_R}\|_{p,q}$ with $q<\infty$, and control the complement by the $\omega$-uniform tightness of Corollary~\ref{cor:tight}; the details are in Remark~\ref{rem:occ-uniform}), so both limits $\varepsilon\to0$, $\omega\to\infty$ may be taken along the single sequence $\varepsilon=\omega^{-1/8}$. The identified limit drift is $-\tfrac{k\alpha}2\bsig\bsig^{\!\top}\nabla J$ with the a.e.\ gradient. The argument then closes at the level of laws rather than paths, because the limit equation has only bounded measurable drift, so that no pathwise Lipschitz comparison of the kind used in step~(iv) of Theorem~\ref{thm:sobolev} is available (for such drift the Zvonkin map satisfies only $\mathbf u\in W^{2,r}_{\mathrm{uloc}}$ for every $r<\infty$ \citep{veretennikov1981strong,krylov2005strong}, so $\nabla\Phi$ is merely H\"older and a Gr\"onwall--BDG comparison under a shared Brownian motion does not close in $N\ge2$ dimensions). Instead: (1)~\emph{tightness}: under the hypotheses (H1)--(H4) of Appendix~\ref{supp:rough-occ}, which are those under which Conjecture~\ref{conj:occ} is established, $\y=\Psi(\x_\omega,t)$ of \eqref{eq:supp-occ-y} is an It\^o process whose drift and diffusion are bounded uniformly in $\omega$, so its laws are tight on $C([0,T];\R^N)$ by the Kolmogorov--Chentsov moment criterion \citep{billingsley1999convergence}, and $\sup_t\|\x_\omega-\y\|\le C\omega^{-1/2}$ transfers tightness to $\{\mathrm{Law}(\x_\omega)\}_{\omega\ge\omega_0}$ (Corollary~\ref{cor:tight} is the quantitative form); (2)~\emph{identification}: for $\varphi\in C^\infty_c(\R^N)$, the two-stream expansion applied to $\varphi(\x_\omega)$ in place of $\x_\omega$---with slow factors $\partial_i\varphi\,g_i\cos kJ_\varepsilon$ and $\partial_i\varphi\,g_i\sin kJ_\varepsilon$---shows that $\varphi(\x_\omega(t))-\int_0^t\bar{\mathcal L}_s\varphi(\x_\omega(s))\,ds$ is a martingale up to an error vanishing in $L^1$ as $\omega\to\infty$, the oscillatory terms by Lemma~\ref{lem:osc-moll} and (i), the drift-replacement terms by (ii); hence every subsequential weak limit solves the martingale problem for the averaged generator $\bar{\mathcal L}$ with drift $-\tfrac{k\alpha}2\bsig\bsig^{\!\top}\nabla J$; (3)~\emph{uniqueness}: the martingale problem for a uniformly elliptic diffusion with bounded measurable drift and Lipschitz diffusion coefficient is well posed \citep{stroock1979multidimensional,veretennikov1981strong}, so every subsequential limit is $\mathrm{Law}(\bar\x)$, which proves $\x_\omega\Rightarrow\bar\x$. The occupation estimate thus enters the Lipschitz rung through step~(2) alone, in the drift-replacement terms and in the mollified-corrector remainders; Theorem~\ref{thm:occ} supplies it with any $p>\max(N,2)$ and any finite $q>2p/(p-N)$. (With Lemma~\ref{lem:micro} in place of the conjecture, the argument delivers everything except the uniformity of step~(ii), whose bound degrades to $C\omega^{N/(2p)}\|h_\varepsilon\|$; since $\|h_\varepsilon\|$ carries no rate for merely Lipschitz $J$, the two limits cannot then be coupled.) Equivalently, one may read $\bar\x$ as a Filippov solution of the differential inclusion $d\bar\x\in\big[\f-\tfrac{k\alpha}2\bsig\bsig^{\!\top}\partial_CJ\big]dt+\bsig\,d\w$ with $\partial_CJ$ the Clarke gradient \citep{clarke1983nonsmooth,filippov1988discontinuous}; the non-degenerate noise collapses the inclusion to the a.e.-gradient SDE.
\end{proof}

\begin{remark}[Why Lemma~\ref{lem:micro} does not suffice, even for structured kinks]\label{rem:whynot}
One might hope to bypass Conjecture~\ref{conj:occ} for the practically relevant case in which the kink set of $J$ is a fixed $(N{-}1)$-rectifiable hypersurface of finite surface measure: there $h_\varepsilon=|\nabla J-\nabla J_\varepsilon|$ is supported in an $\varepsilon$-collar of the surface, so $\|h_\varepsilon\|_{L^\infty_tL^p_{\mathrm{uloc}}}\le CL_J\varepsilon^{1/p}$ carries a genuine rate. Feeding this into Lemma~\ref{lem:micro} bounds the step-(ii) terms by $C\omega^{N/(2p)}\varepsilon^{1/p}$, which vanishes only if $\varepsilon\ll\omega^{-N/2}$. But the mollified-corrector remainders of step~(i) contain $\omega^{-1/2}\varepsilon^{-1}$ and $\omega^{-1}\varepsilon^{-2}$ and hence require $\varepsilon\gg\omega^{-1/2}$. The two windows $\varepsilon\ll\omega^{-N/2}$ and $\varepsilon\gg\omega^{-1/2}$ are disjoint for every $N\ge1$. The $\omega^{N/(2p)}$ loss in Lemma~\ref{lem:micro} is therefore not a slack constant one can trade against geometric structure in $J$; removing it---Conjecture~\ref{conj:occ}---is the genuine obstruction. Appendix~\ref{supp:rough-occ} removes it by changing the order of operations rather than the scale: the dither is taken out of the drift by a pathwise change of variables first, at its own mollification scale $\varepsilon=\omega^{-1/2}$, and Girsanov's theorem is applied to what remains, whose drift is bounded uniformly in $\omega$.
\end{remark}

\subsection{The $\omega$-uniform occupation estimate: proof of Conjecture~\ref{conj:occ}}
\label{supp:rough-occ}

Lemma~\ref{lem:micro} loses the factor $\omega^{N/(2p)}$ because it applies Girsanov's theorem to the dither drift itself: that drift has size $\sqrt\omega$, so Novikov's exponent stays bounded only on windows of length $1/\omega$, and $\lceil T\omega\rceil$ windows have to be summed. This subsection removes the loss by removing the dither first. The device is the near-identity change of variables of the deterministic warm-up, \eqref{eq:supp-warmup-ydef}---the classical averaging transformation \citep{bogoliubov1961asymptotic,sanders2007averaging}, in its stochastic form \citep{freidlin2012random,pavliotis2008multiscale}---applied pathwise to the It\^o process: subtracting the phase-locked ripple $g_n\sqrt{\alpha/\omega_n}\,\sin(\omega_nt+kJ)$ from each coordinate produces a process whose drift is bounded uniformly in $\omega$ and whose diffusion coefficient is an $O(\omega^{-1/2})$ perturbation of $\bsig$. The classical occupation estimate then applies to the transformed process with $\omega$-uniform constants, and the transformation is bi-Lipschitz with $\omega$-uniform constants, so the estimate transfers back to $\x_\omega$. What makes this work where Remark~\ref{rem:whynot} found an obstruction is a difference of purpose. An occupation estimate needs the transformed coefficients to be \emph{bounded}, not small, so the mollification scale inside the transformation can be tied to the dither scale, $\varepsilon=\omega^{-1/2}$, exactly where the dither mismatch and the It\^o-correction term are both $O(1)$. A convergence proof needs those same terms to \emph{vanish}, which forces $\varepsilon\gg\omega^{-1/2}$. The two requirements are incompatible for a single scale, which is the disjoint-windows obstruction of Remark~\ref{rem:whynot}; the resolution is that the occupation estimate carries its own scale.

Throughout this subsection we assume:
\begin{itemize}
    \item[(H1)] $\f$ is bounded and Lipschitz in $\x$; each $g_n$ is twice continuously differentiable in $\x$, with $g_n$ and its first two $\x$-derivatives bounded by $C_0$ and Lipschitz in $\x$ with constant $L$, and continuously differentiable in $t$ with $|\partial_tg_n|\le C_0$ (this is Assumption~\ref{ass:glob}(i)--(ii) for $\f,\bsig$, with (ii) in its continuously-differentiable form);
    \item[(H2)] $\bsig\bsig^{\!\top}\succeq\lambda_0I$ for some $\lambda_0>0$;
    \item[(H3)] $J$ is globally Lipschitz in $\x$ uniformly in $t$, and in $t$ uniformly in $\x$: $\|\nabla J\|_{L^\infty}\le L_J$ and $\|\partial_tJ\|_{L^\infty}\le L_t$ (the derivatives exist a.e.\ by Rademacher's theorem and are the weak derivatives);
    \item[(H4)] Assumption~\ref{ass:freq}, normalized so that $\min_nr_n=1$ (Assumption~\ref{ass:glob}(iii)).
\end{itemize}
(H3) is the hypothesis of Proposition~\ref{prop:lip} together with the Lipschitz-in-time regularity that Theorem~\ref{thm:sobolev} also carries; Remark~\ref{rem:occ-scope} explains why merely bounded variation in $t$ does not suffice for this argument. Constants $C$ below depend only on $(N,p,q,\lambda_0,T,C_0,L,\alpha,|k|,r_{\max},L_J,L_t)$ and never on $\omega$ or on $h$.

\begin{theorem}[$\omega$-uniform occupation estimate]\label{thm:occ}
Let (H1)--(H4) hold and let $\x_\omega$ solve \eqref{eq:supp-sde} from an $\mathcal F_0$-measurable $\x_0$. Let $p>\max(N,2)$ and $q\in\big(\tfrac{2p}{p-N},\infty\big]$, and for measurable $h\ge0$ on $[0,T]\times\R^N$ write
\begin{equation*}
    \|h\|_{p,q}\;:=\;\Big\|\,\|h(t,\cdot)\|_{L^p_{\mathrm{uloc}}}\Big\|_{L^q(0,T)},
    \qquad\text{so that }\|h\|_{p,\infty}=M_h\text{ in the notation of Lemma~\ref{lem:micro}}.
\end{equation*}
Then there is a constant $C$, independent of $\omega$ and of $h$, such that for all $\omega\ge1$,
\begin{equation}
    \Expec\int_0^Th\big(t,\x_\omega(t)\big)\,dt\;\le\;C\,\|h\|_{p,q}.
    \label{eq:supp-occ}
\end{equation}
In particular, Conjecture~\ref{conj:occ} holds under the hypotheses of Proposition~\ref{prop:lip} strengthened by (H1) and (H3).
\end{theorem}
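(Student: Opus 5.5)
The plan follows the roadmap set out at the start of this subsection. We first remove the dither from the drift by a pathwise near-identity change of variables, then apply a Krylov-type estimate to the transformed process with $\omega$-uniform constants, and finally transfer the estimate back to $\x_\omega$.

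\textbf{Step 1: the transformation.} Fix the mollification scale $\varepsilon=\omega^{-1/2}$ and set $J_\varepsilon=J*_\x\rho_\varepsilon$. Then $\|\nabla J_\varepsilon\|_\infty\le L_J$, $\|\partial_tJ_\varepsilon\|_\infty\le L_t$, $\|J-J_\varepsilon\|_\infty\le L_J\varepsilon$ and $\|\nabla^2J_\varepsilon\|_\infty\le CL_J/\varepsilon$. Define
\[
\Psi_n(\x,t)=x_n-g_n(\x,t)\sqrt{\alpha/\omega_n}\,\sin\!\big(\omega_nt+kJ_\varepsilon(\x,t)\big),
\]
and set $\y=\Psi(\x_\omega,t)$. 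Since $\nabla\Psi-I=O(\omega^{-1/2}(1+|k|L_J))$, the map $\Psi(\cdot,t)$ is a global bi-Lipschitz diffeomorphism for $\omega\ge\omega_0$ by Hadamard's theorem, with $\sup_t\|\x_\omega-\y\|\le C\omega^{-1/2}$. The finitely many $\omega\in[1,\omega_0]$ are handled directly by Lemma~\ref{lem:micro}, since $\omega^{N/(2p)}$ is bounded there. We then apply It\^o's formula; this is legitimate because $\Psi\in C^2$ in $\x$ and $C^1$ in $t$ by (H1) and (H3).

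In the $t$-derivative, the $\omega_n\cos(\cdot)$ term cancels the dither drift $g_n\sqrt{\alpha\omega_n}\cos(\omega_nt+kJ)$ up to the mismatch $g_n\sqrt{\alpha\omega_n}[\cos(\cdot+kJ)-\cos(\cdot+kJ_\varepsilon)]=O(\sqrt\omega\,\varepsilon)=O(1)$. This is exactly where the scale choice $\varepsilon=\omega^{-1/2}$ is used. The remaining contributions are bounded uniformly in $\omega$:
\begin{itemize}
\item the $\partial_tg_n$ and $k\partial_tJ_\varepsilon$ terms are $O(\omega^{-1/2})$;
\item the first-order terms $\nabla(g_n\sin)\cdot\bmu_\omega$ are $\omega^{-1/2}\cdot O(\sqrt\omega)=O(1)$, and this is where the resonant products appear;
\item the It\^o correction $\tfrac12\mathrm{tr}(\bsig\bsig^\top\nabla^2\Psi_n)$ is $\omega^{-1/2}\cdot O(1/\varepsilon)=O(1)$.
\end{itemize}
Hence $d\y=\mathbf b_\omega\,dt+\tilde\bsig_\omega\,d\w$ with $\|\mathbf b_\omega\|_\infty\le C$ and $\tilde\bsig_\omega=(\nabla\Psi)\bsig$. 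Consequently $\tilde\bsig_\omega\tilde\bsig_\omega^\top\succeq\tfrac{\lambda_0}{2}I$ for $\omega\ge\omega_0$, and $\tilde\bsig_\omega$ is bounded uniformly in $\omega$.

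\textbf{Step 2: Krylov estimate for $\y$.} The process $\y$ is an It\^o process, not a diffusion, because its coefficients are progressively measurable functionals of the path. We therefore need a version of the Krylov estimate (Lemma~\ref{lem:krylov}) valid for It\^o processes with bounded drift and bounded, uniformly nondegenerate diffusion matrix. Krylov's original estimate is of exactly this type, via the parabolic Monge--Amp\`ere/ABP argument, and gives $\Expec\int_0^T|\tilde h(t,\y_t)|dt\le C\|\tilde h\|_{L^{N+1}}$ with a constant depending only on the bounds and ellipticity. The mixed-norm, uniformly local form $\|\cdot\|_{p,q}$ with $N/p+2/q<1$ is obtained from the Krylov--R\"ockner mixed-norm extension for It\^o processes. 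If one only has the $L^{N+1}$ version, an alternative is available because $\tilde\bsig_\omega=\bsig(\x,t)+O(\omega^{-1/2})$ is a small perturbation of a Lipschitz diffusion coefficient. In that case we freeze on time windows of $\omega$-independent length $\delta$ and apply Girsanov to remove $\mathbf b_\omega$. Novikov's condition now holds with $\omega$-uniform constants because $\mathbf b_\omega$ is bounded. We then compare with the heat-kernel bound exactly as in Lemma~\ref{lem:micro}, but with $h_0=\delta$ fixed instead of $1/\omega$, so the $\lceil T\omega\rceil$ factor disappears. The restriction $q>2p/(p-N)$ is exactly what this route produces: after Cauchy--Schwarz one needs $h^2\in L^{q/2}_tL^{p/2}_\x$ against the kernel bound $s^{-N/p}$, and integrability of $s^{-N/p}$ in the conjugate exponent is precisely $q>2p/(p-N)$. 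This strongly suggests the intended proof is the Girsanov/heat-kernel route. The diffusion coefficient of $\y$ must be written as a function of $(\y,t)$ through $\Psi^{-1}$ to invoke Gaussian bounds; it equals $(\nabla\Psi\,\bsig)\circ\Psi^{-1}$, which is measurable and Lipschitz up to $O(\omega^{-1/2}\varepsilon^{-1})=O(1)$ terms. I expect this step to be the main obstacle. In that representation the Lipschitz constant of the diffusion coefficient is only $O(1)$ rather than small, so the classical Aronson bound, which needs only boundedness, measurability and ellipticity in divergence-free form, is the one to cite. The safer fallback remains Krylov's estimate for general It\^o processes.

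\textbf{Step 3: transfer back.} Write $h(t,\x_\omega(t))=\tilde h(t,\y_t)$ with $\tilde h(t,\cdot)=h(t,\Psi^{-1}(\cdot,t))$. Since $\Psi^{-1}(\cdot,t)$ is bi-Lipschitz with Jacobian bounded above and below uniformly in $\omega$, a change of variables on unit balls gives $\|\tilde h(t,\cdot)\|_{L^p_{\mathrm{uloc}}}\le C\|h(t,\cdot)\|_{L^p_{\mathrm{uloc}}}$; a unit ball maps into a ball of radius at most $2$, which is covered by $C(N)$ unit balls. Applying Step 2 yields \eqref{eq:supp-occ}. The final clause then follows by taking $q=\infty$, since $\|h\|_{p,\infty}=M_h$.
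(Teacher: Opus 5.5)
Your architecture is the paper's. You mollify at the dither scale $\varepsilon=\omega^{-1/2}$, use the near-identity map $\Psi=\x-\mathbf v$ with $v_n=g_n\sqrt{\alpha/\omega_n}\sin(\omega_nt+kJ_\varepsilon)$, and cancel the dither in It\^o's formula, leaving every drift term $O(1)$. You then apply Girsanov with an $\omega$-independent Novikov exponent; the paper uses the single window $[0,T]$, which is legitimate because the transformed drift is bounded, so your fixed windows of length $\delta$ are harmless but unnecessary. Finally, a Gaussian bound plus H\"older in time gives exactly $q>2p/(p-N)$, and you transfer back through $\Xi_t=\Psi(\cdot,t)^{-1}$ with Jacobian control on unit balls.

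Two steps do not hold as written. First, the range $1\le\omega<\omega_0$ is an interval, not finitely many values, and it cannot be delegated to Lemma~\ref{lem:micro}. That lemma bounds the occupation integral by $M_h=\|h\|_{p,\infty}$, and for $q<\infty$ this is not controlled by $\|h\|_{p,q}$: for $h=\eta^{-1/q}\mathbf 1_{[0,\eta]}(t)$, $\|h\|_{p,q}$ stays fixed while $M_h\to\infty$ as $\eta\to0$. Finite $q$ is exactly what Remark~\ref{rem:occ-uniform} needs for dominated convergence, so this matters. The paper instead reruns the Girsanov, Gaussian-bound and H\"older argument on $\x_\omega$ itself with $\Psi=\mathrm{id}$, whose drift is bounded by $C_0(1+\sqrt{\alpha r_{\max}\omega_0})$. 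Second, under (H3) $J_\varepsilon$ is only Lipschitz in $t$, since the mollification is in $\x$ alone. So $\Psi$ is not $C^1$ in $t$, and It\^o's formula must be used in its It\^o--Krylov form, as the paper does.

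Your diagnosis of the Gaussian-bound step is misplaced. An $O(1)$ spatial Lipschitz constant of $\tilde\Sigma(\y,t)=[(I-\nabla\mathbf v)\bsig](\Xi_t(\y),t)$ is no obstacle: the parametrix needs a finite spatial modulus, not a small one, and the paper uses exactly this $\omega$-uniform constant. Conversely, Aronson's theorem concerns divergence-form operators. For a non-divergence generator $\tfrac12\tilde a_{ij}\partial_{ij}$ whose coefficients are merely bounded, measurable and elliptic, Gaussian bounds are not available in general. Aronson therefore applies here only after you rewrite the forward equation $\partial_sp=\tfrac12\partial_{ij}(\tilde a_{ij}p)$ in divergence form, with bounded lower-order coefficient $\tfrac12\partial_j\tilde a_{ij}$. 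That rewriting consumes precisely the spatial Lipschitz bound you wanted to avoid.

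The real subtlety, which you do not flag, is in time. $\tilde\Sigma$ varies in $t$ with modulus of order $\sqrt\omega$, through $\sin\theta^\varepsilon_n$ in $\nabla\mathbf v$ and through $\Xi_t$. A parametrix frozen in $(\y,t)$ would therefore give $\omega$-dependent constants; the paper freezes in space only. The divergence-form Aronson route would also work, because it tolerates measurable time dependence.

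Finally, Krylov--R\"ockner treats additive noise, not It\^o processes with a general diffusion matrix. The ABP-type Krylov estimate that does cover your $\y$ is stated in global norms. Recovering the uniformly local mixed norm from it requires localization together with the tail bound of Corollary~\ref{cor:tight}, as the paper's footnote indicates for $p\ge N+1$.
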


\begin{proof}
\emph{Step 1: mollification at the dither scale.}
Fix $\omega\ge1$ and set $\varepsilon:=\omega^{-1/2}\le1$. Let $\rho\in C_c^\infty(B_1)$ be a nonnegative mollifier with $\int\rho=1$, $\rho_\varepsilon(\mathbf z):=\varepsilon^{-N}\rho(\mathbf z/\varepsilon)$, $c_\rho:=\max_j\|\partial_j\rho\|_{L^1}$, and $J_\varepsilon:=J*_\x\rho_\varepsilon$. Since $\nabla J_\varepsilon=(\nabla J)*_\x\rho_\varepsilon$, $\partial_{ij}J_\varepsilon=(\partial_iJ)*_\x\partial_j\rho_\varepsilon$, $\partial_tJ_\varepsilon=(\partial_tJ)*_\x\rho_\varepsilon$, and $J_\varepsilon(\x,t)-J(\x,t)=\int\big(J(\x-\mathbf z,t)-J(\x,t)\big)\rho_\varepsilon(\mathbf z)\,d\mathbf z$,
\begin{equation}
    \|J-J_\varepsilon\|_\infty\le L_J\varepsilon,\qquad
    \|\nabla J_\varepsilon\|_\infty\le L_J,\qquad
    \|\nabla^2J_\varepsilon\|_\infty\le c_\rho L_J\,\varepsilon^{-1},\qquad
    \|\partial_tJ_\varepsilon\|_\infty\le L_t.
    \label{eq:supp-occ-moll}
\end{equation}
Note that $J_\varepsilon$ is $C^\infty$ in $\x$, Lipschitz in $t$, and that $\nabla^2J_\varepsilon=J*_\x\nabla^2\rho_\varepsilon$ is continuous in $(\x,t)$.

\emph{Step 2: the transformation and its bounds.}
Write $\theta_n(\x,t):=\omega_nt+kJ(\x,t)$ for the dither phase actually driving \eqref{eq:supp-drift}, and $\theta^\varepsilon_n:=\omega_nt+kJ_\varepsilon(\x,t)$ for its mollified version. Define
\begin{equation}
    v_n(\x,t):=g_n(\x,t)\sqrt{\alpha/\omega_n}\,\sin\theta^\varepsilon_n(\x,t),
    \qquad
    \Psi(\x,t):=\x-\mathbf v(\x,t),
    \qquad n=1,\dots,N,
    \label{eq:supp-occ-transform}
\end{equation}
the stochastic counterpart of \eqref{eq:supp-warmup-ydef}, with the phase mollified. By (H1) and \eqref{eq:supp-occ-moll}, $\mathbf v$ is $C^2$ in $\x$ with $\nabla^2_\x\mathbf v$ continuous, and Lipschitz in $t$. Since $\partial_iv_n=\sqrt{\alpha/\omega_n}\,\big[\partial_ig_n\sin\theta^\varepsilon_n+kg_n\cos\theta^\varepsilon_n\,\partial_iJ_\varepsilon\big]$, using $\omega_n\ge\omega\ge1$ and $\sqrt{\alpha/\omega}\,\varepsilon^{-1}=\sqrt\alpha$ one obtains, uniformly on $\R^N\times[0,T]$,
\begin{align}
    |v_n|&\le C_0\sqrt{\alpha/\omega},
    \qquad
    |\partial_iv_n|\le\sqrt{\alpha/\omega}\,C_0(1+|k|L_J),
    \label{eq:supp-occ-v1}\\
    |\partial_{ij}v_n|&\le\sqrt{\alpha/\omega_n}\Big[|\partial_{ij}g_n|+|k|\big(|\partial_ig_n\partial_jJ_\varepsilon|+|\partial_jg_n\partial_iJ_\varepsilon|\big)
    +|k|\,|g_n\partial_{ij}J_\varepsilon|+k^2|g_n\partial_iJ_\varepsilon\partial_jJ_\varepsilon|\Big]
    \nonumber\\
    &\le\sqrt\alpha\,C_0\big[1+2|k|L_J+k^2L_J^2+|k|c_\rho L_J\big]=:c_2,
    \label{eq:supp-occ-v2}\\
    \partial_tv_n&=g_n\sqrt{\alpha\omega_n}\,\cos\theta^\varepsilon_n+\varrho_n,
    \nonumber\\
    \varrho_n&:=\sqrt{\alpha/\omega_n}\big[\partial_tg_n\sin\theta^\varepsilon_n+kg_n\,\partial_tJ_\varepsilon\cos\theta^\varepsilon_n\big],
    \qquad|\varrho_n|\le\sqrt\alpha\,C_0(1+|k|L_t),
    \label{eq:supp-occ-vt}
\end{align}
where in \eqref{eq:supp-occ-v2} the It\^o-critical term $|k|\,|g_n\partial_{ij}J_\varepsilon|\le|k|C_0c_\rho L_J\varepsilon^{-1}$ has been multiplied by $\sqrt{\alpha/\omega_n}\le\sqrt{\alpha/\omega}=\sqrt\alpha\,\varepsilon$: this is the only place where the choice $\varepsilon=\omega^{-1/2}$ is used, and it makes the second derivatives of the transformation bounded rather than small. In \eqref{eq:supp-occ-vt}, the leading term arises from $\partial_t\theta^\varepsilon_n=\omega_n+k\partial_tJ_\varepsilon$ and is the dither drift of \eqref{eq:supp-drift} with mollified phase. Let $\nabla\mathbf v:=(\partial_iv_n)_{n,i}$. By \eqref{eq:supp-occ-v1}--\eqref{eq:supp-occ-v2},
\begin{equation}
\begin{aligned}
    \|\nabla\mathbf v(\x,t)\|_{\mathrm{op}}&\le\|\nabla\mathbf v(\x,t)\|_F\le c_1\,\omega^{-1/2},
    &\qquad c_1&:=NC_0\sqrt\alpha\,(1+|k|L_J);\\
    \|\nabla\mathbf v(\x,t)-\nabla\mathbf v(\x',t)\|_{\mathrm{op}}&\le c_3\|\x-\x'\|,
    &\qquad c_3&:=N^2c_2.
\end{aligned}
    \label{eq:supp-occ-gradv}
\end{equation}

\emph{Step 3: $\Psi(\cdot,t)$ is a bi-Lipschitz diffeomorphism, uniformly in $\omega$.}
Set $\omega_0:=\max\{1,4c_1^2\}$ and let $\omega\ge\omega_0$, so that $\|\nabla\mathbf v\|_{\mathrm{op}}\le\tfrac12$ and $\mathbf v(\cdot,t)$ is $\tfrac12$-Lipschitz. Then for all $\x,\x'$,
\begin{equation}
    \tfrac12\|\x-\x'\|\;\le\;\|\Psi(\x,t)-\Psi(\x',t)\|\;\le\;\tfrac32\|\x-\x'\|,
    \label{eq:supp-occ-bilip}
\end{equation}
so $\Psi(\cdot,t)$ is injective; it is also surjective, because for every $\y\in\R^N$ the map $\x\mapsto\y+\mathbf v(\x,t)$ is a contraction of $\R^N$ whose unique fixed point $\x$ satisfies $\Psi(\x,t)=\y$. Its inverse $\Xi_t:=\Psi(\cdot,t)^{-1}$ is therefore globally defined, $2$-Lipschitz, and $C^1$ by the inverse function theorem ($\nabla_\x\Psi=I-\nabla\mathbf v$ is invertible). The singular values of $I-\nabla\mathbf v$ lie in $[\tfrac12,\tfrac32]$, whence
\begin{equation}
    2^{-N}\le|\det\nabla_\x\Psi(\x,t)|\le(\tfrac32)^N,
    \qquad
    |\det\nabla_\y\Xi_t(\y)|\le2^N.
    \label{eq:supp-occ-jac}
\end{equation}

\emph{Step 4: It\^o's formula---the dither cancels.}
Let $\y(t):=\Psi(\x_\omega(t),t)$. Since $\Psi$ is $C^2$ in $\x$ with continuous second derivatives and Lipschitz in $t$, and since for fixed $\omega$ the process $\x_\omega$ has bounded drift and non-degenerate diffusion, It\^o's formula applies with the $t$-derivative entering as a Lebesgue integral (the It\^o--Krylov formula, \citealp[Ch.~2]{krylov1980controlled}, exactly as invoked in the proof of Theorem~\ref{thm:sobolev}). With $\bsig$ diagonal,
\begin{equation*}
    dy_n=dx_{\omega,n}-\partial_tv_n\,dt-\sum_{i}\partial_iv_n\,dx_{\omega,i}-\tfrac12\sum_ig_i^2\,\partial_{ii}v_n\,dt .
\end{equation*}
Insert $dx_{\omega,i}=\big[f_i+g_i\sqrt{\alpha\omega_i}\cos\theta_i\big]dt+g_i\,dw_i$ and \eqref{eq:supp-occ-vt}: the dither drift $g_n\sqrt{\alpha\omega_n}\cos\theta_n$ of $dx_{\omega,n}$ and the leading term of $\partial_tv_n$ cancel up to the mismatch between $\theta_n$ and $\theta^\varepsilon_n$, and
\begin{equation}
\begin{aligned}
    dy_n&=b_n(t)\,dt+\sum_i\big(\delta_{ni}-\partial_iv_n\big)g_i\,dw_i,\\
    b_n&:=f_n+g_n\sqrt{\alpha\omega_n}\big[\cos\theta_n-\cos\theta^\varepsilon_n\big]-\varrho_n
    -\sum_i\partial_iv_n\big[f_i+g_i\sqrt{\alpha\omega_i}\cos\theta_i\big]-\tfrac12\sum_ig_i^2\,\partial_{ii}v_n,
\end{aligned}
    \label{eq:supp-occ-y}
\end{equation}
all coefficients evaluated at $(\x_\omega(t),t)$. Each term of $b_n$ is bounded uniformly in $\omega\ge1$: by \eqref{eq:supp-occ-moll}, $|\cos\theta_n-\cos\theta^\varepsilon_n|\le|k|\,|J-J_\varepsilon|\le|k|L_J\varepsilon$, so the mismatch term is at most $C_0\sqrt{\alpha r_{\max}\omega}\,|k|L_J\varepsilon=C_0\sqrt{\alpha r_{\max}}\,|k|L_J$ (here $\sqrt\omega\,\varepsilon=1$ is used a second time); by \eqref{eq:supp-occ-v1} the cross terms with the base drift and with the dither are at most $N\sqrt\alpha\,C_0^2(1+|k|L_J)$ and $N\sqrt{\alpha/\omega}\,C_0(1+|k|L_J)\cdot C_0\sqrt{\alpha r_{\max}\omega}=N\alpha\sqrt{r_{\max}}\,C_0^2(1+|k|L_J)$ respectively; and by \eqref{eq:supp-occ-v2} the It\^o-correction term is at most $\tfrac12NC_0^2c_2$. Hence
\begin{equation}
\begin{aligned}
    |b_n(t)|\le B&:=C_0\big[1+\sqrt{\alpha r_{\max}}\,|k|L_J+\sqrt\alpha(1+|k|L_t)\big]\\
    &\qquad+NC_0^2\big[\sqrt\alpha\,(1+\sqrt{\alpha r_{\max}})(1+|k|L_J)+\tfrac12c_2\big].
\end{aligned}
    \label{eq:supp-occ-B}
\end{equation}
The diffusion matrix of \eqref{eq:supp-occ-y} is $\Sigma_\omega(t):=(I-\nabla\mathbf v)\bsig$ at $(\x_\omega(t),t)$; for $\omega\ge\omega_0$, by (H2) and \eqref{eq:supp-occ-bilip},
\begin{equation}
    \Sigma_\omega\Sigma_\omega^{\!\top}=(I-\nabla\mathbf v)\bsig\bsig^{\!\top}(I-\nabla\mathbf v)^{\!\top}\succeq\lambda_0(I-\nabla\mathbf v)(I-\nabla\mathbf v)^{\!\top}\succeq\tfrac{\lambda_0}4I,
    \qquad
    \|\Sigma_\omega\|_{\mathrm{op}}\le\tfrac32C_0 .
    \label{eq:supp-occ-ellip}
\end{equation}
The transformed process thus has a drift bounded by $B$ and a diffusion coefficient that is bounded and uniformly elliptic, all independently of $\omega\ge\omega_0$. This is the whole point: the $O(\sqrt\omega)$ dither has been traded for $O(1)$ terms, at the price of an $O(\omega^{-1/2})$ deformation of the diffusion.

\emph{Step 5: Markov form and Girsanov.}
Since $\x_\omega(t)=\Xi_t(\y(t))$, the coefficients of \eqref{eq:supp-occ-y} are functions of $(\y(t),t)$: $b_n(t)=\tilde b_n(\y(t),t)$ and $\Sigma_\omega(t)=\tilde\Sigma(\y(t),t)$, where $\tilde{\mathbf b}(\y,t)$ is the expression in \eqref{eq:supp-occ-y} evaluated at $(\Xi_t(\y),t)$ and $\tilde\Sigma(\y,t):=\big[(I-\nabla\mathbf v)\bsig\big](\Xi_t(\y),t)$. Then $\tilde{\mathbf b}$ is measurable and bounded by $B$, and $\tilde\Sigma$ is bounded by $\tfrac32C_0$, uniformly elliptic with constant $\lambda_0/4$, and Lipschitz in $\y$ uniformly in $t$ and in $\omega\ge\omega_0$, with $\mathrm{Lip}_\y(\tilde\Sigma)\le2\big[\tfrac32L+c_3C_0\big]=:\ell$ by \eqref{eq:supp-occ-gradv} and \eqref{eq:supp-occ-bilip}. Let $\boldsymbol\theta(t):=\Sigma_\omega(t)^{-1}\mathbf b(t)$, so that $\|\boldsymbol\theta\|\le2\sqrt N\,B/\sqrt{\lambda_0}$ by \eqref{eq:supp-occ-ellip}. Novikov's condition holds, and Girsanov's theorem \citep[Ch.~3]{karatzas1991brownian} provides $\mathbb Q\sim\Prob$ on $\mathcal F_T$, with $d\mathbb Q/d\Prob=\exp\big(-\int_0^T\boldsymbol\theta\cdot d\w-\tfrac12\int_0^T\|\boldsymbol\theta\|^2ds\big)$ and $\mathbb Q=\Prob$ on $\mathcal F_0$, under which $\tilde\w:=\w+\int_0^{\cdot}\boldsymbol\theta\,ds$ is an $(\mathcal F_t)$-Brownian motion and
\begin{equation*}
    d\y=\tilde\Sigma(\y,t)\,d\tilde\w,\qquad \y(0)=\y_0:=\Psi(\x_0,0).
\end{equation*}
Moreover $Z:=d\Prob/d\mathbb Q=\exp\big(\int_0^T\boldsymbol\theta\cdot d\tilde\w-\tfrac12\int_0^T\|\boldsymbol\theta\|^2ds\big)$ satisfies $\Expec^{\mathbb Q}[Z^2\,|\,\mathcal F_0]\le e^{2\Lambda}$ with $\Lambda:=2NB^2T/\lambda_0$, because $Z^2=\exp\big(\int2\boldsymbol\theta\cdot d\tilde\w-\tfrac12\int\|2\boldsymbol\theta\|^2ds\big)\exp\big(\int\|\boldsymbol\theta\|^2ds\big)$ and the first factor, a Dol\'eans exponential with bounded integrand, is a $\mathbb Q$-martingale. The constant $\Lambda$ is independent of $\omega$: this is where the argument differs from Lemma~\ref{lem:micro}, whose Novikov exponent grows like $\omega T$ over the full horizon.

\emph{Step 6: occupation estimate for the transformed process, and transfer.}
Let $\tilde h(t,\y):=h(t,\Xi_t(\y))$. (a)~\emph{The transformation preserves uniformly local integrability.} For any unit ball $B_1(\y_0)$, $\Xi_t(B_1(\y_0))\subset B_2(\Xi_t(\y_0))$, which is covered by $K_N$ unit balls, so by the change of variables $\x=\Xi_t(\y)$ and \eqref{eq:supp-occ-jac},
\begin{equation*}
    \int_{B_1(\y_0)}h\big(t,\Xi_t(\y)\big)^p\,d\y=\int_{\Xi_t(B_1(\y_0))}h(t,\x)^p\,|\det\nabla_\x\Psi(\x,t)|\,d\x\le(\tfrac32)^NK_N\,\|h(t,\cdot)\|^p_{L^p_{\mathrm{uloc}}},
\end{equation*}
i.e.\ $\|\tilde h(t,\cdot)\|_{L^p_{\mathrm{uloc}}}\le c_\Psi\|h(t,\cdot)\|_{L^p_{\mathrm{uloc}}}$ with $c_\Psi:=\big((\tfrac32)^NK_N\big)^{1/p}$.

(b)~\emph{Gaussian bound.} Under $\mathbb Q$, conditionally on $\mathcal F_0$, $\y$ is the (pathwise unique) diffusion $d\y=\tilde\Sigma(\y,t)\,d\tilde\w$ from $\y_0$, whose diffusion coefficient is bounded, uniformly elliptic and Lipschitz in space with constants independent of $\omega$. Its transition density therefore obeys the Gaussian upper bound $p_s(\y_0,\y)\le C_Gs^{-N/2}e^{-c_G\|\y-\y_0\|^2/s}$, $0<s\le T$, with $C_G,c_G$ depending only on $(N,\lambda_0,C_0,\ell,T)$ \citep{friedman1964parabolic}.\footnote{Only the ellipticity, the sup norm and the spatial Lipschitz (indeed H\"older) seminorm of $\tilde a:=\tilde\Sigma\tilde\Sigma^{\!\top}$ enter $C_G,c_G$: the parametrix may be built by freezing the coefficient in space only, i.e.\ from the Gaussian with covariance $\int_\tau^s\tilde a(\boldsymbol\xi,u)\,du$, whose residual $\tfrac12(\tilde a(\boldsymbol\xi,s)-\tilde a(\y,s))\!:\!\nabla^2$ is controlled by the spatial modulus of $\tilde a$ alone, so that no regularity in $s$ is consumed. This matters here because the modulus of $\tilde\Sigma$ in $t$ is of order $\sqrt\omega$. For $p\ge N+1$ one can avoid transition densities altogether: Krylov's estimate for It\^o processes with bounded coefficients and non-degenerate diffusion, $\Expec\int_0^Th(s,\y_s)\,ds\le C\|h\|_{L^{N+1}([0,T]\times\R^N)}$ \citep[Ch.~2]{krylov1980controlled}, requires no continuity of $\tilde\Sigma$ at all, and the uniformly-local norm is recovered from it by localizing to unit cubes and using the tail bound of Corollary~\ref{cor:tight}.} Let $\{B_1(\mathbf z_j)\}_j$ be a cover of $\R^N$ by unit balls with centres on a lattice, so that at most $c_N(1+m)^{N-1}$ of them satisfy $\mathrm{dist}(\y_0,B_1(\mathbf z_j))\in[m,m+1)$. With $r:=p/2>1$ and $r':=r/(r-1)$, H\"older's inequality on each ball gives
\begin{align*}
    \Expec^{\mathbb Q}\big[\tilde h^2(s,\y_s)\,\big|\,\mathcal F_0\big]
    &\le\sum_j\|\tilde h(s,\cdot)\|^2_{L^p(B_1(\mathbf z_j))}\,\|p_s(\y_0,\cdot)\|_{L^{r'}(B_1(\mathbf z_j))}\\
    &\le c_\Psi^2\,\|h(s,\cdot)\|^2_{L^p_{\mathrm{uloc}}}\sum_j\|p_s(\y_0,\cdot)\|_{L^{r'}(B_1(\mathbf z_j))}.
\end{align*}
For the boundedly many (at most $c_N'$) balls with $\mathrm{dist}(\y_0,B_1(\mathbf z_j))<2$ we use the global bound
\begin{equation*}
    \|p_s(\y_0,\cdot)\|_{L^{r'}(\R^N)}\le C_Gs^{-N/2}\Big(\int e^{-r'c_G\|\y\|^2/s}\,d\y\Big)^{1/r'}=Cs^{-N/2+N/(2r')}=Cs^{-N/p};
\end{equation*}
for the others, on which $\|\y-\y_0\|\ge m:=\lfloor\mathrm{dist}(\y_0,B_1(\mathbf z_j))\rfloor\ge2$, we use
\begin{equation*}
    \|p_s(\y_0,\cdot)\|_{L^{r'}(B_1(\mathbf z_j))}\le C_G|B_1|^{1/r'}s^{-N/2}e^{-c_Gm^2/s}
    \le C\,e^{-c_Gm^2/(2T)}\,m^{-N}\sup_{u>0}u^{N/2}e^{-c_Gu/2},
\end{equation*}
and $\sum_{m\ge2}c_N(1+m)^{N-1}m^{-N}e^{-c_Gm^2/(2T)}<\infty$.
Hence, for $0<s\le T$,
\begin{equation}
    \Expec^{\mathbb Q}\big[\tilde h^2(s,\y_s)\,\big|\,\mathcal F_0\big]\le C\big(s^{-N/p}+1\big)\,\|h(s,\cdot)\|^2_{L^p_{\mathrm{uloc}}}.
    \label{eq:supp-occ-onetime}
\end{equation}

(c)~\emph{Integration in time.} Let $a:=q/2\in\big(\tfrac p{p-N},\infty\big]$ and $a'$ its conjugate exponent, so that $a'N/p<1$ precisely when $q>2p/(p-N)$ (for $q=\infty$, $a'=1$ and the condition is $N/p<1$). By H\"older's inequality in $s$ applied to \eqref{eq:supp-occ-onetime},
\begin{equation}
    \Expec^{\mathbb Q}\Big[\int_0^T\tilde h^2(s,\y_s)\,ds\;\Big|\;\mathcal F_0\Big]
    \le C\,\big\|s^{-N/p}+1\big\|_{L^{a'}(0,T)}\,\Big\|\,\|h(s,\cdot)\|^2_{L^p_{\mathrm{uloc}}}\Big\|_{L^{a}(0,T)}
    =C'\,\|h\|_{p,q}^2,
    \label{eq:supp-occ-time}
\end{equation}
the first factor being finite because $a'N/p<1$: this is the only place where the lower bound on $q$ enters.

(d)~\emph{Transfer.} By the Cauchy--Schwarz inequality under $\mathbb Q$, conditionally on $\mathcal F_0$,
\begin{align*}
    \Expec\Big[\int_0^Th\big(s,\x_\omega(s)\big)ds\;\Big|\;\mathcal F_0\Big]
    &=\Expec^{\mathbb Q}\Big[Z\int_0^T\tilde h(s,\y_s)\,ds\;\Big|\;\mathcal F_0\Big]\\
    &\le\big(\Expec^{\mathbb Q}[Z^2\,|\,\mathcal F_0]\big)^{1/2}\,T^{1/2}\Big(\Expec^{\mathbb Q}\Big[\int_0^T\tilde h^2(s,\y_s)\,ds\;\Big|\;\mathcal F_0\Big]\Big)^{1/2}\\
    &\le e^{\Lambda}\,T^{1/2}\,(C')^{1/2}\,\|h\|_{p,q},
\end{align*}
and taking expectations proves \eqref{eq:supp-occ} for $\omega\ge\omega_0$. For $1\le\omega<\omega_0$, Steps~5--6 apply verbatim to $\x_\omega$ itself, with $\Psi=\mathrm{id}$ and no transformation: the drift of \eqref{eq:supp-sde} is then bounded by $C_0\big(1+\sqrt{\alpha r_{\max}\omega_0}\big)$, and the resulting constant depends on $\omega_0$, which depends only on the admissible parameters. Taking the larger of the two constants gives \eqref{eq:supp-occ} for all $\omega\ge1$.
\end{proof}

The transformation yields a second $\omega$-uniform statement for free, which the two-scale argument of Appendix~\ref{supp:rough-lip} also needs (Remark~\ref{rem:occ-uniform}).

\begin{corollary}[$\omega$-uniform exponential tightness of the dithered process]\label{cor:tight}
Under (H1)--(H4), with $B$ as in \eqref{eq:supp-occ-B} and $\omega_0$ as in Step~3, for all $\omega\ge\omega_0$ and all $R>0$,
\begin{equation}
    \Prob\Big(\sup_{t\le T}\|\x_\omega(t)-\x_0\|\ge R+\sqrt N\,BT+2\sqrt N\,C_0\sqrt\alpha\Big)\;\le\;2N\exp\!\Big(-\frac{2R^2}{9NC_0^2T}\Big).
    \label{eq:supp-tight}
\end{equation}
\end{corollary}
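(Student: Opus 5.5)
The plan is to work entirely with the transformed process $\y(t)=\Psi(\x_\omega(t),t)$ of \eqref{eq:supp-occ-y}, whose drift is bounded by $B$ and whose diffusion matrix satisfies $\|\Sigma_\omega\|_{\mathrm{op}}\le\tfrac32C_0$, both uniformly in $\omega\ge\omega_0$ (Step~4 of the proof of Theorem~\ref{thm:occ}). The three summands in the threshold correspond to the three pieces of a triangle inequality:
\[
\|\x_\omega(t)-\x_0\|\le\|\y(t)-\y(0)\|+\|\mathbf v(\x_\omega(t),t)\|+\|\mathbf v(\x_0,0)\|.
\]
By \eqref{eq:supp-occ-v1} each $\mathbf v$ term is at most $\sqrt N\,C_0\sqrt{\alpha/\omega}\le\sqrt N\,C_0\sqrt\alpha$, since $\omega\ge1$. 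Together these give the additive $2\sqrt N\,C_0\sqrt\alpha$ deterministically.

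Next, write $\y(t)-\y(0)=\int_0^t\mathbf b\,ds+\mathbf M_t$ with $\mathbf M_t=\int_0^t\Sigma_\omega\,d\w$. The drift part satisfies $\|\int_0^t\mathbf b\,ds\|\le\sqrt N\,BT$ pathwise, since each coordinate is bounded by $B$. This accounts for the second summand. It then remains to show
\[
\Prob\Big(\sup_{t\le T}\|\mathbf M_t\|\ge R\Big)\le2N\exp\!\Big(-\frac{2R^2}{9NC_0^2T}\Big).
\]

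For the martingale bound I reduce to coordinates. If $\sup_t\|\mathbf M_t\|\ge R$, then $\sup_t|M_{n,t}|\ge R/\sqrt N$ for some $n$, which gives the union-bound factor $N$. Each $M_n$ is a continuous scalar martingale with $d\langle M_n\rangle_t=\sum_i(\Sigma_\omega)_{ni}^2\,dt\le\|\Sigma_\omega\|_{\mathrm{op}}^2\,dt\le\tfrac94C_0^2\,dt$, so $\langle M_n\rangle_T\le\tfrac94C_0^2T$. The exponential martingale (Bernstein-type) inequality for continuous martingales with bounded quadratic variation, applied to $\pm M_n$, gives
\[
\Prob\Big(\sup_{t\le T}|M_{n,t}|\ge a\Big)\le2\exp\!\Big(-\frac{a^2}{2\cdot\tfrac94C_0^2T}\Big).
\]
Substituting $a=R/\sqrt N$ yields exactly $2\exp(-2R^2/(9NC_0^2T))$ for each coordinate, and summing over $n$ gives \eqref{eq:supp-tight}.

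I do not expect a serious obstacle. The one point needing care is that the It\^o representation \eqref{eq:supp-occ-y} and the bounds on $\mathbf b$ and $\Sigma_\omega$ hold under $\Prob$, not merely under the Girsanov measure $\mathbb Q$; this is the case, since Step~4 is a pathwise It\^o--Krylov expansion under $\Prob$. The restriction $\omega\ge\omega_0$ is needed only for the bound $\|\nabla\mathbf v\|_{\mathrm{op}}\le\tfrac12$, which gives $\|\Sigma_\omega\|_{\mathrm{op}}\le\tfrac32C_0$.
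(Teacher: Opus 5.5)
Your proposal is correct and is essentially the paper's own proof. It uses the same decomposition $\x_\omega(t)-\x_0=\y(t)-\y_0+\mathbf v(\x_\omega(t),t)-\mathbf v(\x_0,0)$ with $\|\mathbf v\|\le\sqrt N\,C_0\sqrt\alpha$, the same pathwise drift bound $\sqrt N\,BT$, and the same per-coordinate exponential martingale inequality with $\langle M^n\rangle_T\le\tfrac94C_0^2T$ at level $R/\sqrt N$, closed by a union bound over $n$. Your two remarks also match how the paper uses these ingredients: the representation \eqref{eq:supp-occ-y} holds under $\Prob$ rather than only under $\mathbb Q$, and $\omega\ge\omega_0$ enters only through $\|\Sigma_\omega\|_{\mathrm{op}}\le\tfrac32C_0$.
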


\begin{proof}
From \eqref{eq:supp-occ-y}, $\y(t)-\y_0=\int_0^t\mathbf b\,ds+\mathbf M_t$ with $\|\int_0^t\mathbf b\,ds\|\le\sqrt N\,BT$ and $\mathbf M_t=\int_0^t\Sigma_\omega\,d\w$. The $n$th coordinate $M^n$ is a continuous martingale with $\langle M^n\rangle_t\le\|\Sigma_\omega\|_{\mathrm{op}}^2t\le\tfrac94C_0^2t$ by \eqref{eq:supp-occ-ellip}, so the exponential martingale inequality \citep[Ch.~3]{karatzas1991brownian} gives $\Prob(\sup_{t\le T}|M^n_t|\ge R/\sqrt N)\le2\exp\big(-2R^2/(9NC_0^2T)\big)$. Since $\x_\omega(t)-\x_0=\y(t)-\y_0+\mathbf v(\x_\omega(t),t)-\mathbf v(\x_0,0)$ and $\|\mathbf v\|\le\sqrt N\,C_0\sqrt{\alpha/\omega}\le\sqrt N\,C_0\sqrt\alpha$ by \eqref{eq:supp-occ-v1}, we have $\sup_{t\le T}\|\x_\omega(t)-\x_0\|\le\sqrt N\,BT+2\sqrt N\,C_0\sqrt\alpha+\sqrt N\max_n\sup_{t\le T}|M^n_t|$, and a union bound over $n$ gives \eqref{eq:supp-tight}.
\end{proof}

\begin{remark}[Uniformity in the two-scale argument]\label{rem:occ-uniform}
Step~(ii) of the proof sketch of Proposition~\ref{prop:lip}(b) needs $\Expec\int_0^Th_\varepsilon(s,\x_\omega(s))\,ds\to0$ along $\varepsilon=\varepsilon(\omega)\to0$, where $h_\varepsilon:=|\nabla J-\nabla J_\varepsilon|$ satisfies $0\le h_\varepsilon\le2L_J$ and $h_\varepsilon\to0$ a.e.\ in $(\x,t)$ (at every Lebesgue point of $\nabla J(\cdot,t)$, hence a.e.\ by Fubini). Pointwise convergence alone does not make $\|h_\varepsilon\|_{L^\infty_tL^p_{\mathrm{uloc}}}$ vanish---the supremum over all unit balls of $\R^N$ and over $t$ need not commute with the limit if the kinks of $J$ accumulate at infinity or move erratically in time---and this is where Theorem~\ref{thm:occ} and Corollary~\ref{cor:tight} are used together. Fix $R>0$ and $q\in(\tfrac{2p}{p-N},\infty)$, and split $h_\varepsilon=h_\varepsilon\mathbf 1_{B_R(\x_0)}+h_\varepsilon\mathbf 1_{B_R(\x_0)^c}$ (Theorem~\ref{thm:occ} was proved conditionally on $\mathcal F_0$, so an $\mathcal F_0$-measurable localizing ball is admissible). By \eqref{eq:supp-occ},
\begin{equation*}
    \Expec\int_0^T\big(h_\varepsilon\mathbf 1_{B_R(\x_0)}\big)\big(s,\x_\omega(s)\big)\,ds\;\le\;C\,\Expec\Big(\int_0^T\|h_\varepsilon(s,\cdot)\|^q_{L^p(B_{R+1}(\x_0))}\,ds\Big)^{1/q},
\end{equation*}
which tends to zero as $\varepsilon\to0$ by dominated convergence (the integrand is bounded by $(2L_J)^q|B_{R+1}|^{q/p}$ and vanishes a.e.), \emph{uniformly in $\omega\ge1$}, because $C$ does not depend on $\omega$. By \eqref{eq:supp-tight}, the complementary term is at most $2L_JT\,\Prob(\sup_{t\le T}\|\x_\omega(t)-\x_0\|\ge R)$, which tends to zero as $R\to\infty$ uniformly in $\omega\ge\omega_0$. Hence $\lim_{\omega\to\infty}\Expec\int_0^Th_{\varepsilon(\omega)}(s,\x_\omega(s))\,ds=0$ along any $\varepsilon(\omega)\to0$, in particular along $\varepsilon=\omega^{-1/8}$ as used in Appendix~\ref{supp:rough-lip}. The same argument applies to every drift-replacement term of the identification step~(2) of that sketch, whose integrands are bounded and vanish a.e.\ as $\varepsilon\to0$. No rate in $\varepsilon$ is available for merely Lipschitz $J$, and none is claimed: this is why the Lipschitz rung of Table~\ref{tab:ladder} comes with convergence but without a rate.
\end{remark}

\begin{remark}[What the transformation does and does not give]\label{rem:occ-scope}
(i)~\emph{Time regularity.} The Lipschitz-in-time hypothesis of (H3) is used exactly once, to bound $\varrho_n$ in \eqref{eq:supp-occ-vt}. If $J$ is only of bounded variation in $t$, as in the standing hypotheses of this appendix, then $\varrho_n\,dt$ becomes a signed measure of total variation $O(\omega^{-1/2}(1+|k|V))$, which is small but singular, and Girsanov's theorem cannot absorb it; mollifying $J$ in $t$ as well does not help, because the mismatch bound $\sqrt\omega\,\|J-J_\varepsilon\|_\infty=O(1)$ then requires a modulus of continuity of $J$ in $t$ that bounded variation does not provide. Conjecture~\ref{conj:occ} therefore remains open for merely bounded-variation time dependence, and only for that reason: the kinks of $J$ in $\x$ play no role in the obstruction. In every instantiation of Sec.~\ref{sec:inst}, and in the tracking energies, the time dependence of $J$ and of $\bsig$ is smooth.
(ii)~\emph{Bounded is not small.} At $\varepsilon=\omega^{-1/2}$ the mismatch term and the It\^o-correction term of $\mathbf b$ are of order one, not $o(1)$; the transformed process is therefore \emph{not} close to the averaged one, and Theorem~\ref{thm:occ} does not by itself prove Proposition~\ref{prop:lip}(b). Making those terms small forces $\varepsilon\gg\omega^{-1/2}$, and the mismatch then has to be integrated by parts as a zero-mean oscillation, which is the two-stream architecture of Appendices~\ref{supp:rough-c11}--\ref{supp:rough-lip}. The transformation supplies the occupation estimate that architecture consumes, on a scale of its own, and nothing more.
(iii)~\emph{The Krylov--R\"ockner class.} The same computation shows why the transformation does not reach \eqref{eq:LPS}. If $\nabla J$ is merely in $L^p_{\mathrm{uloc}}$, $p>N$, then $\|\nabla J_\varepsilon\|_\infty\lesssim\varepsilon^{-N/p}$, $\|\nabla^2J_\varepsilon\|_\infty\lesssim\varepsilon^{-1-N/p}$, and by Morrey $\|J-J_\varepsilon\|_\infty\lesssim\varepsilon^{1-N/p}$. The It\^o-correction term of $\mathbf b$ is then bounded only if $\varepsilon\gtrsim\omega^{-p/(2(p+N))}$, while the mismatch term $\sqrt\omega\,\|J-J_\varepsilon\|_\infty$ is bounded only if $\varepsilon\lesssim\omega^{-p/(2(p-N))}$; since $p/(2(p-N))>p/(2(p+N))$, these windows are disjoint for every $N\ge1$. Two further obstacles are structural rather than technical: \eqref{eq:LPS} constrains $\nabla J$ only, whereas the algorithm evaluates $J$ itself, whose time dependence---an additive $c(t)$, invisible to $\nabla J$---modulates the dither phase directly; and the averaged drift is extracted by differentiating the corrector twice, which requires $\nabla^2J$ in some function class, whereas \eqref{eq:LPS} is a condition on the limit drift alone. This is the precise sense in which \eqref{eq:LPS} marks the boundary of provability for the \emph{limit} (Remark~\ref{rem:boundary}) and not, at present, for the averaging.
\end{remark}

\subsection{How rough can $J$ be?}
\label{supp:rough-boundary}

The hierarchy above is not bottomless, and its floor has a clean description.

\begin{remark}[The boundary of provability]\label{rem:boundary}
The differentiability of $J$ is not the binding constraint; the constraint is that the limiting drift $-\tfrac{k\alpha}2\bsig\bsig^{\!\top}\nabla J$ be an admissible drift for the non-degenerate diffusion. By Krylov--R\"ockner \citep{krylov2005strong} the averaged SDE is strongly well posed whenever the weak gradient satisfies the Ladyzhenskaya--Prodi--Serrin-type condition \eqref{eq:LPS}, i.e.\ $\nabla J\in L^q([0,T];L^p(\R^N))$ with $\frac Np+\frac2q<1$. Condition \eqref{eq:LPS} permits $\nabla J$ to be unbounded, defined only a.e., and merely $p$-integrable in space with $q$-integrability in time, so long as $\nabla J$ remains an $L^p$ function; it is the most general regularity under which the limit is provably well posed by classical means. (Note that \eqref{eq:LPS} forces $p>N$, so Morrey's embedding keeps $J(\cdot,t)$ itself continuous in $\x$ at a.e.\ fixed time---what \eqref{eq:LPS} adds over the Lipschitz rung is genuine local blow-up of the gradient, together with irregularity in time; spatial jumps in $J$ belong to the frontier below.) We expect the averaging convergence to extend throughout \eqref{eq:LPS}: the mollified-corrector proof of Theorem~\ref{thm:sobolev} extends once the corrector lies in the matching parabolic Sobolev class and the occupation estimates of Appendix~\ref{supp:rough-noise} are available at the corresponding integrability, and the critical case $\frac Np+\frac2q=1$ is treated in \citep{krylov2021drift}. We record in Remark~\ref{rem:occ-scope}(iii) why the pathwise transformation of Appendix~\ref{supp:rough-occ}, which settles the Lipschitz rung, does not by itself reach \eqref{eq:LPS}: with $\nabla J$ merely in $L^p$, the scale at which the transformation is a diffeomorphism with bounded second derivatives is incompatible with the scale at which the dither mismatch is bounded, and \eqref{eq:LPS} constrains only the limit drift, not the phase $J$ that the algorithm actually dithers. The genuine outer boundary is reached when $\nabla J$ leaves the function-valued world entirely---a true jump in $J$, whose gradient is a surface measure, i.e.\ $\nabla J\in C^{-\beta}$ in a Besov sense. There the limiting SDE has distributional drift, well posed only in low dimension or under Besov restrictions and resting on regularization-by-noise arguments \citep{flandoli2010wellposedness}. In one line: we can prove convergence with rates up to $W^{2,p}$ energies and without a rate for Lipschitz energies (through the occupation estimate of Theorem~\ref{thm:occ}, under Lipschitz time dependence), expect it up to the Krylov--R\"ockner class \eqref{eq:LPS}, and reach the open frontier at distributional gradients---and in every case it is the non-degenerate sampling noise, not any smoothness of $J$, that carries the argument.
\end{remark}

\section{Choosing Bounded ES Parameters}
\label{supp:ES_params}

\textbf{Frequency ratios $\{r_i\}$.} Any pairwise-distinct positive constants satisfy Assumption~\ref{ass:freq}; we use $r_i$ evenly spaced (or i.i.d.\ uniform) in $[1,N]$ with modest $N\in[1.75,2]$, fixed once. At finite $\omega$ the cross-coordinate error terms decay like $(\omega\,|r_i-r_j|)^{-1/2}$ (Appendix~\ref{supp:proof}), so the minimal gap $\min_{i\ne j}|r_i-r_j| \approx \tfrac{3}{4(d-1)}$ ties the required base frequency to dimension: \emph{$\omega^*$ grows at least linearly in $d$ at fixed accuracy}. Cross-coordinate beat terms average out once the minimum gap frequency $(N{-}1)\omega/(d{-}1)$ (band $[\omega,N\omega]$) exceeds a constant of order ten, giving $\omega^{\ast}(d)\approx O(10)(d-1)/(N-1)$ with the constant empirically $5$--$15$; a fixed sweep window returns the $\omega^{-1/2}$ law while its top satisfies this criterion, degrades toward slope $0$ once it does not, and recovers when the window is raised --- verified up to $d=1024$ (Appendix~\ref{supp:num-threshold}, Table~\ref{tab:supp-windows}). Appendix~\ref{supp:dimension} proves both directions of this law: the threshold is at least linear in $d$ (Proposition~\ref{prop:dim-lower}) and at most polynomial (Proposition~\ref{prop:dim-upper}), and the number of evaluations needed to track the averaged dynamics grows at least linearly in $d$ for every band and horizon (Theorem~\ref{thm:dim-nfe}).

\textbf{Step size $\Delta t$ and evaluation count.} Euler--Maruyama must resolve the fastest dither: we use $\omega r_{\max}\,\Delta t \le 2\pi/10$, i.e., at least $10$ steps per fastest dither period --- a threshold validated directly: measured rates are invariant from $10$ to $100$ steps per period (a $10\times$ range in $\Delta t$) and deviate only at $5$ (Appendix~\ref{supp:num-rate}). Since one step costs exactly one evaluation of $J$ (line 2), the number of function evaluations (NFE) per unit diffusion time is $1/\Delta t \approx 10\,\omega r_{\max}/2\pi$: \emph{NFE scales linearly with $\omega$, while the averaging error decays like $\omega^{-1/2}$ at best}. By contrast, a two-point Gaussian-smoothing estimate \citep{nesterov2017random,roy2022zeroth} costs $2$ evaluations per Langevin step and a coordinate-wise finite difference costs $d+1$; ES costs $1$ but takes more, smaller steps, and measurement noise on $J$ enters only through the dither phase, where it is averaged over the demodulation window, rather than being amplified by a $1/\nu$ divided difference (this claim is quantified --- including a validated noise-floor law and its breakdown point --- in Appendix~\ref{supp:num-optim}). The two measured laws compose: the physical settling time is independent of $\omega$ and equals the averaged system's own (Appendix~\ref{supp:num-conv}), so evaluations-to-converge grow $\propto\omega$ while accuracy improves as $\omega^{-1/2}$ --- accuracy $\sim$ NFE$^{-1/2}$, measured.

\textbf{Gains.} $k\alpha$ sets the strength of the emergent drift (Eq.~\ref{eq:averaged-sde}); the instantiations of Sec.~\ref{sec:inst} fix it to match the target dynamics. Because $k\alpha$ thereby also sets the temperature of the averaged law ($\rho_\infty\propto e^{-k\alpha\E_\theta}$ in (I$_\lambda$)), any dimension-dependent reduction of $k$ adopted for stability at finite $\omega$---as in the mode-seeking benchmark of Appendix~\ref{supp:num-optim}, where the temperature is immaterial---must be compensated by $\alpha$, or by rescaling $J$, whenever the goal is sampling from a fixed target. Excessive $k$ at finite $\omega$ destabilizes the fast dynamics---Fig.~\ref{fig:Failure}(bottom) shows exactly this failure mode---so we sweep $k$ around the nominal value in every experiment.

\begin{figure}[t]
\centering
\includegraphics[width=1.0\textwidth]{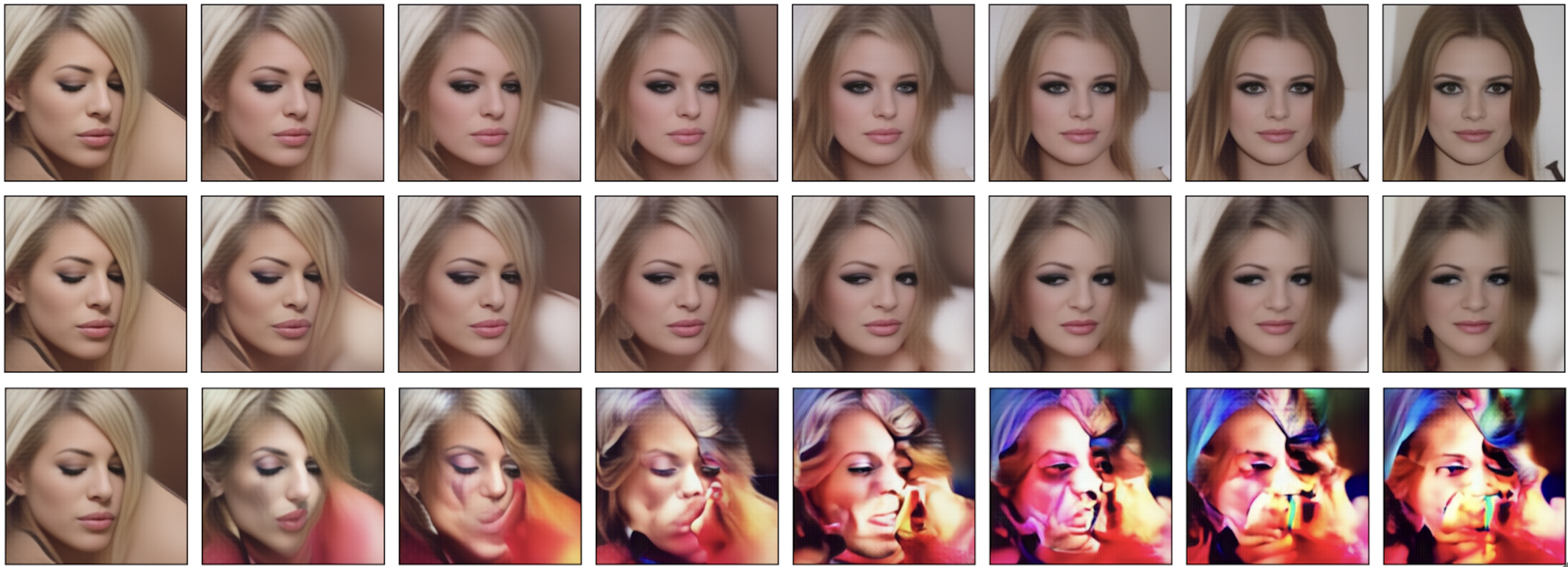}
\caption{Latent-space tracking from 1D RGB projections of time-varying images. The top row shows the true images, the second row shows decent tracking, and the bottom row shows catastrophic instability and failure when the ES gain $k\alpha>0$ is extremely large.}
\label{fig:Failure}
\end{figure}

\section{Detailed numerical studies on the synthetic testbed}
\label{supp:numerics}

This appendix documents, in full, the quantitative studies of the direct verification of Theorem~\ref{thm:main} and its $\omega^{-1/2}$ rate across dimension, noise level, and time discretization (\ref{supp:num-rate}); the finite-$\omega$ transition, its dimension scaling, and the robustness checks behind it (\ref{supp:num-threshold}); the convergence-speed and cost measurements (\ref{supp:num-conv}); the matched-budget comparison against gradient-based and derivative-free samplers (\ref{supp:num-baselines}); a fixed-target benchmark of the mode-seeking configuration against derivative-free
\emph{optimizers} under measurement noise, with a validated noise-floor law
and a readout (iterate-averaging) study (\ref{supp:num-optim}).

\subsection{Testbed, conventions, and the coupled protocol}
\label{supp:num-testbed}

\paragraph{Target.} All synthetic studies use a four-mode Gaussian mixture in
$\R^d$ with modes placed symmetrically in the first two coordinates and
isotropic unit covariance:
\begin{equation}
p(\x)\;=\;\frac14\sum_{m=1}^{4}\mathcal N\!\big(\x;\,\bmu_m,\,I_d\big),
\qquad
\bmu_m\;=\;\big(\pm s,\ \pm s,\ 0,\ \dots,\ 0\big)\in\R^d,
\qquad s=3,
\label{eq:supp-gmm}
\end{equation}
with energy $\E(\x)=-\log p(\x)$. The four modes are well separated
($\|\bmu_m-\bmu_{m'}\|\in\{6,6\sqrt2\}$ against unit component width) in the
first two coordinates, while the remaining $d-2$ coordinates are a product of
standard Gaussians; $\log p$ and $\nabla\log p$ are available in closed form
(via a log-sum-exp and its responsibilities), which supplies exact-gradient
oracles and exact target samples for the metrics below. The sampler, of
course, only ever evaluates the scalar $\E$.

\paragraph{ES configuration.} The studies run the discretization of
\eqref{eq:es-sde} in the convention
\begin{equation}
dx_i\;=\;g\sqrt{\alpha\omega_i}\,\cos\!\big(\omega_i t+k\,J(\x)\big)\,dt
\;+\;g\,\sigma\,dw_i,
\qquad g=1,\ k=\alpha=1,\ \sigma\in\{0,\,0.1,\,1\},
\label{eq:supp-conv}
\end{equation}
with $\omega_i=\omega r_i$, ratios $r_i$ evenly spaced in $[1,N]$ (fixed
once), Euler step $\Delta t=2\pi/(cN\omega)$ ($c$ steps per fastest dither
period), dispersed initialization $\x_0\sim\mathcal N(0,\,2^2 I_d)$, $256$
chains, and a fixed physical horizon $T=4$ (so the step count
$\lceil T/\Delta t\rceil$ grows linearly in $\omega$). In this convention the
averaged dynamics are
$d\bar\x=\tfrac{k\alpha}{2}g^2\nabla\log p\,dt+g\sigma\,d\w$, whose
stationary law is $\propto p^{\,k\alpha/\sigma^2}$ (the zero-flux condition of the stationary Fokker--Planck equation gives $\nabla\log\rho_\infty=2\bar\bmu/(g\sigma)^2=(k\alpha/\sigma^2)\nabla\log p$, in which $g^2$ cancels): the $\sigma=1$ rows
therefore sample $p$ exactly and constitute the sampling setting, while
$\sigma\in\{0,0.1\}$ probe the deterministic-to-stochastic range of the
trajectory statement of Theorem~\ref{thm:main}, which holds for any
$\sigma\ge0$.

\paragraph{Coupled protocol.} Following the coupling in
Theorem~\ref{thm:main}, each ES run is paired with an Euler simulation of its
averaged SDE using the \emph{same} initial condition and the \emph{same}
per-step Gaussian increments $\boldsymbol{\xi}_j$:
\begin{align}
x_{j+1,i}&=x_{j,i}+g\sqrt{\alpha\,\omega_i}\,
\cos\!\big(\omega_i t_j+k\,J(\x_j)\big)\,\Delta t
+g\sigma\sqrt{\Delta t}\,\xi_{j,i},\\
\bar\x_{j+1}&=\bar\x_j+\tfrac{k\alpha}{2}g^2\,
\nabla\log p(\bar\x_j)\,\Delta t
+g\sigma\sqrt{\Delta t}\,\boldsymbol{\xi}_j .
\end{align}
The primary metric is the per-chain pathwise supremum
$\sup_{j\le T/\Delta t}\|\x_j-\bar\x_j\|_2$, averaged over the $256$ chains;
for each seed we fit the slope of $\log(\text{metric})$ against $\log\omega$
by least squares over the sweep window, and report the mean $\pm$ std of the
per-seed slopes. (Time-averaged normalized errors --- NMSE and NMAE --- were
also recorded; the supremum is the theorem-faithful quantity, and since NMSE
is a squared error its fitted slopes are approximately twice those quoted
here.)

\subsection{Rate in $\omega$: dimension, noise level, and discretization}
\label{supp:num-rate}

Tables~\ref{tab:supp-rate-s0}--\ref{tab:supp-rate-s1} report the fitted
slopes over the window $\omega\in\{10,\,31.6,\,100,\,316,\,1000\}$ with band
$N=1.75$, for $d\in\{2,\dots,32\}$, noise $\sigma\in\{0,0.1,1\}$, and
steps-per-period $c\in\{5,10,20,100\}$.

\begin{table}[h]
\centering\small
\caption{Fitted log--log slope of the coupled sup-error vs.\ $\omega$;
$\sigma=0$ (deterministic ES vs.\ gradient flow), window
$\omega\in[10,10^3]$, $N=1.75$. Mean$\pm$std over 5 seeds.}
\label{tab:supp-rate-s0}
\begin{tabular}{@{}rcccc@{}}
\toprule
$d$ & $c=5$ & $c=10$ & $c=20$ & $c=100$\\
\midrule
2  & $-0.410\pm0.026$ & $-0.489\pm0.048$ & $-0.487\pm0.057$ & $-0.489\pm0.056$\\
4  & $-0.494\pm0.035$ & $-0.587\pm0.042$ & $-0.594\pm0.043$ & $-0.596\pm0.038$\\
8  & $-0.574\pm0.020$ & $-0.655\pm0.014$ & $-0.650\pm0.014$ & $-0.655\pm0.015$\\
16 & $-0.652\pm0.005$ & $-0.705\pm0.013$ & $-0.703\pm0.018$ & $-0.701\pm0.018$\\
32 & $-0.641\pm0.006$ & $-0.645\pm0.005$ & $-0.649\pm0.006$ & $-0.652\pm0.006$\\
\bottomrule
\end{tabular}
\end{table}

\begin{table}[h]
\centering\small
\caption{As Table~\ref{tab:supp-rate-s0}, for $\sigma=0.1$.}
\label{tab:supp-rate-s01}
\begin{tabular}{@{}rcccc@{}}
\toprule
$d$ & $c=5$ & $c=10$ & $c=20$ & $c=100$\\
\midrule
2  & $-0.395\pm0.037$ & $-0.499\pm0.050$ & $-0.504\pm0.055$ & $-0.448\pm0.058$\\
4  & $-0.465\pm0.023$ & $-0.550\pm0.050$ & $-0.552\pm0.041$ & $-0.571\pm0.023$\\
8  & $-0.578\pm0.025$ & $-0.653\pm0.016$ & $-0.661\pm0.021$ & $-0.657\pm0.024$\\
16 & $-0.646\pm0.014$ & $-0.705\pm0.019$ & $-0.709\pm0.014$ & $-0.720\pm0.017$\\
32 & $-0.635\pm0.008$ & $-0.642\pm0.007$ & $-0.645\pm0.007$ & $-0.646\pm0.006$\\
\bottomrule
\end{tabular}
\end{table}

\begin{table}[h]
\centering\small
\caption{As Table~\ref{tab:supp-rate-s0}, for $\sigma=1$ (the sampling
setting: the averaged dynamics target $p$ exactly).}
\label{tab:supp-rate-s1}
\begin{tabular}{@{}rcccc@{}}
\toprule
$d$ & $c=5$ & $c=10$ & $c=20$ & $c=100$\\
\midrule
2  & $-0.387\pm0.031$ & $-0.459\pm0.021$ & $-0.466\pm0.009$ & $-0.452\pm0.011$\\
4  & $-0.448\pm0.018$ & $-0.500\pm0.013$ & $-0.493\pm0.010$ & $-0.474\pm0.016$\\
8  & $-0.522\pm0.011$ & $-0.526\pm0.012$ & $-0.514\pm0.014$ & $-0.527\pm0.009$\\
16 & $-0.558\pm0.006$ & $-0.550\pm0.007$ & $-0.532\pm0.010$ & $-0.539\pm0.008$\\
32 & $-0.544\pm0.003$ & $-0.530\pm0.004$ & $-0.509\pm0.009$ & $-0.511\pm0.007$\\
\bottomrule
\end{tabular}
\end{table}

Three conclusions. \textbf{(a) Discretization.} The slopes are invariant, at
every $(d,\sigma)$, across $c\in\{10,20,100\}$ --- a $10\times$ range in
$\Delta t$ --- while $c=5$ deviates systematically toward shallower values:
the resolved-discretization regime is $c\ge10$, and the deviation of $c=5$
identifies the threshold rather than any property of the continuous system.
This validates the default $c=20$ as conservative and $c=10$ as the
cost-optimal resolved choice (halving evaluations per unit time).
\textbf{(b) The sampling setting.} At $\sigma=1$ the fitted rate is
$-0.50\pm0.04$ across the entire range $d\in[2,32]$ with no systematic
dimension dependence: $\omega^{-1/2}$, dimension-independent, as Theorem~\ref{thm:main} requires. \textbf{(c) Low noise.} At $\sigma\in\{0,0.1\}$, mid-dimensional fits steepen to $\approx-0.7$, i.e., the error decays \emph{faster} than the one-sided $\omega^{-1/2}$ bound over the measured window. This excess is invariant under the $10\times$ step refinement above, under randomization of the dither-frequency grid (\ref{supp:num-threshold}), and across the noise levels probed; we attribute it to higher-order error terms of the expansion (the $O(\omega^{-1})$-type contributions in the proof) whose constants grow with $d$, and note that decay faster than the bound is fully consistent with
Theorem~\ref{thm:main}, which is one-sided. The local-slope analysis below
resolves this excess into a transition band that moves with $d$.

\subsection{The finite-$\omega$ transition and its dimension scaling}
\label{supp:num-threshold}

\paragraph{Two clocks.} The iteration \eqref{eq:supp-conv} with
$\Delta t=2\pi/(cN\omega)$ looks self-similar in $\omega$: the fastest dither
always takes $c$ steps per oscillation and the slowest $cN$. The
substitution $\tau=\omega t$ (with the Brownian scaling
$dw(t)=\omega^{-1/2}dW(\tau)$) makes this precise and shows what does change:
\begin{equation}
dx_i \;=\; g\sqrt{\tfrac{\alpha r_i}{\omega}}\,
\cos\!\big(r_i\tau+k\E\big)\,d\tau
\;+\;g\,\sigma\,\omega^{-1/2}\,dW_i(\tau).
\label{eq:supp-fastclock}
\end{equation}
On this fast clock the dither frequencies are the fixed ratios $r_i$ and the
Euler grid $\Delta\tau=2\pi/(cN)$ is $\omega$-independent, but every term
that moves the state carries $\omega^{-1/2}$: per dither period the state
moves only $O(\omega^{-1/2})$, so the phase $k\E(\x)$ is frozen to that order
over each oscillation (the working condition of averaging, and the origin of
the $\omega^{-1/2}$ rate), while each step covers only $2\pi/(cN\omega)$ of
slow time (the origin of the linear-in-$\omega$ cost). $\omega$ is thus the
exchange rate between the clocks: the number of dither oscillations per unit
of the averaged dynamics' own timescale.

\paragraph{Beats and the gap criterion.} Expanding the coordinate ripples
(cf.\ the warm-up of \ref{supp:warmup}) into the phase, the drift of
coordinate $n$ contains products of its dither with every coordinate's
ripple; by the product formulas these oscillate at the sum frequencies
$(r_n{+}r_i)\omega$ (always fast) and the \emph{beat} frequencies
$\Delta_{ni}=\omega|r_n-r_i|$, of which the smallest, for evenly spaced
ratios, is
\begin{equation}
\Delta_{\min}\;=\;\frac{(N-1)\,\omega}{d-1}.
\label{eq:supp-gapfreq}
\end{equation}
The resonant term $i=n$ (zero beat frequency) is the emergent gradient; every
other term is an oscillatory integral
$\int_0^T A(t)\cos(\Delta t)\,dt$ with slowly varying envelope, bounded after
one integration by parts by $(2\|A\|_\infty+T\|\dot A\|_\infty)/\Delta$: it
is negligible precisely when the beat completes many cycles per unit of the
slow timescale. Empirically ``many'' is a factor of order ten, giving the
finite-$\omega$ threshold
\begin{equation}
\Delta_{\min}\;\gtrsim\;O(10)
\qquad\Longleftrightarrow\qquad
\omega^{\ast}(d)\;\approx\;\frac{O(10)\,(d-1)}{N-1},
\label{eq:supp-threshold}
\end{equation}
with the constant measured below to be $5$--$15$. This converts the
existence statement of Theorem~\ref{thm:main} into a design formula: linear
in $d$, controlled by the band spread.

\paragraph{Local slopes locate the transition.} Table~\ref{tab:supp-local}
reports segment-by-segment (local) slopes between consecutive $\omega$'s of
the sweep, per dimension and noise level. At $\sigma\in\{0,0.1\}$ a steep
band (local slopes $-0.8$ to $-1.0$) is visible whose location moves to
higher $\omega$ with $d$: the steepest segments sit at geometric segment
centers $\approx56,\ 178,\ 562$ for $d=8,16,32$ --- in order-of-magnitude
agreement with \eqref{eq:supp-threshold}
($13.3\,(d-1)=93,\,200,\,413$ at $N=1.75$), growing slightly faster than
linearly, unsurprising since the \emph{number} of near-minimal gaps also
grows with $d$. On either side of the band the local rate is near $-1/2$; at
$\sigma=1$ the band is smeared by the diffusion and only a weak bump remains.
Fitted whole-window slopes steeper than $-1/2$
(Tables~\ref{tab:supp-rate-s0}--\ref{tab:supp-rate-s01}) are chords through
this band.

\begin{table}[h]
\centering\small
\setlength{\tabcolsep}{4.5pt}
\caption{Local (segment) slopes of the coupled sup-error between consecutive
sweep frequencies $\omega\in\{10,31.6,100,316,1000\}$ (segment centers
$\approx17.8,\ 56,\ 178,\ 562$), and the fit over the top three points.
Mean over 5 seeds; $N=1.75$.}
\label{tab:supp-local}
\begin{tabular}{@{}rccccc@{}}
\toprule
& \multicolumn{4}{c}{local slope per segment} & \\
\cmidrule(lr){2-5}
$d$ & $10{\to}31.6$ & $31.6{\to}100$ & $100{\to}316$ & $316{\to}1000$ & top-3 fit\\
\midrule
\multicolumn{6}{@{}l}{$\sigma=0$}\\
2  & $-0.47$ & $-0.51$ & $-0.50$ & $-0.40$ & $-0.45$\\
4  & $-0.63$ & $-0.64$ & $-0.51$ & $-0.59$ & $-0.55$\\
8  & $-0.53$ & $-0.77$ & $-0.68$ & $-0.54$ & $-0.61$\\
16 & $-0.43$ & $-0.67$ & $-0.89$ & $-0.74$ & $-0.82$\\
32 & $-0.27$ & $-0.53$ & $-0.80$ & $-0.99$ & $-0.90$\\
\midrule
\multicolumn{6}{@{}l}{$\sigma=0.1$}\\
2  & $-0.45$ & $-0.47$ & $-0.51$ & $-0.56$ & $-0.53$\\
4  & $-0.65$ & $-0.56$ & $-0.57$ & $-0.39$ & $-0.48$\\
8  & $-0.54$ & $-0.77$ & $-0.69$ & $-0.57$ & $-0.63$\\
16 & $-0.43$ & $-0.66$ & $-0.89$ & $-0.79$ & $-0.84$\\
32 & $-0.26$ & $-0.53$ & $-0.79$ & $-0.98$ & $-0.88$\\
\midrule
\multicolumn{6}{@{}l}{$\sigma=1$}\\
2  & $-0.43$ & $-0.52$ & $-0.47$ & $-0.42$ & $-0.44$\\
4  & $-0.46$ & $-0.54$ & $-0.48$ & $-0.47$ & $-0.48$\\
8  & $-0.46$ & $-0.56$ & $-0.51$ & $-0.49$ & $-0.50$\\
16 & $-0.36$ & $-0.63$ & $-0.57$ & $-0.50$ & $-0.53$\\
32 & $-0.16$ & $-0.57$ & $-0.64$ & $-0.57$ & $-0.60$\\
\bottomrule
\end{tabular}
\end{table}

\paragraph{High-dimensional falloff and verification by window shifting.}
Extending the sweep to $d=1024$ on a fixed window makes the threshold's
motion visible from the other side: once even the window top is
sub-threshold, the fitted chord returns the pre-asymptotic plateau and slides
toward $0$. Table~\ref{tab:supp-windows} reports the fitted slopes for two
windows --- A: $\omega\in[10,10^3]$, $N=1.75$; B: $\omega\in[10,10^4]$,
$N=2$, so that the top-of-window gap frequency
$\Delta_{\min}(\omega_{\mathrm{hi}})$ grows by a factor $13.3$ --- at $c=10$
throughout. If \eqref{eq:supp-threshold} is right, raising the window top
must restore the slope at fixed $d$; it does, monotonically, at every $d$.

\begin{table}[h]
\centering\small
\setlength{\tabcolsep}{4pt}
\caption{Whole-window slopes vs.\ dimension for the two sweep windows, with
the top-of-window minimum gap frequency
$\Delta_{\min}(\omega_{\mathrm{hi}})=(N{-}1)\,\omega_{\mathrm{hi}}/(d{-}1)$.
Window A: $\omega_{\mathrm{hi}}=10^3$, $N=1.75$ ($\sigma=1$ shown; the
$d\le32$ block coincides with the $c=10$ column of
Table~\ref{tab:supp-rate-s1}). Window B: $\omega_{\mathrm{hi}}=10^4$, $N=2$,
all three noise levels. Mean$\pm$std over 5 seeds.}
\label{tab:supp-windows}
\begin{tabular}{@{}rcc|cccc@{}}
\toprule
& \multicolumn{2}{c|}{window A ($\sigma{=}1$)} &
\multicolumn{4}{c}{window B}\\
$d$ & $\Delta_{\min}$ & slope & $\Delta_{\min}$ &
slope, $\sigma{=}0$ & slope, $\sigma{=}0.1$ & slope, $\sigma{=}1$\\
\midrule
2    & 750  & $-0.459\pm0.021$ & 10000 & $-0.469\pm0.023$ & $-0.440\pm0.031$ & $-0.431\pm0.005$\\
4    & 250  & $-0.500\pm0.013$ & 3333  & $-0.529\pm0.010$ & $-0.497\pm0.016$ & $-0.467\pm0.019$\\
8    & 107  & $-0.526\pm0.012$ & 1429  & $-0.558\pm0.008$ & $-0.557\pm0.026$ & $-0.498\pm0.009$\\
16   & 50   & $-0.550\pm0.007$ & 667   & $-0.615\pm0.022$ & $-0.642\pm0.004$ & $-0.520\pm0.012$\\
32   & 24   & $-0.530\pm0.004$ & 323   & $-0.673\pm0.013$ & $-0.662\pm0.016$ & $-0.523\pm0.006$\\
64   & 11.9 & $-0.465\pm0.003$ & 159   & $-0.671\pm0.009$ & $-0.673\pm0.005$ & $-0.508\pm0.005$\\
128  & 5.9  & $-0.365\pm0.002$ & 79    & $-0.594\pm0.003$ & $-0.598\pm0.002$ & $-0.464\pm0.004$\\
256  & 2.9  & $-0.241\pm0.001$ & 39    & $-0.495\pm0.003$ & $-0.500\pm0.002$ & $-0.408\pm0.001$\\
512  & 1.5  & $-0.122\pm0.001$ & 19.6  & $-0.392\pm0.001$ & $-0.391\pm0.001$ & $-0.337\pm0.001$\\
1024 & 0.73 & $-0.038\pm0.000$ & 9.8   & $-0.285\pm0.000$ & $-0.289\pm0.001$ & $-0.254\pm0.000$\\
\bottomrule
\end{tabular}
\end{table}

The slope tracks $\Delta_{\min}$, not $d$ per se: wherever the window top
satisfies the criterion \eqref{eq:supp-threshold} the fitted rate sits near
$-1/2$ (or in the overshoot band at low $\sigma$); the largest dimensions
recover monotonically as the window climbs past $\omega^{\ast}(d)$, their
chords still averaging over the sub-threshold lower window (hence the partial
values at $d\ge512$). At $d=1024$ the top of window A gives
$\Delta_{\min}=0.73<1$: adjacent dithers complete less than one relative beat
per unit slow time, no averaging of the closest pairs is possible, and the
measured slope is correspondingly near zero. Full recovery at $d\ge512$ is
predicted at $\omega\gtrsim 10\,(d-1)/(N-1)$, i.e., a $[10^3,10^5]$ window.

\paragraph{Why widening the band does not help.} Resolving the fastest
dither forces $\Delta t=2\pi/(cN\omega)$, so the evaluation cost per unit
slow time is $cN\omega/2\pi=c\,\omega_{\max}/2\pi$ --- set by the \emph{top}
of the band $\omega_{\max}=N\omega$ regardless of how it is split. At fixed
$\omega_{\max}$, widening $N$ buys relative minimum gap $(N{-}1)/N$, which
saturates immediately (already $0.67$ at $N=3$), while the slowest
coordinates' own averaging error remains at
$(\omega_{\max}/N)^{-1/2}$ and the cross-term prefactors
$\sqrt{\omega_i/\omega_n}\le\sqrt N$ grow. We therefore fix a modest band
($N\in[1.75,2]$) and scale the base frequency; the dimension cost is then
intrinsic: reaching the asymptotic regime at dimension $d$ requires
$\omega_{\max}=\Omega(d)$ and hence $\Omega(d)$ evaluations per unit slow
time --- the quantitative basis for the latent-space scoping of
Sec.~\ref{sec:practice}.

\paragraph{Robustness checks.} \emph{(i) Frequency-grid randomization.} The
evenly spaced grid is an arithmetic progression and therefore carries exact
higher-order sum resonances ($r_i+r_k=2r_j$ for every consecutive triple).
Jittering each ratio by up to $20\%$ of the grid spacing (destroying the
exact resonances while keeping the gap scale; minimum gap $0.035$ vs.\
$0.050$) reproduces the linspace errors to within $2\%$ at every $\omega$
and the local slopes to $\pm0.02$ (checked at $d=16$, $\sigma=0$, $c=20$;
$\omega\in\{31.6,100,316,1000\}$: errors
$3.62/1.68/0.616/0.245$ vs.\ $3.60/1.71/0.621/0.251$, local slopes
$-0.67/-0.87/-0.80$ vs.\ $-0.65/-0.88/-0.79$; $2$ seeds, $64$ chains).
Grid design is thus not the origin of the low-noise steepening, and the
practical recommendation ``evenly spaced or i.i.d.\ ratios, fixed once''
carries a data point. \emph{(ii) Floating point.} The largest dither phase in these sweeps is $\omega_{\max}T\le 2\times10^5$ radians, resolved by double precision to $\sim10^{-11}$ radians, and the measured differences $\x-\bar\x$ are $O(10^{-1})$ against states $O(1)$: no cancellation is involved. Together
with the $10\times$ step-refinement invariance of \ref{supp:num-rate}, the
transition structure documented here is a property of the continuous-time
coupled system, not of its implementation.

\subsection{Convergence speed and the cost law}
\label{supp:num-conv}

\paragraph{Settling-time detector.} Convergence speed is measured as the
\emph{last-exit settling time} of the chain-mean energy
$m(t)=\frac{1}{256}\sum_{\text{chains}}J(\x(t))$, the standard settling
convention of control engineering: $m$ is smoothed over
$\min(4c,\ \text{series length}/10)$ steps ($\approx$ two slowest dither
periods) to remove the ripple; the plateau $m_\infty$ and its noise
$s_\infty$ are taken from the final $25\%$ of the run; the band is
$\max\!\big(0.05\,|m(0)-m_\infty|,\ 4\,s_\infty\big)$; and
$t_{\rm conv}$ is the time after which $m$ never again leaves the band. Two
guard flags accompany every measurement: \emph{settled} is false when the
last exit lands in the final $10\%$ of the horizon (plateau not reached), and
\emph{transient} is false when $|m(0)-m_\infty|<3s_\infty$, i.e., there was
no transient to time --- the signature of the sub-$\omega^{\ast}$ regime,
where the effective drift is too weak to move the ensemble; such runs are
excluded from slope fits. The same detector is applied to the coupled
averaged twin, whose settling time is the reference. Horizon $T=4$, $c=20$,
$N=1.75$, all $(\sigma,\omega,d)$ of the main sweep; two clocks are
reported: physical time $t_{\rm conv}$ and steps-to-settle (equal to the
per-chain evaluation count).

\begin{table}[h]
\centering\small
\caption{Convergence-speed summary. Fitted log--log slopes of
$t_{\rm conv}$ and of the evaluations-to-settle vs.\ $\omega$ over
$\omega\in[10,10^3]$, and the ES-to-averaged settling-time ratio at
$\omega=10^3$. Entries $\sigma=0\,/\,0.1\,/\,1$; 5 seeds; all $75$
configurations settled $5/5$.}
\label{tab:supp-conv-summary}
\begin{tabular}{@{}rccc@{}}
\toprule
$d$ & slope of $t_{\rm conv}$ & slope of evals-to-settle & ES/averaged at $\omega{=}10^3$\\
\midrule
2  & $+0.02\,/\,+0.02\,/\,+0.02$ & $+1.02\,/\,+1.02\,/\,+1.02$ & $1.00\,/\,1.00\,/\,0.99$\\
4  & $+0.02\,/\,+0.02\,/\,+0.07$ & $+1.02\,/\,+1.02\,/\,+1.07$ & $1.00\,/\,1.00\,/\,1.01$\\
8  & $+0.01\,/\,+0.01\,/\,+0.02$ & $+1.01\,/\,+1.01\,/\,+1.02$ & $1.00\,/\,1.00\,/\,1.00$\\
16 & $+0.05\,/\,+0.05\,/\,+0.08$ & $+1.05\,/\,+1.05\,/\,+1.08$ & $1.00\,/\,1.00\,/\,1.00$\\
32 & $+0.04\,/\,+0.04\,/\,+0.04$ & $+1.04\,/\,+1.04\,/\,+1.04$ & $0.99\,/\,0.99\,/\,1.00$\\
\bottomrule
\end{tabular}
\end{table}

\paragraph{Results.} Table~\ref{tab:supp-conv-summary}: the physical
settling time is independent of $\omega$ (all fitted slopes $\le+0.08$ over
two decades) and coincides with the averaged system's own settling time
(ratio $0.99$--$1.01$) for every $(d,\sigma)$ --- convergence speed is
governed by the averaged dynamics, an independent verification of
Theorem~\ref{thm:main} in a different observable than the trajectory error.
Correspondingly the evaluation count to settle grows linearly in $\omega$
(slopes $+1.01$ to $+1.08$). Combining the two measured laws with the rate of
\ref{supp:num-rate} gives the efficiency scaling quoted in
Sec.~\ref{sec:practice}: accuracy $\sim$ (evaluations)$^{-1/2}$.
Tables~\ref{tab:supp-conv-s0}--\ref{tab:supp-conv-s1} list the full
measurements at $\sigma=0$ and $\sigma=1$; $\sigma=0.1$ matches $\sigma=0$
within the quoted uncertainties cell by cell and is omitted. The small
common upward drift of $t_{\rm conv}$ at low $\omega$ (e.g., $2.38\to2.60$
for $d=2$) is finite detector resolution --- the smoothing window is fixed in
steps, hence $O$(one dither period) wide in time at small $\omega$ --- and
affects ES and twin identically, which is why the per-$\omega$ ratio is the
trustworthy statistic. The isolated low-$\omega$/large-$d$ cells where ES
settles \emph{early} to a degraded plateau ($\sigma=0$: $d\ge16$,
$\omega\le31.6$; $\sigma=1$: $d=16$, $\omega=31.6$, and the flagged $d=8$,
$\omega=10$ cell with transient $2/5$) lie below the transition threshold of
\ref{supp:num-threshold} --- the two studies locate the same band
independently.

\begin{table}[h]
\centering\small
\setlength{\tabcolsep}{5pt}
\caption{Full settling measurements, $\sigma=0$ ($c=20$; 5 seeds; all cells
settled $5/5$, transient $5/5$). Per cell: $t_{\rm conv}$ and
evaluations-to-settle.}
\label{tab:supp-conv-s0}
\begin{tabular}{@{}rccccc@{}}
\toprule
$d$ & $\omega{=}10$ & $\omega{=}31.6$ & $\omega{=}100$ & $\omega{=}316$ & $\omega{=}10^3$\\
\midrule
2  & $2.38{\pm}0.02$; $132$ & $2.40{\pm}0.01$; $423$ & $2.54{\pm}0.01$; $1412$ & $2.58{\pm}0.02$; $4551$ & $2.60{\pm}0.02$; $14478$\\
4  & $2.33{\pm}0.01$; $130$ & $2.33{\pm}0.00$; $411$ & $2.48{\pm}0.00$; $1384$ & $2.55{\pm}0.01$; $4485$ & $2.56{\pm}0.01$; $14261$\\
8  & $2.58{\pm}0.01$; $144$ & $2.26{\pm}0.02$; $399$ & $2.43{\pm}0.00$; $1355$ & $2.51{\pm}0.00$; $4425$ & $2.54{\pm}0.00$; $14143$\\
16 & $2.09{\pm}0.05$; $117$ & $2.07{\pm}0.07$; $364$ & $2.36{\pm}0.00$; $1312$ & $2.48{\pm}0.00$; $4368$ & $2.52{\pm}0.00$; $14049$\\
32 & $2.11{\pm}0.01$; $118$ & $2.11{\pm}0.03$; $372$ & $2.39{\pm}0.01$; $1334$ & $2.42{\pm}0.00$; $4258$ & $2.50{\pm}0.00$; $13943$\\
\bottomrule
\end{tabular}
\end{table}

\begin{table}[h]
\centering\small
\setlength{\tabcolsep}{5pt}
\caption{As Table~\ref{tab:supp-conv-s0}, for $\sigma=1$ (the sampling
setting). All cells settled $5/5$; transient $5/5$ except $(d{=}8,\
\omega{=}10)$ with $2/5$ --- the below-threshold cell excluded from the
slope fits of Table~\ref{tab:supp-conv-summary}.}
\label{tab:supp-conv-s1}
\begin{tabular}{@{}rccccc@{}}
\toprule
$d$ & $\omega{=}10$ & $\omega{=}31.6$ & $\omega{=}100$ & $\omega{=}316$ & $\omega{=}10^3$\\
\midrule
2  & $2.15{\pm}0.28$; $120$ & $2.13{\pm}0.17$; $375$ & $2.22{\pm}0.29$; $1237$ & $2.05{\pm}0.08$; $3604$ & $2.43{\pm}0.19$; $13513$\\
4  & $1.68{\pm}0.31$; $94$  & $2.30{\pm}0.17$; $405$ & $2.41{\pm}0.19$; $1342$ & $2.32{\pm}0.13$; $4095$ & $2.47{\pm}0.19$; $13734$\\
8  & $0.46{\pm}0.92$; $26$  & $2.32{\pm}0.13$; $409$ & $2.29{\pm}0.12$; $1277$ & $2.46{\pm}0.14$; $4326$ & $2.46{\pm}0.07$; $13709$\\
16 & $2.15{\pm}0.04$; $120$ & $1.31{\pm}0.11$; $232$ & $2.39{\pm}0.07$; $1333$ & $2.49{\pm}0.12$; $4380$ & $2.48{\pm}0.06$; $13835$\\
32 & $2.13{\pm}0.05$; $118$ & $2.21{\pm}0.15$; $389$ & $2.17{\pm}0.05$; $1209$ & $2.49{\pm}0.12$; $4378$ & $2.51{\pm}0.08$; $14001$\\
\bottomrule
\end{tabular}
\end{table}

\subsection{Matched-budget comparison with gradient-based and
derivative-free samplers}
\label{supp:num-baselines}

\paragraph{Protocol.} The comparison is designed around two symmetries
identified during pilot experiments. \emph{Equal effective sample size:}
metrics are computed on the \emph{final states} of $1024$ chains for every
method --- no trajectory pooling, whose effective sample size depends on each
method's decorrelation time. The finite-sample floor of the metric (sliced-$W_2$ between an independent $1024$-sample draw of the target and the reference set;
$0.245$ at $d{=}2$, $0.110$ at $d{=}10$) is reported, and entries within
$\sim\!15\%$ of it are marked unresolved. \emph{Symmetric tuning:} every
method's hyperparameters are chosen by an identical pilot on a held-out seed
(seed $0$; evaluation uses seeds $1$--$5$): step size
$h\in\{3{\times}10^{-3},10^{-2},3{\times}10^{-2},10^{-1}\}$ for the Langevin
samplers, smoothing radius $\nu\in\{10^{-3},10^{-2},10^{-1}\}$ for the
zeroth-order and finite-difference baselines, and the time/frequency split
$T\in\{2,4,8,16,32\}$ (with $\omega=2\pi\,\mathrm{NFE}/(c\cdot1.75\,T)$
implied by the budget, $c=10$, and $\omega$ constrained above the measured
threshold) for ES. All methods target $p$ of \eqref{eq:supp-gmm}; the ES
rows use $g=\sqrt2$, $\sigma=1$, so that the averaged dynamics are the
standard-speed Langevin diffusion.

\paragraph{Methods and per-step evaluation accounting.} Oracle ULA,
$\x\leftarrow\x+h\nabla\log p+\sqrt{2h}\,\boldsymbol\xi$ ($1$ gradient);
oracle MALA, the same proposal with Metropolis correction ($1$ gradient);
ZO-ULA with the two-point Gaussian-smoothing estimate
$\hat{\mathbf g}=-\mathbf u\,[J(\x{+}\nu\mathbf u)-J(\x{-}\nu\mathbf u)]/(2\nu)$,
$\mathbf u\sim\mathcal N(0,I)$ \citep{nesterov2017random,roy2022zeroth}
($2$ evaluations); FD-ULA with forward coordinate differences
($d{+}1$ evaluations); and ES ($1$ evaluation, all coordinates). Each method
receives the same per-chain evaluation budget; its step count is the budget
divided by its per-step cost.

\begin{table}[h]
\centering\small
\setlength{\tabcolsep}{4pt}
\caption{Sliced-$W_2$ to ground truth on final states of $1024$ chains
(mean$\pm$std, seeds $1$--$5$). Metric floors: $0.245$ ($d{=}2$), $0.110$
($d{=}10$); entries within $15\%$ of the floor are at metric resolution. The
ES pilot selected $(T,\omega)=(8,90),(4,1795),(4,180),(16,449)$ for the four
columns, respectively. All methods recovered $4/4$ modes in every run, and
unbiased RBF-MMD$^2$ values were zero within noise throughout.}
\label{tab:supp-baselines}
\begin{tabular}{@{}lcccc@{}}
\toprule
& \multicolumn{2}{c}{$d=2$} & \multicolumn{2}{c}{$d=10$}\\
\cmidrule(lr){2-3}\cmidrule(lr){4-5}
Method & NFE $2{\times}10^3$ & NFE $2{\times}10^4$ & NFE $2{\times}10^3$ & NFE $2{\times}10^4$\\
\midrule
oracle ULA (1 grad)   & $0.191\pm0.038$ & $0.215\pm0.069$ & $0.114\pm0.011$ & $0.110\pm0.018$\\
oracle MALA (1 grad)  & $0.213\pm0.056$ & $0.239\pm0.053$ & $0.113\pm0.025$ & $0.102\pm0.008$\\
ZO-ULA (2 ev)         & $0.295\pm0.052$ & $0.218\pm0.041$ & $0.123\pm0.010$ & $0.135\pm0.030$\\
FD-ULA ($d{+}1$ ev)   & $0.308\pm0.054$ & $0.259\pm0.095$ & $0.307\pm0.007$ & $0.118\pm0.012$\\
ES, ours (1 ev)       & $0.211\pm0.026$ & $0.232\pm0.042$ & $0.126\pm0.007$ & $0.118\pm0.022$\\
\bottomrule
\end{tabular}
\end{table}

\paragraph{Findings.} ES is statistically indistinguishable from the tuned
exact-gradient oracles in all four settings, at one evaluation per step; it
resolves ahead of both derivative-free baselines at
$d{=}2$, $\mathrm{NFE}{=}2{\times}10^3$; and the one clearly resolved
deficit in the table is FD-ULA at $d{=}10$, $2{\times}10^3$, whose $d{+}1$
evaluations per step leave only $\mathrm{NFE}/11$ steps and show as
incomplete burn-in (mean final energy $20.4$ vs.\ $\approx15.7$ for all
other methods). We state the scope plainly: a smooth, low-dimensional
testbed with every sampler tuned --- the table supports parity at matched
budget, not superiority.

\subsection{The mode-seeking configuration: a fixed-target benchmark against
derivative-free optimizers under measurement noise}
\label{supp:num-optim}

The comparison of \ref{supp:num-baselines} evaluates the \emph{sampling}
configuration ($\sigma=1$: the averaged dynamics are the Langevin diffusion
and the metric is distributional). This subsection evaluates the
complementary \emph{mode-seeking} configuration: the diffusion is taken to
(numerically) zero, so that by the tempering corollary
(Corollary~\ref{cor:tempering}, stationary law $\propto p^{\,k\alpha/\sigma^2}$)
the averaged dynamics degenerate to the gradient flow
$\dot{\bar\x}=-\tfrac{k\alpha}{2}\nabla\E(\bar\x)$ and the algorithm is
being used as a derivative-free \emph{optimizer} of the energy
$\E=-\log p$. This is the operating mode relevant to the tracking
experiments, where the
objective is to sit on the moving minimizer rather than to cover the
posterior, and it is the mode in which the noise-robustness claim of
Sec.~\ref{sec:practice} (``measurement noise on $J$ enters only through the
dither phase'') can be tested quantitatively. The benchmark below therefore
answers three questions on the testbed \eqref{eq:supp-gmm}: (i)~at matched
total evaluation budgets, how does ES compare with the standard
derivative-free optimization baselines --- simultaneous-perturbation
stochastic approximation (SPSA)~\citep{spall1992spsa}, Gaussian-smoothing
(GS) zeroth-order (ZO) gradient estimates
\citep{nesterov2017random,ghadimi2013stochastic,duchi2015optimal,salimans2017es},
coordinate-wise finite differences (FD)~\citep{kiefer1952stochastic}, and
adaptive random search~\citep{rechenberg1973evolutionsstrategie} --- when
every evaluation of $\E$ is corrupted by i.i.d.\ (independent and
identically distributed) measurement noise; (ii)~\emph{why} ES behaves as it
does under noise, via a derived and validated error-floor law; and
(iii)~what fraction of each method's residual error is recoverable by
cost-free iterate averaging, which isolates the coherent (dither-ripple)
character of the ES residual.

\paragraph{Why a fixed-target protocol.} Distributional metrics cannot
score this regime: for the temperature-one target, the stationary energy gap
is $\chi^2$-distributed to excellent accuracy (the modes are $\ge6$
component widths apart), so a perfect \emph{sampler} has
$\Expec\,\Delta\E=d/2$ by equipartition and essentially never enters a fixed
energy ball --- e.g.\ $\Prob(\Delta\E<\tfrac12)=\Prob(\chi^2_{16}<1)\approx
6\times10^{-8}$ at $d=16$. Optimizers and samplers must therefore be
benchmarked on different axes, and here we adopt the fixed-target
(hitting-time) methodology standard in black-box optimization
\citep{hansen2021coco}: fix an accuracy target, record when each chain first
reaches it, and report quantiles of the hitting-time distribution. The
Langevin samplers of \ref{supp:num-baselines} are correspondingly excluded
from this table by construction, not by defeat.

\paragraph{Success criterion, dwell, and hitting times.} With
$\E^\star=\min_m \E(\bmu_m)$ (all four modes are equivalent), define the
energy gap
\begin{equation}
\Delta\E(\x)\;=\;\E(\x)-\E^\star,
\qquad
\Delta\E(\x)\;\approx\;\tfrac12\,\|\x-\bmu_{m(\x)}\|^2
\ \ \text{near a mode},
\label{eq:supp-opt-gap}
\end{equation}
so the target $\Delta\E<\varepsilon$ with $\varepsilon=\tfrac12$ is a ball
of radius $\approx\sqrt{2\varepsilon}=1$ (one component width) around some
mode, in every dimension. Let $e$ count the noisy oracle evaluations a
method has consumed for a given chain (its only clock), and let $\x(e)$ be
that chain's state, piecewise constant between the method's updates. The
per-chain hitting time is the first entry into a visit that \emph{persists},
\begin{equation}
\tau \;=\; \min\Big\{\,e \;:\; \Delta\E\big(\x(e')\big)<\varepsilon
\ \ \text{for all}\ e'\in[e,\,e+D]\,\Big\},
\qquad D=20\ \text{evaluations},
\label{eq:supp-opt-hit}
\end{equation}
where the dwell requirement $D$ disqualifies transits in which a large
dither excursion merely slices through the ball without settling (without
it, hitting times can be gamed by amplitude). We report the success rate
$\Prob(\tau\le B)$ over chains at the total per-chain budget $B$, and the
median hitting time $E_{50}$ (the empirical $50\%$ quantile of $\tau$,
in evaluations), quoting $E_{50}$ only when at least half the chains
succeed. Scoring uses the \emph{exact} $\E$ --- available because the
testbed is synthetic --- and is not charged to any budget; the methods
themselves never see it.

\paragraph{Noisy oracle.} Every method interacts with the energy only
through the corrupted oracle
\begin{equation}
\widehat\E(\x)\;=\;\E(\x)+\sigma_J\,n,
\qquad n\sim\mathcal N(0,1)\ \text{i.i.d.\ per evaluation and per chain},
\label{eq:supp-opt-noise}
\end{equation}
at noise levels $\sigma_J\in\{0.03,\,0.1,\,0.3,\,1.0\}$; for reference, the
initial gaps are $\Delta\E\approx6d$ and the target is
$\varepsilon=\tfrac12$, so a single read at $\sigma_J=0.1$ already carries noise equal to $20\%$ of the
target resolution. This is the synthetic stand-in for a
mini-batch or measurement-limited evaluation of a model's energy.

\paragraph{Readout: last iterate vs.\ iterate averaging.} Under a noisy
oracle, scoring the final iterate of a stochastic trajectory is a
high-variance convention; the standard remedy in stochastic approximation is
Polyak--Ruppert iterate averaging
\citep{ruppert1988efficient,polyak1992acceleration}, whose deep-learning
analogue is weight averaging \citep{izmailov2018averaging}. We therefore
maintain, purely as a \emph{readout}, an exponential moving average (EMA) in
evaluation time,
\begin{equation}
\bar\x \;\leftarrow\; \bar\x + \lambda\,(\x-\bar\x),
\qquad \lambda=\min(1,\,\Delta e/\tau_{\mathrm{f}}),
\qquad \tau_{\mathrm{f}}=300\ \text{evaluations},
\label{eq:supp-opt-ema}
\end{equation}
updated whenever the method updates ($\Delta e$ = evaluations since the last
update), and score $\Delta\E(\bar\x)$. Three properties keep this fair:
it is applied \emph{identically} to every method; it costs zero
evaluations (it reuses the trajectory already paid for); and it is
feedback-free ($\bar\x$ never enters any method's dynamics). Its one side
effect is a warm-up lag: the average retains weight
$\approx e^{-(B-e_{\mathrm{conv}})/\tau_{\mathrm{f}}}$ on pre-convergence
states, so all EMA hitting times carry a common offset of order
$\tau_{\mathrm{f}}\ln(\Delta\E_0/\varepsilon)$, and a method that converges
only near the end of its budget can show a \emph{larger} EMA error than raw
error. The clean-gradient oracle row below, which converges in $\le40$
updates, measures this lag directly ($E_{50}=514$--$1211$ across $d$) and
serves as the ruler; entries dominated by warm-up rather than by an error
floor are marked $\dagger$.

\paragraph{Baselines: updates, per-update cost, and tuning.} All baselines
consume the oracle \eqref{eq:supp-opt-noise} and pay their true evaluation
price per update; their step counts are the budget divided by that price.
\begin{itemize}
\item \emph{Adaptive random search} (the $(1{+}1)$ evolution strategy, i.e.\
one-parent/one-offspring; equivalently zero-temperature random-walk
Metropolis with step adaptation). Proposal
$\x'=\x+s\,\boldsymbol\xi$, $\boldsymbol\xi\sim\mathcal N(0,I_d)$; accept
iff $\widehat\E(\x')<\widehat\E_{\mathrm{inc}}$; per-chain step adaptation
by the $1/5$th-success rule
$s\leftarrow s\,e^{\eta(\mathbf 1_{\mathrm{acc}}-1/5)}$
\citep{rechenberg1973evolutionsstrategie}, with $s_0=0.5$, $\eta=0.1$.
\emph{Cached-incumbent} variant: the incumbent value is measured once and
stored ($1$ evaluation per proposal). \emph{Re-evaluating} variant: the
incumbent is re-measured at every comparison ($2$ evaluations per proposal),
the standard remedy for the elitist noise pathology
\citep{arnold2002local} in which a single lucky noise draw makes the cached
incumbent unbeatable.
\item \emph{FD-GD} (central-difference gradient descent)
\citep{kiefer1952stochastic}:
\begin{equation}
\hat g_i=\frac{\widehat\E(\x+h\mathbf e_i)-\widehat\E(\x-h\mathbf e_i)}{2h},
\qquad \x\leftarrow\x-\eta_{\mathrm{g}}\,\hat{\mathbf g},
\label{eq:supp-opt-fd}
\end{equation}
$2d$ evaluations per update, $\eta_{\mathrm{g}}=0.1$. The divided difference
amplifies oracle noise to $\sigma_J/(\sqrt2\,h)$ per gradient component, so
$h$ must scale with the noise: we use $h=0.5$ (the bias is $O(h^2)$ and
negligible on the near-quadratic modes); $h=10^{-4}$, appropriate for a
clean oracle, diverges immediately at any $\sigma_J>0$.
\item \emph{ZO-GD} (GS gradient estimate with $m$ antithetic pairs)
\citep{nesterov2017random,salimans2017es}:
\begin{equation}
\hat{\mathbf g}=\frac1m\sum_{j=1}^{m}\mathbf u_j\,
\frac{\widehat\E(\x+\nu\mathbf u_j)-\widehat\E(\x-\nu\mathbf u_j)}{2\nu},
\qquad \mathbf u_j\sim\mathcal N(0,I_d),
\label{eq:supp-opt-gs}
\end{equation}
then $\x\leftarrow\x-\eta_{\mathrm{g}}\hat{\mathbf g}$; $2m$ evaluations per
update, $m=10$, $\eta_{\mathrm{g}}=0.05$,
$\nu=\max(0.1,\sigma_J)$ (probe scaled to the noise, the standard
bias--variance remedy).
\item \emph{Two-point Gaussian-smoothing estimator (SPSA-style)}: \eqref{eq:supp-opt-gs} with
$m=1$, the one-random-direction scheme in the spirit of \citet{spall1992spsa} (classical SPSA uses Rademacher directions, which satisfy its bounded-inverse-moment condition, and decaying gains; our variant uses Gaussian directions and constant gains, so it is a two-point Gaussian-smoothing gradient estimator rather than SPSA proper). The
single-probe estimator is unbiased but heavy-tailed: on a unit-curvature
quadratic, one step contracts the squared error by
$1-2\eta_{\mathrm{g}}+\eta_{\mathrm{g}}^2(d+2)$, so stability requires
$\eta_{\mathrm{g}}<2/(d+2)$ with one-step optimum
$\eta_{\mathrm{g}}=1/(d+2)$; we use $\eta_{\mathrm{g}}=1/d$ throughout,
which is essential --- a $d$-independent step ($\eta_{\mathrm{g}}=0.05$)
diverges for $d\ge64$. With this tuning SPSA is the strongest
derivative-free baseline in the tables below, consistent with the
information-theoretic optimality of two-point estimators for this query
model \citep{duchi2015optimal,jamieson2012query}.
\item \emph{Clean-gradient descent} (reference only):
$\x\leftarrow\x+\eta_{\mathrm{g}}\nabla\log p(\x)$, $\eta_{\mathrm{g}}=0.1$,
using the \emph{exact, noiseless} gradient. It is not purchasable in the
zeroth-order query model at any price and is reported as an upper bound and
as the EMA-lag ruler.
\end{itemize}

\paragraph{ES configuration.} Algorithm~\ref{alg:es} in the convention
\eqref{eq:supp-conv} with $g=1$, dither amplitude $\alpha=1.5$, base
frequency $\omega=100$, band $N=1.75$ (ratios $r_i$ evenly spaced in
$[1,1.75]$), $c=5$ steps per fastest dither period, hence
$\Delta t=2\pi/(cN\omega)\approx7.18\times10^{-3}$ and one oracle
evaluation per step; diffusion $\sigma=10^{-3}$ (the numerical mode-seeking
limit); and the dimension-dependent feedback gain
\begin{equation}
k(d)\;=\;\frac{4}{\sqrt d}.
\label{eq:supp-opt-es}
\end{equation}
The scaling \eqref{eq:supp-opt-es} is forced by the same mechanism as the
threshold \eqref{eq:supp-threshold}: the phase carries $k\E$, and $\E$ is
extensive --- both its fluctuations under the target
($\mathrm{std}\approx\sqrt{d/2}$) and its per-step increments far from the
modes grow with $d$ --- so at fixed $k$ the per-step phase modulation grows
with dimension until it scrambles the demodulation. Keeping
$k\sqrt d$ constant keeps it bounded. Furthermore: (i)~$c=5$ sits below the $c\ge10$ recommendation of Sec.~\ref{sec:practice}, which concerns the
trajectory-error \emph{rate measurement}; here the criterion is the
end-task hitting time and the setting is flagged rather than re-derived;
(ii)~the initialization is $\x_0\sim\mathrm{Unif}([-6,6]^d)$ (dispersed
across all four basins; initial gap $\Delta\E_0\approx6d$), not the
Gaussian initialization of \ref{supp:num-testbed}. We run both the base
variant ($d$ distinct frequencies) and the \emph{quadrature-pair} variant of
\eqref{eq:supp-track-sde} ($\lceil d/2\rceil$ carriers, each driving a
$(\cos,\sin)$ coordinate pair at phases $0,\pi/2$).

\paragraph{The averaged dynamics predict the ES constants.} On the
near-quadratic basins, the averaged flow gives
$\tfrac{d}{dt}\Delta\E=-k\alpha\,\Delta\E$, hence the parameter-free
prediction
\begin{equation}
E_{50}\;\approx\;\frac{\ln(\Delta\E_0/\varepsilon)}{k\alpha\,\Delta t}
\;=\;\frac{\ln(12d)\,\sqrt d}{4\cdot1.5\cdot\Delta t}.
\label{eq:supp-opt-rate}
\end{equation}
Against the clean-oracle, last-iterate measurement (single replicate, $2000$
chains): predicted vs.\ observed $E_{50}=180$/$155$, $488$/$502$,
$781$/$856$, $1234$/$1493$ at $d=4,16,32,64$ --- the discretized dither runs
at the speed of its own averaged flow to within $\pm15\%$ for $d\le32$, with
a $21\%$ finite-gain drag appearing at $d=64$. Under noise the \emph{rate}
is nearly unchanged, as the phase-averaging claim of
Sec.~\ref{sec:practice} predicts: from $\sigma_J=0$ to $0.1$ the raw
$E_{50}$ moves $155\to175$, $502\to536$, $856\to925$ at $d=4,16,32$
($+13/{+}7/{+}8\%$; single matched-stream replicates), while --- for
contrast --- the same noise destroys both random-search variants at
$d\ge32$ (Table~\ref{tab:supp-opt-s01}).

\paragraph{Budgets, replication, and evaluation accounting.} The per-chain
budget is identical for every method: $B=2000$ evaluations for $d\le64$,
$4000$ at $d=128$, $8000$ at $d=256$. Headline tables are mean$\pm$sd
(standard deviation) over $5$ replicates of $500$ chains; a replicate
re-draws the initialization, every method's internal randomness, and the
oracle noise stream, while \emph{within} a replicate all methods share the
same initialization and the same noise stream (common random numbers, so
method-to-method comparisons are paired). Total evaluations is the fair
currency for ES specifically because its evaluations are inherently
\emph{sequential} (each step's evaluation feeds the next state), whereas the
$2m$ probes of ZO-GD, the $2d$ of FD, and the $2$ of SPSA are embarrassingly
parallel across evaluators; wall-clock comparisons on parallel hardware
would favor the batched estimators, and we say so rather than hide it. The
converse structural advantage of the one-evaluation-per-step design ---
every coordinate updates after every single measurement, with no staleness
window --- is exercised by the time-varying benchmarks of
\ref{supp:num-celeba}, not by this static one.

\begin{table}[h]
\centering\small
\setlength{\tabcolsep}{3.2pt}
\caption{Fixed-target benchmark at $\sigma_J=0.03$, EMA readout
\eqref{eq:supp-opt-ema}: success rate in $\%$ (mean$\pm$sd over $5$
replicates of $500$ chains; sd $0.0$ omitted) with the median hitting time
$E_{50}$ in evaluations in parentheses (seed-to-seed sd of $E_{50}$
$\le18$ evaluations in every entry shown); ``---'' where fewer than half
the chains succeed. Budget $B=2000$ ($d\le64$), $4000$ ($d{=}128$), $8000$
($d{=}256$). All EMA hitting times include the common filter warm-up
measured by the clean-gradient row (which converges in $\le40$ updates).
$\dagger$: warm-up artifact of the filtered readout only --- under the
last-iterate readout FD-GD succeeds at $d{=}32$ ($99.9\%$, final gap
$0.267$; Table~\ref{tab:supp-opt-readout}) but its
$\lfloor B/2d\rfloor=31$ updates arrive too late in the budget for the
$\tau_{\mathrm{f}}=300$ filter to settle; at $d\ge64$ FD-GD fails under
both readouts.}
\label{tab:supp-opt-s003}
\scriptsize
\resizebox{\textwidth}{!}{%
\begin{tabular}{@{}lcccccc@{}}
\toprule
Method (evals/update) & $d{=}4$ & $d{=}16$ & $d{=}32$ & $d{=}64$ & $d{=}128$ & $d{=}256$\\
\midrule
rand.\ search, cached ($1$)  & $100$\,(526) & $99.1{\pm}0.3$\,(884) & $39.3{\pm}2.6$\,(---) & $0$ & $0$ & $0$\\
rand.\ search, re-eval ($2$) & $100$\,(530) & $100$\,(908) & $100$\,(1346) & $0$ & $0$ & $0$\\
FD-GD ($2d$)                 & $100$\,(589) & $100$\,(1248) & $0.2{\pm}0.1$\,(---)$^\dagger$ & $0$ & $0$ & $0$\\
ZO-GD, $m{=}10$ ($20$)       & $100$\,(1060) & $100$\,(1460) & $100$\,(1660) & $79.0{\pm}0.8$\,(1920) & $100$\,(2440) & $100$\,(4784)\\
SPSA, $\eta_{\mathrm g}{=}1/d$ ($2$) & $100$\,(526) & $100$\,(811) & $100$\,(967) & $100$\,(1246) & $100$\,(2111) & $100$\,(4303)\\
bounded ES ($1$)             & $100$\,(608) & $100$\,(1036) & $99.8{\pm}0.1$\,(1322) & $95.9{\pm}0.7$\,(1797) & $99.4{\pm}0.4$\,(3084) & $47.7{\pm}1.0$\,(---)\\
bounded ES, quadrature ($1$) & $100$\,(580) & $100$\,(1002) & $100$\,(1278) & $99.1{\pm}0.7$\,(1723) & $99.9{\pm}0.1$\,(2927) & $78.0{\pm}0.9$\,(7506)\\
\midrule
clean-grad.\ descent (oracle) & $100$\,(514) & $100$\,(782) & $100$\,(893) & $100$\,(1000) & $100$\,(1106) & $100$\,(1211)\\
\bottomrule
\end{tabular}}
\end{table}

\begin{table}[h]
\centering\small
\setlength{\tabcolsep}{3.2pt}
\caption{As Table~\ref{tab:supp-opt-s003}, at $\sigma_J=0.1$ (seed-to-seed
sd of $E_{50}$ $\le16$ evaluations except re-evaluating random search at
$d{=}32$, $\pm48$).}
\label{tab:supp-opt-s01}
\scriptsize
\resizebox{\textwidth}{!}{%
\begin{tabular}{@{}lcccccc@{}}
\toprule
Method (evals/update) & $d{=}4$ & $d{=}16$ & $d{=}32$ & $d{=}64$ & $d{=}128$ & $d{=}256$\\
\midrule
rand.\ search, cached ($1$)  & $99.8{\pm}0.2$\,(540) & $30.5{\pm}1.7$\,(---) & $0$ & $0$ & $0$ & $0$\\
rand.\ search, re-eval ($2$) & $100$\,(530) & $100$\,(948) & $56.4{\pm}3.5$\,(1878) & $0$ & $0$ & $0$\\
FD-GD ($2d$)                 & $100$\,(586) & $100$\,(1248) & $0.1{\pm}0.1$\,(---)$^\dagger$ & $0$ & $0$ & $0$\\
ZO-GD, $m{=}10$ ($20$)       & $100$\,(1060) & $100$\,(1464) & $100$\,(1672) & $74.0{\pm}0.9$\,(1940) & $100$\,(2480) & $100$\,(5228)\\
SPSA, $\eta_{\mathrm g}{=}1/d$ ($2$) & $100$\,(532) & $100$\,(818) & $100$\,(980) & $100$\,(1269) & $100$\,(2173) & $100$\,(4485)\\
bounded ES ($1$)             & $100$\,(620) & $100$\,(1054) & $99.8{\pm}0.2$\,(1349) & $90.7{\pm}1.2$\,(1850) & $97.4{\pm}1.2$\,(3328) & $2.5{\pm}1.0$\,(---)\\
bounded ES, quadrature ($1$) & $100$\,(590) & $100$\,(1016) & $100$\,(1300) & $98.0{\pm}0.7$\,(1768) & $99.7{\pm}0.2$\,(3104) & $12.5{\pm}1.3$\,(---)\\
\midrule
clean-grad.\ descent (oracle) & $100$\,(514) & $100$\,(782) & $100$\,(893) & $100$\,(1000) & $100$\,(1106) & $100$\,(1211)\\
\bottomrule
\end{tabular}}
\end{table}

\paragraph{Findings at low and moderate noise
(Tables~\ref{tab:supp-opt-s003}--\ref{tab:supp-opt-s01}).} Tuned SPSA is
the fastest derivative-free method at every dimension, as its query-model
optimality leads one to expect; we state that plainly. The ES result is of
a different kind: it reaches the target at \emph{every} $d\le128$ at both
noise levels (quadrature variant: success $\ge97.4\%$ everywhere), with
$E_{50}$ within $1.1$--$1.5\times$ of SPSA's ($1.10$--$1.43\times$ for the
quadrature variant across $d=4$--$128$), while operating under the strictly
sequential one-evaluation-per-step constraint that is the point of the
design. Every other baseline breaks somewhere in this range: cached-incumbent
random search noise-locks (its stored incumbent value is a lucky order
statistic that fresh proposals cannot beat \citep{arnold2002local}) already
at $(d,\sigma_J)=(32,0.03)$; the re-evaluating repair pays $2\times$ per
step and still fails for $d\ge64$, because accepting on noisy comparisons
turns the elitist walk into a diffusion whose drift vanishes once the
per-step improvement falls below the $\sqrt2\,\sigma_J$ comparison noise;
FD-GD is budget-starved by its $2d$-evaluation tax ($\le15$ updates at
$d\ge64$) on top of its noise amplification; and ZO-GD with $m=10$ is
squeezed by budget quantization ($\lfloor B/20\rfloor$ updates --- visible
as the $74$--$79\%$ success at $d=64$, where it affords only $100$ updates).
The quadrature variant is never worse than the base variant and is
consistently better in the tails (e.g.\ $99.1$ vs.\ $95.9\%$ at
$(64,0.03)$; $78.0$ vs.\ $47.7\%$ at $(256,0.03)$), consistent with its
elimination of the nearest-beat cross terms at fixed band. At $d=256$ the
current configuration reaches its measured boundary: the dither-envelope
floor derived below ($\approx0.0028\,d=0.72$ at $d=256$) exceeds
$\varepsilon$ under the raw readout, the gain schedule
\eqref{eq:supp-opt-es} crosses the finite-gain ceiling discussed below, and
$E_{50}=7506$ presses the $B=8000$ budget --- three compounding effects,
all of which the frequency-scaling prescription of
\ref{supp:num-threshold} (raise $\omega$ with $d$) is designed to relieve;
we report the boundary rather than re-tune past it.

\paragraph{High-noise regime and the limits of the comparison.} At
$\sigma_J=0.3$ the ES success degrades ($98.3/94.7\%$ at $d=16$ for the
quadrature/base variants, $48.1{\pm}4.4$/$31.3{\pm}2.6\%$ at $d=32$, zero
beyond), and at $\sigma_J=1$ ES fails at all $d$ while SPSA and ZO-GD
retain $100\%$ success with essentially $\sigma$-independent hitting times.
Two mechanisms, one per side. On the ES side, the per-evaluation phase
jitter is $k\sigma_J$ radians: once it approaches $O(1)$ the demodulation
itself decoheres --- with \eqref{eq:supp-opt-es}, $k\sigma_J\approx0.71$ at
$(d{=}32,\sigma_J{=}1)$ --- and, below that, the injected noise floor
derived next crosses $\varepsilon$. On the baseline side, the smoothing
estimators were granted the standard noise adaptation $\nu=\max(0.1,\sigma_J)$,
which on this testbed is free of charge: the modes are near-quadratic, so
enlarging the probe adds no smoothing bias --- a testbed-specific gift that
would expire on landscapes with curvature variation at the probe scale. ES,
by contrast, ran a single gain schedule \eqref{eq:supp-opt-es} across all
$\sigma_J$; the symmetric remedy --- reducing $k$ with $\sigma_J$ to hold
$k\sigma_J$ bounded, the ES analogue of SPSA's noise-scaled probe --- is
the natural extension and is left to future work. The honest summary of
the regime map: at $k\sigma_J\lesssim0.3$ the lock-in structure delivers
near-noiseless rates; well beyond it, noise-adapted stochastic smoothing
dominates.

\begin{table}[h]
\centering\small
\setlength{\tabcolsep}{4.5pt}
\caption{Identical-trajectory readout study at $d=32$, $B=2000$: final
median gap $\Delta\E$ under the last-iterate (raw) readout $\to$ under the
EMA readout \eqref{eq:supp-opt-ema}, for the \emph{same} runs (the two
scorers observe one trajectory per method per replicate). Mean over $5$
replicates of $500$ chains; seed-to-seed sd $\le0.01$ for entries below
$1$ and $\le3\%$ of the mean for entries above $1$. The raw$\to$EMA ratio
measures each method's \emph{recoverable} error --- zero-mean scatter about
the optimum --- as opposed to dynamical failure, which no readout repairs.
$\dagger$: EMA exceeds raw due to residual filter warm-up weight
$e^{-(B-e_{\mathrm{conv}})/\tau_{\mathrm{f}}}$ on pre-convergence states
(FD-GD and ZO-GD converge late in their update-sparse budgets), not an
error floor.}
\label{tab:supp-opt-readout}
\begin{tabular}{@{}lcccc@{}}
\toprule
Method & $\sigma_J{=}0.03$ & $\sigma_J{=}0.1$ & $\sigma_J{=}0.3$ & $\sigma_J{=}1.0$\\
\midrule
rand.\ search, cached  & $0.575\to0.572$ & $1.91\to1.90$ & $5.66\to5.64$ & $19.0\to19.0$\\
rand.\ search, re-eval & $0.532\to0.177$ & $1.78\to0.572$ & $5.37\to1.73$ & $18.0\to5.82$\\
FD-GD                  & $0.267\to0.756^\dagger$ & $0.282\to0.766^\dagger$ & $0.413\to0.856^\dagger$ & $1.91\to1.89$\\
ZO-GD, $m{=}10$        & $0.017\to0.106^\dagger$ & $0.037\to0.117^\dagger$ & $0.037\to0.117^\dagger$ & $0.039\to0.123^\dagger$\\
SPSA                   & $0.023\to0.005$ & $0.260\to0.045$ & $0.260\to0.045$ & $0.264\to0.047$\\
bounded ES             & $0.113\to0.015$ & $0.224\to0.070$ & $1.25\to0.609$ & $25.8\to21.6$\\
bounded ES, quadrature & $0.108\to0.009$ & $0.218\to0.060$ & $1.25\to0.563$ & $25.3\to21.1$\\
\bottomrule
\end{tabular}
\end{table}

\paragraph{The ES residual is coherent
(Table~\ref{tab:supp-opt-readout}).} The readout study answers a question
the headline tables cannot: \emph{what kind} of error does each method leave
behind? Averaging the trajectory can only remove zero-mean scatter about
the right place; it does nothing for a chain that never arrived. The table
shows exactly this taxonomy. The noise-locked random search is untouched by
the filter ($0.575\to0.572$): its error is dynamical, and the experiment
thereby also certifies that the noise is \emph{not} being filtered before
the optimizers --- if it were, every method would revert to its clean
numbers. SPSA improves $4.6\times$ at $\sigma_J=0.03$ (its residual is a
weakly correlated stochastic wander). ES improves $7.5\times$ (base) and
$12.0\times$ (quadrature, $0.108\to0.009$): its residual is dominated by
the deterministic dither ripple --- a zero-mean, phase-locked oscillation of
known period ($5$--$9$ steps here) that any window much longer than a beat
cancels coherently. This is the same structure exploited by a lock-in
amplifier's output low-pass filter \citep{scofield1994lockin}: the dithered
phase demodulation of Sec.~\ref{sec:practice} is the mixer, and
\eqref{eq:supp-opt-ema} is the missing output stage. The practical
consequence is that the correct deployment of the mode-seeking
configuration reports the averaged iterate, at zero additional evaluations
--- which is also what converts the $d=128$ column of
Table~\ref{tab:supp-opt-s01} from a raw-readout floor sitting near
$\varepsilon$ into a $97$--$100\%$ success rate.

\paragraph{A quantitative error-floor model.} The stationary gap of the ES
iterates (raw readout) decomposes into three terms:
\begin{equation}
F(d,\sigma_J)\;\approx\;
\underbrace{\frac{\alpha}{4}\sum_{i=1}^{d}\frac1{\omega_i}}_{F_{\mathrm{osc}}
\,\approx\,0.0028\,d}
\;+\;F_{\mathrm{dec}}(k,d)
\;+\;
\underbrace{C_n\,k\,\sigma_J^2\,d}_{F_{\mathrm{noise}}},
\label{eq:supp-opt-floor}
\end{equation}
in which each term has a mechanism. $F_{\mathrm{osc}}$ is the energy stored
in the dither ripple itself: to leading order
$x_i\approx\bar x_i+\sqrt{\alpha/\omega_i}\sin(\omega_i t+k\E)$
(cf.\ \ref{supp:warmup}), so each coordinate carries variance
$\alpha/(2\omega_i)$ and, on the unit-curvature basin
\eqref{eq:supp-opt-gap}, gap $\alpha/(4\omega_i)$; for our band this is
$0.00280\,d$, independent of $k$. $F_{\mathrm{noise}}$ is
noise-injected heating: each evaluation kicks the phase by
$k\sigma_J n$, hence the velocity by
$\approx-\sqrt{\alpha\omega_i}\,\sin(\omega_it+k\E)\,k\sigma_J n$, an
injected position variance of
$\tfrac12\alpha\omega_i(k\sigma_J)^2\Delta t^2$ per step per coordinate;
balancing against the averaged relaxation (rate $k\alpha/2$ in position)
gives $F_{\mathrm{noise}}\approx\tfrac14\bar\omega\Delta t\,k\sigma_J^2 d$
with $\bar\omega\Delta t\approx0.99$ here, i.e.\ the stated form with
$C_n\approx0.25$ predicted; the constant fitted once, globally, is
$C_n=0.55$ --- the derivation captures the functional form (linear in $k$
and $d$, quadratic in $\sigma_J$) and the order of the constant.
$F_{\mathrm{dec}}$ is the finite-gain decoherence residual, the term the
schedule \eqref{eq:supp-opt-es} is chosen to suppress; empirically it is
the small remainder of the clean floors above $F_{\mathrm{osc}}$ (below)
and scales $\propto k^2$ across the two gain settings we tested at
$d=128$. Table~\ref{tab:supp-opt-floor} confronts
\eqref{eq:supp-opt-floor} with measurement: the noise-induced excess
matches the derived law within $3\%$ for $32\le d\le128$ at
$\sigma_J=0.1$, and within its error bars across
$\sigma_J\in\{0.03,0.1,0.3\}$ at $d=32$; at $\sigma_J=1$ the law
underpredicts by $\approx2\times$, precisely where the small-jitter
expansion behind it fails ($k\sigma_J\approx0.7$ rad) --- the same
boundary as the success collapse above, giving the validity condition
$k\sigma_J\lesssim0.3$ a second, independent measurement.

\begin{table}[h]
\centering\small
\setlength{\tabcolsep}{5pt}
\caption{Validation of the floor model \eqref{eq:supp-opt-floor} under the
raw (last-iterate) readout. Panel (a): dimension scan at $\sigma_J=0.1$
(single matched-stream replicate, $2000$ chains: the clean and noisy runs
share initialization; the gain schedule \eqref{eq:supp-opt-es} is in
force, so the prediction is $0.55\,k(d)\,\sigma_J^2\,d=2.2\,\sigma_J^2\sqrt
d$). Panel (b): noise scan at $d=32$ ($5$ replicates of $500$ chains,
mean$\pm$sd; excess relative to the clean floor $0.102$). The
$\sigma_J{=}1$ row lies outside the small-jitter validity of the law
($k\sigma_J\approx0.71$) and is included to locate the breakdown.}
\label{tab:supp-opt-floor}
\begin{tabular}{@{}rccccc@{}}
\toprule
\multicolumn{6}{@{}l}{(a) dimension scan, $\sigma_J=0.1$}\\
$d$ & clean floor & $F_{\mathrm{osc}}$ pred. & noisy floor &
excess (obs.) & excess (pred.)\\
\midrule
4   & $0.013$ & $0.011$ & $0.048$ & $0.035$ & $0.044$\\
16  & $0.042$ & $0.045$ & $0.124$ & $0.082$ & $0.088$\\
32  & $0.102$ & $0.090$ & $0.223$ & $0.121$ & $0.124$\\
64  & $0.240$ & $0.179$ & $0.416$ & $0.176$ & $0.176$\\
128 & $0.456$ & $0.358$ & $0.712$ & $0.256$ & $0.249$\\
\midrule
\multicolumn{6}{@{}l}{(b) noise scan, $d=32$ ($5$ seeds)}\\
$\sigma_J$ & \multicolumn{2}{c}{noisy floor} &
\multicolumn{2}{c}{excess (obs.)} & excess (pred.)\\
\midrule
0.03 & \multicolumn{2}{c}{$0.113\pm0.001$} & \multicolumn{2}{c}{$0.011$} & $0.011$\\
0.1  & \multicolumn{2}{c}{$0.224\pm0.002$} & \multicolumn{2}{c}{$0.122$} & $0.124$\\
0.3  & \multicolumn{2}{c}{$1.252\pm0.013$} & \multicolumn{2}{c}{$1.150$} & $1.120$\\
1.0  & \multicolumn{2}{c}{$25.8\pm0.26$}   & \multicolumn{2}{c}{$25.7$}  & $12.4$\\
\bottomrule
\end{tabular}
\end{table}

\paragraph{Finite-gain ceiling.} The counterpart of the frequency threshold
\eqref{eq:supp-threshold} on the gain side: sweeping $k$ at fixed
$(\alpha,c,\omega)$ (single replicate, $500$ chains, clean oracle, dither
diffusion $\sigma=0.02$, $B=2000$, raw readout), convergence is reliable up
to a ceiling that halves as $d$ doubles --- at $d=32$, $k=1$ gives
$E_{50}=713$ ($97.0\%$ success; quadrature $696$, $99.2\%$) while $k=2$
collapses to $2.8\%$; at $d=64$, $k=0.5$ gives $E_{50}=1521$ ($99.4\%$)
while $k=1$ collapses to $15.0\%$ ($20.4\%$ quadrature). That is,
$k_{\max}\,d\approx30$--$60$ for this configuration, consistent with the
per-step phase-slew mechanism above ($k\,|\Delta\E\ \text{per step}|$
growing $\propto k\,d$ in the far field). The schedule
\eqref{eq:supp-opt-es} has $k(d)\,d=4\sqrt d$, which sits below this
ceiling for $d\lesssim100$, grazes it near $d\approx128$ (the observed
$21\%$ rate drag), and exceeds it at $d=256$ --- one of the three
compounding boundary effects noted above. Below the ceiling, the rate is
linear in $k$ as \eqref{eq:supp-opt-rate} predicts ($E_{50}=1182\to713$ for
$k=0.5\to1$ at $d=32$, i.e.\ $83\%$ of exact $1/k$ scaling).

\paragraph{Scope.} We state the limits of this benchmark plainly, as in
\ref{supp:num-baselines}. The testbed is smooth with near-quadratic,
unit-curvature basins, which grants the smoothing baselines their free
noise-scaled probes and makes tuned SPSA close to its query-model optimum;
none of the results here claim ES superiority on clean, smooth objectives,
and the parallelism asymmetry noted in the accounting paragraph favors the
batched estimators on parallel hardware. What the benchmark does establish,
with five-seed error bars: (i)~a strictly sequential,
one-evaluation-per-step method reaches the fixed target at every
$d\le128$ under evaluation noise, within $1.1$--$1.5\times$ the evaluation
cost of the strongest (noise-tuned, parallelizable) baseline, where every
other sequential-friendly baseline fails; (ii)~the noise-robustness
mechanism claimed in Sec.~\ref{sec:practice} is quantitative --- rates
within $8$--$13\%$ of noiseless up to $k\sigma_J\approx0.2$, with the
residual cost captured by the derived floor law \eqref{eq:supp-opt-floor}
to a few percent over an order of magnitude in $d$ and in
$\sigma_J^2$; and (iii)~the ES residual is coherent rather than
stochastic, so a zero-cost averaged readout recovers up to $12\times$
accuracy --- the lock-in signature that distinguishes the dither mechanism from stochastic smoothing.

\subsection{CelebA-HQ: steady-state lag and paired comparison}
\label{supp:num-celeba}

This appendix supplements the CelebA-HQ tracking study of Sec.~\ref{sec:tracking}: the first paragraph reads the checkpoint-by-checkpoint NMSE of Table~\ref{tab:supp-celeba-track500} (target drift vs.\ bounded ES) as a quantitative measurement of the tracking mechanism, and the second returns to the matched-budget benchmark of Table~\ref{tab:supp-celeba-df} to explain the ranking it reports.

\paragraph{Steady-state-lag signature.} The checkpoint increments expose
the tracking mechanism quantitatively. Between consecutive checkpoints the
target's NMSE advances by
$\Delta_{\mathrm{drift}}=.053,.068,.075,.084,.095,.110$ (the drift
accelerates), while the tracker's residual advances by
$\Delta_{\mathrm{ES}}=.009,.010,.011,.012,.015,.016$; the ratio
$\Delta_{\mathrm{ES}}/\Delta_{\mathrm{drift}}
=.170,.147,.147,.143,.158,.145$ is constant at $0.152\pm0.010$ across all
six intervals, after a standing-start first checkpoint at level ratio
$0.40$. A constant incremental ratio is the signature of steady-state lag
tracking: the residual is proportional to the target's local speed with a
fixed effective response time, the stochastic analogue of the
deterministic ES tracking bounds of~\citep{scheinker2016bounded}, and it
implies the $6\times$ headline reduction is not a plateau but improves
whenever the target moves more slowly.

Tuning improves the baselines considerably, SPSA most of all, yet bounded ES remains the strongest forward-only method: $1.8\times$
lower final NMSE than best-tuned SPSA, $3.5\times$ lower than FD,
$15\times$ lower than GS-ES, and within $1.7\times$ of the exact-gradient
oracle. Because all methods are run on the same $219$ pairs and seeds, the
comparison is paired: \emph{bounded ES achieves the lowest final NMSE
among the four forward-only methods on $100\%$ of the $219$ pairs} ---
$219/219$ head-to-head wins against each of SPSA, FD, and GS-ES --- while
outperforming even the exact-gradient oracle on $\approx9\%$ of pairs
(exact instantaneous gradients are not optimal for a moving target: Adam's
momentum can overshoot or lag the target's turns, and on those pairs the
ES trajectory ends closer). The ranking is explained by the update rate
against a moving target rather than by per-estimate variance. Bounded ES
commits an update after every single measurement ($8192$ updates); SPSA
after every $2$ ($4096$); GS-ES only after $2m$ consecutive measurements
--- its antithetic pairs are evaluated at different times against a moving
target, so temporal drift contaminates the spatial differences, and the
tuned $m$ trades estimator variance against staleness with no good
operating point at this drift rate: its final NMSE ($1.17$) exceeds the
targets' own final drift ($\approx0.51$, Table~\ref{tab:supp-celeba-track500}),
i.e., it ends farther from the target than the untouched initial image.
Beyond the raw evaluation-count tax (cf.~Table~\ref{tab:supp-baselines}),
coordinate FD suffers the structural staleness penalty in its purest form:
with $d=256$ parameters, central differences must hold the state
essentially fixed for $512$ measurements to assemble one gradient
estimate, so its entire budget purchases $16$ steps, each applying the
gradient of a target that no longer exists --- the estimate answers a
question the system is no longer asking --- whereas ES updates every
coordinate after every single measurement.

\section{CIFAR-10 Additional Details}
\label{supp:CIFAR10}
Latents are standardized per channel to zero mean and standard deviation $0.5$ with the statistics of $45{,}000$ training latents ($5{,}000$ are held out for validation). On the $10{,}000$ test images the VAE reconstructs with
PSNR $=28.2$~dB, SSIM $=0.90$~\cite{wang2004ssim} and a mean absolute error of $0.060$ on the $[-1,1]$ pixel scale, so the decoder's own reconstruction error is about $2.6\times$ smaller than the tracker's final-frame error against the decoded targets reported below. Encoder and decoder use exponential-moving-average weights, and every image in the tracking study is a decoder output $D_\theta(z)$, so the targets lie on the decoder's image manifold by construction. 

The latent diffusion U-Net has $17.0$~M parameters; base width $128$, two residual blocks per level, multi-head self-attention at every level and at bottleneck.

\begin{figure}[t]
\centering

\includegraphics[width=0.49\textwidth]{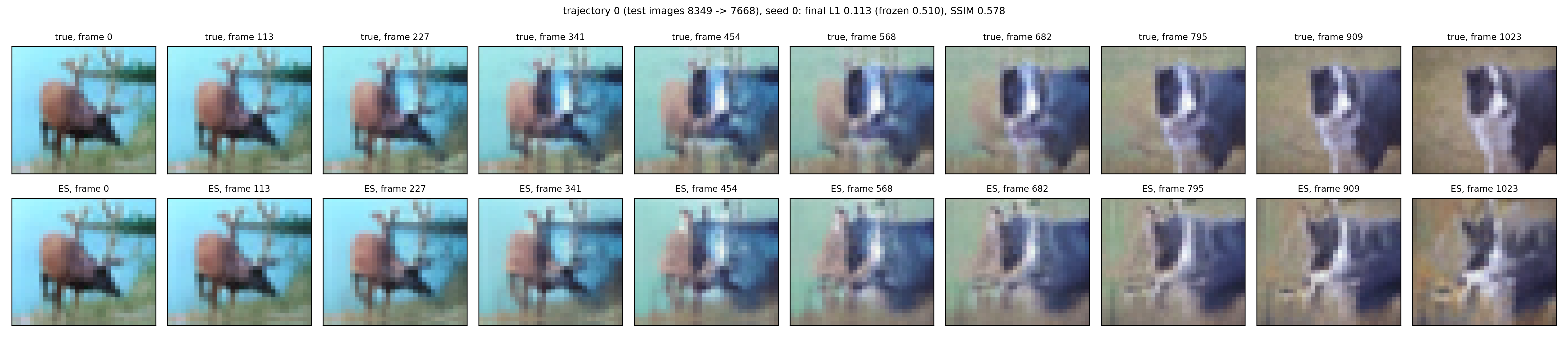}
\includegraphics[width=0.49\textwidth]{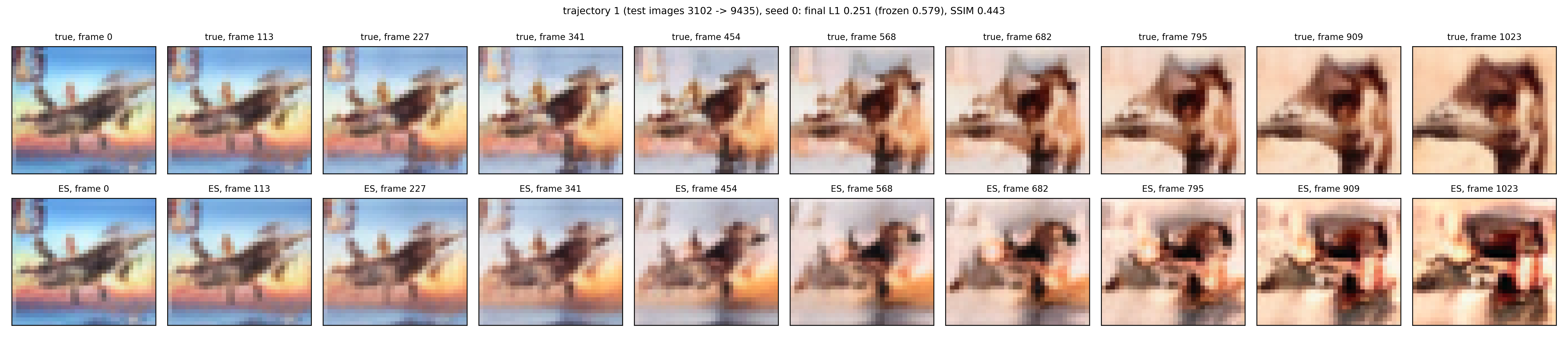}
\includegraphics[width=0.49\textwidth]{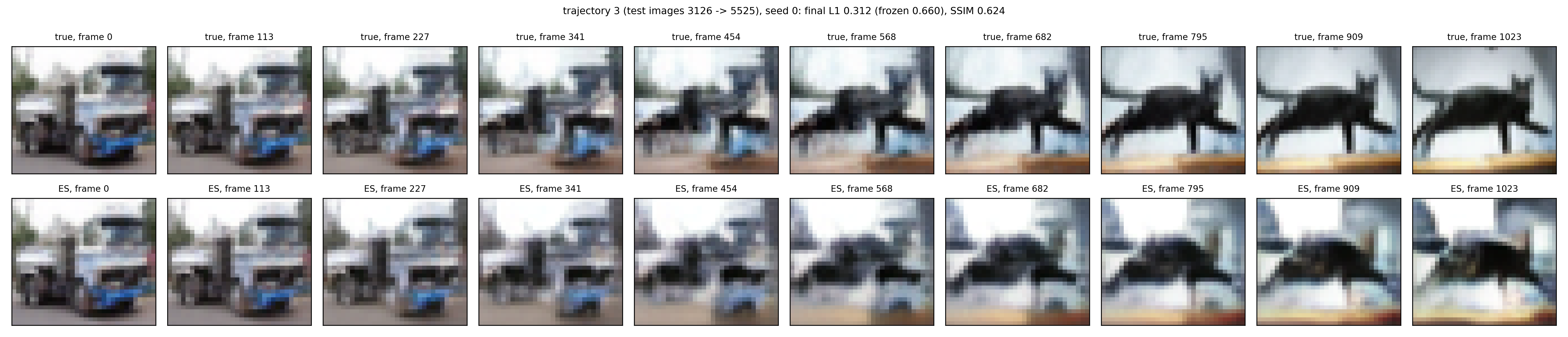}
\includegraphics[width=0.49\textwidth]{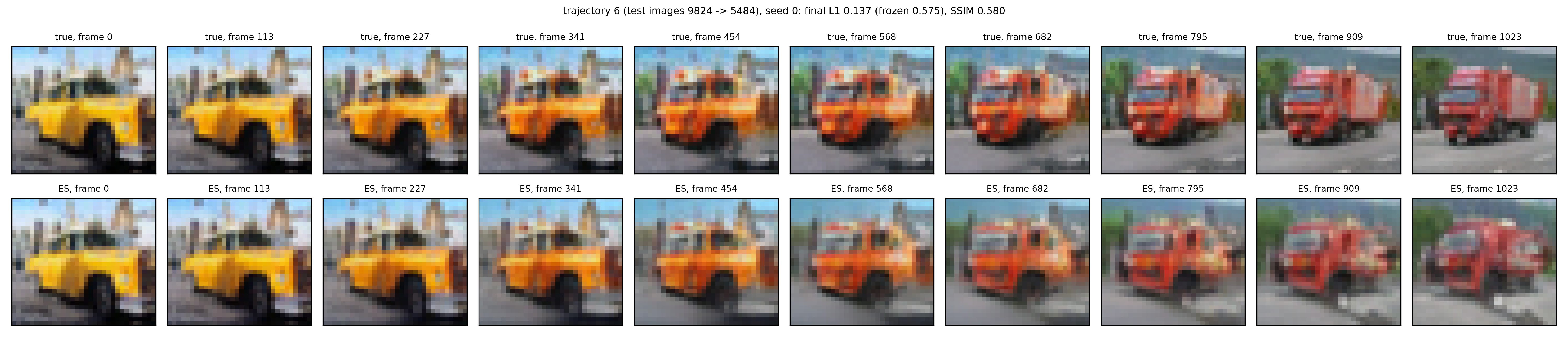}
\includegraphics[width=0.49\textwidth]{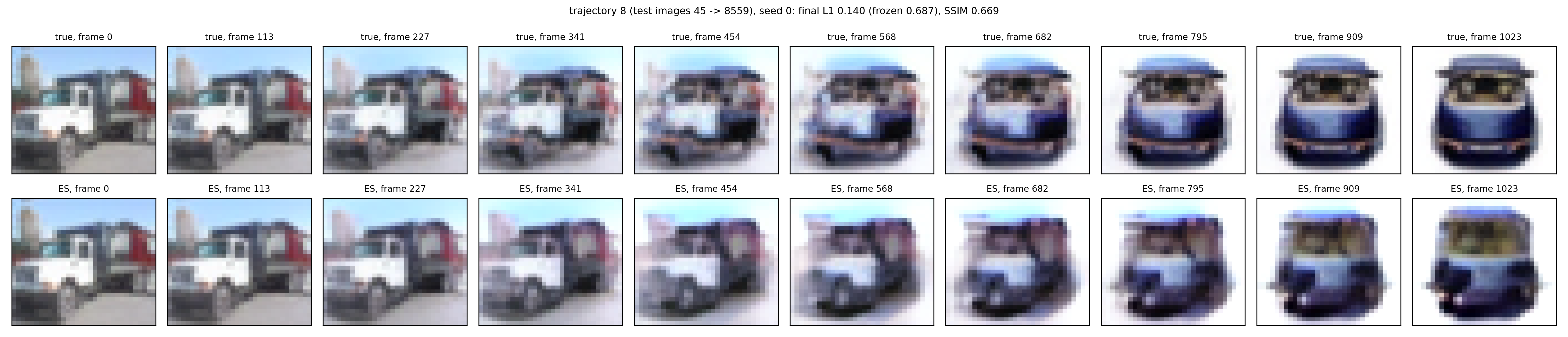}
\includegraphics[width=0.49\textwidth]{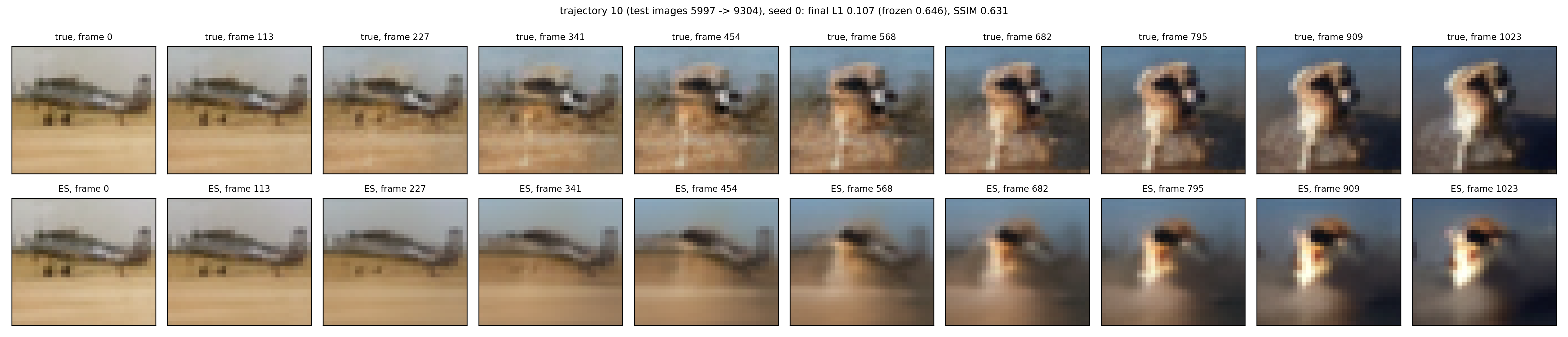}
\includegraphics[width=0.49\textwidth]{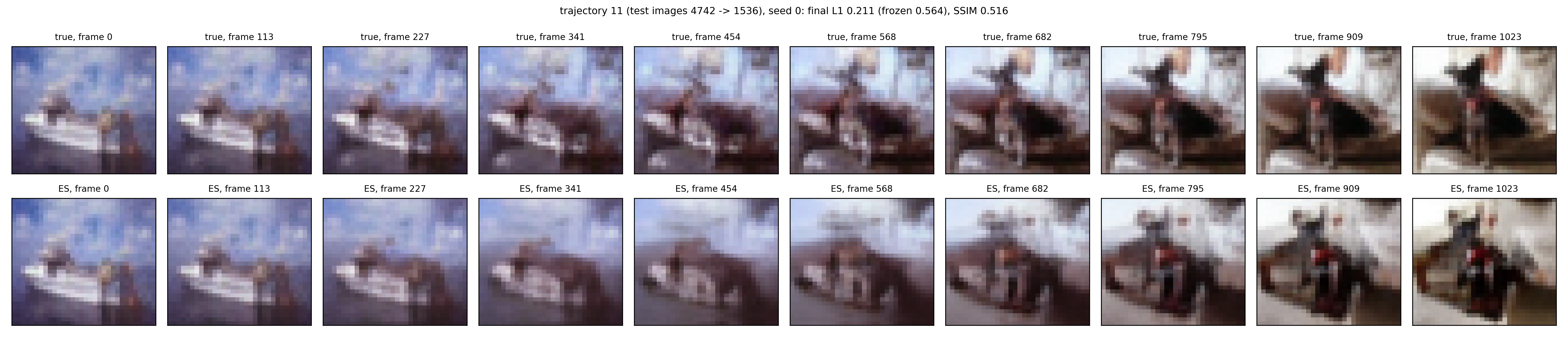}
\includegraphics[width=0.49\textwidth]{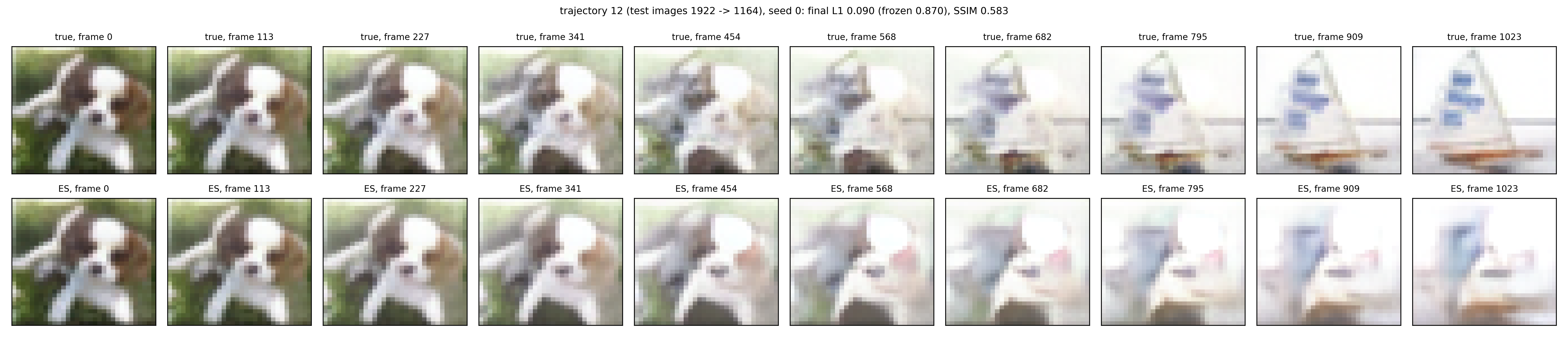}

\includegraphics[width=0.49\textwidth]{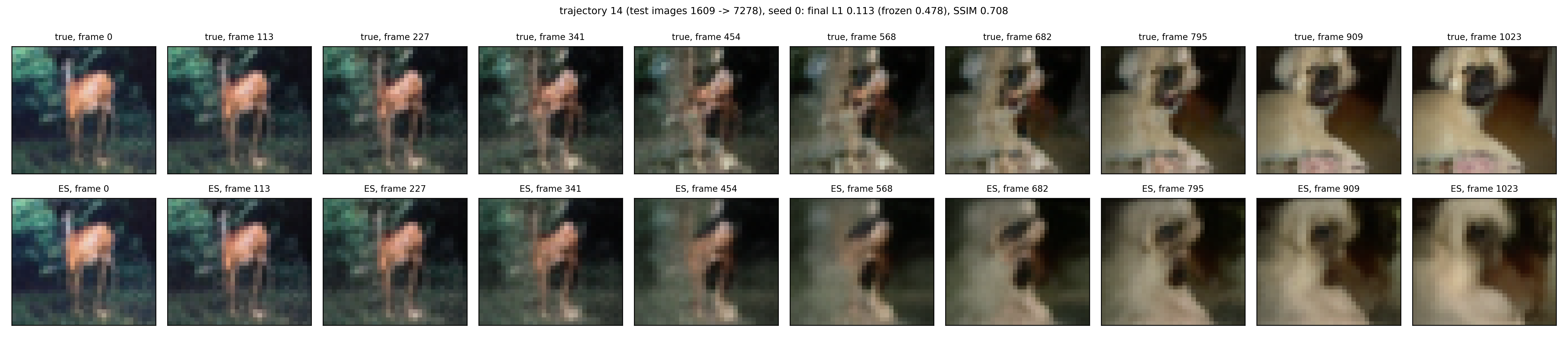}
\includegraphics[width=0.49\textwidth]{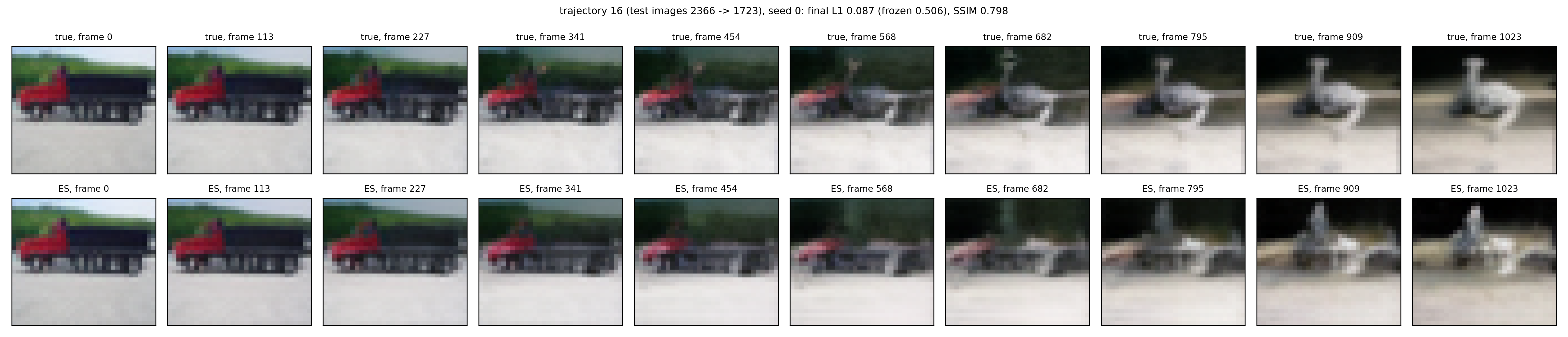}
\includegraphics[width=0.49\textwidth]{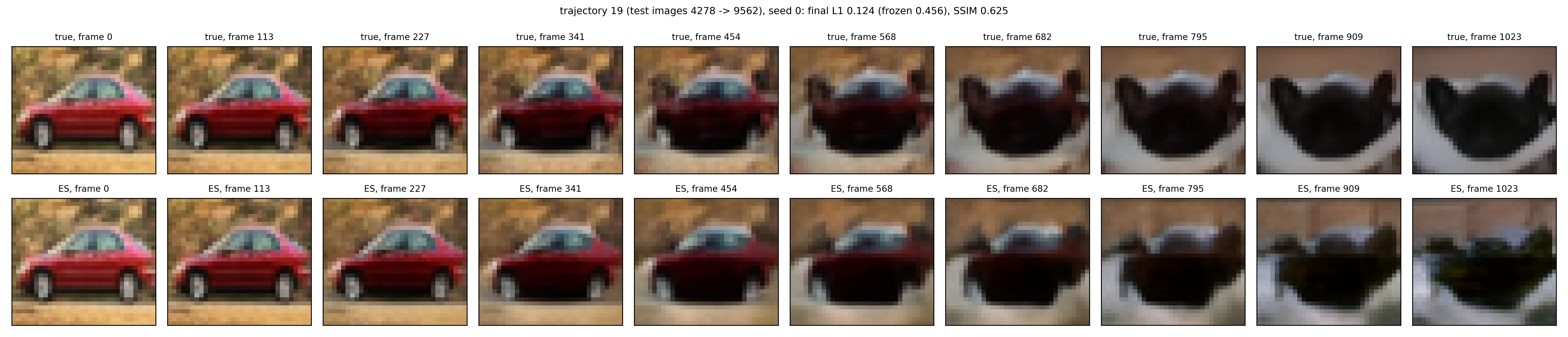}
\includegraphics[width=0.49\textwidth]{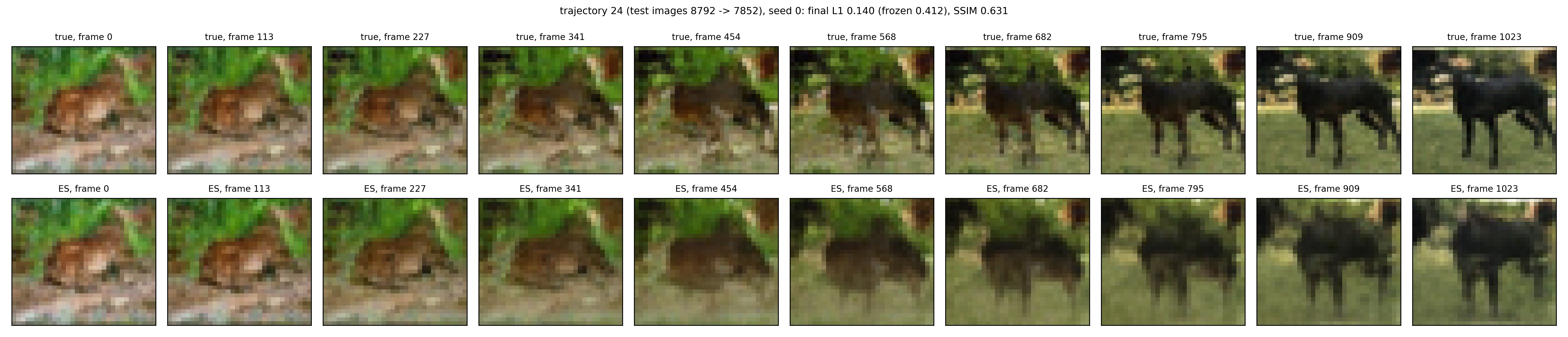}
\includegraphics[width=0.49\textwidth]{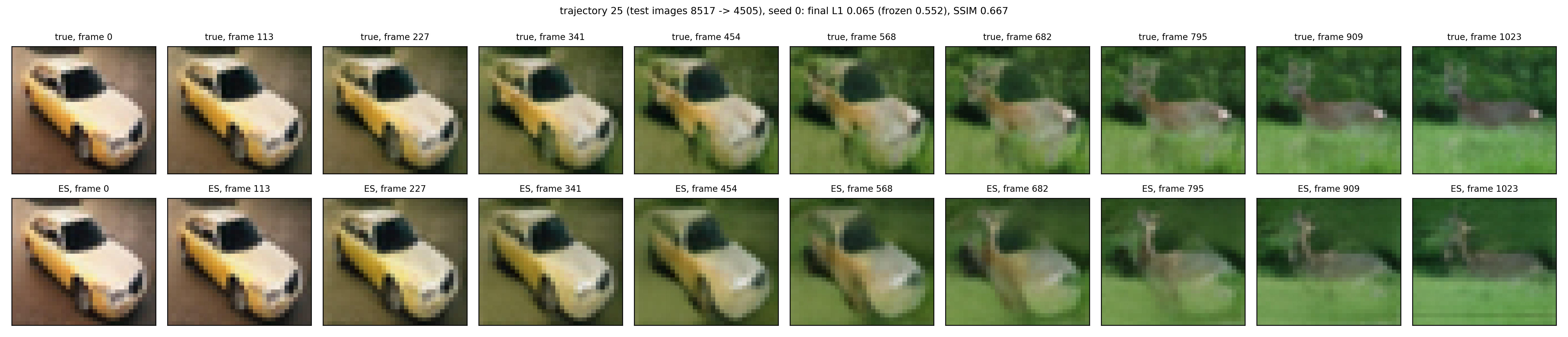}
\includegraphics[width=0.49\textwidth]{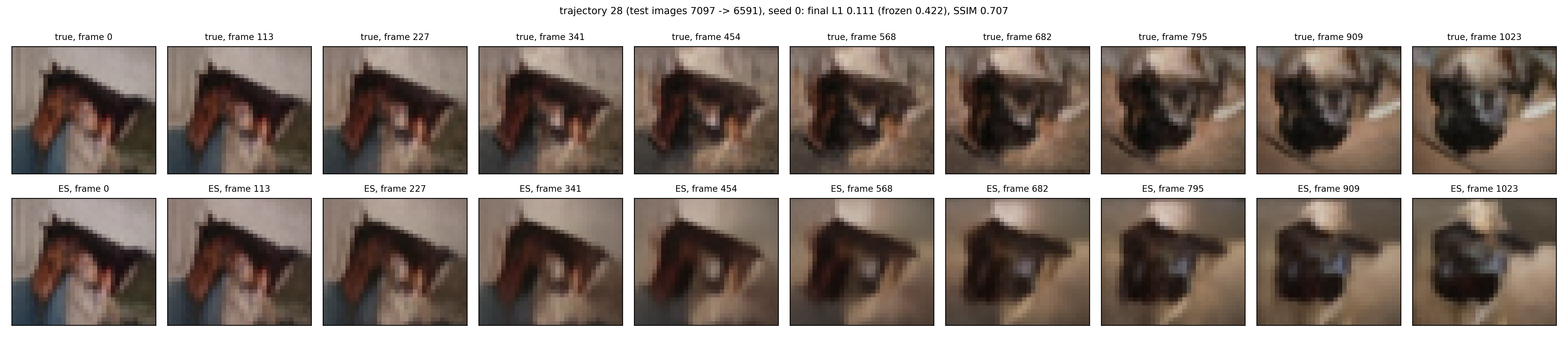}
\includegraphics[width=0.49\textwidth]{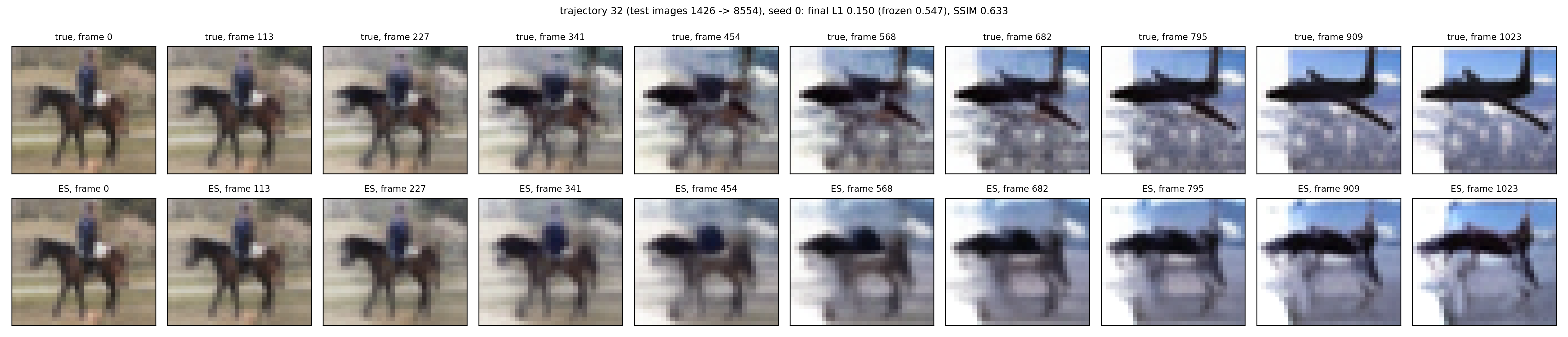}
\includegraphics[width=0.49\textwidth]{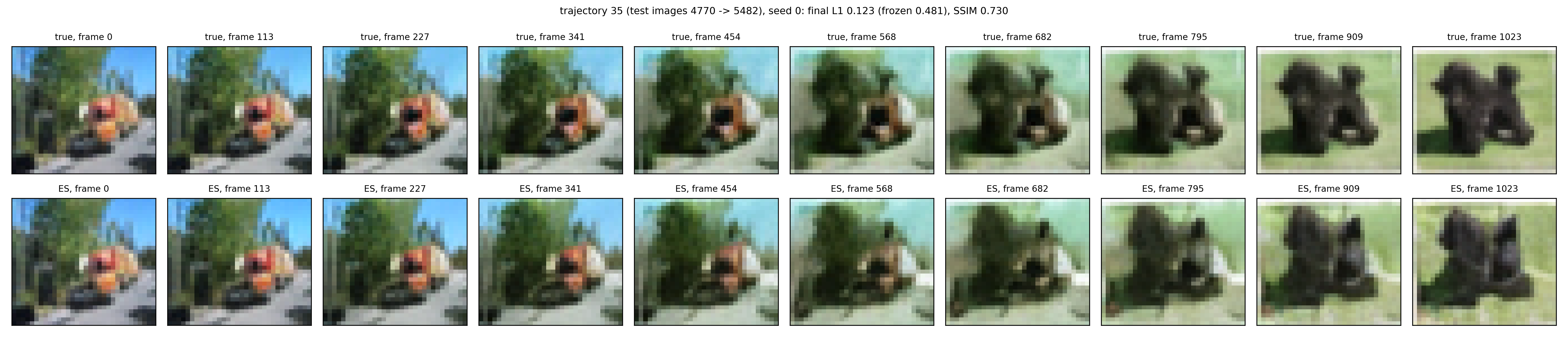}

\includegraphics[width=0.49\textwidth]{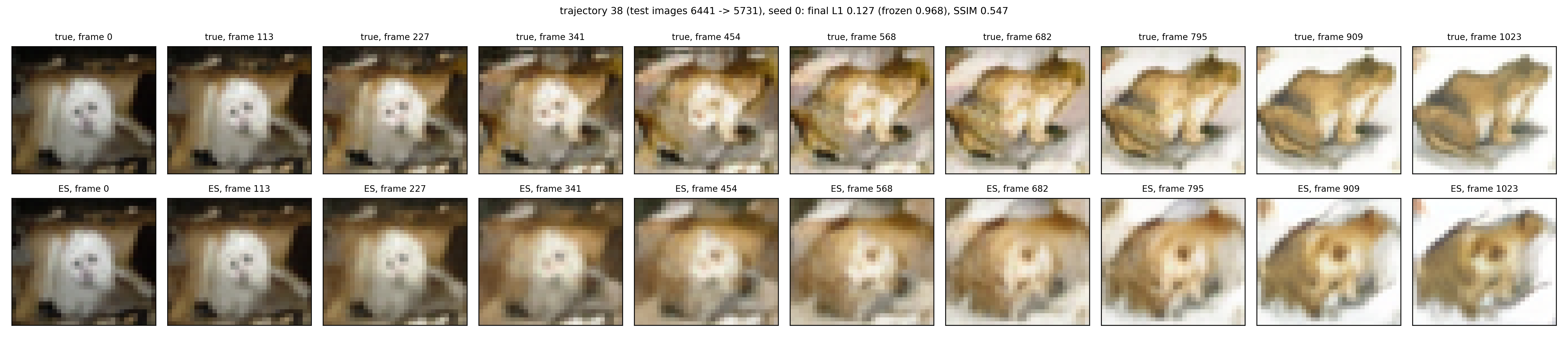}
\includegraphics[width=0.49\textwidth]{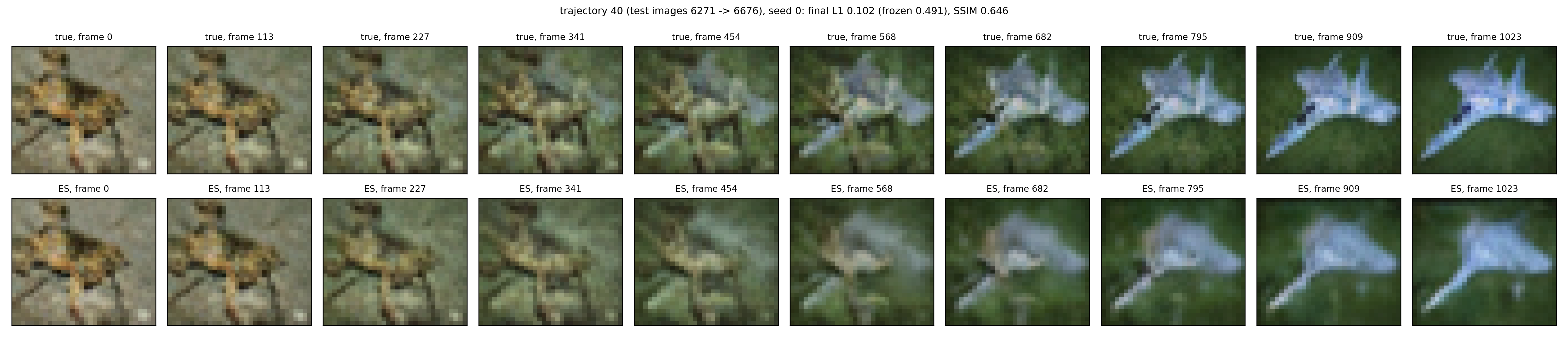}
\includegraphics[width=0.49\textwidth]{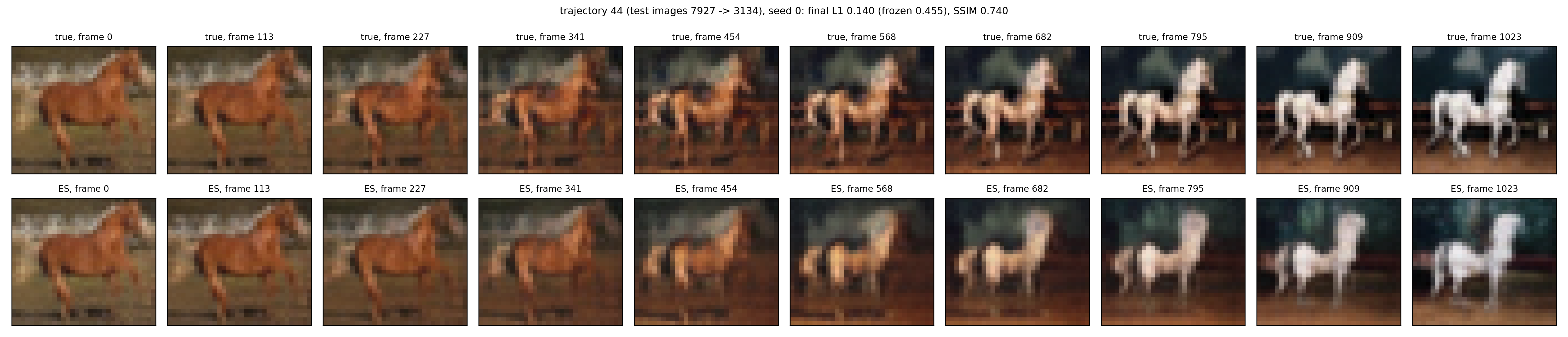}
\includegraphics[width=0.49\textwidth]{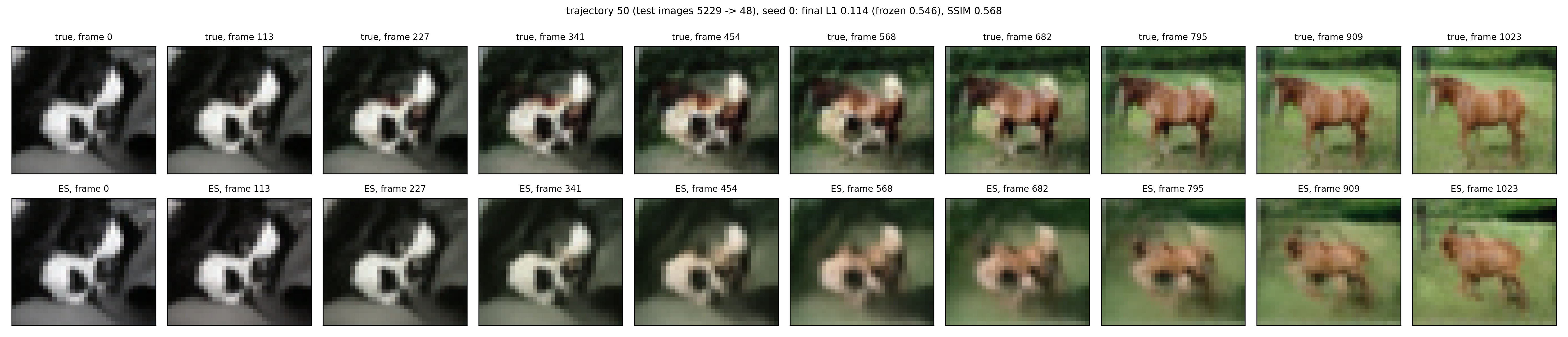}
\includegraphics[width=0.49\textwidth]{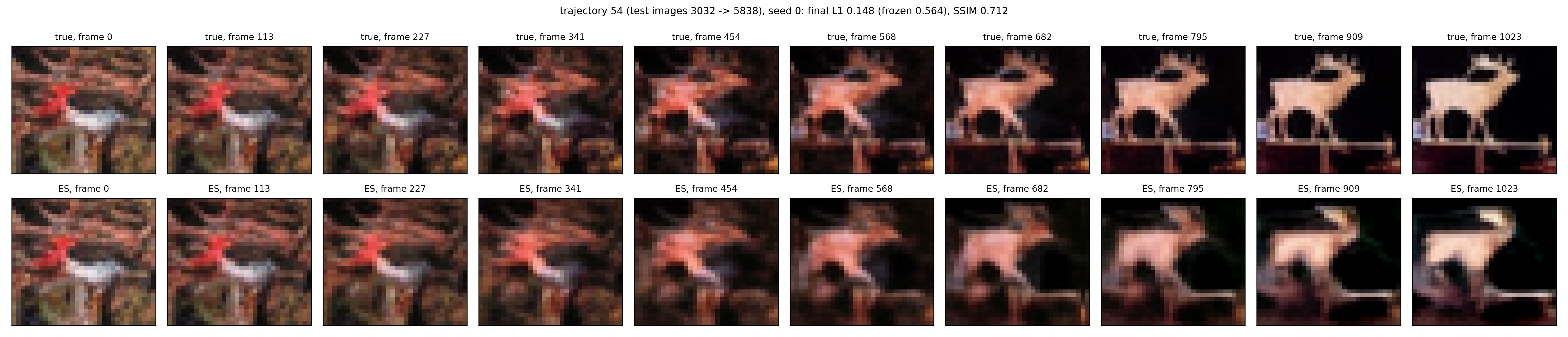}
\includegraphics[width=0.49\textwidth]{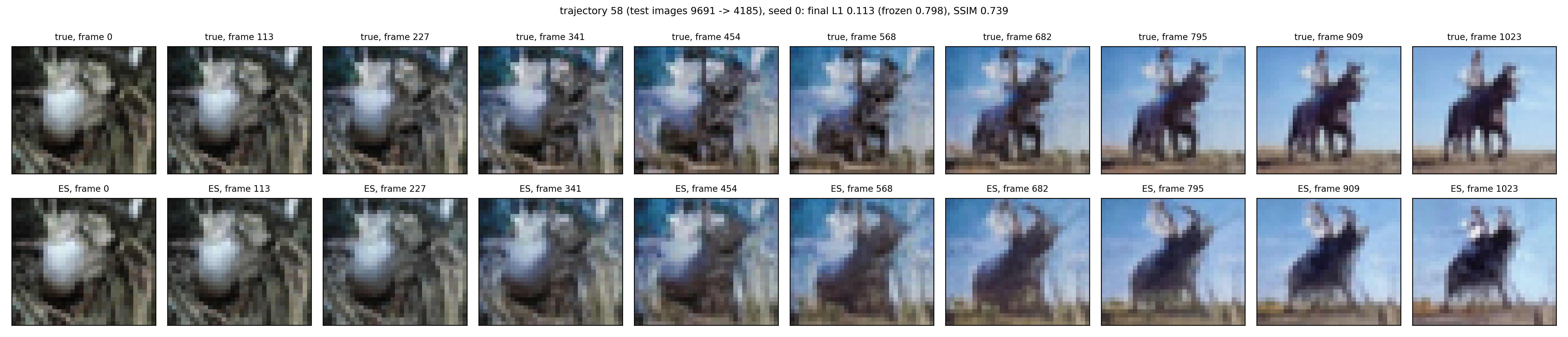}
\includegraphics[width=0.49\textwidth]{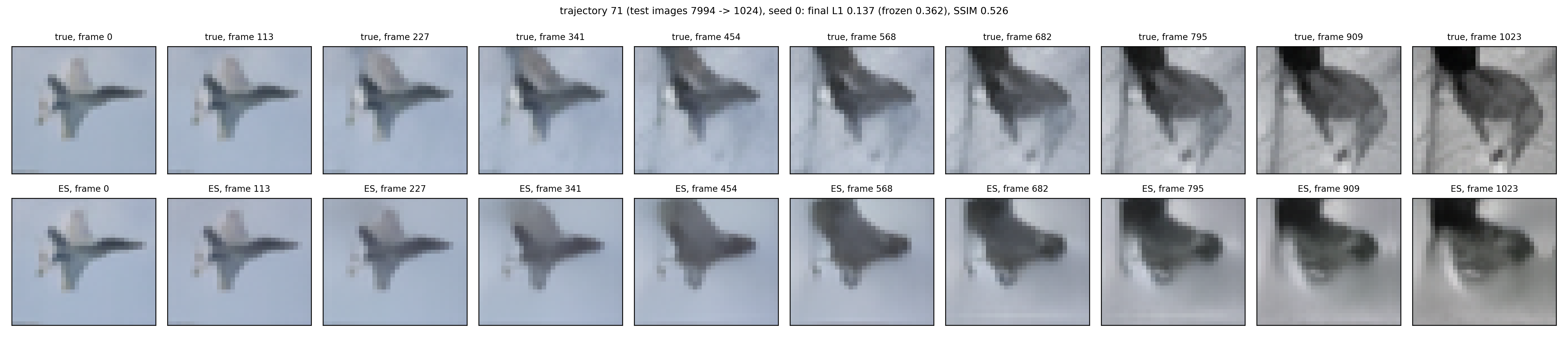}
\includegraphics[width=0.49\textwidth]{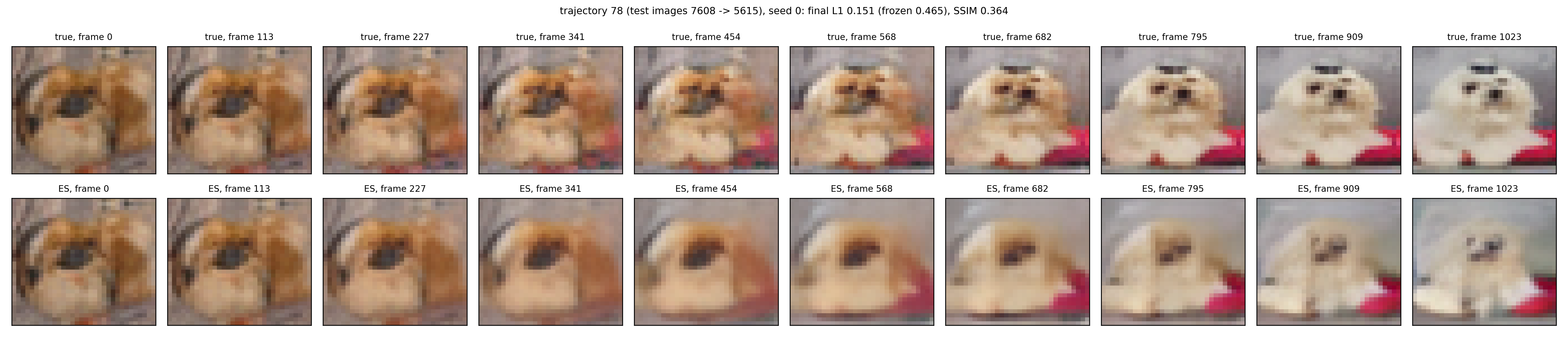}

\includegraphics[width=0.49\textwidth]{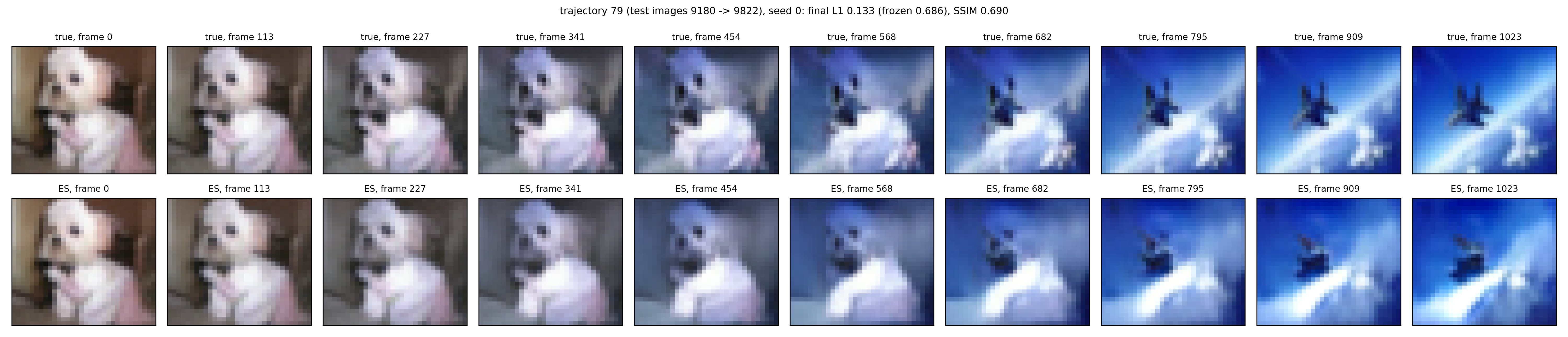}
\includegraphics[width=0.49\textwidth]{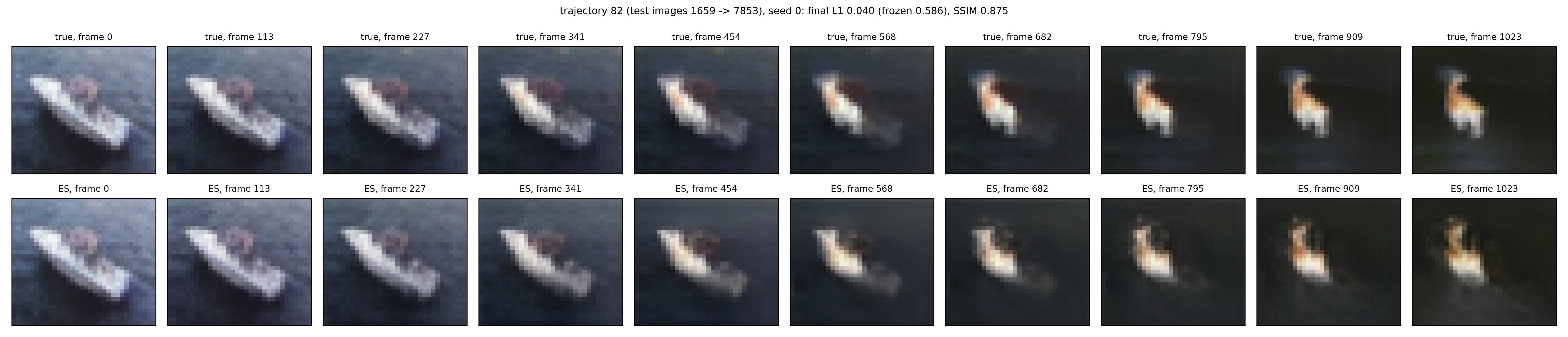}
\includegraphics[width=0.49\textwidth]{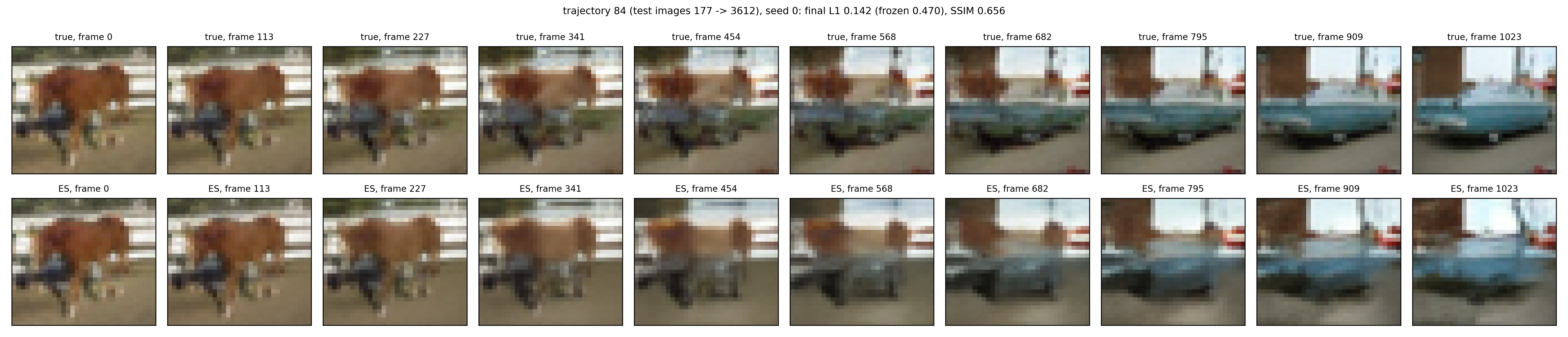}
\includegraphics[width=0.49\textwidth]{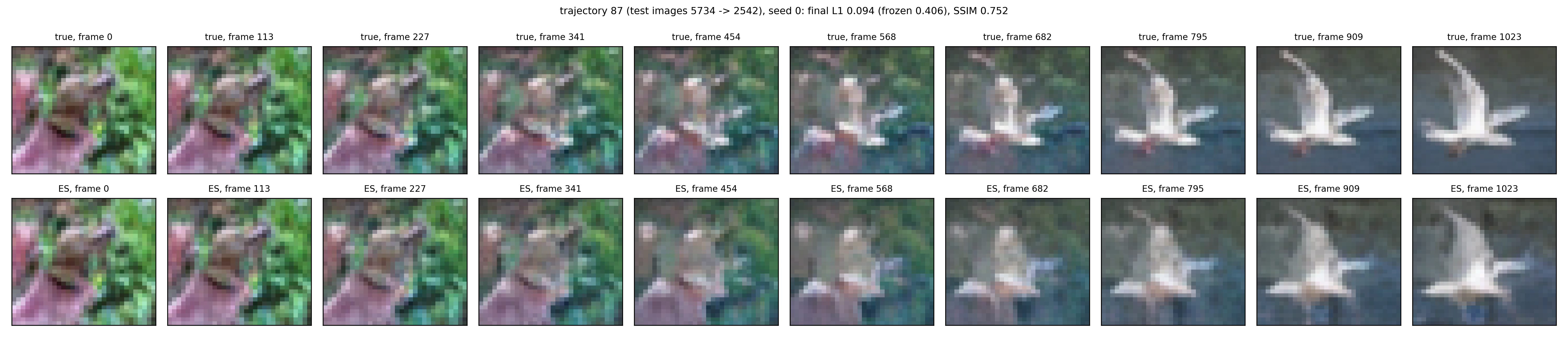}

\caption{ES dynamics results for tracking time-varying CIFAR-10 images based on 1D projections.}
\label{fig:CIFAR10_more1}
\end{figure}

\section{Dimension dependence of the finite-frequency threshold and
the evaluation cost of ES}
\label{supp:dimension}

Theorem~\ref{thm:main} is dimension-free \emph{in form}: for every
$(\delta,\varepsilon,T)$ there is a finite $\omega^*$, whatever the state
dimension. All dependence on the dimension is compressed into
$\omega^*(\delta,\varepsilon,T)$ through the constant $C_R'$ of
\eqref{eq:supp-gronwall}, which silently sums the $O(d^2)$
cross-coordinate oscillatory integrals of Step~2, and through the
frequency gaps $|r_n-r_i|$ entering the constants of
Lemma~\ref{lem:osc-main-ref}. Read naively, the theory appears to work
equally well in any dimension; in experiments the finite-$\omega$
threshold grows linearly with $d$
(\eqref{eq:supp-threshold}, Table~\ref{tab:supp-windows}). This
appendix closes the gap between those two statements. We prove:
(i)~an explicit finite-$\omega$ error bound whose constants carry the
dimension polynomially, so that a \emph{sufficient} threshold
$\omega^*(d)$ is polynomial in $d$
(Proposition~\ref{prop:dim-upper});
(ii)~a matching-in-$d$ \emph{lower} bound: for the evenly spaced
frequency design there are $C^\infty$ targets for which no
$\omega\lesssim d/T$ suffices, so
$\omega^*(d)=\Omega\!\big(d/((r_{\max}-1)T)\big)$
(Proposition~\ref{prop:dim-lower}), the rigorous counterpart of the
measured law \eqref{eq:supp-threshold}; and
(iii)~the consequence the practitioner feels: combined with the
step-resolution requirement of Algorithm~\ref{alg:es}, the number of
model evaluations needed to track the averaged dynamics grows at least
linearly in $d$, for \emph{every} choice of the frequency band
(Theorem~\ref{thm:dim-nfe}). The averaged limit itself never sees the
dimension; the entire cost is paid through the exchange rate $\omega$
between the fast and slow clocks of \eqref{eq:supp-fastclock}.

Throughout this appendix $d$ denotes the state dimension (the $N$ of
Appendix~\ref{supp:notation}), and the frequency ratios are
$\{r_i\}_{i=1}^d\subset[1,r_{\max}]$; to avoid a notation collision we
write $r_{\max}$ for the top of the band, denoted $N$ in
Appendix~\ref{supp:num-threshold}. For the evenly spaced design,
adjacent ratios differ by
$\delta_r=(r_{\max}-1)/(d-1)$, so the smallest beat frequency is
$\Delta_{\min}=\omega\delta_r$ as in \eqref{eq:supp-gapfreq}.

\subsection{Dimension-normalized setting}

Comparing constants across dimensions is meaningful only if the
regularity of the target is normalized per coordinate. We therefore
work, in this appendix, under the following strengthening of
Assumptions~\ref{ass:reg}--\ref{ass:growth}, chosen to cover the
sampling instantiations (I), (I$_\lambda$), (III) of
Sec.~\ref{sec:inst} and the synthetic testbed of
Appendix~\ref{supp:num-testbed}.

\begin{assumption}[Dimension-normalized regularity; constant diffusion]
\label{ass:dim}
$\f\equiv\mathbf 0$ and $g_n\equiv g>0$ for all $n$ (constant scalar
diffusion, $\bsig=gI$). $J\in C^{2,1}(\R^d\times[0,T])$ with the
\emph{global} bounds
\begin{equation}
\sup_{\x,t}\,|\partial_i J|\le G_1\ \ \forall i,
\qquad
\sup_{\x,t}\,\|\nabla_\x J\|_2\le \bar G_1,
\qquad
\sup_{\x,t}\,\|\nabla^2_\x J\|_{\mathrm{op}}\le H_2,
\qquad
\sup_{\x,t}\,|\partial_t J|\le G_t .
\label{eq:supp-dim-primitives}
\end{equation}
\end{assumption}

The four primitives are deliberately kept separate: $G_1$, $H_2$, and
$G_t$ are per-coordinate (dimension-robust) quantities, while
$\bar G_1\le\sqrt d\,G_1$ may grow with $d$ in the worst case but is
$O(1)$ for targets whose gradient is carried by $O(1)$ coordinates. On
the Gaussian-mixture testbed \eqref{eq:supp-gmm}, restricted to any
fixed ball, all four are bounded uniformly in $d$. Global (rather than
compact-set) bounds let us drop the localization $\tau_R$ and keep the
radius $R$ out of the dimension accounting; on compacts, every
statement below holds verbatim with the primitives evaluated on
$\bar B_R\times[0,T]$ and the stopping time of \eqref{eq:supp-tau}
reinstated.

\subsection{A frequency-resolved oscillatory bound}

The first step is to expose the two hidden dependences of
Lemma~\ref{lem:osc-main-ref}: on the oscillation frequency $|c|\omega$
(through the block averages) and on the dimension (through the speed
$\|\bmu_\omega\|_2=O(\sqrt{\alpha\omega r_{\max} d}\,)$ of the dithered
state and the size $O(\sqrt{d\Delta})$ of its Brownian increments).

\begin{lemma}[Oscillatory integrals, explicit in $|c|$ and $d$]
\label{lem:dim-osc}
Let Assumption~\ref{ass:dim} hold, let
$\Phi:\R^d\times[0,T]\to\R$ be bounded by $C_\Phi$ and jointly
Lipschitz with constant $L_\Phi$ (Euclidean norm in $\x$), and let
$c\neq0$. Then for all $\omega\ge1$,
\begin{eqnarray}
\Expec\Big[\sup_{t\le T}\Big|\int_0^{t}
\Phi\big(\x_\omega(s),s\big)\,e^{\mathrm i c\omega s}\,ds\Big|^2\Big]
\;\le\;
\kappa\,(1+d)\Big(1+\tfrac{1}{|c|}\Big)\,\omega^{-1/2}, \\
\kappa:=c_0(1+T)^2\big[C_\Phi^2+L_\Phi^2 g^2(1+\alpha r_{\max})\big],
\label{eq:supp-dim-osc}
\end{eqnarray}
with $c_0$ a universal constant. If $\Phi$ depends on $\x$ only through
a fixed subset $S$ of coordinates, then $(1+d)$ may be replaced by
$(1+|S|)$.
\end{lemma}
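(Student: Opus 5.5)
The plan is to rerun the block-partition proof of Lemma~\ref{lem:osc-main-ref} with $\tau_R\equiv T$. Under Assumption~\ref{ass:dim} all bounds are global, so no localization is needed. Along the way I will track every constant explicitly in $|c|$, $d$, $g$, $\alpha$, $r_{\max}$ instead of absorbing them into $C_R$. Partition $[0,T]$ into $M=\lceil T/\Delta\rceil$ blocks of length $\le\Delta\in(0,1]$ and split the integral as $A(t)+B(t)$ exactly as before. Here $A$ is the frozen-coefficient part $\sum_j\Phi(\x_{t_j},t_j)\int_{\text{block}}e^{\mathrm ic\omega s}ds$, and $B$ is the increment part.

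For $A$, each block integral has modulus at most $2/(|c|\omega)$, so pathwise
\[
\sup_t|A|\le 2C_\Phi(T/\Delta+1)/(|c|\omega)\le 4C_\Phi(1+T)/(|c|\omega\Delta).
\]
This is where the factor $1/|c|$ enters; it also gives $(1/|c|)^2$ after squaring. For $B$, use $|\Phi(\x_s,s)-\Phi(\x_{t_j},t_j)|\le L_\Phi(\|\x_s-\x_{t_j}\|_2+\Delta)$. With $\f\equiv\mathbf0$ and $\bsig=gI$, the drift satisfies
\[
\|\bmu_\omega\|_2\le g\big(\sum_n\alpha\omega r_n\big)^{1/2}\le g\sqrt{\alpha r_{\max}\omega d},
\]
and the Brownian increment is $g(\w_s-\w_{t_j})$ with $\Expec\|\cdot\|^2=g^2d(s-t_j)\le g^2d\Delta$. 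Cauchy--Schwarz in $s$ then gives
\[
\Expec\sup_t|B|^2\le 3L_\Phi^2T^2\big[g^2\alpha r_{\max}\omega d\,\Delta^2+\Delta^2\big]+3L_\Phi^2T^2g^2d\Delta .
\]
Both dimension factors are linear in $d$, because only the squared Euclidean norm appears.

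Combining the two parts,
\[
\Expec\sup_t\Big|\int\Phi e^{\mathrm ic\omega s}\Big|^2\le c_0(1+T)^2\Big[\frac{C_\Phi^2}{c^2\omega^2\Delta^2}+L_\Phi^2\big(g^2\alpha r_{\max}d\,\omega\Delta^2+\Delta^2+g^2d\Delta\big)\Big].
\]
Take $\Delta=\omega^{-3/4}$, independent of $c$ and $d$. This makes the first term $C_\Phi^2c^{-2}\omega^{-1/2}$, the second $L_\Phi^2g^2\alpha r_{\max}d\,\omega^{-1/2}$, and the last two $O(\omega^{-3/4})\le O(\omega^{-1/2})$ since $\omega\ge1$.

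The main obstacle is reconciling the $c^{-2}$ produced by the square with the stated linear factor $(1+1/|c|)$. I would try two fixes in order. First, I would check whether that first term can be rebalanced by choosing $\Delta$ depending on $|c|$, namely $\Delta=|c|^{-1/2}\omega^{-3/4}$ capped at $1$. Second, failing that, I would note that in every application $|c|\in\{|r_n\pm r_i|,2r_n\}$ is bounded above by $2r_{\max}$. Then $c^{-2}$ can be bounded by $(1+1/|c|)^2$, and either the stated factor is read in that squared sense, or the alternative $\Delta$ is used to get a clean linear factor, absorbing $r_{\max}$ into $\kappa$. I would also pair $C_\Phi^2$ with the $1/|c|$ part and $L_\Phi^2g^2(1+\alpha r_{\max})$ with the $d$ part, both bounded by $\kappa(1+d)(1+1/|c|)$.

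Finally, if $\Phi$ depends only on the coordinates in $S$, the Lipschitz increment involves only $\|(\x_s-\x_{t_j})_S\|_2$. The drift and noise computations above then sum over $n\in S$ only, which replaces $d$ by $|S|$.
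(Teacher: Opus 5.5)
Your first fix is the right one, and it is exactly the paper's proof. The paper uses $\Delta=|c|^{-1/2}\omega^{-3/4}$ from the outset, so the frozen part is $\lesssim C_\Phi^2T^2/(\Delta^2c^2\omega^2)=C_\Phi^2T^2/(|c|\,\omega^{1/2})$.

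What your proposal still owes is the check that the increment part survives this choice. It is short:
\begin{itemize}
\item $g^2\alpha r_{\max}d\,\omega\Delta^2=g^2\alpha r_{\max}d\,|c|^{-1}\omega^{-1/2}$;
\item $\Delta^2\le|c|^{-1}\omega^{-1/2}$;
\item $g^2d\Delta=g^2d\,|c|^{-1/2}\omega^{-3/4}\le\tfrac12 g^2d\,(|c|^{-1}\omega^{-1/2}+\omega^{-1})\le g^2d\,(1+|c|^{-1})\,\omega^{-1/2}$, by AM--GM and $\omega\ge1$.
\end{itemize}

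The cap at $1$ also needs a sentence. If $|c|^{-1/2}\omega^{-3/4}>1$, then $|c|^{-1}>\omega^{3/2}$. The right-hand side is then at least $c_0(1+T)^2C_\Phi^2\,\omega\ge(C_\Phi T)^2$ for $c_0\ge1$, so it dominates the left-hand side trivially. The paper disposes of the case $\Delta>T$ the same way. Without this remark, the capped frozen-part bound $\sim C_\Phi^2(1+T)^2/(c^2\omega^2)$ is too large in that regime.

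Drop your second fallback. Bounding $c^{-2}$ by $(1+|c|^{-1})^2$ proves a different inequality, not the stated one. The discrepancy is not a constant factor, because in the application $|c|=|r_n-r_i|$ can be as small as $(r_{\max}-1)/(d-1)$.

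One step you share with the paper does not go through as written. The time-Lipschitz contribution $3L_\Phi^2T^2\Delta^2$ carries no factor $g^2$. It is therefore not absorbed by $\kappa$, whose $L_\Phi$-part is $L_\Phi^2g^2(1+\alpha r_{\max})$. This is a real defect of the stated constant, not bookkeeping. Take $d=1$, $J\equiv0$, $T=c=\alpha=r_{\max}=1$, $g=\omega_0^{-1}$ and $\Phi(\mathbf x,s)=\sin(\omega_0s)$, so that $C_\Phi=1$ and $L_\Phi=\omega_0$. At $\omega=\omega_0$ the left-hand side is at least $1/16$, because $\int_0^1\sin^2(\omega_0 s)\,ds\ge1/4$. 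The stated right-hand side equals $48c_0\,\omega_0^{-1/2}$, which is smaller for large $\omega_0$. Taking $\kappa=c_0(1+T)^2\big[C_\Phi^2+L_\Phi^2\big(1+g^2(1+\alpha r_{\max})\big)\big]$ repairs the lemma and changes none of the $d$- or $\omega$-dependence used later.
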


\begin{proof}
Repeat the proof of Lemma~\ref{lem:osc-main-ref} with the block length
chosen per frequency, $\Delta=|c|^{-1/2}\omega^{-3/4}$ (for
$\Delta>T$ or $\omega$ so small that the right side of
\eqref{eq:supp-dim-osc} exceeds $(C_\Phi T)^2$, the claim is trivial
after enlarging $c_0$). The frozen part obeys
$\sup_t|A|^2\le 8C_\Phi^2T^2/(\Delta^2c^2\omega^2)
=8C_\Phi^2T^2/(|c|\,\omega^{1/2})$. For the Lipschitz part, with
$\f\equiv0$ the drift is the dither alone, so on each block
$\|\x_s-\x_{t_j}\|\le g\sqrt{\alpha\omega r_{\max} d}\,\Delta
+\|\mathbf N_j(s)\|$ with
$\Expec\|\mathbf N_j\|^2\le g^2 d\,\Delta$; if $\Phi$ depends only on
coordinates $S$, the same holds with $d$ replaced by $|S|$ because only
the increments of those coordinates enter. Exactly as in
Lemma~\ref{lem:osc-main-ref},
\begin{align}
\Expec\sup_{t\le T}|B|^2
&\le 3L_\Phi^2T^2\big[g^2\alpha r_{\max}\,d\,\omega\Delta^2+\Delta^2\big]
+3L_\Phi^2T^2 g^2 d\,\Delta
\nonumber\\
&\le 3L_\Phi^2T^2 g^2(1+\alpha r_{\max})\,d
\Big[\tfrac{1}{|c|\,\omega^{1/2}}
+\tfrac{1}{|c|^{1/2}\omega^{3/4}}\Big]+3L_\Phi^2T^2\Delta^2 .
\end{align}
Since $\omega\ge1$,
$|c|^{-1/2}\omega^{-3/4}\le\tfrac12(|c|^{-1}\omega^{-1/2}+\omega^{-1})
\le(1+|c|^{-1})\omega^{-1/2}$, and $\Delta^2\le\omega^{-1/2}$;
collecting terms gives \eqref{eq:supp-dim-osc}.
\end{proof}

\subsection{Upper bound: a sufficient threshold polynomial in $d$}

Define the \emph{spectral crowding} of a ratio set,
\begin{equation}
\mathfrak S(r)\;:=\;\max_{1\le n\le d}\ \sum_{i\ne n}
\frac{1}{\sqrt{|r_n-r_i|}}\,,
\qquad
\mathfrak S(r)\;\le\;\frac{4d}{\sqrt{r_{\max}-1}}
\quad\text{(evenly spaced)},
\label{eq:supp-dim-crowding}
\end{equation}
the bound following from
$\sum_{i\ne n}|r_n-r_i|^{-1/2}
\le 2\delta_r^{-1/2}\sum_{g=1}^{d-1}g^{-1/2}
\le 4\sqrt{(d-1)/\delta_r}$.
$\mathfrak S$ is the quantity through which the frequency design enters
the error: it is small when the ratios are few or well separated, and
grows when $d$ frequencies crowd a fixed band.

\begin{proposition}[Explicit finite-$\omega$ error bound]
\label{prop:dim-upper}
Let Assumptions~\ref{ass:freq} and~\ref{ass:dim} hold, let $\x_\omega$
and $\bar\x$ be as in Theorem~\ref{thm:main}, and suppose the moment
bound $\Expec\sup_{t\le T}\|\bar\x\|^2<\infty$ of
Assumption~\ref{ass:growth}. There is a constant
$Q=Q(k,\alpha,g,r_{\max},T,G_1,\bar G_1,H_2,G_t)$, independent of $d$
and $\omega$, such that for all $\omega\ge1$,
\begin{equation}
\Expec\Big[\sup_{t\le T}\big\|\x_\omega(t)-\bar\x(t)\big\|^2\Big]
\;\le\;
Q\,e^{\Gamma T}
\Big[\;
\underbrace{d\,\big(1+d^{2}H_2^{2}+\bar G_1^{4}\big)\,\omega^{-1}}_{\text{ripple, boundary, martingale}}
\;+\;
\underbrace{d\,(1+d)\,\big(\mathfrak S(r)+d\big)^{2}\,\omega^{-1/2}}_{\text{beat pairs}}
\;\Big],
\label{eq:supp-dim-upper}
\end{equation}
with $\Gamma:=3(T+4)\big(k\alpha g^{2}H_2/2\big)^{2}$.
Consequently, for the evenly spaced design, a sufficient condition for
$\Prob\big(\sup_{t\le T}\|\x_\omega-\bar\x\|>\delta\big)\le\varepsilon$
is
\begin{equation}
\omega\;\ge\;
\omega^{*}_{\mathrm{suff}}(d)
\;=\;
C_{\mathrm{suff}}\,
\frac{d^{8}}{(r_{\max}-1)^{2}\,\delta^{4}\varepsilon^{2}}\,,
\label{eq:supp-dim-suff}
\end{equation}
with $C_{\mathrm{suff}}=C_{\mathrm{suff}}(k,\alpha,g,r_{\max},T,G_1,\bar
G_1,H_2,G_t)$. The exponent $8$ treats the four primitives as $d$-free,
which holds for coordinate-sparse gradients ($\bar G_1=O(1)$); for dense
isotropic energies with $\bar G_1=\Theta(\sqrt d)$ the Lipschitz constant
$L_\Phi\lesssim H_2+|k|G_1\bar G_1$ of the slow factors grows like
$\sqrt d$, the beat-pair term of \eqref{eq:supp-dim-upper} acquires an
extra factor $d$ through $\kappa$, and the sufficient exponent becomes
$10$ (and $5$ rather than $4$ under Remark~\ref{rem:dim-sharp}).
\end{proposition}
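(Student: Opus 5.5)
The plan is to rerun Steps~1--3 of the proof of Theorem~\ref{thm:main} under Assumption~\ref{ass:dim}, with $\tau_R\equiv T$ because the bounds are global, and to track every constant's dependence on $d$ and on the frequency gaps instead of absorbing it into $C_R$. The simplifications $\f\equiv\mathbf 0$ and $g_n\equiv g$ make the correctors $u^c=g\cos kJ$ and $u^s=g\sin kJ$ scalar and coordinate-independent. Their derivatives are explicit: $\partial_iu^{c}=-kg\,\partial_iJ\sin kJ$, and $\nabla^2u$ involves $k\nabla^2J$ and $k^2\nabla J\nabla J^{\top}$. This gives $\|\nabla u\|_2\le |k|g\bar G_1$ and $|\operatorname{tr}\nabla^2u|\le g(|k|dH_2+k^2\bar G_1^2)$.

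First I will bound $E^{(1)}_n$ for each coordinate. The boundary term is $O(\omega^{-1/2})$ uniformly. The $\partial_tu$ term involves $G_t$. The Hessian term, $\tfrac{g^2}{2}\sqrt{\alpha/\omega_n}\operatorname{tr}\nabla^2u$, has square $\lesssim\omega^{-1}(d^2H_2^2+\bar G_1^4)$. The martingale term is bounded by Doob and the It\^o isometry, using $\|\nabla u\|_2^2\le k^2g^2\bar G_1^2$. Summing the squares over the $d$ coordinates produces the first bracket $d(1+d^2H_2^2+\bar G_1^4)\omega^{-1}$.

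Next comes the non-resonant family $E^{(2)}_n$. For each $n$ and $i\ne n$ there are eight slow-factor/trig products at frequencies $\omega|r_n\pm r_i|$, plus the $i=n$ terms at $2r_n\omega$. Each slow factor $\Phi$ is bounded by $C|k|g^2G_1$ and has Lipschitz constant $L_\Phi\lesssim H_2+|k|G_1\bar G_1$, which is $d$-free in the sparse case. Lemma~\ref{lem:dim-osc} bounds each such term in $L^2$-sup by $\sqrt{\kappa(1+d)(1+1/|c|)}\,\omega^{-1/4}$ with $|c|=|r_n-r_i|$ or $r_n+r_i\ge2$. I will sum over $i$ in $L^2(\Prob;\sup)$ norm by Minkowski, not in squares, so that the $n$-th error is at most $\sqrt{\kappa(1+d)}\,\omega^{-1/4}\sum_i(1+|r_n-r_i|^{-1/2})\le\sqrt{\kappa(1+d)}\,(\mathfrak S(r)+d)\,\omega^{-1/4}$. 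Squaring and summing over $n$ gives the beat-pair term $d(1+d)(\mathfrak S+d)^2\omega^{-1/2}$. The resonant extraction of Step~2 is unchanged, since the $\partial_ng_n$ terms vanish identically here.

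Finally comes the Gr\"onwall step. The averaged drift $-\tfrac{k\alpha}{2}g^2\nabla J$ is globally Lipschitz with constant $\tfrac{|k|\alpha g^2}{2}H_2$ in operator norm, and the diffusion is constant, so the martingale difference vanishes. Step~3 then gives $\Gamma=3(T+4)(k\alpha g^2H_2/2)^2$ and the factor $e^{\Gamma T}$, and this Lipschitz constant is dimension-free precisely because operator norms are used. For \eqref{eq:supp-dim-suff}, I insert $\mathfrak S\le 4d/\sqrt{r_{\max}-1}$ from \eqref{eq:supp-dim-crowding}, so that the beat term is $\lesssim d^4(r_{\max}-1)^{-1}\omega^{-1/2}$ and dominates the $d^3\omega^{-1}$ ripple term. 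Chebyshev then requires $Qe^{\Gamma T}d^4(r_{\max}-1)^{-1}\omega^{-1/2}\le\delta^2\varepsilon$, and squaring gives $\omega\gtrsim d^8(r_{\max}-1)^{-2}\delta^{-4}\varepsilon^{-2}$. The dense case follows by letting $L_\Phi^2\sim d$ inside $\kappa$.

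The main obstacle is the beat-pair accounting. Summing the oscillatory errors in squares would lose the $|r_n-r_i|^{-1/2}$ structure and give a worse power of $d$, while the small-gap terms $|c|\ll1$ must be handled by Lemma~\ref{lem:dim-osc}'s per-frequency block choice, which is valid only when $\Delta\le T$. I would first verify that the trivial bound covers the regime where $|c|^{-1/2}\omega^{-3/4}>T$, and that the $(1+d)$ factor, rather than $(1+|S|)$, is the correct one, since the slow factors depend on all coordinates through $J$.
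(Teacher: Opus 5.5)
Your proposal is correct and follows the paper's proof essentially step for step. Like the paper, you drop the localization and bound the Step-1 family through $\|\nabla u\|_2\le|k|g\bar G_1$ and $|\operatorname{tr}\nabla^2u|\le g(|k|dH_2+k^2\bar G_1^2)$ before summing over coordinates. You then control each nonresonant pair with Lemma~\ref{lem:dim-osc} and sum over $i$ by the triangle inequality in $L^2(\sup_t)$ to get the factor $\mathfrak S(r)+d$. Finally you close with the $d$-free operator-norm Lipschitz constant $\tfrac{|k|\alpha g^2}{2}H_2$ in Gr\"onwall, followed by Chebyshev.

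One caveat, which you inherit from the paper rather than introduce: Lemma~\ref{lem:dim-osc} uses joint Lipschitz continuity of $\Phi$ in $(\x,t)$. The time modulus of $\partial_iu^{c/s}$ involves $\partial_t\nabla_\x J$, which is not among the four primitives of Assumption~\ref{ass:dim}. Strictly, $Q$ therefore also depends on a global bound for $\partial_t\nabla_\x J$; this dependence disappears when $J$ is time-independent.
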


\begin{proof}
We retrace Steps~1--3 of Appendix~\ref{supp:proof-main}, tracking the
dimension. \emph{Step 1 terms.} With $g$ constant, $u^{c}=g\cos kJ$,
$u^{s}=g\sin kJ$, so $\|\nabla u\|_2\le kg\bar G_1$ and
$|\operatorname{tr}(\bsig\bsig^{\!\top}\nabla^2u)|
\le g^{3}k\big(dH_2+k\bar G_1^{2}\big)$ (the trace of a symmetric
matrix with operator norm $H_2$ is at most $dH_2$). Following the
estimates leading to \eqref{eq:supp-E1}, each coordinate obeys
$\Expec\sup_t|E^{(1)}_n|^2
\le C\big(1+d^{2}H_2^{2}+\bar G_1^{4}+G_t^{2}\big)\,\omega^{-1}$
with $C=C(k,\alpha,g,r_{\max},T)$, and summing over $n$ multiplies by
$d$; this is the first bracket of \eqref{eq:supp-dim-upper}.
\emph{Step 2 terms.} For each ordered pair $(n,i)$ the nonresonant
integrals of \eqref{eq:supp-cross} have envelopes
$\Phi\in\{\partial_iu^{c/s}\,g\cos kJ,\ \partial_iu^{c/s}\,g\sin kJ\}$
with $C_\Phi\le kg^{2}G_1$ and $L_\Phi\le
Ckg^{2}(H_2+k G_1\bar G_1)$, oscillating at the frequencies
$(r_n\mp r_i)\omega$ (and $2r_n\omega$ for $i=n$), with prefactor
$\alpha\sqrt{r_i/r_n}\le\alpha\sqrt{r_{\max}}$.
Lemma~\ref{lem:dim-osc} and the triangle inequality in
$L^2(\sup_t)$ give
\begin{align}
\big\|\,\sup_t|E^{(2)}_n|\,\big\|_{L^2}
&\;\le\;
C\,\sqrt{\kappa}\,(1+d)^{1/2}\,\omega^{-1/4}
\Big[\sum_{i\ne n}\Big(1+\tfrac{1}{|r_n-r_i|}\Big)^{1/2}
+\sum_{i\ne n}\Big(1+\tfrac{1}{r_n+r_i}\Big)^{1/2}+2\Big]
\nonumber\\
&\;\le\;
C'\sqrt{\kappa}\,(1+d)^{1/2}\big(\mathfrak S(r)+d\big)\,\omega^{-1/4},
\end{align}
and summing squares over $n$ contributes the second bracket.
\emph{Step 3.} With $\f\equiv0$ and $\bsig=gI$ the averaged drift is
$\bar\bmu=-\tfrac{k\alpha g^{2}}{2}\nabla J$, whose Euclidean Lipschitz
constant is exactly $\tfrac{k\alpha g^{2}}{2}H_2$ --- crucially free of
$d$ under the operator-norm bound of \eqref{eq:supp-dim-primitives} ---
and the diffusion difference vanishes. The Gr\"onwall step of
\eqref{eq:supp-gronwall} then multiplies the remainder by
$3e^{\Gamma T}$ with $\Gamma$ as stated, giving
\eqref{eq:supp-dim-upper}. For \eqref{eq:supp-dim-suff}, insert
\eqref{eq:supp-dim-crowding}, note that the second bracket dominates
for $\omega\ge1$ up to a constant, apply Chebyshev's inequality, and
absorb numerical factors into $C_{\mathrm{suff}}$.
\end{proof}

\begin{remark}[Sharper $\omega$-power under one more derivative]
\label{rem:dim-sharp}
If in addition $J\in C^{3}$ with
$\sup_{\x,t}\|\nabla_\x\nabla^2_\x J\|_{\mathrm{op}}$ bounded, the per-pair estimate can
be run through the It\^{o} integration by parts of
Remark~\ref{rem:rate-supp} instead of the partition argument, replacing
$(1+|c|^{-1})\omega^{-1/2}$ in \eqref{eq:supp-dim-osc} by
$(1+|c|^{-2})\omega^{-1}$ up to constants involving
$\|\nabla J\|_1\le dG_1$. Since
$\sum_{i\ne n}|r_n-r_i|^{-1}\le 2(d-1)(1+\ln(d-1))/(r_{\max}-1)$ for
the evenly spaced design, this trades powers between $d$ and $\omega$
and yields the alternative sufficient threshold
$\omega^{*}_{\mathrm{suff}}\le
C\,d^{4}(1+\ln d)^{2}/\big((r_{\max}-1)^{2}\delta^{2}\varepsilon\big)$.
We do not optimize further: the role of
Proposition~\ref{prop:dim-upper} is to certify that the hidden
constants are \emph{polynomial} in $d$, and Remark~\ref{rem:dim-gap}
explains why the true threshold is smaller still.
\end{remark}

\subsection{Lower bound: the beat obstruction is real}

Distinct frequencies (Assumption~\ref{ass:freq}) are not a
technicality. We first record what happens when they fail, then turn
the same mechanism, quantitatively, into a lower bound on
$\omega^{*}$.

\begin{proposition}[A coincident pair changes the limit]
\label{prop:dim-pair}
Let Assumption~\ref{ass:dim} hold and suppose $r_n=r_m$ for one pair
$n\neq m$, all other ratios pairwise distinct and distinct from
$r_n$. Then the conclusion of Theorem~\ref{thm:main} holds with the
averaged SDE \eqref{eq:supp-averaged} replaced by
\begin{equation}
d\bar\x=-\tfrac{k\alpha g^{2}}{2}\,M\,\nabla_{\bar\x}J(\bar\x,t)\,dt
+g\,d\w,
\qquad
M:=I+\mathbf e_n\mathbf e_m^{\!\top}+\mathbf e_m\mathbf e_n^{\!\top}.
\label{eq:supp-dim-M}
\end{equation}
$M$ is singular, with eigenvalue $0$ along
$(\mathbf e_n-\mathbf e_m)/\sqrt2$. In particular, for instantiation
(I) with the quadratic energy $\E(\x)=\tfrac12\|\x\|^{2}$ the modified
dynamics have \emph{no invariant probability law}: along
$\mathbf e_n-\mathbf e_m$ the drift vanishes identically and the state
performs free Brownian motion.
\end{proposition}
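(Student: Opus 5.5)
The plan is to rerun Steps~1--4 of the proof of Theorem~\ref{thm:main} (Appendix~\ref{supp:proof-main}) verbatim, with $g_n\equiv g$ and $\f\equiv\mathbf 0$, and to change only the bookkeeping of Step~2. There, the products in \eqref{eq:supp-cross} were sorted by the frequencies $(r_n\pm r_i)\omega$. With $r_n=r_m$ the difference frequency $(r_n-r_m)\omega$ is now $0$ for the two ordered pairs $(n,m)$ and $(m,n)$. All other pairs keep nonzero frequencies by hypothesis, so those terms are still absorbed into $E^{(2)}_n$ by Lemma~\ref{lem:osc-main-ref}, with the same $O(\omega^{-1/2})$ mean-square bound. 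The task therefore reduces to computing the two extra resonant means.

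For coordinate $n$ and $i=m$, take the zero-frequency halves of the products, exactly as was done for $i=n$:
\begin{equation*}
\sin(\omega_ns)\sin(\omega_ms)=\tfrac12-\tfrac12\cos(2\omega_ns),\qquad \cos(\omega_ns)\cos(\omega_ms)=\tfrac12+\tfrac12\cos(2\omega_ns).
\end{equation*}
The cross products $\sin(\omega_ns)\cos(\omega_ms)$ have zero mean and go into the error. The prefactor is $\alpha\sqrt{r_m/r_n}=\alpha$. The surviving term is
\begin{equation*}
\tfrac{\alpha}{2}\int_0^t g\big[\partial_mu^c\sin kJ-\partial_mu^s\cos kJ\big]\,ds .
\end{equation*}
Since $g$ is constant, $\partial_mu^c=-kg\,\partial_mJ\sin kJ$ and $\partial_mu^s=kg\,\partial_mJ\cos kJ$, so the bracket equals $-kg\,\partial_mJ$. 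The contribution is therefore $-\tfrac{k\alpha g^2}{2}\int_0^t\partial_mJ\,ds$, the same algebra as \eqref{eq:supp-resonant} with the index $n$ in the derivative replaced by $m$. By symmetry, coordinate $m$ acquires $-\tfrac{k\alpha g^2}{2}\partial_nJ$. Collecting terms, the exact representation \eqref{eq:supp-mild} holds with $\bar\bmu$ replaced by $-\tfrac{k\alpha g^2}{2}M\nabla J$.

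The Gr\"onwall comparison of Step~3 and the delocalization of Step~4 then go through unchanged. The new drift is Lipschitz with constant $\|M\|_{\mathrm{op}}\tfrac{k\alpha g^2}{2}H_2=k\alpha g^2H_2$, and the diffusion difference vanishes because $\bsig=gI$. This gives the modified conclusion of Theorem~\ref{thm:main}.

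The spectral claims about $M$ are elementary. On $\operatorname{span}(\mathbf e_n,\mathbf e_m)$, $M$ acts as $\begin{pmatrix}1&1\\1&1\end{pmatrix}$, which has eigenvalues $2$ and $0$; the null vector is $(\mathbf e_n-\mathbf e_m)/\sqrt2$, and $M$ is the identity on the complementary coordinates.

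For the quadratic energy $\E(\x)=\tfrac12\|\x\|^2$ we have $\nabla\E=\x$, so $Z:=\langle \mathbf e_n-\mathbf e_m,\bar\x\rangle/\sqrt2$ satisfies $dZ=-\tfrac{k\alpha g^2}{2}\langle \mathbf e_n-\mathbf e_m,M\bar\x\rangle/\sqrt2\,dt+g\,dW$. The drift vanishes because $M$ is symmetric and $M(\mathbf e_n-\mathbf e_m)=0$. Hence $Z$ is a Brownian motion with $\mathrm{Var}\,Z(t)=\mathrm{Var}\,Z(0)+g^2t\to\infty$. If an invariant probability $\pi$ existed, the law of $Z$ under stationarity would be time-invariant, which contradicts the growing variance. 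To avoid assuming second moments of $\pi$, I would instead use the Gaussian density bound $\Prob(|Z(t)|\le K)\le 2K/\sqrt{2\pi g^2t}\to0$ for every $K$, conditionally on $Z(0)$. This contradicts the tightness forced by stationarity.

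The main obstacle is making sure no additional resonance appears. One check is the sum frequencies $r_n+r_i$ against $2r_n$ when $i=m$: these are nonzero, so they are harmless. The other check is that the constant ratio prefactor and the sign conventions of the $v_s$ group in \eqref{eq:supp-cross} produce exactly the $+$ symmetric off-diagonal entries, with no sign flip. I would verify the sign by a direct two-dimensional warm-up computation in the style of Appendix~\ref{supp:warmup}, with $x_n$ and $x_m$ sharing one frequency.
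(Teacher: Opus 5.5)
Your proposal is correct and follows the paper's proof: rerun Step~2, treat the $i=m$ term as resonant like $i=n$, extract $-\tfrac{k\alpha g^{2}}{2}\partial_mJ$ (and symmetrically $\partial_nJ$ for coordinate $m$) by the same algebra as Eq.~(\ref{eq:supp-resonant}), leave all other estimates unchanged, and note that the $(\mathbf e_n-\mathbf e_m)$ component is driftless Brownian motion. Your Gaussian density bound for the absence of an invariant law just makes explicit the paper's one-line appeal to null recurrence.
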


\begin{proof}
Rerun Step 2 of Appendix~\ref{supp:proof-main} for coordinate $n$. The
term $i=m$ of \eqref{eq:supp-cross} now oscillates at the frequencies
$0$ and $2r_n\omega$, exactly as the term $i=n$ does; retaining its
zero-frequency half and applying the same algebra that produced
\eqref{eq:supp-resonant} (with $\partial_m$ in place of $\partial_n$;
the $\partial g$ terms are absent since $g$ is constant) yields, in
addition to \eqref{eq:supp-resonant}, the resonant contribution
$-\tfrac{k\alpha g^{2}}{2}\int_0^t\partial_mJ(\x_\omega(s),s)\,ds$, and
symmetrically for coordinate $m$. All remaining estimates are
unchanged, giving \eqref{eq:supp-dim-M}. The spectral claim is
immediate; for the quadratic energy,
$M\nabla\E(\x)=M\x$ vanishes on $\ker M$, so the
$(\mathbf e_n-\mathbf e_m)$-component of the state solves
$d\zeta=\sqrt2\,g\,d\beta$ for a scalar Brownian motion $\beta$, which
is null recurrent.
\end{proof}

At finite $\omega$ with a small but nonzero gap, the extra term of
Proposition~\ref{prop:dim-pair} does not vanish; it merely oscillates
at the beat frequency $\Delta=\omega|r_n-r_m|$, and until the beat has
completed a fraction of a cycle the trajectory cannot distinguish the
correct limit from \eqref{eq:supp-dim-M}. Quantifying this yields the
lower bound. The construction is deliberately \emph{two-body}: a
linear target supported on a single coordinate makes every cross term
except the designated pair vanish identically, so the obstruction can
be isolated with dimension-free constants.

\begin{proposition}[$\Omega(d)$ frequency threshold for the evenly
spaced design]
\label{prop:dim-lower}
Fix $k,\alpha,g,T>0$, $\gamma>0$, and $r_{\max}>1$, and let the ratios
be evenly spaced on $[1,r_{\max}]$. Let $J(\x,t)=\gamma x_1$, which
satisfies Assumption~\ref{ass:dim} with $G_1=\bar G_1=\gamma$,
$H_2=G_t=0$. Set
\begin{equation}
\delta_0:=\frac{k\alpha g^{2}\gamma\,T}{2\pi\sqrt{r_{\max}}}\,.
\end{equation}
There exist constants $\omega_1,V<\infty$ and
$\varepsilon_0:=\delta_0^{2}/(4V)>0$, all independent of $d$ and
$\omega$, such that for every $d\ge d_0$ and every
\begin{equation}
\omega_1\;\le\;\omega\;\le\;\frac{\pi\,(d-1)}{2T\,(r_{\max}-1)}
\label{eq:supp-dim-window}
\end{equation}
one has
$\Prob\big(\sup_{t\le T}\|\x_\omega(t)-\bar\x(t)\|>\delta_0/2\big)
\ge\varepsilon_0$. Consequently
\begin{equation}
\omega^{*}(\delta,\varepsilon,T)\;\ge\;
\frac{\pi\,(d-1)}{2T\,(r_{\max}-1)}
\qquad\text{for all }\delta<\delta_0/2,\ \varepsilon<\varepsilon_0,\
d\ge d_0,
\label{eq:supp-dim-lower}
\end{equation}
where $d_0:=1+\lceil 2T(r_{\max}-1)\,\omega_1/\pi\rceil$. The
threshold of Theorem~\ref{thm:main} therefore grows at least linearly
in the dimension at fixed band, matching the measured law
\eqref{eq:supp-threshold} up to the constant.
\end{proposition}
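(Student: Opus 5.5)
The plan is to exhibit the beat obstruction directly on the two-body pair formed by coordinate $1$, the only coordinate entering the phase $kJ=k\gamma x_1$, and its nearest frequency neighbour, coordinate $2$, with $r_1=1$ and $r_2=1+\delta_r$, $\delta_r=(r_{\max}-1)/(d-1)$. Because $J$ does not depend on $x_2$, the averaged equation \eqref{eq:supp-averaged} gives $d\bar x_2=g\,dw_2$: there is no drift in the second coordinate. The dithered coordinate satisfies
\[
dx_{\omega,2}=g\sqrt{\alpha\omega_2}\cos(\omega_2t+k\gamma x_{\omega,1})\,dt+g\,dw_2 .
\]
Since the initial condition and the Brownian motion are shared, the noise cancels exactly in $x_{\omega,2}-\bar x_2$. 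So it suffices to show that the integral of this dither drift is, with positive probability, of size at least $\delta_0/2$ for some $t\le T$. A lower bound on one coordinate is a lower bound on $\|\x_\omega-\bar\x\|$.

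First I rerun Steps~1--2 of Appendix~\ref{supp:proof-main} for coordinate $n=2$ with $g_n\equiv g$. The correctors are $u^c=g\cos(k\gamma x_1)$ and $u^s=g\sin(k\gamma x_1)$. Their gradients are supported on coordinate $1$ alone, so in \eqref{eq:supp-cross} only the index $i=1$ survives. The product formulas split that term into frequencies $(r_2+r_1)\omega$ and $(r_2-r_1)\omega=\omega\delta_r=:\Delta$. The key computation is that at the difference frequency the two groups of \eqref{eq:supp-cross} combine, through $\sin^2+\cos^2=1$, into a term whose envelope is free of $x_1$. The common phase $k\gamma x_1$ cancels in $\theta_2-\theta_1$, which leaves the deterministic contribution
\[
D(t)=-\tfrac{k\alpha g^2\gamma}{2}\sqrt{r_1/r_2}\int_0^t\cos(\Delta s)\,ds=-\tfrac{k\alpha g^2\gamma}{2}\sqrt{r_1/r_2}\,\frac{\sin(\Delta t)}{\Delta}.
\]
This is the finite-$\omega$ remnant of the extra resonant term of Proposition~\ref{prop:dim-pair}. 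Under the window \eqref{eq:supp-dim-window} we have $\Delta T\le\pi/2$, so $\sin(\Delta T)/\Delta\ge 2T/\pi$. Hence
\[
|D(T)|\ge \frac{k\alpha g^2\gamma T}{\pi\sqrt{r_{\max}}}=2\delta_0 .
\]

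Second, I collect everything else into $E_2(t)$, with $x_{\omega,2}-\bar x_2=D+E_2$. This remainder consists of the Step~1 boundary, Hessian and martingale terms and the sum-frequency halves at $(r_2+r_1)\omega$. The Hessian terms vanish because $H_2=0$, and the $\partial_tJ$ term vanishes because $G_t=0$. I bound the sum-frequency halves by Lemma~\ref{lem:dim-osc} with $S=\{1\}$, since every slow factor is a trigonometric function of $k\gamma x_1$ alone. The frequency $r_2+r_1\ge2$ is bounded away from zero. This gives $\Expec\sup_{t\le T}|E_2|^2\le V\omega^{-1/2}$ with $V$ depending only on $(k,\alpha,g,\gamma,r_{\max},T)$ and not on $d$. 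The reason is that the bound involves only coordinates $1$ and $2$, together with the speed of $x_{\omega,1}$, which is $O(\sqrt{\omega})$ with a $d$-free constant.

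This $d$-uniformity of $V$ is the step I expect to need the most care. One has to check that no hidden sum over the other $d-2$ coordinates leaks in through the oscillatory lemma, and this is exactly why the target is chosen linear in a single coordinate. Then Chebyshev gives
\[
\Prob\big(\sup_t|E_2|\ge\delta_0/2\big)\le 4V\omega^{-1/2}/\delta_0^2 .
\]
Choosing $\omega_1$ so that this is at most $1-\varepsilon_0$ for $\omega\ge\omega_1$ leaves probability at least $\varepsilon_0=\delta_0^2/(4V)$ of the event $\sup_t|x_{\omega,2}-\bar x_2|\ge|D(T)|-\delta_0/2>\delta_0/2$. (One may shrink $\varepsilon_0$ or enlarge $V$ to make the constants match as stated.)

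Finally, the window \eqref{eq:supp-dim-window} is nonempty exactly when $d\ge d_0$. Since $\omega^*$ must exceed every $\omega$ in the window for $\delta<\delta_0/2$ and $\varepsilon<\varepsilon_0$, the lower bound \eqref{eq:supp-dim-lower} follows.
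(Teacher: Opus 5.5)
Your proposal is correct and is essentially the paper's proof: the same adjacent pair $(1,2)$, the same cancellation $S^2+C^2=1$ that turns the beat term into the deterministic integral $-\tfrac{k\alpha g^2\gamma}{2}\sqrt{r_1/r_2}\int_0^t\cos(\Delta s)\,ds$, and the same $d$-free remainder bound via Lemma~\ref{lem:dim-osc} with $S=\{1\}$. The one difference is the last step: you apply Chebyshev to $\sup_t|E_2|$ where the paper applies Paley--Zygmund to $|x_{\omega,2}(T)-\bar x_2(T)|$, and your version gives the stronger lower bound $1-4V\omega^{-1/2}/\delta_0^2$ rather than a fixed $\varepsilon_0$; one small slip is that the It\^o-correction terms do not vanish when $H_2=0$ (indeed $\partial_{11}u^c=-gk^2\gamma^2\cos(k\gamma x_1)\neq0$), but they carry the prefactor $\sqrt{\alpha/\omega_2}$, are $O(\omega^{-1/2})$ pathwise with $d$-free constants, and are absorbed into $E_2$ without affecting your argument.
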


\begin{proof}
Take the adjacent pair $(1,2)$, so
$\Delta:=\omega(r_2-r_1)=\omega(r_{\max}-1)/(d-1)$, and let
$\x_\omega,\bar\x$ share the Brownian motion and start from
$\x_0=\mathbf 0$. Since $\partial_iJ=0$ for $i\ge2$, the averaged
dynamics of coordinate $2$ are pure noise,
$d\bar x_2=g\,dw_2$, and $u^{c}=g\cos(k\gamma x_1)$,
$u^{s}=g\sin(k\gamma x_1)$ depend on $\x$ only through $x_1$, with
$\partial_iu^{c/s}=0$ for $i\ge2$.

Apply Steps~1--2 of Appendix~\ref{supp:proof-main} to coordinate
$n=2$. Because $\nabla u$ has a single nonzero component, every
constant in the Step-1 estimate \eqref{eq:supp-E1} is independent of
$d$, and in Step 2 the sum over $i$ in \eqref{eq:supp-cross} reduces
to the single term $i=1$. Expanding its trigonometric products, the
sum-frequency parts oscillate at $(r_1+r_2)\omega\ge2\omega$ and the
self term of coordinate 2 at $2r_2\omega$; all are covered by
Lemma~\ref{lem:dim-osc} with envelopes depending on $\x$ only through
$x_1$ (so with $(1+d)$ replaced by $2$) and with
$C_\Phi\le k\gamma g^{2}$,
$L_\Phi\le k\gamma g^{2}(1+k\gamma)$. The difference-frequency part is
special: writing $S=\sin k\gamma x_1$, $C=\cos k\gamma x_1$, the two
groups of \eqref{eq:supp-cross} combine to
\begin{equation}
k\alpha g^{2}\gamma\sqrt{\tfrac{r_1}{r_2}}
\int_0^{t}\Big[
SC\,\sin\big((\omega_1{+}\omega_2)s\big)
-\tfrac12\big(S^{2}{+}C^{2}\big)\cos\big(\Delta s\big)
+\tfrac12\big(S^{2}{-}C^{2}\big)\cos\big((\omega_1{+}\omega_2)s\big)
\Big]ds ,
\end{equation}
in which the identity $S^{2}+C^{2}=1$ removes the state from the beat
term entirely: the slow contribution is the \emph{deterministic}
integral
$-\tfrac{k\alpha g^{2}\gamma}{2}\sqrt{r_1/r_2}\int_0^t\cos(\Delta
s)\,ds$. Collecting, there is a $d$-free constant
$C_4=C_4(k,\alpha,\gamma,g,r_{\max},T)$ with
\begin{equation}
x_{\omega,2}(t)-\bar x_2(t)
\;=\;
-\frac{k\alpha g^{2}\gamma}{2}\sqrt{\frac{r_1}{r_2}}
\int_0^{t}\cos(\Delta s)\,ds\;+\;\mathcal R(t),
\qquad
\Big\|\sup_{t\le T}|\mathcal R(t)|\Big\|_{L^2}\le C_4\,\omega^{-1/4}.
\label{eq:supp-dim-rep}
\end{equation}
Under \eqref{eq:supp-dim-window}, $\Delta T\le\pi/2$, so
$\int_0^{T}\cos(\Delta s)\,ds=\sin(\Delta T)/\Delta\ge(2/\pi)T$, and
$\sqrt{r_1/r_2}\ge r_{\max}^{-1/2}$; hence
$|\Expec[\,x_{\omega,2}(T)-\bar x_2(T)\,]|
\ge 2\delta_0-C_4\omega^{-1/4}\ge\delta_0$ once
$\omega\ge\omega_1:=(C_4/\delta_0)^{4}$. Moreover the deterministic
term in \eqref{eq:supp-dim-rep} is bounded by
$\bar m:=\tfrac{k\alpha g^{2}\gamma}{2}T$ pathwise, so
$\Expec\big[(x_{\omega,2}(T)-\bar x_2(T))^{2}\big]\le
V:=2\bar m^{2}+2C_4^{2}$. Writing
$X=|x_{\omega,2}(T)-\bar x_2(T)|$, the Paley--Zygmund inequality gives
$\Prob(X\ge\tfrac12\Expec X)\ge(\Expec X)^{2}/(4\,\Expec X^{2})
\ge\delta_0^{2}/(4V)=\varepsilon_0$, and since
$\sup_{t\le T}\|\x_\omega-\bar\x\|\ge X$ pathwise, the displayed
probability bound follows. \eqref{eq:supp-dim-lower} is its
contrapositive against the definition of
$\omega^{*}(\delta,\varepsilon,T)$ in Theorem~\ref{thm:main}.
\end{proof}

\begin{remark}[Phase averaging of the Brownian ripple]
\label{rem:dim-noise}
The proof extends verbatim to any diffusion level because the beat
term in \eqref{eq:supp-dim-rep} is deterministic; the Brownian motion
influences only $\mathcal R$. Had the beat phase retained the state
(as the sum-frequency terms do), the noise in $x_1$ would multiply the
mean by the Gaussian phase-averaging factor
$e^{-k^{2}\gamma^{2}g^{2}s/2}$ --- the same mechanism by which
measurement noise entering the ES phase is demodulated rather than
amplified (Appendix~\ref{supp:num-optim}). The cancellation
$S^{2}+C^{2}=1$ that removes the state from the beat phase is thus
what makes the obstruction robust at every noise level.
\end{remark}

\subsection{Evaluation complexity: convergence speed versus dimension}

Propositions~\ref{prop:dim-upper} and~\ref{prop:dim-lower} bound the
frequency; the practitioner pays in \emph{model evaluations}.
Algorithm~\ref{alg:es} performs one evaluation of $J$ per step and
must resolve the fastest dither: with $c\ge1$ steps per fastest
period, $\Delta t\le 2\pi/(c\,r_{\max}\omega)$, so the number of
evaluations on the horizon $[0,T]$ is
$\mathrm{NFE}=T/\Delta t\ge(c/2\pi)\,r_{\max}\,\omega\,T$ (the
validated choice throughout this paper is $c=10$;
Appendix~\ref{supp:num-rate}).

\begin{theorem}[Linear-in-dimension evaluation cost]
\label{thm:dim-nfe}
In the setting of Proposition~\ref{prop:dim-lower}, consider a run of
Algorithm~\ref{alg:es} with evenly spaced ratios on $[1,r_{\max}]$,
any $r_{\max}>1$, any horizon $T>0$, and $c\ge1$ steps per fastest
dither period, simulating the continuous process \eqref{eq:supp-sde}
at a base frequency $\omega\ge\omega_1$ in the averaging regime of
Proposition~\ref{prop:dim-lower}. If that continuous process achieves
\begin{equation}
\Prob\Big(\sup_{t\le T}\|\x_\omega(t)-\bar\x(t)\|>\delta_0/2\Big)
<\varepsilon_0
\end{equation}
for the target of Proposition~\ref{prop:dim-lower} with $d\ge d_0$
(equivalently, $T\le\pi(d-1)/(2(r_{\max}-1)\omega_1)$, so that the window
\eqref{eq:supp-dim-window} is nonempty), then the run must use
\begin{equation}
\mathrm{NFE}
\;\ge\;
\frac{c}{2\pi}\,r_{\max}\,\omega\,T
\;\ge\;
\frac{c}{4}\,(d-1)\,\frac{r_{\max}}{r_{\max}-1}
\;\ge\;
\frac{c}{4}\,(d-1)\,.
\label{eq:supp-dim-nfe}
\end{equation}
The bound is algebraically independent of the horizon and of the band:
widening $[1,r_{\max}]$ relaxes the frequency threshold
\eqref{eq:supp-dim-lower} and tightens the step size in exact
proportion, so no choice of the band escapes the linear cost; the horizon
enters only through the applicability condition $d\ge d_0(T)$. The
restriction $\omega\ge\omega_1$ excludes the sub-threshold regime
$\omega<\omega_1$, which Proposition~\ref{prop:dim-lower} does not
analyze (there the remainder bound in \eqref{eq:supp-dim-rep} exceeds
$\delta_0$, and the dither is too slow to average at all); the accuracy
condition is placed on the continuous process, and the step count is
what Algorithm~\ref{alg:es} needs to resolve it at $c$ steps per period.
Conversely, by Proposition~\ref{prop:dim-upper} (or
Remark~\ref{rem:dim-sharp}), a polynomial budget
$\mathrm{NFE}=O\big(\mathrm{poly}(d)\,T/(\delta^{p}\varepsilon^{q})\big)$
always suffices under Assumption~\ref{ass:dim}.
\end{theorem}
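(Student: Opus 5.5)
The argument is a contrapositive application of Proposition~\ref{prop:dim-lower}, followed by elementary arithmetic on the step count of Algorithm~\ref{alg:es}. The first inequality in \eqref{eq:supp-dim-nfe} is just the resolution requirement. With $c$ steps per fastest period we have $\Delta t\le 2\pi/(c\,r_{\max}\omega)$, so the number of evaluations on $[0,T]$ is at least $T/\Delta t\ge (c/2\pi)\,r_{\max}\,\omega\,T$. One evaluation per step is line~2 of the algorithm, and the ceiling $\lceil T/\Delta t\rceil$ only helps.

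The second inequality is a lower bound on $\omega$, and here I use Proposition~\ref{prop:dim-lower}. The hypotheses put us in its setting: the linear target $J=\gamma x_1$, evenly spaced ratios, $d\ge d_0$, and $\omega\ge\omega_1$. The proposition says that every $\omega$ in the window \eqref{eq:supp-dim-window}, that is $\omega_1\le\omega\le \pi(d-1)/(2T(r_{\max}-1))$, gives $\Prob(\sup_{t\le T}\|\x_\omega-\bar\x\|>\delta_0/2)\ge\varepsilon_0$. The assumed accuracy is the strict reverse inequality, so $\omega$ cannot lie in the window. Since $\omega\ge\omega_1$ by hypothesis, the only possibility is $\omega>\pi(d-1)/(2T(r_{\max}-1))$. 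The nonemptiness condition $d\ge d_0$, equivalently $T\le \pi(d-1)/(2(r_{\max}-1)\omega_1)$, guarantees the window is a genuine interval above $\omega_1$, so that this dichotomy is exhaustive. Substituting into the first bound gives
\begin{equation*}
\mathrm{NFE}\ge \frac{c}{2\pi}\,r_{\max}T\cdot\frac{\pi(d-1)}{2T(r_{\max}-1)}=\frac c4\,(d-1)\,\frac{r_{\max}}{r_{\max}-1}.
\end{equation*}
Here $T$ cancels, which is the claimed horizon independence. The last inequality follows from $r_{\max}/(r_{\max}-1)\ge1$.

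The only delicate point is bookkeeping about which object the accuracy hypothesis refers to. Proposition~\ref{prop:dim-lower} concerns the continuous process, and the theorem places its accuracy condition on that same process, so the contrapositive transfers cleanly. The algorithm enters only through its step-resolution rule. I must also check that the constants $\omega_1,\delta_0,\varepsilon_0$ are the $d$-free ones from the proposition, so that the conclusion holds uniformly for all $d\ge d_0$.

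For the converse remark, I would invoke Proposition~\ref{prop:dim-upper} for the evenly spaced design. It gives a sufficient $\omega^*_{\mathrm{suff}}(d)$ polynomial in $d$, of order $d^8/(\delta^4\varepsilon^2)$, or the sharper bound of Remark~\ref{rem:dim-sharp}. Multiplying by $(c/2\pi)r_{\max}T$, plus one step for rounding, gives a polynomial NFE budget. I expect no real obstacle: all the analytic work sits in Proposition~\ref{prop:dim-lower}, and the theorem is its cost translation.
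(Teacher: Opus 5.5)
Your proposal is correct and follows the paper's own proof. In both, the contrapositive of Proposition~\ref{prop:dim-lower} (given $\omega\ge\omega_1$ and $d\ge d_0$) forces $\omega>\pi(d-1)/(2T(r_{\max}-1))$, and this is then inserted into the resolution bound $\mathrm{NFE}\ge(c/2\pi)r_{\max}\omega T$. One small correction: $d\ge d_0$ is needed so that Proposition~\ref{prop:dim-lower} applies at all, not to make your dichotomy exhaustive. The conditions $\omega\ge\omega_1$ and $\omega\notin[\omega_1,\pi(d-1)/(2T(r_{\max}-1))]$ already give $\omega>\pi(d-1)/(2T(r_{\max}-1))$, whether or not that interval is empty.
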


\begin{proof}
By Proposition~\ref{prop:dim-lower}, since $\omega\ge\omega_1$ and the
window \eqref{eq:supp-dim-window} is nonempty for $d\ge d_0$, the
accuracy requirement forces $\omega>\pi(d-1)/(2T(r_{\max}-1))$; insert this into
$\mathrm{NFE}\ge(c/2\pi)r_{\max}\omega T$ and use
$r_{\max}/(r_{\max}-1)>1$.
\end{proof}

Theorem~\ref{thm:dim-nfe} is the precise sense in which
``convergence speed depends on dimension'' for stochastic bounded ES,
and it is invisible at the level of Theorem~\ref{thm:main}: the
\emph{averaged} dynamics, and hence the physical settling time, are
dimension-free (measured directly in
Appendix~\ref{supp:num-conv}); what grows with $d$ is the number of
scalar evaluations needed per unit of averaged time, because $\omega$
--- the exchange rate between the fast and slow clocks of
\eqref{eq:supp-fastclock} --- must itself grow linearly with $d$ to
keep the beat spectrum resolvable. Equivalently, in the language of
lock-in detection~\citep{scofield1994lockin}: the band
$[\omega,r_{\max}\omega]$ must host $d$ carrier channels whose pairwise
beats each complete $\Omega(1)$ cycles within the horizon, a bandwidth
budget $(r_{\max}-1)\,\omega\ge\pi(d-1)/(2T)$ that is exactly
\eqref{eq:supp-dim-lower}.

\begin{remark}[Consistency with zeroth-order information bounds]
\label{rem:dim-info}
A linear-in-$d$ evaluation cost is not an artifact of the dither
design: any procedure that extracts a $d$-dimensional descent
direction from one scalar evaluation per step is subject to
$\Omega(d)$-type oracle lower bounds
\citep{jamieson2012query,duchi2015optimal}, the same accounting that
gives simultaneous-perturbation and smoothing estimators their
dimension factors \citep{spall1992spsa,nesterov2017random}. What
Proposition~\ref{prop:dim-lower} and Theorem~\ref{thm:dim-nfe} add is
\emph{where} the universal cost surfaces in ES --- not in the number
of evaluations per gradient estimate, which remains one, but in the
frequency budget of the beat spectrum --- and that the constants of
the mechanism match the measured design law
\eqref{eq:supp-threshold} and Table~\ref{tab:supp-windows}.
\end{remark}

\begin{remark}[The gap between the bounds, and the measured law]
\label{rem:dim-gap}
Between the sufficient threshold
$\omega^{*}_{\mathrm{suff}}=O\big(d^{4}\mathrm{polylog}(d)\big)$ of
Remark~\ref{rem:dim-sharp} and the necessary threshold
$\Omega(d)$ of Proposition~\ref{prop:dim-lower}, the measurements of
Appendix~\ref{supp:num-threshold} place the truth at the lower end:
$\omega^{*}(d)\approx O(10)\,(d-1)/(r_{\max}-1)$, linear up to the
mildly superlinear drift of the transition centers in
Table~\ref{tab:supp-local}. The lossy step in
Proposition~\ref{prop:dim-upper} is identifiable: the triangle
inequality over the $O(d^{2})$ beat pairs treats their oscillations as
if they could align, whereas distinct beat frequencies are mutually
incoherent; replacing the triangle inequality by the orthogonality
heuristic replaces $\mathfrak S(r)^{2}$ by
$\sum_{i\neq n}|r_n-r_i|^{-1}=O\big(d\log d/(r_{\max}-1)\big)$ and
predicts $\omega^{*}=O(d\,\mathrm{polylog}(d))$, consistent with the
measured law --- and the growing number of near-minimal gaps supplies
the observed superlinear correction. Converting the heuristic into a
proof requires a joint (rather than pairwise) treatment of the beat
family; we record it as an open problem. None of this affects the
conclusions used in the main text, which rely only on the two proved
directions: the threshold is finite and polynomial
(Proposition~\ref{prop:dim-upper}) and at least linear
(Proposition~\ref{prop:dim-lower}), and the evaluation cost of ES
grows at least linearly with dimension (Theorem~\ref{thm:dim-nfe}).
\end{remark}

\section{Additional CelebA-HQ tracking results}
\label{supp:celeba}

Figure \ref{fig:FaceTurnMore1} shows additional
time-varying tracking runs on CelebA-HQ $256\times256$ under the 1-D
projection objective.

\begin{figure}[h]
\centering
  \includegraphics[width=1.0\textwidth]{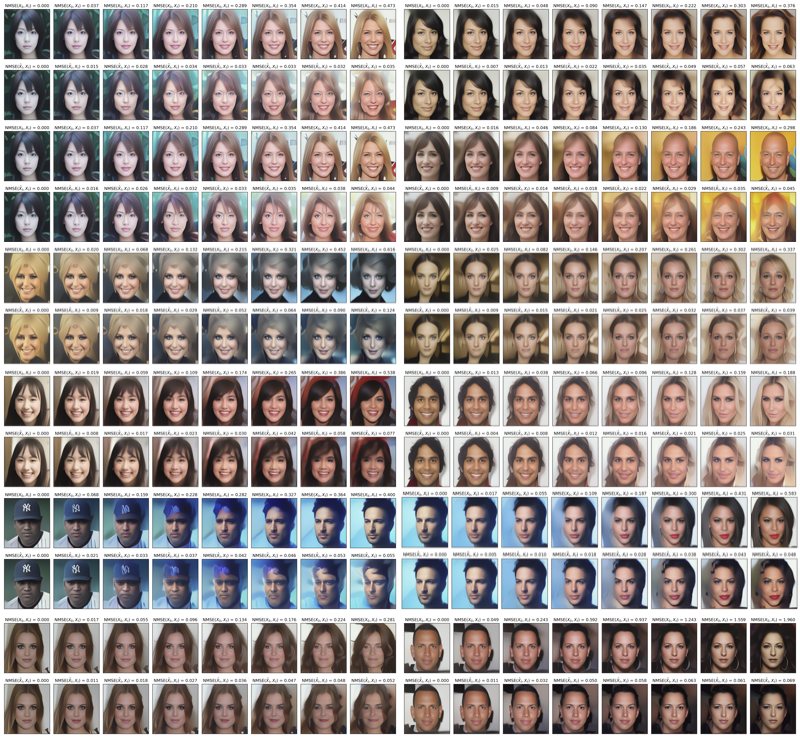}
  \caption{Additional CELEBA-HQ tracking examples.}
  \label{fig:FaceTurnMore1}
\end{figure}

\section{$\lambda$-tempering example}

We numerically demonstrate the $\lambda$-tempering result of Eq. \ref{eq:tempering} with a numerical example as shown in Figure \ref{fig:alpha}.

\begin{SCfigure}[50][t]
  \centering
  \includegraphics[width=0.5\textwidth]{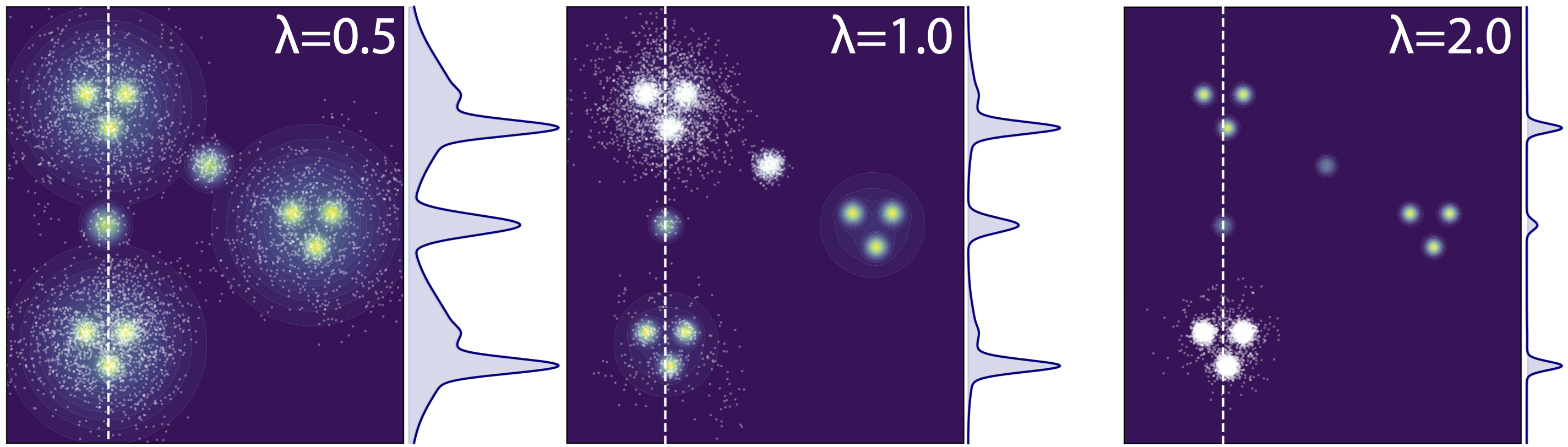}
  \caption{Effect on $\rho^\lambda$ of $\lambda$-tempering ES sampling of 2D multimodal target (white dots: visited locations). As $\lambda$ decreases, local peaks and valleys are escaped, and the sampler becomes mass-covering; large $\lambda$ concentrates on modes.}
\label{fig:alpha}
\end{SCfigure}

\end{document}